\definecolor{ForestGreen}{RGB}{34,139,34}
\newcommand{\algorithmichyperparameters}{\textbf{hyperparameters}}
\newcommand{\HYPERPARAMETERS}{\item[\algorithmichyperparameters]}
\newcommand{\algorithmicparameters}{\textbf{parameters}}
\newcommand{\PARAMETERS}{\item[\algorithmicparameters]}
\theoremstyle{plain}
\newtheorem{theorem}{Theorem}[section]
\newtheorem{proposition}[theorem]{Proposition}
\newtheorem{lemma}[theorem]{Lemma}
\newtheorem{corollary}[theorem]{Corollary}
\theoremstyle{definition}
\newtheorem{definition}[theorem]{Definition}
\newtheorem{assumption}[theorem]{Assumption}
\theoremstyle{remark}
\newtheorem{remark}[theorem]{Remark}
\newcommand{\vertiii}[1]{{\left\vert\kern-0.25ex\left\vert\kern-0.25ex\left\vert #1 
		\right\vert\kern-0.25ex\right\vert\kern-0.25ex\right\vert}}
\icmltitlerunning{Learning with Expected Signatures: Theory and Applications}
\begin{document}

\twocolumn[
\icmltitle{Learning with Expected Signatures: Theory and Applications}



\icmlsetsymbol{equal}{*}

\begin{icmlauthorlist}
\icmlauthor{Lorenzo Lucchese}{icl}
\icmlauthor{Mikko S.\ Pakkanen}{icl}
\icmlauthor{Almut E.\ D.\ Veraart}{icl}
\end{icmlauthorlist}

\icmlaffiliation{icl}{Department of Mathematics, Imperial College London, London, United Kingdom}

\icmlcorrespondingauthor{Lorenzo Lucchese}{lorenzo.lucchese17@imperial.ac.uk, llucchese6@gmail.com}

\icmlkeywords{Probabilistic Machine Learning, Signatures, Rough Paths}

\vskip 0.3in
]

\makeatletter
\newcommand{\itemlabel}[2]{#2\def\@currentlabel{#2}\label{#1}}
\makeatother



\printAffiliationsAndNotice{}  

\begin{abstract}
The expected signature maps a collection of data streams to a lower dimensional representation, with a remarkable property: the resulting feature tensor can fully characterize the data generating distribution. This ``model-free'' embedding has been successfully leveraged to build multiple domain-agnostic machine learning (ML) algorithms for time series and sequential data. The convergence results proved in this paper bridge the gap between the expected signature's empirical discrete-time estimator and its theoretical continuous-time value, allowing for a more complete probabilistic interpretation of expected signature-based ML methods. Moreover, when the data generating process is a martingale, we suggest a simple modification of the expected signature estimator with significantly lower mean squared error and empirically demonstrate how it can be effectively applied to improve predictive performance.
\end{abstract}

\section{Introduction}
\label{intro}

The signature transform of a stream of data is an infinite but countable sequence of its ``iterated integrals'' summarizing the input in a top-down fashion, meaning the informational content of its terms decays factorially. Originally introduced by \citet{Chen_1954} and serving as a fundamental object of rough path analysis \citep{saint-flour2007}, the signature 
\[
\mathbb{S} =\{S(\mathbb{X})_{[0,t]} \in T((\mathbb{R}^d)),\ t\in[0,T] \},
\]
of a path $\mathbb{X} = \{\mathbf{X}_t,\ t\in[0,T]\} \in C([0,T], \mathbb{R}^d)$ is a lift (in the sense that it embeds $\mathbb{X}$) to the space of continuous functions over the tensor algebra $T((\mathbb{R}^d))$ possessing some nice algebraic and geometric properties. When the path is of bounded variation, the signature is defined as the sequence of iterated integrals of $\mathbb{X}$, i.e.\ for  $t\in[0,T],\ k\geq 0$
\begin{equation} \label{eqn:BV_signature}
S^k(\mathbb{X})_{[0,t]} = \underset{0\leq s_1 \leq \ldots \leq s_{k}\leq t}{\int\cdots\int} \text{d} \mathbf{X}_{s_1} \otimes \cdots \otimes \text{d}\mathbf{X}_{s_{k}}.
\end{equation}
In many practical applications the path $\mathbb{X}$ is taken to be the piecewise linear interpolation of a discrete-time stream of data, which is of bounded variation by construction. Signature-based machine learning (ML) approaches \citep{lyons2024} thus often restrict the theoretical framework to paths in $\text{BV}([0,T], \mathbb{R}^d)$. In this setting, two fundamental properties of the signature that make it a desirable non-parametric feature extraction method for sequential data are the characterization result of \citet{hambly2005} and the universality approximation theorem of \citet{levin2016}. Moreover, when the path $\mathbb{X}$ is understood as a (realization of a) random process with distribution $\mathbb{P}$ over $\text{BV}([0,T], \mathbb{R}^d)$, the shuffle property of the signature implies that all moments of the random variable $S(\mathbb{X})_{[0,T]}$ are determined by its expectation
\[
\phi(T) := \mathbb{E}[S(\mathbb{X})_{[0,T]}] \in T((\mathbb{R}^d)). 
\]
A natural question, known as the Hamburger moment problem \citep{fawcett2003a}, is thus whether the expectation of the signature characterizes its law (and thus the law of the path). When imposing a probability distribution $\mathbb{P}$ on $\mathbb{X}$ the assumption of bounded variation paths becomes quite restrictive: Brownian motion, the basic building block of many stochastic models, has paths of infinite variation almost surely. Even if we observe a discrete-time stream of data, we often still would like to define the process $\mathbb{X}$ as a latent stochastic process of which we observe the linear interpolation over some partition $\pi$ of $[0,T]$, hereafter denoted by $\mathbb{X}^\pi$. We hence wish to make sense of the signature of a stochastic process $\mathbb{X}$ with paths of unbounded variation. For a given path $\mathbb{X}\in C([0,T], \mathbb{R}^d)$ of finite $p$-variation, once we ``lift'' the process to a $p$-rough path \citep[Definition~3.11]{saint-flour2007} then the signature $\mathbb{S}$ of $\mathbb{X}$ is uniquely defined\footnote{This is the first fundamental theorem in the theory of rough paths \citep[Theorem~3.7]{saint-flour2007}.}. Without delving into the details of rough path theory, for our purposes it suffices to interpret the choice of lift as fixing a notion of integration with respect to $\mathbb{X}$: the higher order signatures terms are then understood as iterated integrals of the path $\mathbb{X}$ defined in this sense.

Motivated by the fact that we can only ever observe the process $\mathbb{X}$ over a discrete partition $\pi$ of $[0,T]$ we restrict our attention to the class of stochastic processes whose lift (and hence signature) can be approximated by the lift (and hence signature) of the bounded variation path $\mathbb{X}^\pi$. Following the rough path literature we take such approximation in the $p$-variation metric to define the notion of \textit{canonical geometric stochastic process}, cf.\ Definition~\ref{def:canonical_geometric_stochastic_process}. 
In \citet{Chevyrev2016, OberhauserChevyrev} the authors provide characterization results for the expected signature of canonical geometric stochastic processes, i.e.\ conditions under which the map $\mathbb{P} \mapsto \mathbb{E}[S(\mathbb{X})_{[0,T]}]$ is injective. Such characterizing property of the expected signature has found practical use in a wide range of applications, ranging from classic ML tasks \citep{lemercier2021a, triggiano2024, schell2023a} to mathematical finance \citep{arribas2021, futter2023}.

The expected signature is thus a highly informative quantity and, consequently, methods for computing $\phi(T)$ have received considerable research interest. Such methods can be broadly categorized into two classes: those employing an analytical approach and those following a statistical one. Analytical methods aim to develop exact formulas for specific classes of models. A first step in this direction was taken in \citet[Section~4]{ni2012} showing that the expected signature of an It\^{o} diffusion satisfies an explicit partial differential equation (PDE). This result was subsequently generalized in \citet{cuchiero2023} to the class of signature-SDEs and in \citet{frizhagertapia2022, frizhagertapia2024} to (discontinuous) semimartingales. On the other hand, the statistical approach aims to estimate $\phi(T)$ directly from observed data, preserving the model-free nature of the expected signature. For a given set of observations $\mathbb{X}^{1, \pi}, \ldots, \mathbb{X}^{N, \pi}$ one can form the estimator 
\begin{align*} 
        \hat{\phi}^{N, \pi}(T) := \frac{1}{N} \sum_{n=1}^N S(\mathbb{X}^{n, \pi})_{[0,T]},
\end{align*}
as illustrated in Figure~\ref{fig:intuition-esigs}, and study its in-fill $|\pi|\rightarrow 0$ and large-sample $N\rightarrow\infty$ asymptotics. This line of work includes the explicit results of \citet[Section~3.2]{ni2012} for Brownian motion and of \citet{passeggeri2020} for fractional Brownian motion with Hurst parameter $H>1/2$ as well as the preliminary results in \citet{frizvictoir2010} for more general semimartingales. Additionally, \citet[Section~8]{schell2023a} develops asymptotic results for processes of bounded variation. In this work we provide a unifying set of general conditions under which the expected signature estimator $\hat{\phi}^{N, \pi}(T)$ displays important asymptotic statistical properties, namely consistency and asymptotic normality. Our results allow for irregular\footnote{Clearly, for the estimation problem to be well-posed, the sequence of partitions needs to be signature defining in the sense of Definition~\ref{def:sig_def_part}.} observation partitions $\pi$ -- possibly varying across samples -- and for dependency across the samples $\mathbb{X}^{1, \pi}, \ldots, \mathbb{X}^{n, \pi}$. The first main contribution of this paper is thus to bridge the gap between the empirical expected signature estimator and the expected signature of a latent continuous-time stochastic process, unlocking a more general probabilistic interpretation of several ML algorithms and effectively moving beyond the expected signature as a simple feature extraction method. 
This naturally leads to the second theoretical contribution: by starting from the continuous-time setting we devise a modification of the expected signature estimator with significantly better finite sample properties when the latent data generating process is a martingale. 
The superior performance of this modified estimator is empirically verified through various experiments with expected signature-based ML algorithms from the literature.

\begin{figure}[!htp]
    \centering
    \includegraphics[width=\columnwidth]{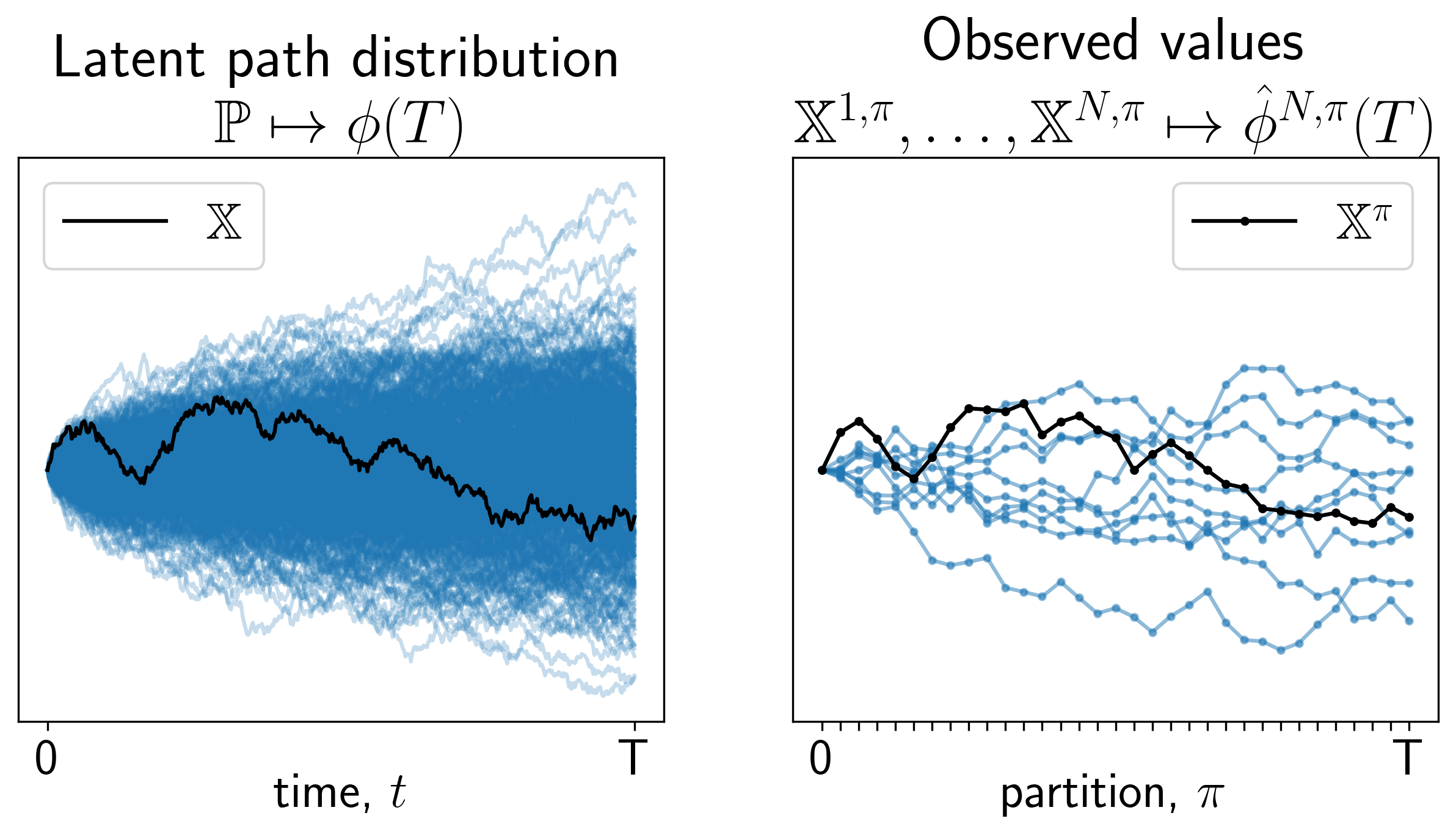}
    \vspace{-0.6cm}
    \caption{Estimating the expected signature estimation from a finite collection of discretely-observed paths.} \label{fig:intuition-esigs}
\end{figure}

\section{Theory} \label{sec:theory}
Let $\mathbb{X} = \{\mathbf{X}_t,\ t\in[0,T]\}$ denote a $d$-dimensional stochastic process over the probability space $(\Omega, \mathcal{F}, \mathbb{P})$.
	
\begin{definition} \label{def:canonical_geometric_stochastic_process}
    We say $\mathbb{X}$ is a canonical geometric stochastic process of rough order $p$ if there exists a sequence of partitions $\rho$ with $|\rho|\rightarrow 0$ such that the limit in the $p$-variation metric of the canonically lifted linearly interpolated process $\mathbb{X}^\rho$ exists in probability. Convergence in probability implies almost sure convergence (along a subsequence) and hence we can almost surely define the lift of $\mathbb{X}$ as such limit.
\end{definition}
\begin{remark} \label{rem:canonical_lifts}
    The definition of lift suggests this might depend on the choice of the sequence of partitions $\rho$. In any case, for a wide range of stochastic processes there exist \textit{canonical} lifts that satisfy our definition of canonical geometric rough path. These include:
    \begin{itemize}
        \item Semimartingales: For $p \in (2, 3)$ any semimartingale can be lifted to a geometric $p$-rough path by defining the lift via Stratonovich integration; the signature of $\mathbb{X}$ then coincides with iterated Stratonovich integrals. For any sequence of partitions $\rho$ the lifts of the linear interpolations converge in $p$-variation metric to the Stratonovich lift \citep[Chapter~14]{frizvictoir2010} and hence $\mathbb{X}$ is a canonical geometric stochastic process in the sense of Definition~\ref{def:canonical_geometric_stochastic_process}. 
        \item Gaussian processes: Many Gaussian processes admit canonical lifts to geometric $p$-rough paths (\citet[Theorem~15.34,~Definition 15.35]{frizvictoir2010} and \citet{coutin_qian2002}) with the existence criterion for such canonical lifts easily stated in terms of the covariance function. The definition of the lift implicitly requires $\rho$ to be any sequence such that $\mathbb{X}^\rho$ converges uniformly to $\mathbb{X}$ almost surely. For example, fractional Brownian motion with Hurst parameter $H > 1/4$ can be lifted to a geometric $p$-rough path with $p > 1 / H$ by choosing $\rho$ to be the sequence of dyadic partitions.
    \end{itemize}
    In what follows, we will assume the canonical geometric stochastic process $\mathbb{X}$ has a \textit{canonical} lift (i.e.\ a canonical sequence of partitions $\rho$ along which the lift is defined) and unambiguously refer to it as \textit{the} lift of $\mathbb{X}$.
\end{remark}

Let $\rho$ denote a partition of $[0,T]$ with mesh $|\rho|$ and $\mathbb{X}^\rho = \{\mathbf{X}^\rho_t,\ t\in[0,T]\}$ the linear approximation of $\mathbb{X}$ over $\rho$, i.e.\ 
\[
\mathbf{X}^\rho_t = \mathbf{X}_{u} + \frac{t-u}{v-u} \mathbf{X}_{u,v}, \quad t\in[u, v]\in\rho,
\]
with $\mathbf{X}_{u, v} = \mathbf{X}_v - \mathbf{X}_u$. The signature of the bounded variation path $\mathbb{X}^\rho$ up to time $t\in[0,T]$ is defined by Equation \eqref{eqn:BV_signature} through classic Riemann-Stieltjes integration and can thus be computed by
\begin{equation} \label{eqn:discr_sig}
S(\mathbb{X}^\rho)_{[0,t]} = \bigotimes_{[u,v]\in\rho_{[0,t]}} \exp_{\otimes}\mathbf{X}^\rho_{u,v}
,\end{equation}
where $\rho_{[0,t]}$ denotes the restriction of $\rho$ to $[0,t]$. The canonical lift of $\mathbb{X}^\rho$ to a (geometric) $p$-rough path is
\begin{equation} \label{eqn:sig_lift}
    \left(1,\, S^{1}(\mathbb{X}^\rho)_{[0,t]},\, \ldots, S^{\lfloor p \rfloor}(\mathbb{X}^\rho)_{[0,t]} \right) \in T ^{\lfloor p \rfloor}\left(\left(\mathbb{R}^d\right)\right),
\end{equation}
for $t\in[0,T]$. Definition~\ref{def:canonical_geometric_stochastic_process} requires that there exists a sequence of partitions $\rho$ for which this sequence of geometric $p$-rough paths converges in probability in the $p$-variation metric. A key result from rough path theory is that a geometric $p$-rough path has a full signature. Fixing the lift of $\mathbb{X}$ via Definition~\ref{def:canonical_geometric_stochastic_process}, we thus have a uniquely specified signature for $\mathbb{X}$.

\begin{definition} \label{def:signature}
    The signature of a canonical geometric stochastic process $\mathbb{X}$, 
    \[
    \mathbb{S} = \{S(\mathbb{X})_{[0,t]}\in T((\mathbb{R}^d)),\ t\in[0,T]\}, 
    \]
    is defined pathwise (on a set of full measure) as the unique extension of the lift of $\mathbb{X}$ to a multiplicative functional of arbitrary order in the sense of \citep[Theorem~3.7]{saint-flour2007}. The elements of the signature are the rough iterated integrals of $\mathbb{X}$.
\end{definition}
\begin{remark} \label{rem:prob_conv_sig_rho}
    Taking $\rho$ to be a sequence such that Definition~\ref{def:canonical_geometric_stochastic_process} holds, by continuity of the extension map \citep[Theorem~3.10]{saint-flour2007}, it immediately follows that the signature of $\mathbb{X}^\rho$ (truncated at level $K\geq \lfloor p \rfloor$) converges in probability to the signature of $\mathbb{X}$ (up to level $K$) in the $p$-variation topology. In particular, this implies that, for any finite collection of words $\mathbf{I}$, 
    \begin{equation} \label{eqn:prob_conv_sig_rho}
        S^\mathbf{I}(\mathbb{X}^{\rho})_{[0,t]} \overset{\mathbb{P}}{\rightarrow} S^\mathbf{I}(\mathbb{X})_{[0,t]}.
    \end{equation}
    Similar arguments imply that, when convergence to the lift along $\rho$ holds almost surely in the $p$-variation metric, then also the higher order signature terms converge almost surely, and, in particular, \eqref{eqn:prob_conv_sig_rho} holds in the almost sure limit.
\end{remark}

In the following sections, we will be estimating the expected signature at fixed time horizon $T>0$. To develop the properties of these estimators, it will be thus sufficient to work with pointwise limits like \eqref{eqn:prob_conv_sig_rho} without having to deal with the stronger pathwise $p$-variation convergence used to define canonical geometric stochastic processes. This mode of convergence will thus be sufficient to consider a sequence of partitions as \textit{signature-defining}.

\begin{definition} \label{def:sig_def_part}
    Let $\mathbb{X}=\{\mathbf{X}_t,\ t\in[0,T]\}$ be a canonical geometric stochastic process, we say that a sequence of partitions $\pi$ of the interval $[s,t]\subseteq [0,T]$ with $|\pi|\rightarrow 0$ is signature-defining if for any collection of words $\mathbf{I}$,
    \begin{equation} \label{eqn:prob_conv_sig}
        S^{\mathbf{I}}(\mathbb{X}^\pi)_{[s,t]} \overset{\mathbb{P}}{\rightarrow} S^{\mathbf{I}}(\mathbb{X})_{[s,t]}, \quad |\pi|\rightarrow 0. 
    \end{equation}
\end{definition}

\subsection{Expected Signature Estimation} \label{sec:estimating_expected_signatures}
In this section, we assume we have access to $N$ copies of $\mathbb{X}$ discretely observed over possibly different partitions of the interval $[0,T]$, i.e.\ each $\mathbb{X}^{n,\pi_{N,n}}$ is an observation over $\pi_{N, n}$ of a continuous-time latent process $\mathbb{X}^n$, for $n=1,\ldots, N$. We will focus on two observational schemes:
\begin{enumerate}[align=left]
    \item[(ind)] Repeatedly observe $\mathbb{X}$ through $N$ independent experiments, in which case the ``underlying'' signatures $S(\mathbb{X}^{n})_{[0,T]}$, for $n=1, \ldots, N$, are independent and identically distributed. 
    \item[(chop)] Chop-up (and shift in time) a single observation of the process $\{\mathbf{X}_t,\ t\geq 0\}$ over a partition 
    \[
    \Pi(N) := \pi_{N,1} \cup \dots \cup ((N-1)T+\pi_{N,N}),
    \]
    of $[0, NT]$. In this setting, we assume that the latent sequence $\{\mathbb{X}^n,\ n\geq 1\}$ taking values in $C([0,T];\mathbb{R}^d)$ is stationary, i.e.\ for $k\in\mathbb{N}$, $n_1,\ldots, n_k \in\mathbb{N}$ and $n\geq 0$,
    \begin{equation} \label{eqn:stationary_segments}
        (\mathbb{X}^{n_1}, \ldots, \mathbb{X}^{n_k}) \overset{\mathcal{L}}{=} (\mathbb{X}^{n_1+n}, \ldots, \mathbb{X}^{n_k+n}),
    \end{equation}
    and hence the signatures $S(\mathbb{X}^{n})_{[0,T]}$ form a stationary sequence. This assumption ensures the task of estimating $\phi_{\mathbf{I}}(T)$ is well-posed. Note this condition is slightly stronger than necessary but weaker than requiring $\{\mathbf{X}_t,\ t\geq 0\}$ to be stationary, cf.\ Proposition~\ref{prop:stationarity_strong_mixing_implications}.
\end{enumerate}
The first observational framework can be recast in the second by appropriately pasting the $\mathbb{X}^{n}$'s into a single process $\{\mathbf{X}_t,\ t\geq 0\}$. Going forward we hence focus on the second setting and refer to the large sample asymptotics $N\rightarrow\infty$ as long-span asymptotics. For any finite collection of words $\mathbf{I}$, we thus consider the estimator
\begin{equation}
    \label{eqn:estimator_exp_sig_X_Pi}
    \hat{\phi}_\mathbf{I}^{\Pi(N)}(T) := \frac{1}{N} \sum_{n=1}^N S^\mathbf{I}(\mathbb{X}^{n,\pi_{N,n}})_{[0,T]}.
\end{equation}

We will be interested in the double asymptotics where, as the number of signature evaluations $N$ increases, the granularity of the discretized paths from which such signatures are computed also increases, i.e.\
\[
|\Pi(N)| := \max_{1\leq n\leq N} |\pi_{N,n}| \rightarrow 0, \quad N\rightarrow \infty.
\]

We can decompose
\begin{equation} \label{eqn:conv_decomposition_2}
\begin{split}
    \hat{\phi}_\mathbf{I}^{\Pi(N)}(T&) - \phi_\mathbf{I}(T) \\
    = \frac{1}{N} &\sum_{n=1}^N \left( S^\mathbf{I}(\mathbb{X}^{n,\pi_{N,n}})_{[0,T]} - S^\mathbf{I}(\mathbb{X}^{n})_{[0,T]} \right) \\ &+ \frac{1}{N} \sum_{n=1}^N S^\mathbf{I}(\mathbb{X}^{n})_{[0,T]} - \mathbb{E}\left[S^\mathbf{I}(\mathbb{X})_{[0,T]}\right] .
\end{split}
\end{equation}
Under suitable conditions, we shall prove $\hat{\phi}_\mathbf{I}^{\Pi(N)}(T)$ is consistent and asymptotically normal for $\phi_\mathbf{I}(T)$ by showing 
\begin{enumerate}
    \item each summand in the first term converges to zero in $L^m$ in the in-fill asymptotics $|\pi_{N, n}|\rightarrow 0$;
    \item the second term, when inflated by $\sqrt{N}$, converges in distribution to a normal random variable in the large sample asymptotics $N\rightarrow\infty$. 
\end{enumerate}

\subsubsection{In-fill asymptotics} \label{sec:in_fill}
The convergence in probability \eqref{eqn:prob_conv_sig} is not sufficient to show consistency of the expected signature estimator. In this section, we thus explore continuity conditions on the process $\mathbb{X}$ ensuring the convergence holds in a stronger $L^m$ sense. 

Let $\{\mathcal{F}_{s,t},\ [s,t]\subseteq [0,T]\}$ be a family of sigma-algebras such that, for $[u,v]\subseteq [s,t]\subseteq[0,T]$, $\mathcal{F}_{u,v} \subseteq \mathcal{F}_{s,t}$ and, for $[s,t]\subseteq [0,T]$, $\mathbf{X}_{s,u}$ is $\mathcal{F}_{s,t}$-measurable for all $u\in[s,t]$. 

The following continuity assumptions will be used to state the in-fill asymptotics.
\begin{assumption} \label{ass:continuity}
For all $0\leq s < u< t \leq T$,
\begin{enumerate}[style=multiline, labelwidth=2em, leftmargin=3em]
        \item[(\itemlabel{ass:alpha}{A$\alpha$})] $\displaystyle \|\mathbf{X}_{s,t} \|_{L^{p}} \lesssim |t-s|^{\alpha}$.
        \item[(\itemlabel{ass:beta}{A$\beta$})] $\displaystyle \|\mathbb{E}_{\mathcal{F}_{0, s} \vee \mathcal{F}_{t, T}}[\mathbf{X}_{s,u} \otimes \mathbf{X}_{u,t} ] \|_{L^{p/2}} \lesssim |t-s|^{\beta}$. 
        \item[(\itemlabel{ass:gamma}{A$\gamma$})] $\displaystyle \| \mathbb{E}_{\mathcal{F}_{0,s} \vee \mathcal{F}_{t, T}}[\mathbf{X}_{s,u} \otimes \mathbf{X}^{\otimes 2}_{u,t} ] \|_{L^{p/3}} \lesssim |t-s|^{\gamma}$, \\
        $\| \mathbb{E}_{\mathcal{F}_{0,s} \vee \mathcal{F}_{t, T}}[\mathbf{X}^{\otimes 2}_{s,u} \otimes \mathbf{X}_{u,t} ] \|_{L^{p/3}} \lesssim |t-s|^{\gamma}$.
        \item[(\itemlabel{ass:delta}{A$\delta$})] $\displaystyle \|\mathbb{E}_{\mathcal{F}_{0,s}}[\mathbf{X}_{s,t} ]\|_{L^{p}} \lesssim |t-s|^{\delta}$.
    \end{enumerate}
\end{assumption}
\begin{remark}
By the contraction property of the conditional expectation, the strongest form of \eqref{ass:beta}, \eqref{ass:gamma} and \eqref{ass:delta} is obtained by setting $\mathcal{F}_{s,t} = \sigma(\mathbf{X}_{s,u},\ u\in[s,t])$.
\end{remark}
	
\begin{theorem} \label{thm:sig_conv_lm}
    Let $k=\max_{I\in\mathbf{I}}|I|$ and, for $m \geq 2$, set $p=mk$. Assume $\mathbb{X}$ is a canonical geometric stochastic process that satisfies one of the following:
    \begin{enumerate}[label=(\roman*)]
    \item \eqref{ass:alpha} for $\alpha>1/2$;
    \item \eqref{ass:alpha}, \eqref{ass:delta} for $\alpha=1/2, \delta\geq 1$;
    \item \eqref{ass:alpha}, \eqref{ass:beta} for $\alpha\in(1/3, 1/2], \beta>1$;
    \item \eqref{ass:alpha}, \eqref{ass:beta}, \eqref{ass:gamma} for $\alpha\in(1/4, 1/3], \beta>1,\gamma>1$;
    \end{enumerate} 
    with
    \begin{equation} \label{eqn:epsilon}
        \epsilon = \begin{cases}
            2\alpha - 1, \ &\text{if}\ \text{(i)},\\
            (2\alpha - 1/2)\wedge (\alpha+\delta - 1), \ &\text{if}\ \text{(ii)},\\
            3\alpha \wedge \beta - 1, \ &\text{if}\ \text{(iii)},\\
            4\alpha \wedge \beta \wedge \gamma - 1, \ &\text{if}\ \text{(iv)},
        \end{cases}
    \end{equation}
    and consider a signature-defining, cf.\ Definition~\ref{def:sig_def_part}, sequence of refining partitions $\{\pi_n, \ n\geq 1\}$ of the interval $[0, T]$ such that
    \[
    \sum_{n\geq 1} |\pi_n|^\epsilon < \infty,
    \]
    then the stronger convergence holds
    \begin{equation} 
        S^\mathbf{I}(\mathbb{X}^{\pi_n})_{[0,T]} \overset{L^m}{\rightarrow} S^\mathbf{I}(\mathbb{X})_{[0,T]}, \quad n \rightarrow \infty, 
    \end{equation}
    with rate $\mathcal{O}(\sum_{n' \geq n} |\pi_{n'}|^{\epsilon})$.
\end{theorem}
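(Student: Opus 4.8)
The plan is to establish the convergence one signature level at a time (by induction on the level) and to reduce everything to a single–refinement estimate along the given sequence. Fix $j\le k$, write $S^{(j)}$ for the projection onto $(\mathbb{R}^d)^{\otimes j}$, and observe that such a term of a polygonal signature is a degree-$j$ polynomial in increments of $\mathbb{X}$, so that --- since $jm\le km=p$ --- Hölder's inequality estimates each monomial in $L^m$ via the single-increment bound \eqref{ass:alpha}; this is exactly why one takes $p=mk$. I would then prove, for each $n'$, the single-refinement bound
\[
\big\|S^{(j)}(\mathbb{X}^{\pi_{n'+1}})_{[0,T]}-S^{(j)}(\mathbb{X}^{\pi_{n'}})_{[0,T]}\big\|_{L^m}\lesssim|\pi_{n'}|^{\epsilon},
\]
with $\epsilon$ as in \eqref{eqn:epsilon}. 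Since $\sum_n|\pi_n|^\epsilon<\infty$, this makes $\{S^{(j)}(\mathbb{X}^{\pi_n})_{[0,T]}\}_n$ an $L^m$-Cauchy sequence whose partial sums of consecutive differences telescope; as $\{\pi_n\}$ is signature-defining the same sequence converges in probability to $S^{(j)}(\mathbb{X})_{[0,T]}$ by Definition~\ref{def:sig_def_part}, so its $L^m$-limit is $S^{(j)}(\mathbb{X})_{[0,T]}$ and, summing the single-refinement bound over $n'\ge n$,
\[
\big\|S^{(j)}(\mathbb{X})_{[0,T]}-S^{(j)}(\mathbb{X}^{\pi_n})_{[0,T]}\big\|_{L^m}\lesssim\sum_{n'\ge n}|\pi_{n'}|^\epsilon,
\]
which is the asserted rate.

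For the single-refinement bound I would exploit multiplicativity of the signature. Writing $\pi=\pi_{n'}$ and $\pi'=\pi_{n'+1}\supseteq\pi$, formula \eqref{eqn:discr_sig} gives the factorizations $S(\mathbb{X}^{\pi})_{[0,T]}=\bigotimes_{[s,t]\in\pi}\exp_\otimes\mathbf{X}_{s,t}$ and $S(\mathbb{X}^{\pi'})_{[0,T]}=\bigotimes_{[s,t]\in\pi}S(\mathbb{X}^{\pi'})_{[s,t]}$ over the \emph{coarse} partition, so the elementary identity for a difference of ordered products expresses the error as $\sum_{[s,t]\in\pi}(\text{left block})\otimes R_{s,t}\otimes(\text{right block})$, with $R_{s,t}:=S(\mathbb{X}^{\pi'})_{[s,t]}-\exp_\otimes\mathbf{X}_{s,t}$ the local interpolation error on a single coarse interval. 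Both factors agree at levels $0$ and $1$, so $R_{s,t}$ is supported on levels $\ge2$ and each $R^{(l)}_{s,t}$ is an \emph{off-diagonal} polynomial in the fine increments inside $[s,t]$ (the pure powers cancel): for instance $R^{(2)}_{s,t}$ is the signed area of the refined polygon over $[s,t]$. Projecting onto level $j$ and grouping by the level $l\in\{2,\dots,j\}$ of $R_{s,t}$ reduces the task to bounding, in $L^m$, finitely many sums $\sum_{[s,t]\in\pi}(\cdots)\otimes R^{(l)}_{s,t}\otimes(\cdots)$ over disjoint coarse intervals --- the left/right blocks being already controlled by the induction hypothesis on lower levels (together with a separate, easier uniform-in-partition $L^q$ bound on the polygonal signature, proved by the same technique).

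These partition sums are handled by a discrete stochastic-sewing / Burkholder--Rosenthal argument. Conditioning the $[s,t]$-summand on $\mathcal{F}_{0,s}\vee\mathcal{F}_{t,T}$ splits it into a conditional mean plus a conditionally centered part. The conditional means are bounded directly by the hypotheses --- \eqref{ass:delta} controls the drift carried by the level-$1$ increments (case (ii)), \eqref{ass:beta} the level-$2$ cross term (cases (ii)--(iv)), \eqref{ass:gamma} the level-$3$ cross terms (case (iv)) --- and, summed over $\pi$, these yield the $(\alpha+\delta-1)$-, $(\beta-1)$- and $(\gamma-1)$-type contributions appearing in \eqref{eqn:epsilon}; the conditionally centered parts live on disjoint blocks, so a moment (Burkholder--Davis--Gundy/Rosenthal) inequality over $\pi$ combined with \eqref{ass:alpha} buys the extra half-power of the mesh and produces the $(2\alpha-1/2)$-, $(3\alpha-1)$- and $(4\alpha-1)$-type contributions. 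Taking the minimum of the relevant pair in each regime reproduces exactly \eqref{eqn:epsilon}; in case (i) ($\alpha>1/2$) no conditional-expectation hypothesis is needed and $|\pi|^{2\alpha-1}$ comes from the moment-summation step alone.

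The step I expect to be the main obstacle is precisely this last one: making the two-sided-conditioning orthogonality/sewing argument rigorous when each $R^{(l)}_{s,t}$ is \emph{itself} an internal double (or triple) sum over the fine partition $\pi'$ inside $[s,t]$, so the estimate must be run at two nested scales --- or, equivalently, one must apply a stochastic sewing estimate directly to the two-parameter process $(s,t)\mapsto S^{(j)}(\mathbb{X}^{\pi'})_{[s,t]}$ compared against its coarse counterpart, with conditioning on both the past and the future --- and in tracking how the merely-$L^q$-integrable (not bounded) higher-order left/right signature blocks interact with the conditioning and with the moment inequality. Once the per-step bound $\lesssim|\pi_{n'}|^\epsilon$ is secured, the telescoping and summation from the first paragraph complete the proof.
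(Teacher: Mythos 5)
Your architecture for cases \textit{(i)}, \textit{(iii)} and \textit{(iv)} is essentially the paper's: induction on the signature level, telescoping over refinements, Chen's relation to localize the error as $\sum_{[s,t]\in\pi}(\text{left block})\otimes R_{s,t}\otimes(\text{right block})$ with $R_{s,t}$ supported on levels $\geq 2$ and off-diagonal in the sub-increments, a conditioning-plus-BDG bound (the paper's Lemma~\ref{lemma:cond_exp_bound_sum}) on the partition sum to buy the extra half-power of the mesh, and identification of the $L^m$-limit with the $\mathbb{P}$-limit via the signature-defining property. The ``two nested scales'' obstacle you flag as the main difficulty is dissolved in the paper by a preliminary reduction: one may assume without loss of generality that $\pi_{n+1}$ adds at most one point per sub-interval of $\pi_n$ (pass to a super-sequence with this property, then back to the original subsequence), after which each $R_{s,t}$ is an explicit finite off-diagonal polynomial in the two sub-increments $\mathbf{X}_{s,u}$, $\mathbf{X}_{u,t}$ and no nested sewing is required. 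You should adopt that reduction rather than attempt a two-scale stochastic sewing estimate.

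The genuine gap is case \textit{(ii)}. In your decomposition the local error has no level-one component, so its first nontrivial piece is $R^{(2)}_{s,t}=\tfrac{1}{2}(\mathbf{X}_{s,u}\otimes\mathbf{X}_{u,t}-\mathbf{X}_{u,t}\otimes\mathbf{X}_{s,u})$, whose conditional mean given $\mathcal{F}_{0,s}\vee\mathcal{F}_{t,T}$ is of the form $\mathbb{E}_{\mathcal{F}_{0,s}\vee\mathcal{F}_{t,T}}[\mathbf{X}_{s,u}\otimes\mathbf{X}_{u,t}]$. That quantity is what \eqref{ass:beta} controls --- and \eqref{ass:beta} is not assumed in case \textit{(ii)} --- whereas \eqref{ass:delta} conditions only on the past and says nothing about it. Without \eqref{ass:beta} the only available bound is the brute-force $|t-s|^{2\alpha}=|t-s|$, which summed over $\pi$ gives $O(T)$ and does not vanish; the centered part alone contributes $|\pi|^{2\alpha-1/2}$ but cannot rescue the mean part. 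There are also no terms in your decomposition where a level-one increment appears as the \emph{outermost} factor with everything to its left measurable with respect to the past, so your stated mechanism ``\eqref{ass:delta} controls the drift carried by the level-$1$ increments'' has nothing to attach to. The paper resolves this by proving case \textit{(ii)} with a different, ``causal'' representation of the discrete signature (Lemma~\ref{lemma:discr_signature_manipulation}), which writes $S^{k'+1}(\mathbb{X}^{\pi})_{[0,\tau]}$ as a forward-in-time sum $\sum_{i}\sum_{[u,v]}S^{k'-i}(\mathbb{X}^{\pi})_{[0,u]}\otimes\mathbf{X}_{u,v}^{\otimes(i+1)}$ with no right block; conditioning on the past $\mathcal{F}_u$ then factors out everything except $\mathbb{E}_u[\mathbf{X}_{u,t}]$, which \eqref{ass:delta} controls and which produces the $(\alpha+\delta-1)$ contribution in \eqref{eqn:epsilon}. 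Your two-sided-conditioning scheme cannot generate that exponent, so case \textit{(ii)} needs this separate decomposition.
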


\begin{proof}
    See Appendix~\ref{app:proofs_in_fill}.
\end{proof}

\begin{remark} \label{rem:dyadic_partitions_rate}
    Note that, if $\{\pi_n,\ n\geq 1\}$ is a sequence of dyadic partitions with $|\pi_n| = 2^{-n} T$, then
    \[
    \sum_{n\geq 1} |\pi_n|^{\epsilon} = \sum_{n\geq 1} 2^{-n\epsilon}T^\epsilon = \frac{T^\epsilon}{1 - 2^{-\epsilon}}< \infty,
    \]
    and the rate of convergence is $\mathcal{O}(2^{-n\epsilon})$.
\end{remark}

\subsubsection{Long-span asymptotics}
\begin{theorem} \label{thm:clt_dep}
    Fix $T>0$ and let $\{\mathbf{X}_t,\ t \geq 0\}$ be a stochastic process such that $\mathbb{X}^1 = \{\mathbf{X}_t,\ t\in[0,T]\}$ satisfies the assumptions of Theorem~\ref{thm:sig_conv_lm} with $m>2$. Assume $\{\mathbb{X}^n,\ n\geq 1\}$ is stationary and ergodic and the sequence of partitions $\{\Pi(N),\ N\geq 1\}$ is such tha,t for each $n\geq 1$, $\pi_{\cdot, n} = \{\pi_{N, n},\ N\geq n\}$ is a signature-defining sequence of refining partitions, and 
    \begin{equation} \label{eqn:ass_Pi_cons}
    \sum_{N'\geq N} |\Pi(N')|^\epsilon \rightarrow 0, \quad N\rightarrow \infty.
    \end{equation}
    Then the expected signature estimator \eqref{eqn:estimator_exp_sig_X_Pi} is
    \begin{enumerate}
        \item[1.] consistent, i.e.\ $\displaystyle \hat{\phi}^{\Pi(N)}_{\mathbf{I}}(T) \overset{L^2}{\rightarrow} \phi_{\mathbf{I}}(T)$ as $N \rightarrow \infty$.
    \end{enumerate}
    If, moreover, $\{\mathbb{X}^n,\ n\geq 1\}$ is strongly mixing with mixing coefficient $\{\alpha(n),\ n\geq 1\}$ such that, for $\zeta = m - 2 > 0$, 
    \begin{equation} \label{eqn:strong_mixing_condition_clt}
        \sum_{n\geq 1} \alpha(n)^{\zeta/(2+\zeta)} < \infty,
    \end{equation}
    and
    \begin{equation} \label{eqn:ass_Pi_asym_norm}
    \sqrt{N} \sum_{N'\geq N} |\Pi(N')|^{\epsilon} \rightarrow 0, \quad N\rightarrow \infty,
    \end{equation}
    where $\epsilon$ is given in Equation \eqref{eqn:epsilon}, then the estimator is also
    \begin{enumerate}
        \item[2.] asymptotically normal, i.e.\ 
        \[\sqrt{N}\left(\hat{\phi}^{\Pi(N)}_{\mathbf{I}}(T) - \phi_{\mathbf{I}}(T)\right) \overset{\mathcal{L}}{\rightarrow} \mathcal{N}(0, \Sigma_\mathbf{I}), \quad N \rightarrow \infty,\]
        as long as $\Sigma_\mathbf{I}$ is strictly positive definite, where
        \begin{equation*}
            \begin{split}
                \Sigma_{\mathbf{I}} = &\mathrm{Var}\left(S^\mathbf{I}(\mathbb{X}^1)_{[0,T]}\right) \\
                &+ 2 \sum_{n\geq 2} \mathrm{Cov}\left(S^\mathbf{I}(\mathbb{X}^1)_{[0,T]}, S^\mathbf{I}(\mathbb{X}^n)_{[0,T]}\right).
            \end{split}
        \end{equation*}
    \end{enumerate}
\end{theorem}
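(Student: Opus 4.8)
The plan is to work from the decomposition \eqref{eqn:conv_decomposition_2}, writing $\hat{\phi}^{\Pi(N)}_{\mathbf{I}}(T) - \phi_{\mathbf{I}}(T) = A_N + B_N$ with $A_N := \frac{1}{N}\sum_{n=1}^N \big(S^\mathbf{I}(\mathbb{X}^{n,\pi_{N,n}})_{[0,T]} - S^\mathbf{I}(\mathbb{X}^{n})_{[0,T]}\big)$ the in-fill (bias) term and $B_N := \frac{1}{N}\sum_{n=1}^N S^\mathbf{I}(\mathbb{X}^{n})_{[0,T]} - \phi_\mathbf{I}(T)$ the sampling-fluctuation term, and controlling them separately. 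For $A_N$, fix a sample index $n$: since $\{\mathbb{X}^n,\ n\geq 1\}$ is stationary, each $\mathbb{X}^n$ has the law of $\mathbb{X}^1$, so the implicit constants in the continuity assumptions \eqref{ass:alpha}--\eqref{ass:delta} are common to all $n$; because \eqref{eqn:ass_Pi_cons} forces $\sum_{N\geq 1}|\Pi(N)|^\epsilon<\infty$ and hence $\sum_{N\geq n}|\pi_{N,n}|^\epsilon<\infty$, Theorem~\ref{thm:sig_conv_lm} applies along each sequence $\pi_{\cdot,n}$ and yields $\big\|S^\mathbf{I}(\mathbb{X}^{n,\pi_{N,n}})_{[0,T]} - S^\mathbf{I}(\mathbb{X}^{n})_{[0,T]}\big\|_{L^m} \leq C\sum_{N'\geq N}|\pi_{N',n}|^\epsilon \leq C\sum_{N'\geq N}|\Pi(N')|^\epsilon$ with $C$ independent of $n$. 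Averaging and using $m>2$, $\|A_N\|_{L^2}\leq\|A_N\|_{L^m}\leq C\sum_{N'\geq N}|\Pi(N')|^\epsilon$, which tends to $0$ under \eqref{eqn:ass_Pi_cons} and, multiplied by $\sqrt{N}$, still tends to $0$ under \eqref{eqn:ass_Pi_asym_norm}.

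For part 1 it remains to show $\|B_N\|_{L^2}\to 0$. Set $Y_n := S^\mathbf{I}(\mathbb{X}^{n})_{[0,T]}$; by stationarity $\{Y_n\}$ is stationary with $\mathbb{E}[Y_1]=\phi_\mathbf{I}(T)$, and $Y_1\in L^m$ since it is the $L^m$-limit of $S^\mathbf{I}(\mathbb{X}^{1,\pi_{N,1}})_{[0,T]}$ supplied by Theorem~\ref{thm:sig_conv_lm}. Ergodicity of $\{\mathbb{X}^n\}$ (hence of the measurable image $\{Y_n\}$) and Birkhoff's ergodic theorem applied coordinatewise give $B_N\to 0$ almost surely. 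Moreover $\sup_N\|B_N\|_{L^m}\leq 2\|Y_1\|_{L^m}<\infty$ with $m>2$, so $\{|B_N|^2\}_N$ is bounded in $L^{m/2}$ with $m/2>1$ and therefore uniformly integrable; together with the almost sure convergence this upgrades it to $B_N\to 0$ in $L^2$. Hence $\|\hat{\phi}^{\Pi(N)}_\mathbf{I}(T)-\phi_\mathbf{I}(T)\|_{L^2}\leq\|A_N\|_{L^2}+\|B_N\|_{L^2}\to 0$.

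For part 2, since $\sqrt{N}A_N\to 0$ in $L^2$ (hence in probability), Slutsky's theorem reduces the claim to the CLT $\sqrt{N}B_N = N^{-1/2}\sum_{n=1}^N\big(Y_n-\mathbb{E}[Y_1]\big)\overset{\mathcal{L}}{\to}\mathcal{N}(0,\Sigma_\mathbf{I})$. This I would obtain from the classical central limit theorem for stationary strongly mixing sequences: $\{Y_n\}$ is stationary, its $\alpha$-mixing coefficients are bounded by $\{\alpha(n)\}$ (being a coordinatewise measurable transform of $\{\mathbb{X}^n\}$), it satisfies the moment bound $Y_1\in L^{2+\zeta}$ with $\zeta=m-2>0$, and \eqref{eqn:strong_mixing_condition_clt} is precisely the mixing-rate hypothesis of that theorem; the $\alpha$-mixing covariance inequality together with \eqref{eqn:strong_mixing_condition_clt} also gives $\sum_{n\geq 2}|\mathrm{Cov}(Y_1,Y_n)|<\infty$, so the asymptotic covariance is the absolutely convergent series $\Sigma_\mathbf{I}$. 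As $Y_n$ is vector-valued I would run this through the Cram\'er--Wold device: for every $\lambda$, the scalar sequence $\{\lambda^\top Y_n\}$ is stationary $\alpha$-mixing and satisfies the same hypotheses, so $N^{-1/2}\sum_{n=1}^N\lambda^\top(Y_n-\mathbb{E}[Y_1])\to\mathcal{N}(0,\lambda^\top\Sigma_\mathbf{I}\lambda)$, strict positive definiteness of $\Sigma_\mathbf{I}$ excluding the degenerate case; assembling over $\lambda$ yields the multivariate limit.

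The steps I expect to demand the most care are the uniformity of the in-fill rate -- the constant $C$ in Theorem~\ref{thm:sig_conv_lm} must be traced through its proof and seen to depend only on the law of $\mathbb{X}^1$ and on $\epsilon$, which is where stationarity, rather than merely identically distributed one-dimensional marginals, is used -- and the verification that the off-the-shelf mixing CLT applies as stated: namely that $S^\mathbf{I}(\mathbb{X}^1)_{[0,T]}$ genuinely lies in $L^{2+\zeta}$ (here one uses that an $L^m$-limit is in $L^m$), that passing from $\{\mathbb{X}^n\}$ to $\{Y_n\}$ does not enlarge the mixing coefficients, and that the limiting variance the CLT produces coincides with the stated series $\Sigma_\mathbf{I}$. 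The Slutsky reduction, the Birkhoff-plus-uniform-integrability route to $L^2$ consistency, and the Cram\'er--Wold lift are then routine.
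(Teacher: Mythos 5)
Your proposal is correct and follows essentially the same route as the paper: the same decomposition into an in-fill term controlled uniformly in $n$ by Theorem~\ref{thm:sig_conv_lm} (using stationarity so the constants depend only on the law of $\mathbb{X}^1$), Birkhoff for the $L^2$ consistency of the sampling term (your a.s.-plus-uniform-integrability upgrade is a cosmetic variant of invoking the $L^p$ form of the ergodic theorem), and a stationary strongly mixing CLT via Cram\'er--Wold for asymptotic normality. The one step you assert rather than prove is that $\omega\mapsto S^{\mathbf I}(\omega)$ is a measurable map on path space -- needed so that stationarity, ergodicity and the mixing bounds actually transfer from $\{\mathbb{X}^n\}$ to $\{S^{\mathbf I}(\mathbb{X}^n)_{[0,T]}\}$; the paper isolates this in Proposition~\ref{prop:measurability} (measurability of the discretized signatures plus measurability of pointwise limits on a full-measure set), and your write-up should include that argument.
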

\begin{proof}
    See Appendix~\ref{app:proofs_clt_dep}.
\end{proof}

\begin{remark} 
    \label{rem:dyadic_partitions_rate_cont}
    If $\{\Pi(N),\ N\geq 1\}$ is a sequence of expanding dyadic refinements, i.e.\ for each $n\geq 1$, $\pi_{\cdot, n}$ is a sequence of dyadic partitions with $|\pi_{N, n}| = 2^{-N} T $, $N\geq n$, as in Remark~\ref{rem:dyadic_partitions_rate}, then $|\Pi(N)| = 2^{-N}T$ and, hence,
    \[
    \sqrt{N} \sum_{N'\geq N}|\Pi(N')|^{\epsilon} = \mathcal{O}(\sqrt{N} 2^{-\epsilon N}) \rightarrow 0, \quad N\rightarrow\infty.
    \]
\end{remark}

\begin{corollary} \label{cor:clt_feasible}
    Assume the conditions of Theorem~\ref{thm:clt_dep} hold with Theorem~\ref{thm:sig_conv_lm}.\textit{(ii)} satisfied for some $m > 4$ and for any $T > 0$. Assume furthermore we can characterize the rate of convergence of Theorem~\ref{thm:clt_dep}.1 as $\rho(N) \sim N^{-\upsilon}$ for some $\upsilon\in(0, 1)$. Then the kernel estimator 
    \[
        \hat\Sigma_{\mathbf{I}}^{\Pi(N)} = \sum_{|n| \leq h_N}\, \hat\Sigma_{\mathbf{I}}^{n, \Pi(N)},
    \] 
    with $h_N = N^{\upsilon/2}$, non-overlapping cross-covariances
    \begin{equation*}
    \begin{split}
     \hat{\Sigma}^{n, \Pi(N)}_{\mathbf{I}} = \frac{1}{M} \sum_{m=1}^{M} &[S^{\mathbf{I}}(\mathbb{X}^{\pi_{N,(n+1)m - n}})_{[0,T]} - \hat{\phi}^{\Pi(N)}_{\mathbf{I}}(T)] \\
     &\times [S^{\mathbf{I}}(\mathbb{X}^{\pi_{N,(n+1)m}})_{[0,T]} - \hat{\phi}^{\Pi(N)}_{\mathbf{I}}(T)]^\mathrm{T}\!\!\!,
     \end{split}
    \end{equation*}
    for $M=\lfloor N/ (n+1) \rfloor$ and 
    \[
    \hat\Sigma_{\mathbf{I}}^{-n, \Pi(N)} := \Big(\hat\Sigma_{\mathbf{I}}^{n, \Pi(N)}\Big)^\mathrm{T}\!\!\!,\quad n = 1, \ldots, N-1,
    \]
    is consistent for $\Sigma_{\mathbf{I}}$, i.e.\ $\Sigma^{\Pi(N)}_{\mathbf{I}} \overset{L^2}{\rightarrow} \Sigma_{\mathbf{I}}$ as $N\rightarrow\infty$, and hence the CLT result of Theorem~\ref{thm:clt_dep} can be made feasible.
\end{corollary}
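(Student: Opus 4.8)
The plan is to prove $L^2$-consistency by decomposing the estimation error into three pieces that are controlled separately and recombined by the triangle inequality: a \emph{discretization} error $\hat\Sigma^{\Pi(N)}_{\mathbf{I}}-\bar\Sigma^{N}_{\mathbf{I}}$, where $\bar\Sigma^{N}_{\mathbf{I}}$ is the estimator obtained by replacing in the definition every discretized signature $S^{\mathbf{I}}(\mathbb{X}^{\pi_{N,n}})_{[0,T]}$ by the true signature $S^{\mathbf{I}}(\mathbb{X}^{n})_{[0,T]}$ while keeping the estimated centering $\hat\phi^{\Pi(N)}_{\mathbf{I}}(T)$; a \emph{mean-centering} error $\bar\Sigma^{N}_{\mathbf{I}}-\tilde\Sigma^{N}_{\mathbf{I}}$, where $\tilde\Sigma^{N}_{\mathbf{I}}$ further replaces $\hat\phi^{\Pi(N)}_{\mathbf{I}}(T)$ by $\phi_{\mathbf{I}}(T)$; and an \emph{oracle HAC} error $\tilde\Sigma^{N}_{\mathbf{I}}-\Sigma_{\mathbf{I}}$. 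Two standing facts will be used: by Theorem~\ref{thm:sig_conv_lm} (available here since $m>4$) and stationarity, $\|S^{\mathbf{I}}(\mathbb{X}^{\pi_{N,n}})_{[0,T]}-S^{\mathbf{I}}(\mathbb{X}^{n})_{[0,T]}\|_{L^m}\lesssim\sum_{N'\geq N}|\Pi(N')|^{\epsilon}=:r(N)$ uniformly in $n$ (using $|\pi_{N',n}|\leq|\Pi(N')|$); and $S^{\mathbf{I}}(\mathbb{X}^{n})_{[0,T]}$, $S^{\mathbf{I}}(\mathbb{X}^{\pi_{N,n}})_{[0,T]}$ and $\hat\phi^{\Pi(N)}_{\mathbf{I}}(T)$ are all bounded in $L^m$ with $m>4$, so products of any two of them lie in $L^{m/2}$ with $m/2>2$, while $\|\hat\phi^{\Pi(N)}_{\mathbf{I}}(T)-\phi_{\mathbf{I}}(T)\|\lesssim\rho(N)\sim N^{-\upsilon}$ by Theorem~\ref{thm:clt_dep}.1 (the rate being driven by the statistical fluctuation $\asymp N^{-1/2}$ and by $r(N)$).

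For the oracle term, split $\tilde\Sigma^{N}_{\mathbf{I}}-\Sigma_{\mathbf{I}}=(\tilde\Sigma^{N}_{\mathbf{I}}-\mathbb{E}\tilde\Sigma^{N}_{\mathbf{I}})+(\mathbb{E}\tilde\Sigma^{N}_{\mathbf{I}}-\Sigma_{\mathbf{I}})$. By stationarity the non-overlapping block average $\tilde\Sigma^{n}_{\mathbf{I}}$ has expectation exactly $\gamma(n):=\mathrm{Cov}(S^{\mathbf{I}}(\mathbb{X}^{1})_{[0,T]},S^{\mathbf{I}}(\mathbb{X}^{1+n})_{[0,T]})$ (the $\lfloor N/(n+1)\rfloor$ rounding discards at most $h_N=o(\sqrt N)$ of the $N$ segments and introduces no bias), so $\mathbb{E}\tilde\Sigma^{N}_{\mathbf{I}}-\Sigma_{\mathbf{I}}$ is the tail of the autocovariance series defining $\Sigma_{\mathbf{I}}$; Davydov's covariance inequality gives $\|\gamma(n)\|\lesssim\alpha(n)^{\zeta/(2+\zeta)}\|S^{\mathbf{I}}(\mathbb{X}^{1})_{[0,T]}\|_{L^m}^2$ with $\zeta=m-2$, so \eqref{eqn:strong_mixing_condition_clt} makes the series summable and the bias vanishes because $h_N=N^{\upsilon/2}\to\infty$. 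For the centered part, each $\tilde\Sigma^{n}_{\mathbf{I}}$ is a sample average of $M\asymp N/(n+1)$ lag-$n$ products over non-overlapping blocks of width $n+1$; since those products lie in $L^{m/2}$ with $m/2>2$, a second application of a mixing covariance inequality bounds both $\mathrm{Var}(\tilde\Sigma^{n}_{\mathbf{I}})$ and the cross-lag covariances $\mathrm{Cov}(\tilde\Sigma^{n}_{\mathbf{I}},\tilde\Sigma^{n'}_{\mathbf{I}})$, yielding $\mathrm{Var}(\tilde\Sigma^{N}_{\mathbf{I}})=\mathcal{O}(h_N^2/N)$, which tends to $0$ because $\upsilon<1$ forces $h_N=o(\sqrt N)$. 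This is the one place where $m>4$ is genuinely needed rather than the $m>2$ of Theorem~\ref{thm:clt_dep}.

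For the discretization error, write $S^{\mathbf{I}}(\mathbb{X}^{\pi_{N,n}})_{[0,T]}=S^{\mathbf{I}}(\mathbb{X}^{n})_{[0,T]}+E_n$ with $\|E_n\|_{L^m}\lesssim r(N)$, expand each summand of $\hat\Sigma^{n}_{\mathbf{I}}$ bilinearly in the signatures, and bound by Hölder (affordable since $m>4$) every term carrying at least one factor $E_n$: this gives $\|\hat\Sigma^{n}_{\mathbf{I}}-\bar\Sigma^{n}_{\mathbf{I}}\|_{L^2}\lesssim r(N)$ uniformly in $|n|\leq h_N$, hence $\|\hat\Sigma^{\Pi(N)}_{\mathbf{I}}-\bar\Sigma^{N}_{\mathbf{I}}\|_{L^2}\lesssim h_N\,r(N)\leq\sqrt N\sum_{N'\geq N}|\Pi(N')|^{\epsilon}\to0$ by \eqref{eqn:ass_Pi_asym_norm}. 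For the mean-centering error, the elementary identity for passing from the true to the estimated mean turns each lag-$n$ block average into itself (at the true mean) plus terms of the form (block average of centered true signatures)$\times(\hat\phi^{\Pi(N)}_{\mathbf{I}}(T)-\phi_{\mathbf{I}}(T))^{\mathrm{T}}$ and $(\hat\phi^{\Pi(N)}_{\mathbf{I}}(T)-\phi_{\mathbf{I}}(T))(\hat\phi^{\Pi(N)}_{\mathbf{I}}(T)-\phi_{\mathbf{I}}(T))^{\mathrm{T}}$; since a lag-$n$ block average of centered signatures has norm $\mathcal{O}(M^{-1/2})=\mathcal{O}(\sqrt{(n+1)/N})$ and $\|\hat\phi^{\Pi(N)}_{\mathbf{I}}(T)-\phi_{\mathbf{I}}(T)\|\lesssim\rho(N)$, summing over $|n|\leq h_N=N^{\upsilon/2}$ bounds this error in $L^2$ by $\mathcal{O}\big(N^{-1/2}\rho(N)\,h_N^{3/2}\big)+\mathcal{O}\big(h_N\rho(N)^2\big)=\mathcal{O}(N^{-1/2-\upsilon/4})+\mathcal{O}(N^{-3\upsilon/2})\to0$. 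Combining the three bounds gives $\hat\Sigma^{\Pi(N)}_{\mathbf{I}}\to\Sigma_{\mathbf{I}}$ in $L^2$, and since $\Sigma_{\mathbf{I}}$ is strictly positive definite the feasible CLT follows from Theorem~\ref{thm:clt_dep}.2 by Slutsky's theorem.

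I expect the main obstacle to be the variance bound $\mathrm{Var}(\tilde\Sigma^{N}_{\mathbf{I}})=\mathcal{O}(h_N^2/N)$ in the oracle step: the blocks underlying different lags reuse essentially the same data, so the cross-lag covariances $\mathrm{Cov}(\tilde\Sigma^{n}_{\mathbf{I}},\tilde\Sigma^{n'}_{\mathbf{I}})$ cannot be dispatched by a crude Cauchy--Schwarz bound (which would only give $\mathcal{O}(h_N^3/N)$ and force $\upsilon<2/3$) but require a careful fourth-order mixing argument, and it is precisely here that the strengthened moment hypothesis $m>4$ (i.e.\ $\zeta>2$), together with \eqref{eqn:strong_mixing_condition_clt}, enters. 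The remaining content is the bookkeeping that the single bandwidth $h_N=N^{\upsilon/2}$ simultaneously kills the truncation bias ($h_N\to\infty$), the oracle variance ($h_N=o(\sqrt N)$), the discretization error ($h_N\,r(N)\to0$ via \eqref{eqn:ass_Pi_asym_norm}) and the mean-centering error ($h_N\rho(N)^2\to0$ and $N^{-1/2}\rho(N)h_N^{3/2}\to0$), all valid for every $\upsilon\in(0,1)$.
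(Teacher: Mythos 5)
Your proof follows the classical Newey--West/HAC consistency template, which is a genuinely different route from the paper's, but it leaves its central step unproved. The decomposition into discretization, mean-centering and oracle errors is sound, and the first two pieces are handled correctly (in particular $h_N\, r(N)\leq \sqrt{N}\sum_{N'\geq N}|\Pi(N')|^{\epsilon}\rightarrow 0$ is exactly the right use of \eqref{eqn:ass_Pi_asym_norm}). The gap is the oracle variance bound $\mathrm{Var}(\tilde\Sigma^{N}_{\mathbf{I}})=\mathcal{O}(h_N^2/N)$: you correctly observe that Cauchy--Schwarz over the cross-lag covariances only yields $\mathcal{O}(h_N^3/N)$, hence only $\upsilon<2/3$, and that the full range $\upsilon\in(0,1)$ requires a fourth-order cumulant/mixing argument for $\mathrm{Cov}(\tilde\Sigma^{n}_{\mathbf{I}},\tilde\Sigma^{n'}_{\mathbf{I}})$ --- but you do not carry it out. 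This is not mere bookkeeping: the standard cumulant-summability conditions for sample autocovariances of mixing sequences with $m$ moments are of the form $\sum_n \alpha(n)^{(m-4)/m}<\infty$, and since $(m-4)/m<(m-2)/m=\zeta/(2+\zeta)$ and $\alpha(n)\leq 1$, the hypothesis \eqref{eqn:strong_mixing_condition_clt} does not obviously imply this. So the proof is incomplete precisely at the point you yourself flag as the main obstacle, and it is not clear the stated assumptions suffice for your route.

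The paper sidesteps this entirely with a device your argument never touches, which also explains the two hypotheses you never use (the validity of the in-fill conditions ``for any $T>0$'' and the rate $\rho(N)\sim N^{-\upsilon}$ of Theorem~\ref{thm:clt_dep}.1). For each lag $n$ it stitches $\mathbb{X}^1,\ldots,\mathbb{X}^{1+n}$ into an $(n+1)d$-dimensional process $\mathbb{Y}^n$ on $[0,(n+1)T]$ so that, by the shuffle identity, $\Sigma^{n}_{I,J}=\sum_{K\in I\shuffle (nd+J)}\psi_K((n+1)T)-\phi_I(T)\phi_J(T)$ is a linear combination of expected signature terms of $\mathbb{Y}^n$; consistency of each non-overlapping cross-covariance estimator then follows by re-applying Theorem~\ref{thm:clt_dep}.1 to $\mathbb{Y}^n$ (this is where $m>4$ enters, since the shuffle words have length up to $2k$, and where ``any $T>0$'' is needed, since $\mathbb{Y}^n$ lives on an arbitrarily long interval), and the bandwidth choice is validated via $\sum_{|n|\leq h_N}\rho(\lfloor N/(n+1)\rfloor)\lesssim h_N^{1+\upsilon}N^{-\upsilon}\rightarrow 0$ for $h_N=N^{\upsilon/2}$. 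To salvage your route you would have to supply the fourth-order mixing argument under the stated assumptions (or strengthen the mixing hypothesis); otherwise the stitching-plus-shuffle reduction is the argument that actually closes the proof.
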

\begin{proof}
    See Appendix~\ref{app:proofs_clt_feasible}.
\end{proof}

Requiring $\{\mathbb{X}^n,\ n\geq 1\}$ to be stationary and ergodic or strongly mixing are high-level conditions. The following results give stronger but easier-to-interpret conditions.

\begin{proposition} \label{prop:stationarity_strong_mixing_implications}
    Fix $T>0$ and let $\{\mathbf{X}_t,\ t \geq 0\}$ be a stochastic process. Then\footnote{We say $\{\mathbf{X}_t,\ t\geq 0\}$ has jointly stationary increments if for all $n\in\mathbb{N}$, $0\leq s_i\leq t_i$ with $i=1,\ldots, n,$ and $t\geq 0$,
    \begin{equation} \label{eqn:joint_stationarity_incr}
		(\mathbf{X}_{s_1,t_1}, \ldots, \mathbf{X}_{s_n,t_n}) \overset{\mathcal{L}}{=} (\mathbf{X}_{t+s_1,t+t_1}, \ldots, \mathbf{X}_{t+s_n,t+t_n}).
	\end{equation}}
    \begin{align*}
        \{&\mathbf{X}_t,\ t\geq 0\}\  \text{is stationary} \\
        &\implies \{\mathbf{X}_t,\ t\geq 0\}\  \text{has jointly stationary increments} \\
        &\implies \{\mathbb{X}^n,\ n\geq 1\}\ \text{is stationary}.
    \end{align*}
    If any of the above holds, and $\mathbb{X}^1$ is a canonical geometric stochastic process, then, for any collection of words $\mathbf{I}$,
    \begin{align*}
		\{\mathbf{X}_t,\ &t\geq 0\}\  \text{is strongly mixing} \\
        &\implies \{\mathbb{X}^n,\ n\geq 1\}\  \text{is strongly mixing}.
    \end{align*}
\end{proposition}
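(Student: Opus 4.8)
The plan is to chain the three implications, using two elementary facts throughout: the law of a random element of the Polish space $C([0,T];\mathbb{R}^d)$ (or of a finite product thereof) is determined by its finite-dimensional distributions, and both equality in law and the strong-mixing property are stable under applying a fixed measurable map to each coordinate. The latter is what will finally transport the conclusions from $\{\mathbf{X}_t\}$ to the recentred blocks $\{\mathbb{X}^n\}$ and then, since $\mathbb{X}^1$ — and hence every $\mathbb{X}^n$, as all of them share its law — is a canonical geometric stochastic process, so that $S^{\mathbf{I}}(\cdot)_{[0,T]}$ is a fixed Borel functional on path space (being, by Remark~\ref{rem:prob_conv_sig_rho}, an a.s.\ limit of polynomial functionals of finitely many increments), to the sequence $\{S^{\mathbf{I}}(\mathbb{X}^n)_{[0,T]},\ n\geq 1\}$ for every collection of words $\mathbf{I}$.

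For the first implication I would fix $n$, times $0\le s_i\le t_i$ and a shift $t\ge 0$, note that strict stationarity gives $(\mathbf{X}_{s_1},\mathbf{X}_{t_1},\dots,\mathbf{X}_{s_n},\mathbf{X}_{t_n})\overset{\mathcal{L}}{=}(\mathbf{X}_{t+s_1},\mathbf{X}_{t+t_1},\dots,\mathbf{X}_{t+s_n},\mathbf{X}_{t+t_n})$, and apply the continuous map $(x_1,y_1,\dots,x_n,y_n)\mapsto(y_1-x_1,\dots,y_n-x_n)$ to both sides to get exactly \eqref{eqn:joint_stationarity_incr}.

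For the second implication I would first fix the convention that $\mathbb{X}^n$ is the $n$-th block of $\mathbf{X}$ over $[(n-1)T,nT]$, reparametrised to $[0,T]$ and recentred so that $\mathbb{X}^n_0=0$ — harmless since the signature depends only on increments, and it makes $\mathbb{X}^n_u=\mathbf{X}_{(n-1)T,\,(n-1)T+u}$ an increment of $\mathbf{X}$ for every $u\in[0,T]$. Then, given $k$, indices $n_1,\dots,n_k$, a shift $n\ge0$, and finitely many evaluation times inside each block, the joint vector of the corresponding evaluations of $(\mathbb{X}^{n_1},\dots,\mathbb{X}^{n_k})$ is a vector of increments $\mathbf{X}_{s_i,t_i}$ whose endpoints all lie in $\bigcup_j[(n_j-1)T,n_jT]$; shifting every endpoint by $nT$ turns it into the corresponding vector for $(\mathbb{X}^{n_1+n},\dots,\mathbb{X}^{n_k+n})$, and \eqref{eqn:joint_stationarity_incr} equates their laws. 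Since this holds for every finite collection of evaluation times, equality of finite-dimensional distributions yields \eqref{eqn:stationary_segments}, and applying $S^{\mathbf{I}}(\cdot)_{[0,T]}$ coordinatewise then gives stationarity of $\{S^{\mathbf{I}}(\mathbb{X}^n)_{[0,T]}\}$.

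For the strong-mixing implication I would use that $\sigma(\mathbb{X}^1,\dots,\mathbb{X}^k)\subseteq\sigma(\mathbf{X}_r,\,0\le r\le kT)$ while $\sigma(\mathbb{X}^{k+n},\mathbb{X}^{k+n+1},\dots)\subseteq\sigma(\mathbf{X}_r,\,r\ge(k+n-1)T)$, so that for events $A$ in the first and $B$ in the second $\sigma$-algebra the continuous-time $\alpha$-mixing coefficient of $\{\mathbf{X}_t\}$ controls $|\mathbb{P}(A\cap B)-\mathbb{P}(A)\mathbb{P}(B)|$ at time-gap $(k+n-1)T-kT=(n-1)T$; taking suprema over $k$, $A$, $B$ shows the lag-$n$ mixing coefficient of $\{\mathbb{X}^n\}$ is at most $\alpha_{\mathbf{X}}((n-1)T)$, which tends to $0$ as $n\to\infty$, so $\{\mathbb{X}^n\}$ — and hence, through the measurable functional $S^{\mathbf{I}}(\cdot)_{[0,T]}$, also $\{S^{\mathbf{I}}(\mathbb{X}^n)_{[0,T]}\}$ — is strongly mixing. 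I do not expect a genuine obstacle in any of the three steps; the points that need care are all bookkeeping — verifying that each evaluation of $\mathbb{X}^n$ is literally an increment of $\mathbf{X}$ over $[(n-1)T,nT]$ (which is exactly why recentring is both natural and necessary here), that $S^{\mathbf{I}}(\cdot)_{[0,T]}$ can be taken as a single fixed measurable map applied to every block, and that the continuous- and discrete-time conventions for the $\alpha$-mixing coefficients line up.
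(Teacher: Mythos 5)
Your proof is correct. The two stationarity implications follow the same route as the paper: the first is immediate, and for the second the paper likewise reduces to finite-dimensional (cylinder-set) distributions of increments, shifts them via \eqref{eqn:joint_stationarity_incr}, and extends to the product $\sigma$-algebra (it phrases this via a semi-ring of cylinder sets and Carath\'eodory rather than ``fdds determine the law,'' but these are the same argument). Where you genuinely diverge is the strong-mixing step. The paper routes through an intermediate \emph{progressive increment process} $\mathbf{X}^T_t=\mathbf{X}_{\lfloor t/T\rfloor T,\,t}$, first bounding its mixing coefficient by $\alpha(s-2T)$ (losing $2T$ because $\mathbf{X}^T_u$ for $u\geq t+s$ reaches back to $\lfloor(t+s)/T\rfloor T$) and then transferring to the blocks, ending with $\alpha''(n)\leq\alpha((n-3)T)$ for $n\geq 3$. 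You instead bound the block $\sigma$-algebras directly, $\sigma(\mathbb{X}^1,\dots,\mathbb{X}^k)\subseteq\sigma(\mathbf{X}_r,\ r\leq kT)$ and $\sigma(\mathbb{X}^{k+n},\dots)\subseteq\sigma(\mathbf{X}_r,\ r\geq (k+n-1)T)$ — valid because each recentred block only involves values of $\mathbf{X}$ inside its own window — which gives the marginally sharper bound $\alpha''(n)\leq\alpha((n-1)T)$ with no intermediate process. Both yield the stated conclusion; your version is a little cleaner and tighter, while the paper's intermediate process is reused nowhere else, so nothing is lost. Your closing remark that $\{S^{\mathbf{I}}(\mathbb{X}^n)_{[0,T]}\}$ inherits stationarity and mixing through the fixed measurable map $\mathcal{S}^{\mathbf{I}}$ is not part of this proposition's conclusion in the paper (it appears later, in the proof of the CLT, backed by Proposition~\ref{prop:measurability}), but it is correct and harmless.
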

\begin{proof}
    See Appendix~\ref{app:proofs_stationarity_strong_mixing_implications}.
\end{proof}

One might expect a similar statement to hold for ergodicity, but Remark~\ref{rem:ergodic_implications} shows that 
\begin{align*}
    \{\mathbf{X}_t,\ &t\geq 0\}\  \text{is ergodic} \centernot\implies \{\mathbb{X}^n,\ n\geq 1\}\  \text{is ergodic}.
\end{align*}
Strong mixing implies ergodicity and hence the second part of Proposition~\ref{prop:stationarity_strong_mixing_implications} yields a sufficient condition (as far as $\{\mathbb{X}^n,\ n\geq 1\}$ is concerned) for both the consistency and asymptotic normality results of Theorem~\ref{thm:clt_dep}. Strong mixing is a somewhat restrictive assumption and hence one might wish to find a set of interpretable conditions weaker than strong mixing ensuring at least consistency of the estimator. The following theorem gives such a condition when $\{\mathbf{X}_t,\ t\geq 0\}$ is a Gaussian process.

\begin{theorem} \label{thm:cons_dep_GP}
    Fix $T>0$ and let $\{\mathbf{X}_t,\ t \geq 0\}$ be a Gaussian process such that $\mathbb{X} = \{\mathbf{X}_t,\ t\in[0,T]\}$ is a canonical geometric stochastic process satisfying\footnote{When $\alpha=1/2$, assume furthermore $\mathbb{X}$ satisfies \eqref{ass:delta} with $\delta \geq 1$ and $p=2k$ where $k=\max_{I\in\mathbf{I}} |I|$.} \eqref{ass:alpha} with $\alpha \geq 1/2$ and $p=2$. Assume the sequence of dyadic partitions of $[0, T]$ is signature-defining for $\mathbb{X}$ and for each $N\geq 1$ let $\pi_{N,n}$ be the dyadic partition the interval $[0,T]$ with mesh $|\pi_{N, n}|=2^{-N}T$.
    
    Suppose $\{\mathbf{X}_t,\ t \geq 0\}$ has constant mean and time-homogeneous increment covariance, i.e.\ $\forall u,v,s,t,r\geq 0$
    \begin{align*}
    \mathrm{Cov}\left(\mathbf{X}_{u,v}, \mathbf{X}_{s,t}\right) &=\mathrm{Cov}\left(\mathbf{X}_{u+r,v+r}, \mathbf{X}_{s+r,t+r}\right),
    \end{align*}
    satisfying, for some decreasing $\theta:\mathbb{R}_+\rightarrow\mathbb{R}_+$ with $\theta(t)\rightarrow 0, \ t\rightarrow\infty$ and $\int_0^T\theta(t) \text{d} t < \infty$ and $m\in\mathbb{N}$,
    \begin{enumerate}[style=multiline, labelwidth=2em, leftmargin=3em]
        \item[\normalfont{(\itemlabel{ass:theta}{A$\theta$})}] $\displaystyle \|\mathrm{Cov}\left(\mathbf{X}_{u,v}, \mathbf{X}_{s,t}\right)\| \lesssim \theta(|s-v|) |v-u| |t-s|$,
    \end{enumerate}
    for all $0\leq u\leq v < s\leq t$ with $|s-v| \geq \frac{m}{2}(|t-s| + |v-u|)$. Then the expected signature estimator \eqref{eqn:estimator_exp_sig_X_Pi} is consistent, i.e.\ $\displaystyle \hat{\phi}^{\Pi(N)}_{\mathbf{I}}(T) \overset{\mathbb{P}}{\rightarrow} \phi_{\mathbf{I}}(T)$ as $N \rightarrow \infty$.
\end{theorem}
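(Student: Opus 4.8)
The plan is to apply the decomposition \eqref{eqn:conv_decomposition_2} to the dyadic refinements $\Pi(N)$ and bound its two terms separately; this will in fact give convergence in $L^2$, hence in probability. \emph{First term.} Every increment $\mathbf{X}_{s,t}$ is centered Gaussian (constant mean), so its $L^p$-norms are all equivalent and \eqref{ass:alpha} with $p=2$ (and \eqref{ass:delta} with $p=2k$ in the borderline case) self-upgrades to the same bound for every $p$; hence $\mathbb{X}$ satisfies the hypotheses of Theorem~\ref{thm:sig_conv_lm} for every $m\geq2$, with $\epsilon=2\alpha-1>0$ when $\alpha>1/2$ and $\epsilon=1/2>0$ when $\alpha=1/2,\ \delta\geq1$. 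For dyadic partitions $\sum_n|\pi_n|^{\epsilon}<\infty$ (Remark~\ref{rem:dyadic_partitions_rate}) and, since all $\mathbb{X}^n$ have the same law, the $L^m$-rate $\mathcal{O}(2^{-N\epsilon})$ is uniform in $n$; by Minkowski the first term of \eqref{eqn:conv_decomposition_2} is $\mathcal{O}(2^{-N\epsilon})$ in $L^m\subseteq L^2$. \emph{Reduction of the second term.} A Gaussian law is determined by its mean and covariance, and $S^{\mathbf{I}}(\mathbb{X}^n)_{[0,T]}$ is a function only of the increments $\{\mathbf{X}_{(n-1)T+s,(n-1)T+t}\}$, so constant mean together with time-homogeneous increment covariance give jointly stationary increments and hence, by Proposition~\ref{prop:stationarity_strong_mixing_implications}, $\{\mathbb{X}^n,\ n\geq1\}$ is stationary. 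Writing $Y_n:=S^{\mathbf{I}}(\mathbb{X}^n)_{[0,T]}$ we have $\mathbb{E}[Y_n]=\phi_{\mathbf{I}}(T)$, $\mathrm{Var}(Y_1)<\infty$, and $\mathrm{Var}(\tfrac{1}{N}\sum_{n=1}^NY_n)=\tfrac{1}{N}\mathrm{Var}(Y_1)+\tfrac{2}{N^2}\sum_{n=1}^{N-1}(N-n)\,\mathrm{Cov}(Y_1,Y_{n+1})$ by stationarity, so by Ces\`aro's lemma it suffices to prove $\gamma(n):=\mathrm{Cov}(Y_1,Y_{n+1})\to0$ as $n\to\infty$.

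To prove $\gamma(n)\to0$ I would pass to discretized signatures. Set $q:=\max_{J\in\mathbf{I}}|J|$. By \eqref{eqn:discr_sig}, for $I\in\mathbf{I}$ the discretization $S^{I}((\mathbb{X}^j)^{\pi_N})_{[0,T]}$ is a polynomial of degree $|I|\leq q$, with coefficients bounded by $1$, in the $2^N$ dyadic increments of $\mathbb{X}^j$, and it converges to $S^{I}(\mathbb{X}^j)_{[0,T]}$ in $L^2$ by Theorem~\ref{thm:sig_conv_lm}; hence each entry of $\gamma(n)$ equals $\lim_{N\to\infty}\mathrm{Cov}(S^{I}((\mathbb{X}^1)^{\pi_N})_{[0,T]},S^{I'}((\mathbb{X}^{n+1})^{\pi_N})_{[0,T]})$. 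Expanding this covariance by Wick's (Isserlis') theorem writes it as a sum, over Gaussian pairings of the $|I|+|I'|$ increment factors of a pair of monomials, of products of pairwise covariances, restricted to pairings containing at least one \emph{cross-pair} between a $[0,T]$-increment of $\mathbb{X}^1$ and an $[nT,(n+1)T]$-increment of $\mathbb{X}^{n+1}$ (cross-pair-free pairings cancel against $\mathbb{E}[\cdot]\mathbb{E}[\cdot]$). Taking absolute values and summing over interval indices first, the sum factorizes over the edges of each pairing pattern: a within-segment edge contributes $\sum_{j,k}\|\mathrm{Cov}(\mathbf{X}_{t_j,t_{j+1}},\mathbf{X}_{t_k,t_{k+1}})\|$, bounded uniformly in $N$ because its near-diagonal part is $\lesssim 2^N|\pi_N|^{2\alpha}$, bounded since $\alpha\geq1/2$ (by \eqref{ass:alpha} and Cauchy--Schwarz), and its off-diagonal part is $\lesssim 2^N|\pi_N|^2\sum_{l\geq1}\theta(l|\pi_N|)\lesssim 2^N|\pi_N|\int_0^T\theta(t)\,\mathrm{d}t<\infty$ (by \eqref{ass:theta}, which applies to far-apart intervals as $\theta$ is decreasing), while each of the $\geq1$ mandatory cross-edges contributes at most $2^{2N}|\pi_N|^2\,\theta((n-1)T)=T^2\theta((n-1)T)$, since for $n\geq m+1$ the gap between a $[0,T]$- and an $[nT,(n+1)T]$-increment exceeds $\tfrac{m}{2}$ times the total length of the two increments, so \eqref{ass:theta} applies and $\theta$ is decreasing. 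As the number of pairing patterns depends only on $q$, this gives $|\gamma(n)|\lesssim\theta((n-1)T)\to0$; combined with the first paragraph, $\hat{\phi}^{\Pi(N)}_{\mathbf{I}}(T)\overset{L^2}{\rightarrow}\phi_{\mathbf{I}}(T)$, a fortiori in probability.

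The step I expect to be the crux is this last uniform-in-mesh estimate: a term-by-term bound on the $\sim2^{2Nq}$ monomial-pair contributions fails, since the worst-case per-term bound diverges as $N\to\infty$; one has to carry out the sum over partition points before estimating, so that the within-segment covariances telescope, and $\alpha\geq1/2$ is exactly the exponent that keeps the near-diagonal within-segment sum bounded while $\int_0^T\theta<\infty$ controls its tail. Conceptually this says each segment's Cameron--Martin space is ``thin'' relative to its neighbours; equivalently one could bound $|\gamma(n)|$ by $\|P_1P_{n+1}\|_{\mathrm{op}}\,\mathrm{Var}(Y_1)$, where $P_j$ projects the Cameron--Martin space of $\{\mathbf{X}_t,\ t\geq0\}$ onto the closed span of the increments of $\mathbb{X}^j$ (so $\|P_1P_{n+1}\|_{\mathrm{op}}$ is the maximal correlation of $\sigma(\mathbb{X}^1)$ and $\sigma(\mathbb{X}^{n+1})$ in this Gaussian family), and then derive $\|P_1P_{n+1}\|_{\mathrm{op}}\to0$ from \eqref{ass:theta} by a Schur/Young-type estimate on the covariance density -- a route that meets the same normalisation issue and again hinges on $\int_0^T\theta<\infty$.
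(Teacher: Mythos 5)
Your proposal is correct and follows essentially the same route as the paper: decomposition \eqref{eqn:conv_decomposition_2}, Gaussian upgrade of \eqref{ass:alpha} to all moments so Theorem~\ref{thm:sig_conv_lm} kills the in-fill term, stationarity via Proposition~\ref{prop:stationarity_strong_mixing_implications}, a weak LLN reduced to covariance decay, and the decay itself obtained by passing to dyadic discretizations, applying Isserlis' theorem so that only pairings with a cross-pair survive, and summing over partition points \emph{before} estimating (near-diagonal terms controlled by $\alpha\geq 1/2$, far terms by $\int_0^T\theta$, cross-pairs by $\theta((n-1)T)$). You even correctly identified the uniform-in-mesh estimate as the crux, which is exactly where the paper's bookkeeping with the $m$-step interval grouping is spent.
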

\begin{proof}
    See Appendix~\ref{app:proofs_cons_dep_GP}.
\end{proof}

\subsection{Variance Reduction via Martingale Correction} \label{sec:martingale_correction}
In Section~\ref{sec:estimating_expected_signatures} we developed the necessary theory to establish the asymptotic properties of the estimator \eqref{eqn:estimator_exp_sig_X_Pi} for the statistic $\phi_I(T) = \mathbb{E}[S^I(\mathbb{X})_{[0,T]}]$, for any word $I=(i_1, \ldots, i_k)$. This section aims to find an alternative estimator with better finite sample properties when the process $\mathbb{X}=\{\mathbf{X}_t,\ t\in[0,T]\}$ is a martingale. We restrict ourselves to the independent observation setting, with the same partition across samples, i.e.\ $\pi_{N, n} = \pi$ for $n=1, \ldots, N$. We will hence be considering the estimator
\begin{equation} \label{eqn:naive_discretized_est}
    \hat{\phi}_I^{N, \pi}(T) := \frac{1}{N} \sum_{n=1}^N S^I(\mathbb{X}^{n, \pi})_{[0,T]}, 
\end{equation}
where the $\mathbb{X}^{n, \pi}$ are i.i.d.\ piecewise linear observations of $\mathbb{X}$ over the partition\footnote{Note that by \citet[Chapter~14]{frizvictoir2010} any sequence of partitions with vanishing mesh size is signature-defining.} $\pi$. We introduce the control-variate modification of the estimator \eqref{eqn:naive_discretized_est},
\begin{equation} \label{eqn:control_variate_discretized_est_main_text}
    \!\!\hat{\phi}_I^{N, \pi, c}(T)\! := \!\frac{1}{N}\! \sum_{n=1}^N\!\big(S^I\!(\mathbb{X}^{n, \pi})_{[0,T]} - c S_c^I\!(\mathbb{X}^{n, \pi})_{[0,T]}\big), \!
\end{equation}
where, setting $I_{-1} := (i_1, \ldots, i_{k-1})$,
\[
S_c^I(\mathbb{X}^{\pi})_{[0,T]} := \sum_{[u, v] \in \pi} S^{I_{-1}}(\mathbb{X}^\pi)_{[0, u]} X^{(i_k)}_{u,v}.
\]
The correction term $S_c^I(\mathbb{X}^{\pi})_{[0,T]}$ is inspired by considering the continuous-time signature
\begin{align*}
    S^{I}(\mathbb{X})_{[0,T]} 
    &= \int_0^T S^{I_{-1}}(\mathbb{X})_{[0,s]} \circ \, \mathrm{d}X^{(i_k)}_{s}, 
\end{align*}
where the integral is defined in the Stratonovich sense. To preserve the estimator's unbiasedness while reducing the variance we aim to find a mean-zero control variate $S^{I}_c(\mathbb{X})_{[0,T]}$ that is highly correlated with $S^{I}(\mathbb{X})_{[0,T]}$. A natural candidate is
\begin{equation*}
    S^{I}_c(\mathbb{X})_{[0,T]} = \int_0^T S^{I_{-1}}(\mathbb{X})_{[0,s]} \, \mathrm{d}X^{(i_k)}_{s}, 
\end{equation*}
where the outermost integral is now interpreted in the It\^{o} sense. If $\mathbb{X}$ is a square-integrable martingale satisfying the conditions of \citet[Theorem~I.4.40]{Jacod_Shiryaev_1987}, $\{S^{I}_c(\mathbb{X})_{[0,t]},\ t\in[0,T]\}$ is also a square-integrable martingale with $\mathbb{E}[S^{I}_c(\mathbb{X})_{[0,T]}] = 0$. Going back to the discretized setting, we note that, when $\mathbb{X}$ is a martingale, the discretized correction term $S_c^I(\mathbb{X}^{\pi})_{[0,T]}$ is also mean-zero and, hence, the control variate estimator $\hat{\phi}_I^{N, \pi, c}(T)$ has the same bias as $\hat{\phi}_I^{N, \pi}(T)$, but, when picking the optimal\footnote{We assume throughout $\text{Var}( S^{I}_c(\mathbb{X}^{\pi})_{[0,T]})\in(0, \infty).$}
\[
c = c_{\pi}^* := \frac{\text{Cov}(S^I(\mathbb{X}^\pi)_{[0,T]}, S^{I}_c(\mathbb{X}^\pi)_{[0,T]})}{\text{Var}( S^{I}_c(\mathbb{X}^{\pi})_{[0,T]})},
\]
it has reduced variance 
\[
\text{Var}(\hat{\phi}_{I}^{N, \pi, c_\pi^*}(T)) = (1 - \rho^2_{I, \pi}) \text{Var}(\hat{\phi}_{I}^{N, \pi}(T)), 
\]
\[
\text{where}\quad \rho_{I, \pi} := \text{Corr}(S^I(\mathbb{X}^{\pi})_{[0,T]}, S^{I}_c(\mathbb{X}^{\pi})_{[0,T]}).
\]

In practice, to estimate $c_\pi^*$, the most straightforward approach would be to use the sample variance and covariance. In this case the estimator for $c_\pi^*$ is the slope of the simple linear regression of $\{S^I(\mathbb{X}^{n, \pi})_{[0,T]},\ n=1,\ldots, N\}$ against $\{S_c^I(\mathbb{X}^{n, \pi})_{[0,T]},\ n=1,\ldots, N\}$ or, exploiting the mean zero property of the control, 
\[
\hat{c}^*_{\pi} = \frac{\sum_{n=1}^N S^I(\mathbb{X}^{n, \pi})_{[0,T]} S_c^I(\mathbb{X}^{n, \pi})_{[0,T]}}{\sum_{n=1}^N S_c^I(\mathbb{X}^{n, \pi})^2_{[0,T]}}.
\]
In Appendix~\ref{app:martingale_correction_c*} we propose an alternative estimator for $c_{\pi}^*$ derived using the properties of the signature.

\begin{remark} \label{rem:var_reduction_mart_components}
    This variance reduction technique is not limited to processes $\mathbb{X}$ that are \textit{full} martingales but can also be applied to \textit{partial} martingales, i.e.\ $\mathbb{X}$ such that only a subset of the components is a martingale. In this case, we can use the control variate expected signature estimator for any word $I=(i_1, \ldots, i_k)$ such that $\mathbb{X}^{(i_k)}$ is a martingale.
\end{remark}

Even when the data generating process $\mathbb{X}$ is not a martingale, the variance reduction achieved by the corrected estimator \eqref{eqn:control_variate_discretized_est_main_text} may outweigh the bias it introduces, leading to better performance -- in terms of mean squared error (MSE) -- than the classic estimator \eqref{eqn:naive_discretized_est}. In cases where the underlying process cannot be assumed to be a martingale we thus suggest to treat the martingale correction as a data transformation applicable in the learning pipeline (a model hyper-parameter in a similar spirit to the add-time or the lead-lag transform in the signature context) whose usefulness may be empirically ascertained via cross-validation.

\section{Applications}
\subsection{Examples} \label{sec:simulations}
We now consider a few concrete examples of continuous-time stochastic processes satisfying the assumptions of Theorem~\ref{thm:clt_dep} and Theorem~\ref{thm:cons_dep_GP}. Note that BM, CAR and Heston are semimartingales and hence, by Remark~\ref{rem:canonical_lifts}, they are canonical geometric stochastic processes such that any sequence of partitions with vanishing mesh size is signature defining. fBm is instead an example of a process that is not a semimartingale but is a canonical geometric stochastic process with dyadic signature-defining sequence of partitions (Remark~\ref{rem:canonical_lifts}). Taking $\{\Pi(N), \ N\geq 1\}$ to be a sequence of expanding dyadic partitions thus ensures the observational assumptions of Theorem~\ref{thm:clt_dep} and Theorem~\ref{thm:cons_dep_GP} are satisfied by all four processes, cf.\ Remark~\ref{rem:dyadic_partitions_rate_cont}.

\paragraph{BM} A standard Brownian motion $\{\mathbf{B}_t,\, t\geq 0\}$. It can be easily checked it satisfies \eqref{ass:alpha} and $\eqref{ass:delta}$, for any $\alpha \geq 1/2, \delta\geq 1$ and $p\geq 2$. Moreover, $\{\mathbf{B}_t,\, t\geq 0\}$ has stationary and independent increments and, hence, the (ind) and (chop) sampling schemes are equivalent: in both cases we can apply\footnote{Brownian motion is a Gaussian process with constant mean function and time-homogeneous covariance of the increments trivially satisfying \eqref{ass:theta} with $\theta \equiv 0$ and $m=0$, it thus also falls under the scope of Theorem~\ref{thm:cons_dep_GP}.} Theorem~\ref{thm:clt_dep} to deduce consistency and asymptotic normality of the expected signature estimator.
    
\paragraph{fBm} A fractional Brownian motion $\{\mathbf{B}^H_t,\, t\geq 0\}$ with Hurst parameter $H>1/2$. $\mathbb{B}^H$ satisfies \eqref{ass:alpha} with $\alpha=H$ (Appendix~\ref{app:fBm}) and, hence, Assumption~\ref{ass:continuity} is fulfilled. Under (ind) sampling, $\{\mathbb{B}^{H,n}, n\geq 1\}$ is trivially stationary and strong mixing and, hence, we can apply Theorem~\ref{thm:clt_dep}. When instead paths are obtained under (chop) we can apply\footnote{The increments of fractional Brownian motion are not strongly mixing \citep{mandelbrot_ness_1968} and, hence, we cannot apply the second part of Theorem~\ref{thm:clt_dep} to deduce asymptotic normality.} Theorem~\ref{thm:cons_dep_GP}, cf.\ Example~\ref{app:fBm}, to deduce consistency.
    
\paragraph{CAR} A bidimensional Continuous-time Autoregressive (CAR) process $\{\mathbf{Y}_t,\ t\geq 0\}$ of order $p=2$ driven by a standard Brownian motion with drift $\mathbf{A} = (A_1, A_2) \in (\mathbb{R}^{2\times 2})^2$.
The CAR process is defined as the first $d=2$ entries of its $pd=4$-dimensional state space representation $\{\mathbf{X}_t,\ t\geq 0\}$: an Ornstein-Uhlenbeck process with drift and diffusion
\[
A_{\mathbf{A}} = \begin{pmatrix}
    0_{2\times 2} & -I_{2\times 2} \\
    A_2 & A_1
\end{pmatrix}, \quad \Sigma = \begin{pmatrix}
    0_{2\times 2} & 0_{2\times 2} \\
    0_{2\times 2} & I_{2\times 2}
\end{pmatrix},
\]
\citep{lucchese2023, Marquardt_Stelzer_2007}. We can apply the first set of conditions in Appendix~\ref{app:ito_diffusions_infill} to deduce that $\{\mathbf{X}_t,\ t\geq 0\}$ (and hence $\{\mathbf{Y}_t,\ t\geq 0\}$) satisfies \eqref{ass:alpha} and \eqref{ass:delta}, for $\alpha=1/2$, $\delta=1$ and any $p\geq 2$. Under (ind) sampling we can hence apply Theorem~\ref{thm:clt_dep}. Moreover, when $A_{\mathbf{A}}$ has positive real parts of all eigenvalues and the process is started in its stationary distribution, $\{\mathbf{X}_t,\ t\geq 0\}$ and $\{\mathbf{Y}_t,\ t\geq 0\}$ are stationary, ergodic and strongly mixing with strong mixing coefficient $\alpha(t) = \mathcal{O}(e^{-at})$, for some $a>0$ \citep{Marquardt_Stelzer_2007}. We can hence apply Proposition~\ref{prop:stationarity_strong_mixing_implications} to deduce that $\{\mathbb{Y}^n,\ n\geq 1\}$ is stationary and strongly mixing with strong mixing coefficient $\alpha(n) = \mathcal{O}(e^{-anT})$, i.e.\ satisfying Equation \eqref{eqn:strong_mixing_condition_clt}, for (any) $\zeta>0$. Under (chop) sampling we can thus apply\footnote{The CAR process is a Gaussian process satisfying \eqref{ass:theta}, cf.\ Appendix~\ref{app:OU}, and hence also falls under the scope of Theorem~\ref{thm:cons_dep_GP}.} the consistency and asymptotic normality results of Theorem~\ref{thm:clt_dep}.
    
\paragraph{Heston} The joint price-variance dynamics of a Heston model under the risk-neutral measure $\mathbb{Q}$ with zero interest rate and no dividends, i.e.\ $\{(S_t, V_t), \ t\geq 0\}$ such that
\begin{align*}
    \text{d}S_t &= \sqrt{V_t} S_t \text{d} W_t^S,\\
    \text{d}V_t &= \kappa (\theta - V_t) \text{d} t+ \xi \sqrt{V_t} \text{d} W_t^V,
\end{align*}
where $\{W_t^S,\ t\geq 0\}$ and $\{W_t^V,\ t\geq0\}$ are standard Brownian motions with correlation $\langle W^S, W^V\rangle_t = \rho t$. Under the Feller condition $2\kappa \theta > \xi^2$, the variance process is strictly positive (and so is $\{S_t,\ t\geq 0\}$). The Heston model is thus an It\^{o} diffusion with Lipschitz drift $f:\mathbb{R}_+ \times \mathbb{R}_{+}\mapsto \mathbb{R}^2$ and $1/2$-H\"{o}lder continuous diffusion $\sigma: \mathbb{R}_+ \times \mathbb{R}_{+}\mapsto \mathbb{R}^{2\times 2}$. We can thus apply the third case of Appendix~\ref{app:ito_diffusions_infill} to prove that $\{(S_t, V_t), \ t\geq 0\}$ satisfies \eqref{ass:alpha} and \eqref{ass:delta} with $\alpha = 1/2$, $\delta = 1$ and any $p>2$ for deterministic initial conditions $S_0 = s_0$ and $V_0 = v_0$. When paths are sampled under (ind) we can hence apply Theorem~\ref{thm:clt_dep} to deduce consistency and asymptotic normality of the expected signature estimator.

\subsection{Experiments}
Quite a wide range of learning algorithms has been developed leveraging the properties of the expected signature. The theory for such algorithms is usually developed under the assumption of bounded variation paths for the input process $\mathbb{X}$, assumed to be piecewise linear. The results in Section~\ref{sec:estimating_expected_signatures} give the theoretical foundation for their probabilistic interpretation when the underlying process $\mathbb{X}$ is an, arguably more realistic, continuous-time stochastic process such as the ones discussed in Section~\ref{sec:simulations}. In this section we review a few algorithms from the literature, showcasing the practical relevance of the asymptotic results of Section~\ref{sec:estimating_expected_signatures} and the potential improvements achieved by the martingale correction introduced in Section~\ref{sec:martingale_correction}. Code and examples demonstrating the integration of the martingale correction into machine learning algorithms, along with the simulation results from the previous section, are available at \url{https://github.com/lorenzolucchese/esig}. The code is designed to be compatible with Python-based ML pipelines, supporting both \texttt{numpy} arrays and \texttt{torch} tensors.

\subsubsection{Time Series Classification} \label{sec:GPES}

The first model we consider, introduced in \citet{triggiano2024}, falls under the general task of time series classification, mapping an input path $\mathbf{x} \in \mathbb{R}^{d\times M_1}$ to a class label $c\in\mathcal{C}$. The input stream is interpreted as a discrete-time realization of a Gaussian process, whose conditional mean and covariance are learned parametrically. The expected signature of the latent Gaussian process, used as input to a classification layer, is estimated by super-sampling the process. Theorem~\ref{thm:cons_dep_GP} ensures this approach consistently estimates the expected signature of the latent continuous-time Gaussian process, a fundamental step for the probabilistic interpretation of the algorithm. 

We replicate the synthetic data experiments of \citet{triggiano2024} on the (FBM), (OU) and (Bidim) datasets. The performance on the out-of-sample testing datasets of the Gaussian Process augmented Expected Signature (GPES) classifier with and without martingale correction is reported in Table~\ref{tab:gpes_experiments}. The output of the GPES model is by construction stochastic and, hence, we repeat the evaluation of the model with 10 different seeds. In Table~\ref{tab:gpes_experiments} we report the mean accuracy and standard error of the model with and without martingale correction (MC), as well as the results of an independent samples $t$-test between their accuracies. The martingale correction significantly improves the performance of the GPES model, a remarkable result considering that most processes in the three datasets are not martingales. 

\begin{table}[!ht]
    \centering
    \renewcommand{\arraystretch}{1.} 
    \setlength{\tabcolsep}{5pt} 
    \begin{tabular}{lrrr}
        \toprule
        {} & \multicolumn{3}{c}{Predictive Accuracy [\%]} \\
        {} & \multicolumn{1}{c}{FBM} & \multicolumn{1}{c}{OU} & \multicolumn{1}{c}{Bidim} \\
        \midrule
        GPES & 95.62 (0.18) & 62.20 (0.70) & 79.33 (0.46) \\
        GPES-MC & 95.26 (0.70) & 88.26 (0.31) & 88.97 (0.44) \\
        $t$-stat & 1.49 & $-101.92$ & $-45.52$ \\
        $p$-value & 0.15 & 0.00 & 0.00 \\
        \bottomrule
    \end{tabular}
    \caption{Synthetic data experiments of \citet{triggiano2024}: GPES model without and with martingale correction (MC).}
    \label{tab:gpes_experiments}
\end{table}

\subsubsection{Pricing Path-Dependent Derivatives} \label{sec:option_pricing}

The next application we consider is a purely financial one. The objective is to price (and hedge) path-dependent derivatives by decomposing them into a set of atomic Arrow-Debreu-like securities. Let $\mathbb{X}=\{\mathbf{X}_t,\ t\in[0,T]\}$ be a price process, i.e.\ a semimartingale over some probability space. In \citet[Proposition~4.5]{arribas2021} the authors use the universality of the signature to show that a large class of path-depend payoffs $F$ can be arbitrarily well approximated by a linear payoff on the signature, i.e.
\[
\text{price}(F) =\mathbb{E}^\mathbb{Q}[Z_T F] \approx \langle f, Z_T \mathbb{E}^\mathbb{Q}[S(\hat{\mathbb{X}}^{\text{LL}})_{[0,T]}]\rangle,
\]
for a set of linear coefficients $f\in T((\mathbb{R}^4)^*)$ where $\mathbb{Q}$ is a pricing measure for $\mathbb{X}$, $Z_T$ a deterministic discount factor over $[0,T]$ and $\hat{\mathbb{X}}^{\text{LL}}$ denotes the add-time lead-lag transform of $\mathbb{X}$. In Appendix~\ref{app:option_pricing} we also discuss the corresponding hedging problem.

Given a pricing model $\mathbb{Q}$ for $\mathbb{X}$, we can hence price $F$ via Monte Carlo simulations. This provides a classic setting for applying the martingale correction described in Section~\ref{sec:martingale_correction} since, under $\mathbb{Q}$, the (discounted) price process $\mathbb{X}$ is a martingale. In Figure~\ref{fig:BM-martingale-consistency}, we compare the finite sample properties of the expected signature estimator with and without martingale correction when the price process is assumed to follow a Brownian motion (BM); in the context of option pricing, this is known as the Bachelier model. Similarly, in Figure~\ref{fig:Heston-martingale-consistency}, we plot the densities of the two estimators under the Heston dynamics\footnote{In both simulations we fix $T=1$ and consider $\pi$ to be uniform with mesh $|\pi| = 2^{- \lfloor N/10 \rfloor +1}$. This choice ensures the sequences of partitions are signature-defining for both processes and satisfy the conditions necessary for consistency and asymptotic normality, cf.\ Remark~\ref{rem:dyadic_partitions_rate_cont}.} (Heston). Both figures suggest the martingale correction (blue) materially improves the classic estimator (red), and hence more accurate pricing is achieved by the modified estimator introduced in Section~\ref{sec:martingale_correction}.

\begin{figure}[!htp]
    \centering
    \includegraphics[width=\columnwidth]{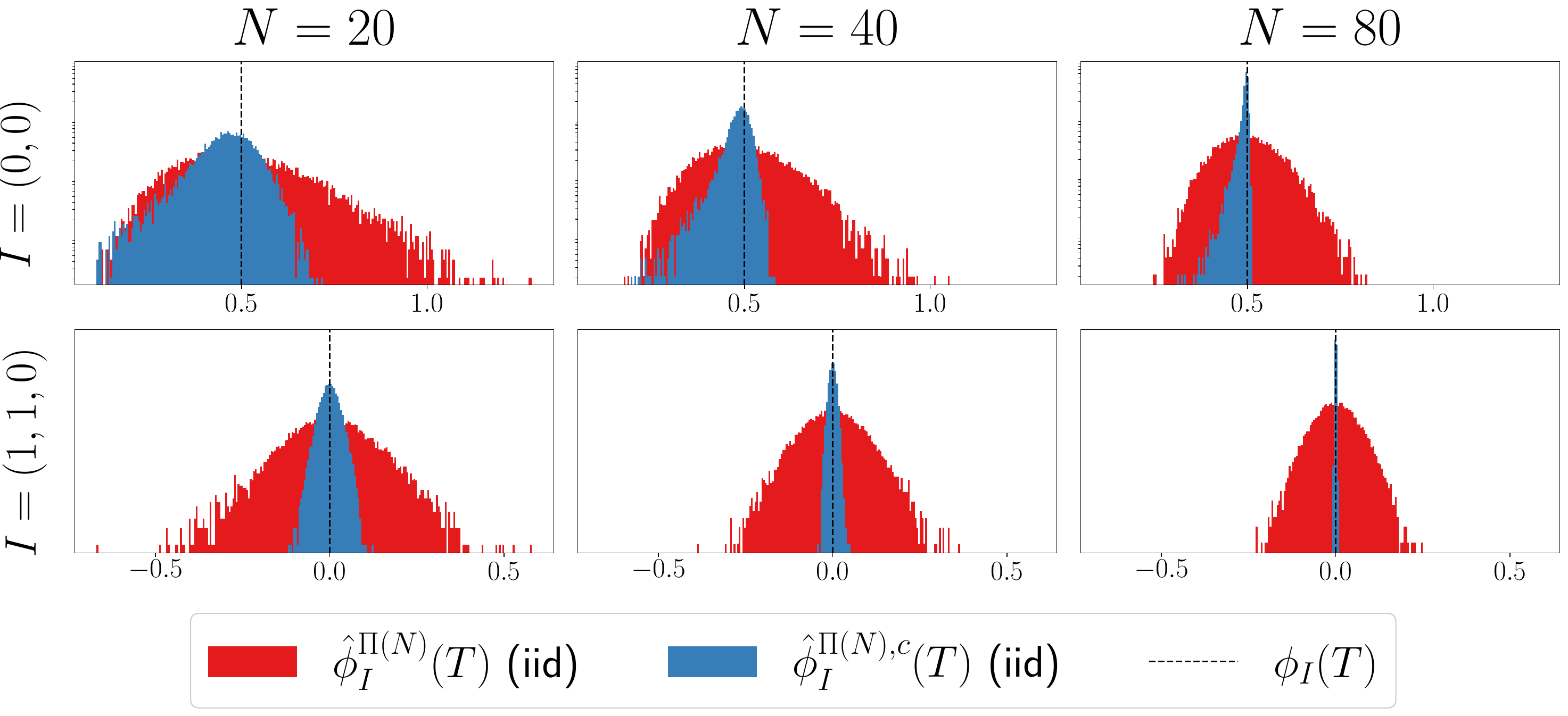}
    \vspace{-0.6cm}
    \caption{Distributions of expected signature estimators for BM. The $y$-axis is in log-scale.} \label{fig:BM-martingale-consistency}
\end{figure}

\begin{figure}[!htp]
    \centering
    \includegraphics[width=\columnwidth]{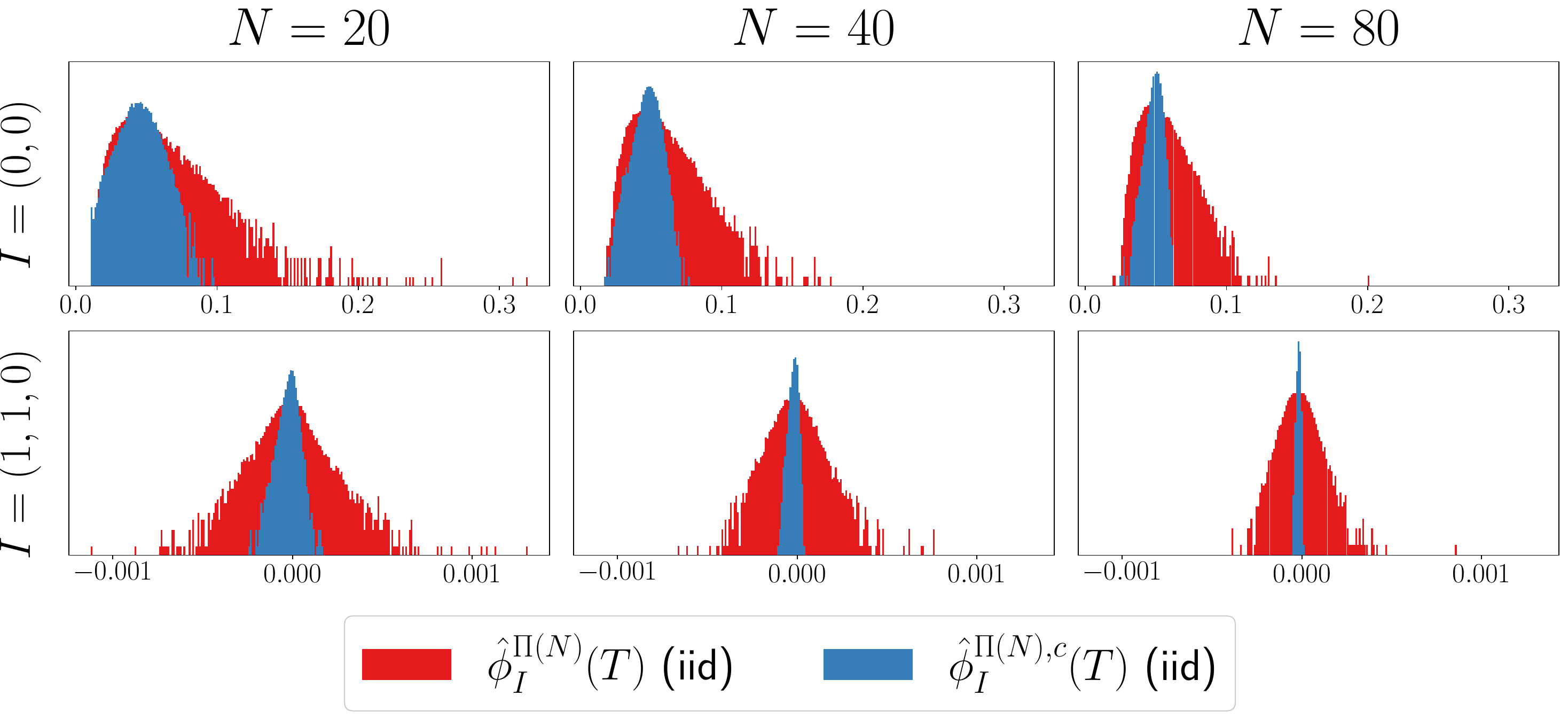}
    \vspace{-0.6cm}
    \caption{Distributions of expected signature estimators for the Heston process with parameters $s_0 = 1, v_0 = 0.1, \theta = 0.1, \kappa = 0.6, \xi = 0.2$ and $\rho = -0.15$. The $y$-axis is in log-scale.} \label{fig:Heston-martingale-consistency}
\end{figure}

\subsubsection{Distributional Regression for Streams} \label{sec:distribution-regression-streams}

Introduced in \citet{lemercier2021a}, the Signature of the pathwise Expected Signature (SES) model aims to learn a map from a collection of paths, understood as an empirical measure on path space, to a scalar value, a task known as distributional regression. Under appropriate conditions, the authors show that linear functionals on the signature of the pathwise expected signature are universal for weakly continuous functions \citep[Theorem~3.2]{lemercier2021a}. 

We repeat two of the synthetic data experiments conducted in \citet{lemercier2021a}, analyzing the performance of the SES model without and with martingale correction (MC). We report the average out-of-sample mean-squared error (MSE) and its standard deviation in Table~\ref{tab:ideal_gas_experiment} and Table~\ref{tab:rough_vol_experiment}, as well as the $t$-statistic and $p$-value of a pairwise t-test between the MSEs of the two models. While the results do not yield statistical significance there still seems to be a mild benefit in using the martingale correction, especially considering that the processes of both experiments are not martingales\footnote{In the first experiment, when the particle radii are large and collisions are more frequent, one could argue the motion of the gas particles to be amenable to that of pollen grains in water, the original experiment which led to the ``discovery'' of Brownian motion by Scottish botanist Robert Brown in 1827.}.

\begin{table}[!ht]
    \centering
    \renewcommand{\arraystretch}{1.} 
    \setlength{\tabcolsep}{5pt} 
    \begin{tabular}{lrr}
        \toprule
        {} & \multicolumn{2}{c}{Predictive MSE [$\times 10^{-2}$]} \\
        {} &                 \multicolumn{1}{c}{$r_1 = 0.35 \times \sqrt[3]{V/N}$} &        \multicolumn{1}{c}{$r_2 = 0.65 \times \sqrt[3]{V/N}$} \\
        \midrule
        SES     &           $1.27\ (0.23)$ &  $0.09\ (0.03)$ \\
        SES-MC  &           $1.31\ (0.45)$ &  $0.07\ (0.02)$ \\
        $t$-stat  &                 $-0.29$ &         $1.41$ \\
        $p$-value &                  $0.79$ &         $0.23$ \\
        \bottomrule
    \end{tabular}
    \caption{Ideal gas experiment of \citet{lemercier2021a}: SES model without and with martingale correction (MC).}
    \label{tab:ideal_gas_experiment}
\end{table}
\begin{table}[!ht]
    \centering
    \renewcommand{\arraystretch}{1.} 
    \setlength{\tabcolsep}{6pt} 
    \begin{tabular}{lrrr}
        \toprule
        {} & \multicolumn{3}{c}{Predictive MSE [$\times 10^{-3}$]} \\
        {} &                  \multicolumn{1}{c}{$N=20$} &         \multicolumn{1}{c}{$N=50$} &        \multicolumn{1}{c}{$N=100$} \\
        \midrule
        SES     &           1.49 (0.39) &  0.33 (0.13) &  0.20 (0.08) \\
        SES-MC  &           1.26 (0.48) &  0.31 (0.09) &  0.19 (0.05) \\
        $t$-stat  &                  0.87 &         0.63 &         0.29 \\
        $p$-value &                  0.43 &         0.56 &         0.79 \\
        
        \bottomrule
    \end{tabular}
    \caption{Rough volatility experiment of \citet{lemercier2021a}: SES model without and with martingale correction (MC).}
    \label{tab:rough_vol_experiment}
\end{table}

\section{Conclusions}
In this paper, we established new estimation results for the expected signature, a model-free embedding for collections of data streams. Our consistency and asymptotic normality results bridge the gap between the theoretically ``optimal'' continuous-time expected signature and the empirical discrete-time estimator that can be computed from data. Moreover, we introduced a simple modification of such an estimator with significantly better finite sample properties under the assumption of martingale observations. Our empirical results suggest that the modified estimator might improve the performance of models employing expected signature computations even when the underlying data generating process is not necessarily a martingale.

\section*{Acknowledgements}

This research has been supported by the EPSRC Centre for Doctoral Training in Mathematics of Random Systems: Analysis, Modelling and Simulation (EP/S023925/1). The authors would like to thank Nicola Muca Cirone and Will Turner for helpful discussions on the topic, as well as the three anonymous reviewers for their insightful comments.

\section*{Impact Statement}

This paper presents work whose goal is to advance the field of Machine Learning. There are many potential societal consequences of our work, none which we feel must be specifically highlighted here.

\newpage

\bibliography{references}
\bibliographystyle{icml2025}

\newpage
\appendix
\onecolumn

\section*{Contents of the Appendix} \label{app:toc}

\begin{itemize}[leftmargin=*, label={}]
  \item \hyperref[app:informal_glossary]{A\quad Informal Glossary}
  
  \item \hyperref[app:proofs]{B\quad Proofs of Section~\ref{sec:theory}}
    \par\hspace{1em}\hyperref[app:proofs_in_fill]{B.1\quad Proof of Theorem~\ref{thm:sig_conv_lm}}
      \par\hspace{2em}\hyperref[app:proofs_in_fill_i_iii_iv]{B.1.1\quad Proof of Theorem~\ref{thm:sig_conv_lm} under \textit{(i)}, \textit{(iii)} or \textit{(iv)}}
      \par\hspace{2em}\hyperref[app:proofs_in_fill_ii]{B.1.2\quad Proof of Theorem~\ref{thm:sig_conv_lm} under \textit{(ii)}}
    \par\hspace{1em}\hyperref[app:proofs_clt_dep]{B.2\quad Proof of Theorem~\ref{thm:clt_dep}}
      \par\hspace{2em}\hyperref[app:proofs_clt_dep_cons]{B.2.1\quad Proof of Theorem~\ref{thm:clt_dep}, consistency}
      \par\hspace{2em}\hyperref[app:proofs_clt_dep_asympnorm]{B.2.2\quad Proof of Theorem~\ref{thm:clt_dep}, asymptotic normality}
    \par\hspace{1em}\hyperref[app:proofs_clt_feasible]{B.3\quad Proof of Corollary~\ref{cor:clt_feasible}}
    \par\hspace{1em}\hyperref[app:proofs_stationarity_strong_mixing_implications]{B.4\quad Proof of Proposition~\ref{prop:stationarity_strong_mixing_implications}}
      \par\hspace{2em}\hyperref[app:proofs_stationarity_implications]{B.4.1\quad Proof of Proposition~\ref{prop:stationarity_strong_mixing_implications}, stationary implications}
      \par\hspace{2em}\hyperref[app:proofs_strong_mixing_implications]{B.4.2\quad Proof of Proposition~\ref{prop:stationarity_strong_mixing_implications}, strong mixing implications}
    \par\hspace{1em}\hyperref[app:proofs_cons_dep_GP]{B.5\quad Proof of Theorem~\ref{thm:cons_dep_GP}}
  
  \item \hyperref[app:martingale_correction]{C\quad Variance Reduction via Martingale Correction}
    \par\hspace{1em}\hyperref[app:martingale_continuity]{C.1\quad Martingale Continuity Criterion}
    \par\hspace{1em}\hyperref[app:martingale_correction_c*]{C.2\quad Estimating \(c_\pi^*\)}
    \par\hspace{1em}\hyperref[app:proofs_martingale_correction]{C.3\quad Proof of Lemma~\ref{lemma:c_hats}}
  
  \item \hyperref[app:ito_processes]{D\quad It\^{o} processes and diffusions}
    \par\hspace{1em}\hyperref[app:in_fill_conditions]{D.1\quad In-fill conditions}
      \par\hspace{2em}\hyperref[app:ito_processes_infill]{D.1.1\quad It\^{o} processes}
      \par\hspace{2em}\hyperref[app:ito_diffusions_infill]{D.1.2\quad It\^{o} diffusions}
    \par\hspace{1em}\hyperref[app:long_span_conditions]{D.2\quad Long span conditions}
      \par\hspace{2em}\hyperref[app:ito_diffusions_longspan]{D.2.1\quad It\^{o} diffusions}
  
  \item \hyperref[app:gaussian_processes]{E\quad Gaussian Processes}
    \par\hspace{1em}\hyperref[app:GP_continuity]{E.1\quad Gaussian Processes Continuity Criterion}
    \par\hspace{1em}\hyperref[app:GP_conv_decay]{E.2\quad Gaussian Processes Covariance Decay Condition}
      \par\hspace{2em}\hyperref[app:OU]{E.2.1\quad Ornstein-Uhlenbeck Process}
      \par\hspace{2em}\hyperref[app:fBm]{E.2.2\quad Fractional Brownian Motion}
  
  \item \hyperref[app:ML_and_expected_sigs]{F\quad Machine Learning Algorithms with Expected Signatures}
    \par\hspace{1em}\hyperref[app:martingale_correction_practicalities]{F.1\quad Martingale Correction in Applications}
    \par\hspace{1em}\hyperref[app:ML_algos]{F.2\quad Algorithms}
      \par\hspace{2em}\hyperref[app:GPES]{F.2.1\quad Time Series Classification \citep{triggiano2024}}
      \par\hspace{2em}\hyperref[app:option_pricing]{F.2.2\quad Pricing Path-Dependent Derivatives \citep{arribas2021}}
      \par\hspace{2em}\hyperref[app:distribution-regression-streams]{F.2.3\quad Distributional Regression for Streams \citep{lemercier2021a}}
      \par\hspace{2em}\hyperref[app:sig_trader]{F.2.4\quad Systematic Trading \citep{futter2023}}
  
  \item \hyperref[app:controlled_linear_regression]{G\quad Controlled Linear Regression}
    \par\hspace{1em}\hyperref[app:COLS_estimations]{G.1\quad Controlled Ordinary Least Squares (OLS) estimation}
    \par\hspace{1em}\hyperref[app:COLS_simulations]{G.2\quad Simulation study}
\end{itemize}

\section{Informal Glossary} \label{app:informal_glossary}
This informal glossary provides accessible explanations of selected technical terms and notational conventions used in this paper, aimed at readers with little or no background in rough path theory. These intuitive definitions are intended to aid the understanding of the theoretical framework presented in Section~\ref{sec:theory}, particularly Definition~\ref{def:canonical_geometric_stochastic_process}. However, they remain closely tied to more technical definitions -- such as those of multiplicative functionals, rough paths, and geometric rough paths -- which require a more rigorous exposition of rough path theory. For a concise introduction to rough paths, we refer the reader to \citet{saint-flour2007}, and for a treatment in the stochastic setting, to \citet{frizvictoir2010}.

\paragraph{$p$-variation} The $p$-variation of a path is a measure of its regularity. For the purpose of our discussion it suffices to note that paths that have finite $p$-variation for low $p$ are more regular. A bounded variation (BV) path is a path with finite 1-variation (also known as total variation). This regularity ensures there exists a well-defined notion of integral against this path (e.g.\ a piecewise linear paths or continuously differentiable path) and, hence, we can easily define its signature as in Equation (2). Many interesting stochastic processes (e.g.\ those driven by Brownian motion) have infinite 1-variation (i.e.\ are not BV) but have finite $p$-variation for all $p>2$ and, hence, defining their signature requires rough path theory.

\paragraph{Convergence in $p$-variation} Convergence in the $p$-variation metric/topology is a pathwise mode of convergence (i.e.\ over all points $t\in[0,T]$ simultaneously) that is (much) stronger than the pointwise (i.e.\ at fixed $t\in[0,T]$) convergence required to state and prove our results. See, for example, Remark~\ref{rem:prob_conv_sig_rho}.

\paragraph{Spaces of paths} We denote by $C([0, T], \mathbb{R}^d)$, resp.\ $\text{BV}([0, T], \mathbb{R}^d)$, the space of $\mathbb{R}^d$-valued continuous, resp.\ bounded variation, paths over the interval $[0, T]$.

\paragraph{Mesh of a partition} For a partition $\pi = \{0 = t_0 < t_1 <\ldots< T \}$ of $[0,T]$, we define its mesh as $|\pi| = \max_{[s, t]\in\pi}|t-s|$ where the maximum is taken over all sub-intervals of the partition.

\paragraph{Shuffle property} The shuffle property of the signature is an algebraic property stating that the product of two signature terms is a linear combination of higher-order signature terms. More precisely, the product of the signature terms corresponding to words $I$ and $J$ is the sum of all signature terms indexed by words $K$ of length $|I|+|J|$ obtained by interleaving $I$ and $J$. 
In the context of the discussion on page 1 this means that all moments of the signature can be written as linear combinations of higher order expected signature terms.

\paragraph{Signature indexing} A word $I=(i_1, \ldots, i_n)$ with $i_1, \ldots, i_n\in\{1, \ldots, d\}$ is a multi-index used to denote an entry of the signature, i.e.\ a real-valued number. The length of the word, i.e.\ $|I| = n$, denotes the signature level, i.e.\ an $n$-dimensional tensor, to which such entry belongs. For example $S^I(\mathbb{X})_{[0,T]}$, where $I=(1,2)$, denotes the $(1, 2)$-entry of the second level of the signature (a matrix), while $I=(2,1,1)$ denotes the $(2, 1, 1)$-entry of the third level of the signature (a three-dimensional tensor).

\paragraph{Stochastic processes} A continuous stochastic process $\mathbb{X} = \{\mathbf{X}_t,\ t\in[0,T]\}$ over a probability space $(\Omega, \mathcal{F}, \mathbb{P})$ is such that, for each $\omega \in\Omega$, the realization $\mathbb{X}(\omega) = \{\mathbf{X}_t(\omega),\ t\in[0,T]\}\in C([0,T], \mathbb{R}^d)$. If one takes $\Omega =  C([0,T], \mathbb{R}^d)$ and $\mathbb{P}$ a probability measure over this path space then each $\omega\in\Omega$ denotes a possible path realization of $\mathbb{X}$. We thus say a property holds pathwise or almost surely if the set of $\omega\in\Omega$ for which that property holds has probability one.

\paragraph{Canonical geometric stochastic process} We define a canonical geometric stochastic process as a continuous stochastic process whose ``higher order structure'' can be approximated by the iterated integrals of its piecewise-linear interpolations (in probability in the $p$-variation metric). Its signature is then defined as the limit of the signatures of its piecewise-linear interpolations, i.e.\ the iterated integrals given in Equation~\eqref{eqn:sig_lift}. For clarity, we emphasize that \textit{canonicity} here refers to the aforementioned construction of the signature, not to the underlying probability space on which the process is defined.

\section{Proofs of Section~\ref{sec:theory}} \label{app:proofs}
\allowdisplaybreaks
\subsection{Proof of Theorem~\ref{thm:sig_conv_lm}} \label{app:proofs_in_fill}

\begin{quote}
\onehalfspacing\small\itshape
\textbf{Sketch of proof.} The main idea of the proof is to show the sequence of discretized signatures $\{S^k(\mathbb{X}^{\pi_{n}})_{[0,T]},\, n\geq 1 \}$ is Cauchy in $L^m$. Since $L^m$ is a Banach space this implies the sequence converges in $L^m$. By uniqueness of limits, we can deduce this limit is the same as its $\mathbb{P}$-limit, i.e.\ $S^k(\mathbb{X})_{[0,T]}$. To show the sequence is Cauchy in $L^m$ we proceed inductively on the signature level $k'\in\{1, \ldots, k\}$ under the progressively weaker norm $L^{mk/k'}$. The main ingredient of the inductive step is a manipulation of the discrete-time signature \eqref{eqn:discr_sig}, ensuring
\vspace{-1em}
\[
S^{k'}(\mathbb{X}^{\pi_{n+1}})_{[\tau_0,\tau_1]} - S^{k'}(\mathbb{X}^{\pi_{n}})_{[\tau_0, \tau_1]}, \quad [\tau_0, \tau_1]\subseteq [0,T],
\vspace{-1em}
\]
can be written as a sum over time intervals $\pi_{n+1, [\tau_0,\tau_1]}$. The inductive assumption is then verified by bounding this summation using Lemma~\ref{lemma:cond_exp_bound_sum} when a simple Minkowski bound is too weak. We use two different manipulations of the discrete-time signature under assumptions \textit{(i)}, \textit{(iii)} or \textit{(iv)} and under assumption \textit{(ii)}: In the former case we use the classic representation given in \eqref{eqn:discr_sig}, while in the latter we rely on the ``causal'' representation of Lemma~\ref{lemma:discr_signature_manipulation}. For clarity of exposition we thus divide the proof of Theorem~\ref{thm:sig_conv_lm} into two parts.
\end{quote} 

We first establish a couple of useful lemmas which will be used repeatedly in the proof of this in-fill asymptotic results. The first is a basic result which is also applied in the proof of the stochastic sewing lemma \citep{stochastic_sewing}. In the following, let $E$ denote a Banach space.

\begin{lemma} \label{lemma:cond_exp_bound_sum}
    Let $\{Z_n, \ n=1, \ldots, N\}$ be a finite sequence of $E$-valued random variables in $L^m$ with $m\in [2, \infty)$ and let $\{\mathcal{G}_n, \ n=1, \ldots, N\}$ be a filtration such that, for each $n \in\{1, \ldots, N\}$, the variables $Z_1, \ldots, Z_{n-1}$ are $\mathcal{G}_n$-measurable. Then
    \[
    \left\| \sum_{n=1}^N Z_n \right\|_{L^m} \leq \sum_{n=1}^N \|\mathbb{E}_{\mathcal{G}_n} [Z_n] \|_{L^m} + 2 C_m \left(\sum_{n=1}^N \| Z_n\|^2_{L^m} \right)^{1/2}.
    \]
\end{lemma}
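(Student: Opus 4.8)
The plan is to decompose $\sum_{n=1}^N Z_n$ into its "predictable" part, given by the conditional expectations $\mathbb{E}_{\mathcal{G}_n}[Z_n]$, and a martingale-difference-like remainder $\sum_{n=1}^N (Z_n - \mathbb{E}_{\mathcal{G}_n}[Z_n])$. The first part is controlled trivially by Minkowski's inequality, yielding the $\sum_n \|\mathbb{E}_{\mathcal{G}_n}[Z_n]\|_{L^m}$ term. For the remainder, the key observation is that $D_n := Z_n - \mathbb{E}_{\mathcal{G}_n}[Z_n]$ satisfies $\mathbb{E}_{\mathcal{G}_n}[D_n] = 0$, while $D_1, \ldots, D_{n-1}$ are $\mathcal{G}_n$-measurable (since $Z_1, \ldots, Z_{n-1}$ are, and the $\mathcal{G}_n$ are increasing so the earlier conditional expectations are too). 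Thus the partial sums $M_k := \sum_{n=1}^k D_n$ form an $E$-valued martingale with respect to $\{\mathcal{G}_{k+1}\}$ (with the convention that $\mathcal{G}_{N+1}$ is any sigma-algebra containing $\mathcal{G}_N$ under which $Z_N$ is measurable).

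The second step is to apply a vector-valued Burkholder–Davis–Gundy-type inequality to the martingale $M_k$. Specifically, for a martingale in a Banach space $E$ (here one works with the relevant finite-dimensional tensor product space, which is fine), there is a constant $C_m$ depending only on $m \in [2,\infty)$ such that
\[
\left\| M_N \right\|_{L^m} \leq C_m \left\| \left( \sum_{n=1}^N \|D_n\|_E^2 \right)^{1/2} \right\|_{L^m}.
\]
This is the standard martingale inequality used in the stochastic sewing lemma of \citet{stochastic_sewing}; I would cite it directly rather than reprove it. Then applying the triangle inequality in $L^{m/2}$ to $\sum_n \|D_n\|_E^2$ gives
\[
\left\| \left( \sum_{n=1}^N \|D_n\|_E^2 \right)^{1/2} \right\|_{L^m}^2 = \left\| \sum_{n=1}^N \|D_n\|_E^2 \right\|_{L^{m/2}} \leq \sum_{n=1}^N \left\| \|D_n\|_E^2 \right\|_{L^{m/2}} = \sum_{n=1}^N \|D_n\|_{L^m}^2.
\]

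Finally I would bound each $\|D_n\|_{L^m} = \|Z_n - \mathbb{E}_{\mathcal{G}_n}[Z_n]\|_{L^m} \leq 2\|Z_n\|_{L^m}$ using the contraction property of conditional expectation, so that $\sum_n \|D_n\|_{L^m}^2 \leq 4\sum_n \|Z_n\|_{L^m}^2$. Assembling the pieces via Minkowski's inequality on the two-term decomposition gives
\[
\left\| \sum_{n=1}^N Z_n \right\|_{L^m} \leq \sum_{n=1}^N \|\mathbb{E}_{\mathcal{G}_n}[Z_n]\|_{L^m} + C_m \left( 4 \sum_{n=1}^N \|Z_n\|_{L^m}^2 \right)^{1/2} = \sum_{n=1}^N \|\mathbb{E}_{\mathcal{G}_n}[Z_n]\|_{L^m} + 2 C_m \left( \sum_{n=1}^N \|Z_n\|_{L^m}^2 \right)^{1/2},
\]
which is exactly the claimed bound. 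The only genuine subtlety — the "main obstacle" — is the correct identification of the filtration so that $\{M_k\}$ is indeed a martingale: one must check that $D_1,\dots,D_{n-1}$ are $\mathcal G_n$-measurable (which follows from the hypothesis on $Z_1,\dots,Z_{n-1}$ and monotonicity of the filtration) and handle the last index $n=N$ by enlarging the filtration if necessary, since no $\mathcal G_{N+1}$ is given; neither point causes real difficulty. The vector-valued BDG inequality holds with a dimension-free constant for $m \geq 2$ in Hilbert spaces (and in particular in the finite-dimensional tensor spaces at hand), so no issue arises there either.
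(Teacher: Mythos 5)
Your proof is correct and follows essentially the same route as the paper's: decompose $\sum_n Z_n$ into the predictable part $\sum_n \mathbb{E}_{\mathcal{G}_n}[Z_n]$ plus the martingale $\sum_n (Z_n - \mathbb{E}_{\mathcal{G}_n}[Z_n])$, apply BDG to the latter, then the triangle inequality in $L^{m/2}$ and the contraction property of conditional expectation to reach the factor $2C_m$. Your extra care about the filtration at the last index and about the validity of vector-valued BDG in the (finite-dimensional, hence Hilbertian) tensor spaces is a welcome refinement the paper leaves implicit.
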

\begin{proof}
    \begin{align*}
        \left\| \sum_{n=1}^N Z_n \right\|_{L^m} &\overset{(i)}{\leq} \left\| \sum_{n=1}^N \mathbb{E}_{\mathcal{G}_n} [Z_n] \right\|_{L^m} + \left\| \sum_{n=1}^N (Z_n - \mathbb{E}_{\mathcal{G}_n} [Z_n]) \right\|_{L^m} \\
        &\overset{(ii)}{\leq} \sum_{n=1}^N \| \mathbb{E}_{\mathcal{G}_n} [Z_n] \|_{L^m} + C_m \left\| \sum_{n=1}^N \|Z_n - \mathbb{E}_{\mathcal{G}_n} [Z_n]\|^2 \right\|^{1/2}_{L^{m/2}} \\         
        &\overset{(iii)}{\leq} \sum_{n=1}^N \| \mathbb{E}_{\mathcal{G}_n} [Z_n] \|_{L^m} + C_m \left(\sum_{n=1}^N \|Z_n - \mathbb{E}_{\mathcal{G}_n} [Z_n]\|^2_{L^{m}} \right)^{1/2} \\   
        &\overset{(iv)}{\leq} \sum_{n=1}^N \| \mathbb{E}_{\mathcal{G}_n} [Z_n] \|_{L^m} + C_m \left(\sum_{n=1}^N (\|Z_n\|_{L^m} + \|\mathbb{E}_{\mathcal{G}_n} [Z_n]\|_{L^{m}})^2 \right)^{1/2} \\   
        &\overset{(v)}{\leq} \sum_{n=1}^N \| \mathbb{E}_{\mathcal{G}_n} [Z_n] \|_{L^m} + 2C_m \left(\sum_{n=1}^N \|Z_n\|_{L^m}^2 \right)^{1/2},
    \end{align*}
    by using in $(i)$ the triangle inequality, in $(ii)$ triangle inequality and the Burkholder-Davis-Gundy (BDG) inequality \citep{bdginequality} applied to the martingale $\{M_{n},\ n=1, \ldots, N\}$ with $M_{n} = \sum_{i=1}^{n} (Z_i - \mathbb{E}_{\mathcal{G}_i} [Z_i])$, in $(iii)$ and $(iv)$ the triangle inequality and in $(v)$ the contraction property of conditional expectation.
\end{proof}

\begin{lemma} \label{lemma:holder_tensors}
    Let $p, p_1, \ldots, p_l\in(0, \infty) \cup \{+\infty\}$ be such that $p_1^{-1} + \ldots + p_l^{-1} = p^{-1}$, then, for any set of tensors $\mathbf{A_1}\in L^{p_1}((\mathbb{R}^d)^{\otimes k_1}), \ldots, \mathbf{A}_l \in L^{p_l}((\mathbb{R}^d)^{\otimes k_l})$, 
    \begin{equation*}
        \|\mathbf{A}_1 \otimes \cdots \otimes \mathbf{A}_l\|_{L^p} \lesssim d^{l} \left\| \mathbf{A}_1 \right\|_{L^{p_1}} \cdots \left\|\mathbf{A}_l \right\|_{L^{p_l}}. 
    \end{equation*}
\end{lemma}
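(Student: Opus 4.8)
The plan is to separate a purely algebraic, pathwise tensor inequality from a probabilistic application of the generalized H\"older inequality; both ingredients are elementary, so the lemma amounts to assembling them.

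First I would fix $\omega\in\Omega$ and establish the deterministic bound
\[
\|\mathbf{A}_1(\omega)\otimes\cdots\otimes\mathbf{A}_l(\omega)\| \le d^{\,l}\,\|\mathbf{A}_1(\omega)\|\cdots\|\mathbf{A}_l(\omega)\|,
\]
for the norm $\|\cdot\|$ fixed on the tensor powers of $\mathbb{R}^d$. Writing tensors in coordinates, the entry of $\mathbf{A}_1\otimes\cdots\otimes\mathbf{A}_l$ indexed by the concatenated word $(i^{(1)},\dots,i^{(l)})$ is the product $\prod_{j=1}^l(\mathbf{A}_j)_{i^{(j)}}$, so the claim follows by induction on $l$: the base case $l=1$ is trivial, and the inductive step reduces to the two-factor estimate $\|\mathbf{A}\otimes\mathbf{B}\|\le c_d\,\|\mathbf{A}\|\,\|\mathbf{B}\|$, which holds with $c_d=1$ for the Euclidean (Hilbert--Schmidt) norm and with $c_d$ a fixed power of $d$ for the coordinate $\ell^r$ norms; iterating $l-1$ times yields a constant dominated by $d^{\,l}$. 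In the case of an admissible cross-norm one even has $c_d=1$, so $d^{\,l}$ is a gross over-estimate -- harmless, since it only inflates the constant hidden in $\lesssim$.

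Second, passing to $L^p$-norms in the pathwise inequality and treating the scalar nonnegative random variables $\|\mathbf{A}_j\|\in L^{p_j}(\Omega)$, I would invoke the generalized H\"older inequality: since $p_1^{-1}+\cdots+p_l^{-1}=p^{-1}$,
\[
\big\|\,\|\mathbf{A}_1\|\cdots\|\mathbf{A}_l\|\,\big\|_{L^p}\;\le\;\prod_{j=1}^l\big\|\,\|\mathbf{A}_j\|\,\big\|_{L^{p_j}}\;=\;\prod_{j=1}^l\|\mathbf{A}_j\|_{L^{p_j}},
\]
with the usual convention that any factor with $p_j=+\infty$ is handled through the $L^\infty$--$L^p$ pairing. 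Multiplying by $d^{\,l}$ and combining with the first display gives exactly $\|\mathbf{A}_1\otimes\cdots\otimes\mathbf{A}_l\|_{L^p}\lesssim d^{\,l}\prod_{j=1}^l\|\mathbf{A}_j\|_{L^{p_j}}$.

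The only point requiring any care is the bookkeeping of the dimensional constant in the pathwise step -- that is, checking that iterating the two-factor tensor estimate $l-1$ times stays below $d^{\,l}$ for whichever norm the paper has fixed on $(\mathbb{R}^d)^{\otimes k}$ -- but since $d^{\,l}$ over-counts in every standard case, this is routine; the probabilistic content is nothing more than classical H\"older, applied to the scalarised quantities $\|\mathbf{A}_j\|$.
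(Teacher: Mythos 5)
Your proof is correct, but it is organized differently from the paper's. The paper argues coordinatewise: it first bounds $\|\mathbf{A}_1\otimes\cdots\otimes\mathbf{A}_l\|_{L^p}$ by the sum over all words $(w_1,\ldots,w_l)\in\{1,\ldots,d\}^{k_1+\cdots+k_l}$ of $\|A_1^{w_1}\cdots A_l^{w_l}\|_{L^p}$, applies scalar H\"older to each such product of entries, bounds each $\|A_j^{w_j}\|_{L^{p_j}}$ by $\|\mathbf{A}_j\|_{L^{p_j}}$, and absorbs the count of words into the constant (which is why the proof actually produces $d^{k_1+\cdots+k_l}$ rather than the $d^l$ in the statement --- harmless, since the dimension is fixed and the constant is hidden in $\lesssim$). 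You instead handle the tensor structure pointwise via the cross-norm (sub)multiplicativity $\|\mathbf{A}\otimes\mathbf{B}\|\le c_d\,\|\mathbf{A}\|\,\|\mathbf{B}\|$ and then apply the generalized H\"older inequality \emph{once} to the scalar random variables $\|\mathbf{A}_j\|$. Your route is arguably cleaner and yields a sharper constant (equal to $1$ for the Hilbert--Schmidt, $\ell^1$, or $\ell^\infty$ coordinate norms), whereas the paper's route avoids having to discuss which norm is placed on $(\mathbb{R}^d)^{\otimes k}$ by reducing everything to scalar entries at the cost of a dimensional factor. One point you correctly leave implicit but should be aware of: since the lemma allows $p,p_j\in(0,\infty)$, some exponents may lie below $1$, so the ``generalized H\"older'' being invoked is the quasi-norm version $\|\prod_j f_j\|_{L^p}\le\prod_j\|f_j\|_{L^{p_j}}$ with $\sum_j p_j^{-1}=p^{-1}$, which still holds (e.g.\ by raising to a suitable power); the paper relies on the same fact.
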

\begin{proof}
    \begin{align*}
        \|\mathbf{A}_1 \otimes \cdots \otimes \mathbf{A}_l\|_{L^p} 
        &\leq \sum_{(w_1, \ldots, w_l) \in \mathcal{W}_{k_1 + \ldots + k_l}} \left\| A^{w_1}_1 \cdots A^{w_l}_l \right\|_{L^p}  \\
        &\overset{(*)}{\leq} \sum_{(w_1, \ldots, w_l) \in \mathcal{W}_{k_1 + \ldots + k_l}} \left\| A^{w_1}_1 \right\|_{L^{p_1}} \cdots \left\|A^{w_l}_l \right\|_{L^{p_l}}  \\
        &\leq d^{k_1 + \ldots + k_l} \left\| \mathbf{A}_1 \right\|_{L^{p_1}} \cdots \left\|\mathbf{A}_l \right\|_{L^{p_l}},  
    \end{align*}
    where $\mathcal{W}_k = \{1, \ldots, d\}^k$ denotes the set of words of length $k$ and, in $(*)$, we applied the classical H\"{o}lder inequality.
\end{proof}

We also prove a useful lemma that allows us to write the $k$-th level signature of a piecewise linear path as a ``causal'' sum of lower order signature terms, i.e.\ preserving time order. This will allow us to derive an in-fill result with assumptions on the regularity of $\mathbb{E}_{\mathcal{F}_{0,s}}[\mathbf{X}_{s,t}]$, a more natural object than $\mathbb{E}_{\mathcal{F}_{0,s} \vee \mathcal{F}_{t, T}}[\mathbf{X}_{s, u} \otimes \mathbf{X}_{u, t}]$, when $\alpha=1/2$, i.e.\ Theorem~\ref{thm:sig_conv_lm} under \textit{(ii)}. 
\begin{lemma} \label{lemma:discr_signature_manipulation}
    Let $\pi$ be a partition of $[0,T]$ and let $\tau\in\pi$. Then, for $k\geq 0$, we can write
    \[
    S^{k+1}(\mathbb{X}^\pi)_{[0,\tau]} = \sum_{i=0}^k \frac{1}{(1+i)!} \sum_{[u,v]\in\pi_{[0,\tau]}} S^{k-i}(\mathbb{X}^\pi)_{[0,u]} \otimes \mathbf{X}_{u,v}^{\otimes (i+1)}.
    \]
\end{lemma}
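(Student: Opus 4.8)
The plan is to reduce everything to algebra in the truncated tensor algebra, using the explicit product form \eqref{eqn:discr_sig} of the discrete-time signature together with Chen's identity. The starting observation is that $\mathbb{X}^\pi$ is linear on each sub-interval $[u,v]\in\pi$ and agrees with $\mathbb{X}$ at the partition points, so its signature increment there is the tensor exponential of $\mathbf{X}_{u,v}=\mathbf{X}_v-\mathbf{X}_u$; reading off the level-$j$ component,
\[
S^j(\mathbb{X}^\pi)_{[u,v]} = \frac{1}{j!}\,\mathbf{X}_{u,v}^{\otimes j}, \qquad j\geq 0.
\]
Applying multiplicativity (Chen's identity) to the concatenation $[0,v]=[0,u]\cup[u,v]$ gives $S(\mathbb{X}^\pi)_{[0,v]}=S(\mathbb{X}^\pi)_{[0,u]}\otimes\exp_\otimes\mathbf{X}_{u,v}$, and extracting the $(k+1)$-st graded component yields the discrete convolution
\[
S^{k+1}(\mathbb{X}^\pi)_{[0,v]} = \sum_{j=0}^{k+1}\frac{1}{j!}\, S^{k+1-j}(\mathbb{X}^\pi)_{[0,u]}\otimes\mathbf{X}_{u,v}^{\otimes j}.
\]
Isolating the $j=0$ summand, which equals $S^{k+1}(\mathbb{X}^\pi)_{[0,u]}$, and reindexing $j=i+1$ produces the one-step increment
\[
S^{k+1}(\mathbb{X}^\pi)_{[0,v]} - S^{k+1}(\mathbb{X}^\pi)_{[0,u]} = \sum_{i=0}^{k}\frac{1}{(i+1)!}\, S^{k-i}(\mathbb{X}^\pi)_{[0,u]}\otimes\mathbf{X}_{u,v}^{\otimes(i+1)}.
\]

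Next I would telescope this identity. Since $\tau\in\pi$, the restriction $\pi_{[0,\tau]}$ is a genuine partition of $[0,\tau]$, say $\{[t_0,t_1],\dots,[t_{m-1},t_m]\}$ with $t_0=0$ and $t_m=\tau$; summing the last display over $[u,v]\in\pi_{[0,\tau]}$ collapses the left-hand side to $S^{k+1}(\mathbb{X}^\pi)_{[0,\tau]}-S^{k+1}(\mathbb{X}^\pi)_{[0,0]}$, and the level-$(k+1)$ component of the signature over the degenerate interval $[0,0]$ vanishes (for $k\geq0$). Swapping the two finite sums over $[u,v]$ and over $i$ then gives exactly
\[
S^{k+1}(\mathbb{X}^\pi)_{[0,\tau]} = \sum_{i=0}^{k}\frac{1}{(i+1)!}\sum_{[u,v]\in\pi_{[0,\tau]}} S^{k-i}(\mathbb{X}^\pi)_{[0,u]}\otimes\mathbf{X}_{u,v}^{\otimes(i+1)},
\]
which is the claimed formula.

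There is no analytic difficulty here; the only points demanding a little care are bookkeeping ones. First, one must verify that the $(k+1)$-st graded component of $S(\mathbb{X}^\pi)_{[0,u]}\otimes\exp_\otimes\mathbf{X}_{u,v}$ is precisely the stated convolution, which is immediate from the grading of the tensor product and $\big(\exp_\otimes\mathbf{X}_{u,v}\big)^{(j)}=\mathbf{X}_{u,v}^{\otimes j}/j!$. Second, the telescope is exact only because $\tau\in\pi$, so $\pi_{[0,\tau]}$ covers $[0,\tau]$ without a leftover sub-interval. The main thing to keep straight throughout is whether the running index denotes the tensor degree $j$ of the increment factor or the shift $i=j-1$ appearing in the statement.
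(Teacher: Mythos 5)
Your proof is correct and follows essentially the same route as the paper's: both arguments combine the telescoping of $S^{k+1}(\mathbb{X}^\pi)_{[0,\cdot]}$ over the sub-intervals of $\pi_{[0,\tau]}$ with Chen's relation and the fact that $S(\mathbb{X}^\pi)_{[u,v]}=\exp_\otimes\mathbf{X}_{u,v}$ on each linear piece; you merely perform the Chen step before the telescoping rather than after. The bookkeeping (reindexing $j=i+1$, vanishing of the level-$(k+1)$ term over the degenerate interval, and the role of $\tau\in\pi$) is all handled correctly.
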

\begin{proof}
    Note that for $k\geq 0$,
    \begin{align*}
        S^{k+1}(\mathbb{X}^\pi)_{[0,\tau]} &= \sum_{[u, v] \in \pi_{[0,\tau]}} \left[S^{k+1}(\mathbb{X}^\pi)_{[0,v]} -S^{k+1}(\mathbb{X}^\pi)_{[0, u]}\right] \\
        &\overset{(*)}{=} \sum_{[u, v] \in \pi_{[0,\tau]}} \left[\sum_{i = 0}^{k+1} S^{k+1-i}(\mathbb{X}^\pi)_{[0,u]} \otimes \frac{\mathbf{X}_{u, v}^{\otimes i}}{i!} -S^{k+1}(\mathbb{X}^\pi)_{[0, u]}\right] \\
        &= \sum_{[u, v] \in \pi_{[0,\tau]}} \sum_{i = 1}^{k+1} S^{k+1-i}(\mathbb{X}^\pi)_{[0,u]} \otimes \frac{\mathbf{X}_{u, v}^{\otimes i}}{i!}  \\
        &= \sum_{i = 0}^{k} \frac{1}{(1+i)!} \sum_{[u, v] \in \pi_{[0,\tau]}} S^{k-i}(\mathbb{X}^\pi)_{[0,u]} \otimes \mathbf{X}_{u, v}^{\otimes (1+i)},
    \end{align*}
    where, in $(*)$, we use Chen's relation and $S(\mathbb{X}^\pi)_{[u, v]} = \exp_\otimes \mathbf{X}_{u,v}$ since $\mathbb{X}^\pi$ is linear over $[u, v]\in\pi$.
\end{proof}

\subsubsection{Proof of Theorem~\ref{thm:sig_conv_lm} under \textit{(i)}, \textit{(iii)} or \textit{(iv)}} \label{app:proofs_in_fill_i_iii_iv}

Denote by $\{\pi_n,\ n\geq 1\}$ the signature-defining sequence of refining partitions of the interval $[0, T]$. Without loss of generality, we can consider $\{\pi_n,\ n\geq 1\}$ to be such that $\pi_{n+1}$ is obtained from $\pi_n$ by adding at most one refinement in each sub-interval, i.e.,\ for each $[s, t]\in \pi_n$, either $[s, t]\in \pi_{n+1}$ or $[s, u], [u,t]\in \pi_{n+1}$, for $u\in (s,t)$. If not, one can consider a super-sequence satisfying this property and then pass to the original subsequence. 

In the following, for any $n\geq 1$ and $[s, t] \in \pi_n$, denote by $\pi_{n, [s,t]}$ the restriction of $\pi_n$ to $[s,t]$ and, abusing notation slightly, $S(\mathbb{X}^{\pi_n})_{[s, t]} = S(\mathbb{X}^{\pi_{n, [s,t]}})_{[s, t]}$.

Let $[\tau_0, \tau_1] \in \pi_N$, for $N\geq 1$, and note that, for any $k\geq 2$ and $n\geq N$, we can write
\begin{equation} \label{eqn:telescoping_signature}
    S^k(\mathbb{X}^{\pi_{n+1}})_{[\tau_0, \tau_1]} - S^k(\mathbb{X}^{\pi_n})_{[\tau_0, \tau_1]} = \sum_{[s,t]\in \pi_{n, [\tau_0, \tau_1]}} \left[ S^k(\mathbb{X}^{\pi_{n, t}})_{[\tau_0, \tau_1]} - S^k(\mathbb{X}^{\pi_{n, s}})_{[\tau_0, \tau_1]}\right],
\end{equation}
where the partitions $\pi_{n, s}$ are defined as $\pi_{n, s} = \pi_{n+1, [0,s]} \cup \pi_{n, [s,T]}$, i.e.,\ for each $[s, t]\in \pi_n$, the partitions $\pi_{n, s}$ and $\pi_{n, t}$ differ by at most one point $u\in(s, t)$. Using Chen's relation and the definition of the tensor product, we can write for each $[s,t]\in\pi_n$ with refinement $u\in(s,t)$,
\begin{align} \label{eqn:chen_relation_delta}
    S^k(\mathbb{X}^{\pi_{n, t}})_{[\tau_0, \tau_1]} - &S^k(\mathbb{X}^{\pi_{n, s}})_{[\tau_0, \tau_1]}  \notag\\
    &= \sum_{\substack{i_1, i_2, i_3 \geq 0 \\ i_1 + i_2 + i_3 = k}} S^{i_1}(\mathbb{X}^{\pi_{n+1}})_{[\tau_0,s]} \otimes \left[S^{i_2}(\mathbb{X}^{\pi_{n+1}})_{[s,t]} - S^{i_2}(\mathbb{X}^{\pi_{n}})_{[s,t]} \right] \otimes S^{i_3}(\mathbb{X}^{\pi_{n}})_{[t,\tau_1]}.
\end{align}
Note that, for $i_2\in\{0,1\}$,
\[
S^{i_2}(\mathbb{X}^{\pi_{n+1}})_{[s,t]} - S^{i_2}(\mathbb{X}^{\pi_{n}})_{[s,t]} = 0,
\]
and applying again Chen's relation when $i_2\geq 2$ yields
\begin{align*}
    S^{i_2}(\mathbb{X}^{\pi_{n+1}})_{[s,t]} &= \sum_{j=0}^{i_2} S^{j}(\mathbb{X}^{\pi_{n+1}})_{[s,u]} \otimes S^{i_2 - j}(\mathbb{X}^{\pi_{n+1}})_{[u,t]} 
    =  \frac{1}{i_2!} \sum_{j=0}^{i_2} \binom{i_2}{j} \mathbf{X}_{s, u}^{\otimes j} \otimes \mathbf{X}_{u, t}^{\otimes (i_2 - j)},
\end{align*}
where we used the fact that, if $\mathbb{Y}$ is linear over $[s, t]$, then $S(\mathbb{Y})_{[s, t]} = \exp_{\otimes} \mathbf{Y}_{s,t}$, which also implies
\begin{align*}
    S^{i_2}(\mathbb{X}^{\pi_{n}})_{[s,t]} = \frac{\mathbf{X}^{\otimes i_2}_{s,t}}{i_2!} = \frac{(\mathbf{X}_{s,u} + \mathbf{X}_{u,t})^{\otimes i_2}}{i_2!} = \frac{1}{i_2!} \sum_{\mathcal{I} \in \{0, 1\}^{i_2}} \bigotimes_{i\in\mathcal{I}} \left(\mathbf{X}_{s,u}^{\otimes i} \otimes \mathbf{X}_{u,t}^{\otimes (1-i)}\right),
\end{align*}
denoting by $\mathcal{I} \in \{0, 1\}^{i_2}$ a binary number of length $i_2$ with $|\mathcal{I}| = \sum_{i\in\mathcal{I}} i$ and recalling that $\mathbf{x}^{\otimes 0} = 1, \mathbf{x}^{\otimes 1} = \mathbf{x}$, for any $\mathbf{x}\in\mathbb{R}^d$. We hence have that
\[
S^{i_2}(\mathbb{X}^{\pi_{n+1}})_{[s,t]} - S^{i_2}(\mathbb{X}^{\pi_{n}})_{[s,t]} = \sum_{\mathcal{I} \in \{0, 1\}^{i_2}} C_\mathcal{I}\, \bigotimes_{i\in\mathcal{I}} \left(\mathbf{X}_{s,u}^{\otimes i} \otimes \mathbf{X}_{u,t}^{\otimes (1-i)}\right),
\]
where for $\mathcal{I}\in\{0,1\}^{i_2}$,
\[
C_{\mathcal{I}} = \begin{cases}
    \displaystyle \frac{1}{i_2!} \left[\binom{i_2}{|\mathcal{I}|} -1\right], \quad &\text{if}\ \mathcal{I} = (1, \ldots, 1, 0, \ldots, 0),\\
    \displaystyle -\frac{1}{i_2!}, \quad &\text{otherwise}. 
\end{cases}
\]
Plugging this into Equation \eqref{eqn:telescoping_signature} via \eqref{eqn:chen_relation_delta} and noting that $C_{\mathcal{I}} = 0$ for $\mathcal{I}\in\{(0, \ldots, 0), (1, \ldots, 1)\}$, we can write, for any $N \geq 1, [\tau_0, \tau_1] \in\pi_N, n\geq N$ and $k\geq 2$,
\begin{align*} 
    S^k&(\mathbb{X}^{\pi_{n+1}})_{[\tau_0,\tau_1]} - S^k(\mathbb{X}^{\pi_n})_{[\tau_0, \tau_1]} \\
    & = \sum_{\substack{[s,t]\in \pi_{n, [\tau_0, \tau_1]} \\ u\in(s,t)}} \sum_{\substack{i_1, i_3 \geq 0, i_2 \geq 2 \\ i_1 + i_2 + i_3 = k}}  \sum_{\substack{\mathcal{I} \in \{0, 1\}^{i_2} \\ \mathcal{I} \neq (0, \ldots, 0), (1, \ldots, 1)}} C_\mathcal{I}\ S^{i_1}(\mathbb{X}^{\pi_{n+1}})_{[\tau_0,s]} \otimes \bigotimes_{i\in\mathcal{I}} \left(\mathbf{X}_{s,u}^{\otimes i} \otimes \mathbf{X}_{u,t}^{\otimes (1-i)}\right) \otimes S^{i_3}(\mathbb{X}^{\pi_{n}})_{[t,\tau_1]}.
\end{align*}
We now proceed inductively to show that, for any $i\in \{1, \ldots, k\}$ and any $[\tau_0, \tau_1] \in \pi_N$ with $N\geq 1$, the sequence $\{S^{i}(\mathbb{X}^{\pi_n})_{[\tau_0, \tau_1]}, \ n\geq N\}$ converges in $L^{mk/i}$ with rate $\mathcal{O}(\sum_{n' \geq n} |\pi_{n'}|^{\epsilon})$ and 
\begin{equation} \label{eqn:uniform_boundedness}
    \sup_{N\geq 1} \sup_{[\tau_0, \tau_1] \in \pi_N} \| S^{i}(\mathbb{X}^{\pi_N})_{[\tau_0, \tau_1]}\|_{L^{mk/i}} < \infty.
\end{equation}

$k' = 1$. Note that for $[\tau_0, \tau_1] \in \pi_N$ with $N\geq 1$ one has $S^1(\mathbb{X}^{\pi_n})_{[\tau_0,\tau_1]} = \mathbf{X}_{\tau_0, \tau_1}$, for all $n\geq N$, and 
\[
\|\mathbf{X}_{\tau_0,\tau_1}\|_{L^{mk}} \lesssim |\tau_1 - \tau_0|^\alpha \leq T^\alpha  < \infty,
\] 
by Assumption \eqref{ass:alpha}. Hence $S^1(\mathbb{X})_{[0,T]} = \mathbf{X}_{0,T} \in L^{mk}$ and the statement holds trivially.

Assume the inductive hypothesis holds for all $i \in\{1, \ldots, k'\}$ with $k'\in\{1, \ldots, k -1\}$. Then, for each $[\tau_0, \tau_1] \in \pi_N$ with $N\geq 1$ and $n\geq N$, let 
\begin{align} 
    \Big\| S^{k'+1}(\mathbb{X}^{\pi_{n+1}})_{[\tau_0,\tau_1]} - &S^{k'+1}(\mathbb{X}^{\pi_n})_{[\tau_0, \tau_1]} \Big\|_{L^{mk/(k'+1)}}  \notag \\
    & \leq \sum_{\substack{i_1, i_3 \geq 0, i_2 \geq 2 \\ i_1 + i_2 + i_3 = k'+1}}  \sum_{\substack{\mathcal{I} \in \{0, 1\}^{i_2} \\ \mathcal{I} \neq (0, \ldots, 0), (1, \ldots, 1)}} |C_\mathcal{I}|\  \Bigg\| \sum_{\substack{[s,t]\in \pi_{n, [\tau_0, \tau_1]} \\ u\in(s,t)}} Z^{\mathcal{I}}_{[s, t]} \Bigg\|_{L^{mk/(k'+1)}}, \label{eqn:bound_cauchy_seq}
\end{align}
where, for each $[\tau_0, \tau_1]\in\pi_N$, $\pi_{n}$ with $n\geq N$, $i_1, i_3 \geq 0, i_2\geq 2$ with $i_1+i_2+i_3=k'+1$ and $\mathcal{I}\in\{0,1\}^{i_2}$, we define
\[
Z^{\mathcal{I}}_{[s, t]} :=  S^{i_1}(\mathbb{X}^{\pi_{n+1}})_{[\tau_0,s]} \otimes \bigotimes_{i\in\mathcal{I}} \left(\mathbf{X}_{s,u}^{\otimes i} \otimes \mathbf{X}_{u,t}^{\otimes (1-i)}\right) \otimes S^{i_3}(\mathbb{X}^{\pi_{n}})_{[t,\tau_1]}, \quad [s,t]\in \pi_{n, [\tau_0, \tau_1]}\ \text{with}\ u\in(s,t),
\]
keeping only the dependence on $\mathcal{I}$ for notational convenience. Note that, by applying Lemma~\ref{lemma:holder_tensors}, the inductive hypothesis and Assumption \eqref{ass:alpha}, 
\[
\|Z^{\mathcal{I}}_{[s, t]}\|_{L^{mk/(k'+1)}} \lesssim \left\| S^{i_1}(\mathbb{X}^{\pi_{n+1}})_{[\tau_0,s]} \right\|_{L^{mk/i_1}} \left\| \mathbf{X}_{s,u} \right\|_{L^{mk}}^{|\mathcal{I}|} \left\|\mathbf{X}_{u,t}\right\|_{L^{mk}}^{i_2 - |\mathcal{I}|} \left\|S^{i_3}(\mathbb{X}^{\pi_{n}})_{[t,\tau_1]} \right\|_{L^{mk/i_3}} \lesssim |t-s|^{i_2 \alpha}, 
\]
and, hence, each $Z^{\mathcal{I}}_{[s, t]} \in L^{mk/(k'+1)}$. Moreover, by a simple application of the triangle inequality,
\begin{equation} \label{eqn:simple_traingle_ineq_Z}
    \Bigg\| \sum_{\substack{[s,t]\in \pi_{n, [\tau_0, \tau_1]} \\ u\in(s,t)}} Z^{\mathcal{I}}_{[s, t]} \Bigg\|_{L^{mk/(k'+1)}} \lesssim \sum_{\substack{[s,t]\in \pi_{n, [\tau_0, \tau_1]} \\ u\in(s,t)}} |t-s|^{i_2 \alpha}.
\end{equation}

\paragraph{Assumption $(i)$} Hence, if $\alpha > 1/2$, we have for each $[\tau_0, \tau_1]\in\pi_N$, $\pi_{n}$ with $n\geq N$, $i_1, i_3 \geq 0, i_2\geq 2$ with $i_1+i_2+i_3=k'+1$ and $\mathcal{I}\in\{0,1\}^{i_2}$,
\begin{equation} \label{eqn:alpha_1/2_bound}
    \Bigg\| \sum_{\substack{[s,t]\in \pi_{n, [\tau_0, \tau_1]} \\ u\in(s,t)}} Z^{\mathcal{I}}_{[s, t]} \Bigg\|_{L^{mk/(k'+1)}} \lesssim |\pi_n|^{2\alpha - 1} |\tau_1 - \tau_0|.
\end{equation} 

\paragraph{Assumption $(iii)$} If $\alpha \in (1/3, 1/2]$ note that if $\mathcal{I}$ is such that $i_2\geq 3$, then 
\begin{equation} \label{eqn:alpha_1/3_1/2_i_2=3_bound}
    \Bigg\| \sum_{\substack{[s,t]\in \pi_{n, [\tau_0, \tau_1]} \\ u\in(s,t)}} Z^{\mathcal{I}}_{[s, t]} \Bigg\|_{L^{mk/(k'+1)}} \lesssim |\pi_n|^{3\alpha - 1} |\tau_1 - \tau_0|,
\end{equation} 
but if $i_2=2$ then the bound \eqref{eqn:simple_traingle_ineq_Z} is not strong enough. We can instead apply Lemma~\ref{lemma:cond_exp_bound_sum} with the filtration $\{\mathcal{G}_{[s,t]},\ [s,t]\in \pi_n\}$ defined by 
\[
\mathcal{G}_{[s,t]} := \mathcal{F}_s \vee \sigma(\mathbf{X}_{v, w}, [v, w] \in \pi_{n, [t, \tau]}),
\]
by noting that each $Z^{\mathcal{I}}_{[v, w]}$ with $w\leq s$ is $\mathcal{G}_{[s,t]}$-measurable and $m k /(k'+1) \geq 2$ for all $k'+1\leq k$. This implies 
\begin{align} 
    \Bigg\| \sum_{\substack{[s,t]\in \pi_{n, [\tau_0, \tau_1]} \\ u\in(s,t)}} Z^{\mathcal{I}}_{[s, t]} \Bigg\|_{L^{mk/(k'+1)}} &\leq \sum_{\substack{[s,t]\in \pi_{n, [\tau_0, \tau_1]} \\ u\in(s,t)}} \left\| \mathbb{E}_{\mathcal{G}_{[s,t]}}[Z^{\mathcal{I}}_{[s, t]}] \right\|_{L^{mk/(k'+1)}} + \Bigg( \sum_{\substack{[s,t]\in \pi_{n, [\tau_0, \tau_1]} \\ u\in(s,t)}} \| Z^{\mathcal{I}}_{[s, t]} \|^2_{L^{mk/(k'+1)}}  \Bigg)^{1/2} \notag \\
    &\leq \sum_{\substack{[s,t]\in \pi_{n, [\tau_0, \tau_1]} \\ u\in(s,t)}} |t-s|^\beta + \Bigg( \sum_{\substack{[s,t]\in \pi_{n, [\tau_0, \tau_1]} \\ u\in(s,t)}} |t-s|^{4\alpha}  \Bigg)^{1/2} \notag \\
    &\leq |\pi_n|^{(\beta - 1) \wedge (2\alpha - 1/2)} \left( |\tau_1 - \tau_0| + |\tau_1 - \tau_0|^{1/2}\right), \label{eqn:alpha_1/3_1/2_i_2=2_bound}
\end{align}
where we used the fact that for $\mathcal{I}\in\{(0,1), (1,0)\}$,
\begin{align*}
    \Big\| \mathbb{E}_{\mathcal{G}_{[s,t]}}[Z^{\mathcal{I}}_{[s, t]}] &\Big\|_{L^{mk/(k'+1)}} \\
    &\overset{(i)}{=} \left\| S^{i_1}(\mathbb{X}^{\pi_{n+1}})_{[\tau_0,s]} \otimes \mathbb{E}_{\mathcal{G}_{[s,t]}}\left[\bigotimes_{i\in\mathcal{I}} \left(\mathbf{X}_{s,u}^{\otimes i} \otimes \mathbf{X}_{u,t}^{\otimes (1-i)}\right) \right] \otimes S^{i_3}(\mathbb{X}^{\pi_{n}})_{[t,\tau_1]} \right\|_{L^{mk/(k'+1)}} \\
    &\overset{(ii)}{\leq}  \left\| S^{i_1}(\mathbb{X}^{\pi_{n+1}})_{[\tau_0,s]} \right\|_{L^{mk/i_1}} \left\| \mathbb{E}_{\mathcal{G}_{[s,t]}}\left[\bigotimes_{i\in\mathcal{I}} \left(\mathbf{X}_{s,u}^{\otimes i} \otimes \mathbf{X}_{u,t}^{\otimes (1-i)}\right) \right] \right\|_{L^{mk/2}} \left\| S^{i_3}(\mathbb{X}^{\pi_{n}})_{[t,\tau_1]} \right\|_{L^{mk/i_3}} \\
    &\overset{(iii)}{\lesssim} \left\| \mathbb{E}_{\mathcal{G}_{[s,t]}}\left[\mathbf{X}_{s,u} \otimes \mathbf{X}_{u,t} \right] \right\|_{L^{mk/2}} \\
    &\overset{(iv)}{\lesssim} \left\| \mathbb{E}_{\mathcal{F}_{0,s} \vee \mathcal{F}_{t, T}}\left[\mathbf{X}_{s,u} \otimes \mathbf{X}_{u,t} \right] \right\|_{L^{mk/2}} \\
    &\overset{(v)}{\lesssim} |t-s|^{\beta},
\end{align*}
by using in $(i)$ measurability of $S^{i_1}(\mathbb{X}^{\pi_{n+1}})_{[\tau_0,s]}$ and $S^{i_3}(\mathbb{X}^{\pi_{n}})_{[t,\tau_1]}$ with respect to $\mathcal{G}_{[s,t]}$, in $(ii)$ H\"{o}lder inequality for tensors Lemma~\ref{lemma:holder_tensors}, in $(iii)$ the inductive assumption \eqref{eqn:uniform_boundedness} and the fact that $\| \mathbf{A} \otimes \mathbf{B} \| = \| \mathbf{B} \otimes \mathbf{A} \|$ for any $\mathbf{A}, \mathbf{B} \in \mathbb{R}^d$, in $(iv)$ the tower property and the contractive property of conditional expectation applied to $\mathcal{G}_{[s,t]} \subseteq \mathcal{F}_{0,s} \vee \mathcal{F}_{t, T}$ and in $(v)$ Assumption \eqref{ass:beta}. Combining bound \eqref{eqn:alpha_1/3_1/2_i_2=2_bound} when $i_2=2$ and bound \eqref{eqn:alpha_1/3_1/2_i_2=3_bound} when $i_2 \geq 3$ with $\alpha \in (1/3, 1/2]$, it follows that for each $[\tau_0, \tau_1]\in\pi_N$, $\pi_{n}$ with $n\geq N$, $i_1, i_3 \geq 0, i_2\geq 2$ with $i_1+i_2+i_3=k'+1$ and $\mathcal{I}\in\{0,1\}^{i_2}$, 
\begin{equation} \label{eqn:alpha_1/3_1/2_bound}
    \Bigg\| \sum_{\substack{[s,t]\in \pi_{n, [\tau_0, \tau_1]} \\ u\in(s,t)}} Z^{\mathcal{I}}_{[s, t]} \Bigg\|_{L^{mk/(k'+1)}} \lesssim |\pi_n|^{(\beta - 1) \wedge (3\alpha - 1)} |\tau_1 - \tau_0|.
\end{equation}

\paragraph{Assumption $(iv)$} A similar reasoning can be applied when $\alpha\in (1/4, 1/3]$ (and $k\geq 3$) by considering the cases $i_2 \geq 4$, $i_2=3$ and $i_2=2$ separately. The case $i_2\geq 4$ follows directly from \eqref{eqn:simple_traingle_ineq_Z}, the case $i_2=2$ follows from \eqref{eqn:alpha_1/3_1/2_i_2=2_bound} and the case $i_2=3$ can be shown in the same way as $i_2=2$ with the only difference being that we require Assumption \eqref{ass:gamma} to show that, for $\mathcal{I}\in\{(0,0,1), (0, 1, 0), (1, 0, 0)\}$,
\[
\Big\| \mathbb{E}_{\mathcal{G}_{[s,t]}}[Z^{\mathcal{I}}_{[s, t]}] \Big\|_{L^{mk/(k'+1)}} \lesssim \left\| \mathbb{E}_{\mathcal{F}_{0,s} \vee \mathcal{F}_{t, T}}\left[\mathbf{X}_{s,u} \otimes \mathbf{X}^{\otimes 2}_{u,t} \right] \right\|_{L^{mk/3}} \lesssim |t-s|^{\gamma},
\]
and, for $\mathcal{I}\in\{(0, 1, 1), (1, 0, 1), (1, 1, 0)\}$,
\[
\Big\| \mathbb{E}_{\mathcal{G}_{[s,t]}}[Z^{\mathcal{I}}_{[s, t]}] \Big\|_{L^{mk/(k'+1)}} \lesssim \left\| \mathbb{E}_{\mathcal{F}_{0,s} \vee \mathcal{F}_{t, T}}\left[\mathbf{X}^{\otimes 2}_{s,u} \otimes \mathbf{X}_{u,t} \right] \right\|_{L^{mk/3}} \lesssim |t-s|^{\gamma},
\]
so that applying again Lemma~\ref{lemma:cond_exp_bound_sum},
\begin{align} 
    \Bigg\| \sum_{\substack{[s,t]\in \pi_{n, [\tau_0, \tau_1]} \\ u\in(s,t)}} Z^{\mathcal{I}}_{[s, t]} \Bigg\|_{L^{mk/(k'+1)}} &\leq \sum_{\substack{[s,t]\in \pi_{n, [\tau_0, \tau_1]} \\ u\in(s,t)}} \left\| \mathbb{E}_{\mathcal{G}_{[s,t]}}[Z^{\mathcal{I}}_{[s, t]}] \right\|_{L^{mk/(k'+1)}} + \Bigg( \sum_{\substack{[s,t]\in \pi_{n, [\tau_0, \tau_1]} \\ u\in(s,t)}} \| Z^{\mathcal{I}}_{[s, t]} \|^2_{L^{mk/(k'+1)}}  \Bigg)^{1/2} \notag \\
    &\leq \sum_{\substack{[s,t]\in \pi_{n, [\tau_0, \tau_1]} \\ u\in(s,t)}} |t-s|^\gamma + \Bigg( \sum_{\substack{[s,t]\in \pi_{n, [\tau_0, \tau_1]} \\ u\in(s,t)}} |t-s|^{6\alpha}  \Bigg)^{1/2} \notag \\
    &\leq |\pi_n|^{(\gamma - 1) \wedge (3\alpha - 1/2)} \left( |\tau_1 - \tau_0| + |\tau_1 - \tau_0|^{1/2}\right). \label{eqn:alpha_1/4_1/3_i_2=3_bound}
\end{align}
Combining the cases $i_2=2$, $i_2=3$ and $i_2\geq 4$ when $\alpha\in (1/4, 1/3]$ yields, for each $[\tau_0, \tau_1]\in\pi_N$, $\pi_{n}$ with $n\geq N$, $i_1, i_3 \geq 0, i_2\geq 2$ with $i_1+i_2+i_3=k'+1$ and $\mathcal{I}\in\{0,1\}^{i_2}$, 
\begin{equation} \label{eqn:alpha_1/4_1/3_bound}
    \Bigg\| \sum_{\substack{[s,t]\in \pi_{n, [\tau_0, \tau_1]} \\ u\in(s,t)}} Z^{\mathcal{I}}_{[s, t]} \Bigg\|_{L^{mk/(k'+1)}} \lesssim |\pi_n|^{(\beta - 1) \wedge (\gamma-1) \wedge (4\alpha - 1)} |\tau_1 - \tau_0|.
\end{equation}

Defining $\epsilon$ as in Equation \eqref{eqn:epsilon}, we can plug bounds \eqref{eqn:alpha_1/2_bound}, \eqref{eqn:alpha_1/3_1/2_bound} and \eqref{eqn:alpha_1/4_1/3_bound} into Equation \eqref{eqn:bound_cauchy_seq} to deduce that 
\begin{align*} 
    \Big\| S^{k'+1}(\mathbb{X}^{\pi_{n+1}})_{[\tau_0,\tau_1]} - S^{k'+1}(\mathbb{X}^{\pi_n})_{[\tau_0, \tau_1]} \Big\|_{L^{mk/(k'+1)}} \lesssim |\tau_1 - \tau_0| |\pi_n|^{\epsilon}, 
\end{align*}
and, hence, under the assumption that $\sum_{n\geq 1}|\pi_n|^{\epsilon} <\infty$, for any $[\tau_0, \tau_1]\in\pi_N$ with $N\geq 1$, the sequence $\{S^{k'+1}(\mathbb{X}^{\pi_n})_{[\tau_0, \tau_1]}, \quad n\geq N\}$ is Cauchy in $L^{mk/(k'+1)}$. Since $L^{mk/(k'+1)}$ is a Banach space the sequence converges in $L^{mk/(k'+1)}$  to $S^{k'+1}(\mathbb{X})_{[\tau_0, \tau_1]} \in L^{mk/(k'+1)}$ (by uniqueness of limits) with rate
\begin{align*}
    \Big\| S^{k'+1}(\mathbb{X})_{[\tau_0,\tau_1]} - &S^{k'+1}(\mathbb{X}^{\pi_{n}})_{[\tau_0, \tau_1]} \Big\|_{L^{mk/(k'+1)}} \\
    &\leq \sum_{n' \geq n} \Big\| S^{k'+1}(\mathbb{X}^{\pi_{n'+1}})_{[\tau_0,\tau_1]} - S^{k'+1}(\mathbb{X}^{\pi_{n'}})_{[\tau_0, \tau_1]} \Big\|_{L^{mk/(k'+1)}} \lesssim |\tau_1 - \tau_0| \sum_{n' \geq n} |\pi_{n'}|^{\epsilon}. 
\end{align*}
And, to complete the inductive step for $k'+1$, note that for all $N\geq 1$ and $[\tau_0, \tau_1]\in\pi_N$,
\begin{align*}
    \Big\| S^{k'+1}(\mathbb{X}^{\pi_{N}})_{[\tau_0, \tau_1]} \Big\|_{L^{mk/(k'+1)}} &\lesssim  \Big\| S^{k'+1}(\mathbb{X})_{[\tau_0,\tau_1]} \Big\|_{L^{mk/(k'+1)}} + |\tau_1 - \tau_0| \sum_{n' \geq N} |\pi_{n'}|^{\epsilon} \\ &\lesssim \Big\| S^{k'+1}(\mathbb{X})_{[\tau_0,\tau_1]} \Big\|_{L^{mk/(k'+1)}} + T  \sum_{n' \geq 1} |\pi_{n'}|^{\epsilon}.
\end{align*}
\hfill \qed

\subsubsection{Proof of Theorem~\ref{thm:sig_conv_lm} under \textit{(ii)}}  \label{app:proofs_in_fill_ii}
In what follows we shall simplify notation and denote $\mathbb{E}_{\mathcal{F}_{0,t}}$ by $\mathbb{E}_t$.

Denote by $\{\pi_n,\ n\geq 1\}$ the signature-defining sequence of refining partitions of the interval $[0, T]$. Without loss of generality, we can consider $\{\pi_n,\ n\geq 1\}$ to be such that $\pi_{n+1}$ is obtained from $\pi_n$ by adding at most one refinement in each sub-interval, i.e.\ for each $[s, t]\in \pi_n$ either $[s, t]\in \pi_{n+1}$ or $[s, u], [u,t]\in \pi_{n+1}$ for $u\in (s,t)$. If not, one can consider a super-sequence satisfying this property and then pass to the original subsequence. 

We start by showing inductively that, for any $i\in\{1, \ldots, k\}$,
\begin{equation} \label{eqn:uniform_bound_Lmk}
    \sup_{n\geq 1} \sup_{\tau \in \pi_n} \| S^{i}(\mathbb{X}^{\pi_n})_{[0,\tau]} \|_{L^{mk/i}} < \infty.
\end{equation} 
Note that the case $i = 1$ is trivial since, for any $n\geq 1$ and $\tau \in \pi_n$, by \eqref{ass:alpha}
\[
\|S^{1}(\mathbb{X}^{\pi_n})_{[0,\tau]}\|_{L^{mk}} = \|\mathbf{X}_{0, \tau}\|_{L^{mk}} \lesssim \tau^{\alpha} \lesssim T^{\alpha}.
\]
Next, for the inductive step, assume that \eqref{eqn:uniform_bound_Lmk} holds for all $i\in\{1, \ldots k'\}$ with $k'\leq k-1$. Then, by using Lemma~\ref{lemma:discr_signature_manipulation}, we can bound for any $n\geq 1$ and $\tau\in\pi_n$,
\begin{align*}
    \|S^{k'+1}&(\mathbb{X}^{\pi_n})_{[0,\tau]} \|_{L^{mk /(k'+1)}} \\
    & \overset{(i)}{\leq} \sum_{i=0}^{k'} \frac{1}{(1+i)!} \left\|\sum_{[u,v]\in\pi_{n,[0,\tau]}} S^{k'-i}(\mathbb{X}^{\pi_n})_{[0,u]} \otimes \mathbf{X}_{u,v}^{\otimes (i+1)}\right\|_{L^{mk /(k'+1)}} \\
    &\overset{(ii)}{\leq} \sum_{[u,v]\in\pi_{n,[0,\tau]}} \left\| S^{k'}(\mathbb{X}^{\pi_n})_{[0,u]} \otimes \mathbb{E}_u[\mathbf{X}_{u,v}]\right\|_{L^{mk /(k'+1)}} \\
    &\quad\quad\quad\quad+ \left(\sum_{[u,v]\in\pi_{n,[0,\tau]}} \left\| S^{k'}(\mathbb{X}^{\pi_n})_{[0,u]} \otimes \mathbf{X}_{u,v}\right\|^2_{L^{mk /(k'+1)}}\right)^{1/2} \\
    &\quad\quad\quad\quad\quad\quad\quad\quad + \sum_{i=1}^{k'} \frac{1}{(1+i)!} \sum_{[u,v]\in\pi_{n,[0,\tau]}} \left\|S^{k'-i}(\mathbb{X}^{\pi_n})_{[0,u]} \otimes \mathbf{X}_{u,v}^{\otimes (i+1)}\right\|_{L^{mk /(k'+1)}} \\
    &\overset{(iii)}{\lesssim} \sum_{[u,v]\in\pi_{n,[0,\tau]}} \left\| \mathbb{E}_u[\mathbf{X}_{u,v}]\right\|_{L^{mk}} + \Bigg(\sum_{[u,v]\in\pi_{n,[0,\tau]}} \left\|\mathbf{X}_{u,v}\right\|^2_{L^{p}}\Bigg)^{1/2} + \sum_{i=1}^{k'} \sum_{[u,v]\in\pi_{n,[0,\tau]}} \left\|\mathbf{X}_{u,v}\right\|^{i+1}_{L^{mk}} \\
    &\overset{(iv)}{\lesssim} \sum_{[u,v]\in\pi_{n,[0,\tau]}} |v - u|^\delta + \Bigg(\sum_{[u,v]\in\pi_{n,[0,\tau]}} |v-u|^{2\alpha} \Bigg)^{1/2} + \sum_{i=1}^{k'} \sum_{[u,v]\in\pi_{n,[0,\tau]}} |v-u|^{(i+1)\alpha} \\
    &\overset{(v)}{\lesssim} \tau + \sqrt{\tau} + \tau \lesssim T + \sqrt{T},
\end{align*}
where in $(i)$ we applied the triangle inequality, in $(ii)$ we bounded the $i=0$ term by applying Lemma~\ref{lemma:cond_exp_bound_sum} to the sequence of random variables
\[ Z_{[u,v]} := S^{k'}(\mathbb{X}^{\pi_n})_{[0,u]} \otimes \mathbf{X}_{u,v} \in L^{mk/(k'+1)},\]
with filtration $\{\mathcal{F}_u,\ [u,v]\in\pi_{n, [0,\tau]}\}$ and we bounded the $i=1, \ldots, k'$ terms by applying the triangle inequality, in $(iii)$ we applied the H\"{o}lder inequality given in Lemma~\ref{lemma:holder_tensors} and the inductive hypothesis Equation \eqref{eqn:uniform_bound_Lmk} for all signature levels up to $k'$, in $(iv)$ we used Assumptions \eqref{ass:alpha} and \eqref{ass:delta}.

Proceeding again by induction, we will show the conclusion of the theorem holds by proving the stronger statement: For each $i\in\{1, \ldots, k\}$, for all $N\geq 1$, $\tau \in \pi_N$ and $n \geq N$,
\begin{equation} \label{eqn:conv_rate_inductive}
    \| S^{i}(\mathbb{X}^{\pi_{n+1}})_{[0,\tau]} - S^{i}(\mathbb{X}^{\pi_{n}})_{[0,\tau]} \|_{L^{m k/i}} \lesssim |\pi_n|^{\epsilon}.
\end{equation}

The case $k'=1$ is again trivial since, for all $N\geq 1$, $\tau \in \pi_N$ and $n \geq N$, $S^{1}(\mathbb{X}^{\pi_{n+1}})_{[0,\tau]} = \mathbf{X}_{0,\tau}$, and hence 
\[
\| S^{1}(\mathbb{X}^{\pi_{n+1}})_{[0,\tau]} - S^{1}(\mathbb{X}^{\pi_{n}})_{[0,\tau]} \|_{L^{m k}} = 0.
\]

For the inductive step, assume Equation \eqref{eqn:conv_rate_inductive} holds for all $i\in\{1,\ldots, k'\}$ with $k'\leq k$. Fix $N\geq 1$, $\tau\in\pi_N$ and $n\geq N$, then we can write the telescoping sum 
\begin{equation} \label{eqn:telescoping_signature_0_tau}
    S^{k'+1}(\mathbb{X}^{\pi_{n+1}})_{[0, \tau]} - S^{k'+1}(\mathbb{X}^{\pi_n})_{[0, \tau]} = \sum_{[s,t]\in \pi_{n, [0, \tau]}} \left[ S^{k'+1}(\mathbb{X}^{\pi_{n, t}})_{[0, \tau]} - S^{k'+1}(\mathbb{X}^{\pi_{n, s}})_{[0, \tau]}\right],
\end{equation}
where the partitions $\pi_{n, s}$ are defined as $\pi_{n, s} = \pi_{n+1, [0,s]} \cup \pi_{n, [s,T]}$, i.e.\ for each $[s, t]\in \pi_n$, the partitions $\pi_{n, s}$ and $\pi_{n, t}$ differ by at most one point $u\in(s, t)$. Note that, for each $[s, t]\in \pi_n$ with refinement $u\in(s,t)$, we can apply Lemma~\ref{lemma:discr_signature_manipulation} to write
\begin{align*}
    S&^{k'+1}(\mathbb{X}^{\pi_{n, t}})_{[0, \tau]} - S^{k'+1}(\mathbb{X}^{\pi_{n, s}})_{[0, \tau]} \\ 
    &= \sum_{i=0}^{k'} \frac{1}{(1+i)!}\Bigg\{ S^{k'-i}(\mathbb{X}^{\pi_{n, t}})_{[0,s]} \otimes  \mathbf{X}_{s,u}^{\otimes (1+i)} + S^{k'-i}(\mathbb{X}^{\pi_{n, t}})_{[0,u]} \otimes  \mathbf{X}_{u,t}^{\otimes (1+i)}  - S^{k'-i}(\mathbb{X}^{\pi_{n, s}})_{[0,s]} \otimes  \mathbf{X}_{s,t}^{\otimes (1+i)} \\
    &\quad\quad\quad\quad\quad\quad\quad\quad\quad\quad\quad\quad\quad\quad\quad\quad\quad\  + \sum_{[v,w]\in \pi_{n, [t, \tau]}}\Big[S^{k'-i}(\mathbb{X}^{\pi_{n, t}})_{[0,v]} -  S^{k'-i}(\mathbb{X}^{\pi_{n, s}})_{[0,v]}\Big] \otimes \mathbf{X}_{v,w}^{\otimes (1+i)} \Bigg\} \\
    &= \sum_{i=0}^{k'} \frac{1}{(1+i)!}\Bigg\{ S^{k'-i}(\mathbb{X}^{\pi_{n+1}})_{[0,s]} \otimes  \Big[\mathbf{X}_{s,u}^{\otimes (1+i)} - \mathbf{X}_{s,t}^{\otimes (1+i)}\Big] + \sum_{j=0}^{k'-i} S^{k'-i-j}(\mathbb{X}^{\pi_{n+1}})_{[0,s]} \otimes \frac{\mathbf{X}_{s,u}^{\otimes j}}{j!} \otimes  \mathbf{X}_{u,t}^{\otimes (1+i)} \\
    &\quad\quad\quad\quad\quad\quad\quad\quad\quad\quad\quad\quad\quad\quad\quad\quad\quad  + \sum_{[v,w]\in \pi_{n, [t, \tau]}}\Big[S^{k'-i}(\mathbb{X}^{\pi_{n, t}})_{[0,v]} -  S^{k'-i}(\mathbb{X}^{\pi_{n, s}})_{[0,v]}\Big] \otimes \mathbf{X}_{v,w}^{\otimes (1+i)} \Bigg\} \\
    &= \sum_{i=0}^{k'} \frac{1}{(1+i)!}\Bigg\{ S^{k'-i}(\mathbb{X}^{\pi_{n+1}})_{[0,s]} \otimes  \Big[\mathbf{X}_{s,u}^{\otimes (1+i)} + \mathbf{X}_{u,t}^{\otimes (1+i)} - \mathbf{X}_{s,t}^{\otimes (1+i)}\Big] \\
    &\quad\quad\quad\quad\quad\quad\quad\quad\quad\quad\quad + \sum_{j=0}^{k'-i-1} \frac{1}{(1+j)!} S^{k'-i-j-1}(\mathbb{X}^{\pi_{n+1}})_{[0,s]} \otimes \mathbf{X}_{s,u}^{\otimes (1+j)} \otimes  \mathbf{X}_{u,t}^{\otimes (1+i)} \\
    &\quad\quad\quad\quad\quad\quad\quad\quad\quad\quad\quad\quad\quad\quad\quad\quad\quad  + \sum_{[v,w]\in \pi_{n, [t, \tau]}}\Big[S^{k'-i}(\mathbb{X}^{\pi_{n, t}})_{[0,v]} -  S^{k'-i}(\mathbb{X}^{\pi_{n, s}})_{[0,v]}\Big] \otimes \mathbf{X}_{v,w}^{\otimes (1+i)} \Bigg\} \\
    &= \sum_{i=0}^{k'} \frac{1}{(1+i)!}\Bigg\{ - S^{k'-i}(\mathbb{X}^{\pi_{n+1}})_{[0,s]} \otimes \sum_{\substack{\mathcal{I}\in\{0,1\}^{1+i} \\ \mathcal{I}\neq (0, \ldots, 0), (1, \ldots, 1)}} \bigotimes_{l\in\mathcal{I}} \left(\mathbf{X}_{s,u}^{\otimes l} \otimes \mathbf{X}_{u,t}^{\otimes (1-l)}\right) \\
    &\quad\quad\quad\quad\quad\quad\quad\quad\quad\quad\quad + \sum_{j=0}^{k'-i-1} \frac{1}{(1+j)!} S^{k'-i-j-1}(\mathbb{X}^{\pi_{n+1}})_{[0,s]} \otimes \mathbf{X}_{s,u}^{\otimes (1+j)} \otimes  \mathbf{X}_{u,t}^{\otimes (1+i)} \\
    &\quad\quad\quad\quad\quad\quad\quad\quad\quad\quad\quad\quad\quad\quad\quad\quad\quad  + \sum_{[v,w]\in \pi_{n, [t, \tau]}}\Big[S^{k'-i}(\mathbb{X}^{\pi_{n, t}})_{[0,v]} -  S^{k'-i}(\mathbb{X}^{\pi_{n, s}})_{[0,v]}\Big] \otimes \mathbf{X}_{v,w}^{\otimes (1+i)} \Bigg\}, 
\end{align*}
by noting that, for all $[v, w]\in\pi_n$ with $w\leq s$, $S(\mathbb{X}^{\pi_{n, t}})_{[0, v]} = S(\mathbb{X}^{\pi_{n, s}})_{[0, v]} = S(\mathbb{X}^{\pi_{n+1}})_{[0, v]}$ and when applying Chen's relation to $S(\mathbb{X}^{\pi_{n, t}})_{[0,u]}$, setting $S(\mathbb{X}^{\pi_{n, t}})_{[s,u]} = \exp_{\otimes} \mathbf{X}_{s,u}$ since $\mathbb{X}^{\pi_{n,t}}$ is linear over $[s,u]\in\pi_{n,t}$. Plugging this expression into Equation \eqref{eqn:telescoping_signature_0_tau} and exchanging the orders of the summations we obtain 
\begin{align*}
    S&^{k'+1}(\mathbb{X}^{\pi_{n + 1}})_{[0, \tau]} - S^{k'+1}(\mathbb{X}^{\pi_{n}})_{[0, \tau]} \\
    &= \sum_{i=0}^{k'} \frac{1}{(1+i)!}\Bigg\{ - \sum_{\substack{\mathcal{I}\in\{0,1\}^{1+i} \\ \mathcal{I}\neq (0, \ldots, 0), (1, \ldots, 1)}} \sum_{\substack{[s,t]\in \pi_{n, [0, \tau]} \\ u\in(s,t)}} S^{k'-i}(\mathbb{X}^{\pi_{n+1}})_{[0,s]} \otimes \bigotimes_{l\in\mathcal{I}} \left(\mathbf{X}_{s,u}^{\otimes l} \otimes \mathbf{X}_{u,t}^{\otimes (1-l)}\right) \\
    &\quad\quad\quad\quad\quad\quad\quad\quad + \sum_{j=0}^{k'-i-1} \frac{1}{(1+j)!} \sum_{\substack{[s,t]\in \pi_{n, [0, \tau]} \\ u\in(s,t)}} S^{k'-i-j-1}(\mathbb{X}^{\pi_{n+1}})_{[0,s]} \otimes \mathbf{X}_{s,u}^{\otimes (1+j)} \otimes  \mathbf{X}_{u,t}^{\otimes (1+i)} \\
    &\quad\quad\quad\quad\quad\quad\quad\quad\quad\  + \sum_{[v,w]\in \pi_{n, [0,\tau]}} \Bigg(\sum_{[s,t]\in \pi_{n, [0, v]}} \Big[S^{k'-i}(\mathbb{X}^{\pi_{n, t}})_{[0,v]} -  S^{k'-i}(\mathbb{X}^{\pi_{n, s}})_{[0,v]}\Big] \Bigg)\otimes \mathbf{X}_{v,w}^{\otimes (1+i)} \Bigg\} \\
    &= \sum_{i=0}^{k'} \frac{1}{(1+i)!}\Bigg\{ - \sum_{\substack{\mathcal{I}\in\{0,1\}^{1+i} \\ \mathcal{I}\neq (0, \ldots, 0), (1, \ldots, 1)}} \sum_{\substack{[s,t]\in \pi_{n, [0, \tau]} \\ u\in(s,t)}} S^{k'-i}(\mathbb{X}^{\pi_{n+1}})_{[0,s]} \otimes \bigotimes_{l\in\mathcal{I}} \left(\mathbf{X}_{s,u}^{\otimes l} \otimes \mathbf{X}_{u,t}^{\otimes (1-l)}\right) \\
    &\quad\quad\quad\quad\quad\quad\quad\quad\quad\quad\quad + \sum_{j=0}^{k'-i-1} \frac{1}{(1+j)!} \sum_{\substack{[s,t]\in \pi_{n, [0, \tau]} \\ u\in(s,t)}} S^{k'-i-j-1}(\mathbb{X}^{\pi_{n+1}})_{[0,s]} \otimes \mathbf{X}_{s,u}^{\otimes (1+j)} \otimes  \mathbf{X}_{u,t}^{\otimes (1+i)} \\
    &\quad\quad\quad\quad\quad\quad\quad\quad\quad\quad\quad\quad\quad\quad\quad\quad  + \sum_{[v,w]\in \pi_{n, [0, \tau]}}  \Big[S^{k'-i}(\mathbb{X}^{\pi_{n + 1}})_{[0,v]} -  S^{k'-i}(\mathbb{X}^{\pi_{n}})_{[0,v]}\Big] \otimes \mathbf{X}_{v,w}^{\otimes (1+i)} \Bigg\}\\
    &= \sum_{i=0}^{k'} \frac{1}{(1+i)!}\Bigg\{ \sum_{\substack{\mathcal{I}\in\{0,1\}^{1+i} \\ \mathcal{I}\neq (0, \ldots, 0), (1, \ldots, 1)}} \sum_{\substack{[s,t]\in \pi_{n, [0, \tau]} \\ u\in(s,t)}} Z^{1, \mathcal{I}}_{[s,t]} + \sum_{j=0}^{k'-i-1} \frac{1}{(1+j)!} \sum_{\substack{[s,t]\in \pi_{n, [0, \tau]} \\ u\in(s,t)}} Z^{2, i,j}_{[s,t]} + \sum_{[v,w]\in \pi_{n, [0, \tau]}} Z^{3, i}_{[v,w]} \Bigg\}. 
\end{align*}
We thus proceed to bound each of the summation terms over $\pi_{n, [0,\tau]}$ using Lemma~\ref{lemma:cond_exp_bound_sum} and Assumptions \eqref{ass:alpha} and \eqref{ass:delta}. 

Let $\mathcal{I}\in\{0, 1\}^{1+i}$ with $\mathcal{I}\neq (0,\ldots, 0), (1, \ldots, 1)$ with $i\in\{1, \ldots, k'\}$. Note that for all $[s,t]\in\pi_{n, [0,\tau]}$
\begin{equation} \label{eqn:L_bound_Z1}
    \| Z^{1, \mathcal{I}}_{[s,t]}\|_{L^{mk/(k'+1)}} \leq \left\| S^{k'-i}(\mathbb{X}^{\pi_{n+1}})_{[0,s]} \right\|_{L^{mk/(k'-i)}} \left\| \mathbf{X}_{s,u}\right\|^{|\mathcal{I}|}_{L^{mk}} \left\|\mathbf{X}_{u,t} \right\|^{1+i -|\mathcal{I}|}_{L^{mk}} \lesssim |t-s|^{(1+i)\alpha},
\end{equation}
by applying Lemma~\ref{lemma:holder_tensors}, the uniform bound \eqref{eqn:uniform_bound_Lmk} and Assumption \eqref{ass:alpha}, hence $Z^{1, \mathcal{I}}_{[s,t]}\in L^{mk/(k'+1)}$. When $i=1$ we can thus apply Lemma~\ref{lemma:cond_exp_bound_sum} to the sequence $\{Z^{1, \mathcal{I}}_{[s,t]},\ [s,t]\in \pi_{n, [0, \tau]},\ u\in(s,t)\}$ with filtration $\{\mathcal{F}_u,\ [s,t]\in \pi_{n, [0, \tau]},\ u\in(s,t)\}$ to bound 
\begin{align}
    \Bigg\| \sum_{\substack{[s,t]\in \pi_{n, [0, \tau]} \\ u\in(s,t)}} Z^{1, \mathcal{I}}_{[s,t]} \Bigg\|_{L^{mk/(k'+1)}} 
    &\leq \sum_{\substack{[s,t]\in \pi_{n, [0, \tau]} \\ u\in(s,t)}} \|\mathbb{E}_u[Z^{1, \mathcal{I}}_{[s,t]}]\|_{L^{mk/(k'+1)}} + \Bigg( \sum_{\substack{[s,t]\in \pi_{n, [0, \tau]} \\ u\in(s,t)}} \|Z^{1, \mathcal{I}}_{[s,t]}\|^2_{L^{mk/(k'+1)}}\Bigg)^{1/2} \notag \\
    &\lesssim \sum_{\substack{[s,t]\in \pi_{n, [0, \tau]} \\ u\in(s,t)}} |t-s|^{\alpha+\delta} + \Bigg( \sum_{\substack{[s,t]\in \pi_{n, [0, \tau]} \\ u\in(s,t)}} |t-s|^{4\alpha} \Bigg)^{1/2} \notag \\
    &\lesssim |\pi_n|^{\alpha+\delta - 1} \tau + |\pi_n|^{2\alpha- 1/2}\sqrt{\tau} \lesssim |\pi_n|^{\epsilon}(\tau + \sqrt{\tau}), \label{eqn:bound_Z1_i=1}
\end{align}
where we used Equation \eqref{eqn:L_bound_Z1} with $i=1$ and
\begin{align*}
    \|\mathbb{E}_u[Z^{1, \mathcal{I}}_{[s,t]}]\|_{L^{mk/(k'+1)}} &= \left\| S^{k'-i}(\mathbb{X}^{\pi_{n+1}})_{[0,s]} \otimes \mathbf{X}_{s,u} \otimes \mathbb{E}_u[\mathbf{X}_{u,t}] \right\|_{L^{mk/(k'+1)}} \\
    &\leq \left\| S^{k'-1}(\mathbb{X}^{\pi_{n+1}})_{[0,s]} \right\|_{L^{mk/(k'-1)}} \left\| \mathbf{X}_{s,u}\right\|_{L^{mk}} \left\|\mathbb{E}_u[\mathbf{X}_{u,t} ] \right\|_{L^{mk}} \lesssim |t-s|^{\alpha+\delta},
\end{align*}
by applying Lemma~\ref{lemma:holder_tensors}, the uniform bound \eqref{eqn:uniform_bound_Lmk} and Assumptions \eqref{ass:alpha} and \eqref{ass:delta}. When $i\geq 2$ we can directly apply the triangle inequality and Equation \eqref{eqn:L_bound_Z1} to bound
\begin{align} 
    \Bigg\| \sum_{\substack{[s,t]\in \pi_{n, [0, \tau]} \\ u\in(s,t)}} Z^{1, \mathcal{I}}_{[s,t]} \Bigg\|_{L^{mk/(k'+1)}} &\leq \sum_{\substack{[s,t]\in \pi_{n, [0, \tau]} \\ u\in(s,t)}} \|Z^{1, \mathcal{I}}_{[s,t]}\|_{L^{mk/(k'+1)}} \notag \\
    &\lesssim \sum_{\substack{[s,t]\in \pi_{n, [0, \tau]} \\ u\in(s,t)}} |t-s|^{(1+i)\alpha} \lesssim |\pi_n|^{3\alpha-1} \tau \lesssim |\pi_n|^{\epsilon} \tau. \label{eqn:bound_Z1_i>1}
\end{align}

Next, let $i\in\{0, \ldots, k'\}$ and $j\in\{0, \ldots, k'-i-1\}$. We can proceed exactly as for $Z^{1, \mathcal{I}}_{[s,t]}$ (applying Lemma~\ref{lemma:cond_exp_bound_sum} when $i=j=0$ or the triangle inequality when $i+j \geq 1$) to show that under Assumptions \eqref{ass:alpha} and \eqref{ass:delta},
\begin{equation} \label{eqn:bound_Z2}
    \Bigg\| \sum_{\substack{[s,t]\in \pi_{n, [0, \tau]} \\ u\in(s,t)}} Z^{2, i, j}_{[s,t]} \Bigg\|_{L^{mk/(k'+1)}} \lesssim |\pi_n|^{\epsilon} (\tau+\sqrt{\tau}). 
\end{equation}

Finally, let $i\in\{0, \ldots, k'\}$. We proceed in a similar way as for $Z^{1, \mathcal{I}}_{[s,t]}$ and $Z^{2, i, j}_{[s,t]}$ but using the inductive hypothesis \eqref{eqn:conv_rate_inductive} instead of the bound \eqref{eqn:uniform_bound_Lmk}. Note that for all $[s,t]\in\pi_{n, [0,\tau]}$
\begin{equation} \label{eqn:L_bound_Z3}
    \| Z^{3, i}_{[s,t]}\|_{L^{mk/(k'+1)}} \leq \left\| S^{k'-i}(\mathbb{X}^{\pi_{n + 1}})_{[0,s]} -  S^{k'-i}(\mathbb{X}^{\pi_{n}})_{[0,s]} \right\|_{L^{mk/(k'-i)}} \left\| \mathbf{X}_{s,t} \right\|^{1+i}_{L^{mk}} \lesssim |\pi_n|^{\epsilon} |t-s|^{(1+i)\alpha},
\end{equation}
by applying Lemma~\ref{lemma:holder_tensors}, the inductive hypothesis \eqref{eqn:conv_rate_inductive} and Assumption \eqref{ass:alpha}, hence $Z^{3, i}_{[s,t]}\in L^{mk/(k'+1)}$. When $i=0$ we can hence apply Lemma~\ref{lemma:cond_exp_bound_sum} to the sequence $\{Z^{3, i}_{[s,t]},\ [s,t]\in \pi_{n, [0, \tau]}\}$ with filtration $\{\mathcal{F}_s,\ [s,t]\in \pi_{n, [0, \tau]}\}$ to bound 
\begin{align}
    \Bigg\| \sum_{[s,t]\in \pi_{n, [0, \tau]}} Z^{3, i}_{[s,t]} \Bigg\|_{L^{mk/(k'+1)}} 
    &\leq \sum_{[s,t]\in \pi_{n, [0, \tau]}} \|\mathbb{E}_s[Z^{3, i}_{[s,t]}]\|_{L^{mk/(k'+1)}} + \Bigg( \sum_{[s,t]\in \pi_{n, [0, \tau]}} \|Z^{3, i}_{[s,t]}\|^2_{L^{mk/(k'+1)}}\Bigg)^{1/2} \notag \\
    &\lesssim \sum_{[s,t]\in \pi_{n, [0, \tau]}} |\pi_n|^{\epsilon}|t-s|^\delta + \Bigg( \sum_{[s,t]\in \pi_{n, [0, \tau]}} |\pi_n|^{2\epsilon} |t-s|^{2\alpha} \Bigg)^{1/2} \notag \\
    &\lesssim |\pi_n|^{\epsilon} (\tau + \sqrt{\tau}), \label{eqn:bound_Z3_i=0}
\end{align}
where we used Equation \eqref{eqn:L_bound_Z3} with $i=0$ and
\begin{align*}
    \|\mathbb{E}_s[Z^{3, i}_{[s,t]}]\|_{L^{mk/(k'+1)}} &= \left\| \Big(S^{k'}(\mathbb{X}^{\pi_{n + 1}})_{[0,s]} -  S^{k'}(\mathbb{X}^{\pi_{n}})_{[0,s]}\Big) \otimes \mathbb{E}_s[\mathbf{X}_{s,t}] \right\|_{L^{mk/(k'+1)}} \\
    &\leq \left\| S^{k'}(\mathbb{X}^{\pi_{n + 1}})_{[0,s]} -  S^{k'}(\mathbb{X}^{\pi_{n}})_{[0,s]} \right\|_{L^{mk/k'}} \left\|\mathbb{E}_s[\mathbf{X}_{s,t} ] \right\|_{L^{mk}} \lesssim |\pi_n|^{\epsilon}|t-s|^\delta,
\end{align*}
by applying Lemma~\ref{lemma:holder_tensors}, the inductive hypothesis \eqref{eqn:conv_rate_inductive} and Assumption \eqref{ass:delta}. When $i\geq 1$, we can instead directly apply the triangle inequality and Equation \eqref{eqn:L_bound_Z3} to bound
\begin{equation} \label{eqn:bound_Z3_i>0}
    \Bigg\| \sum_{[s,t]\in \pi_{n, [0, \tau]} } Z^{3, i}_{[s,t]} \Bigg\|_{L^{mk/(k'+1)}} \leq \sum_{[s,t]\in \pi_{n, [0, \tau]}} \|Z^{3, i}_{[s,t]} \|_{L^{mk/(k'+1)}} \lesssim \sum_{[s,t]\in \pi_{n, [0, \tau]}} |\pi_n|^{\epsilon} |t-s|^{(1+i)\alpha} \lesssim |\pi_n|^{\epsilon} \tau. 
\end{equation}

Combining bounds \eqref{eqn:bound_Z1_i=1}, \eqref{eqn:bound_Z1_i>1}, \eqref{eqn:bound_Z2}, \eqref{eqn:bound_Z3_i=0} and \eqref{eqn:bound_Z3_i>0} yields
\begin{align*}
    \Big\|S^{k'+1}(\mathbb{X}^{\pi_{n + 1}}&)_{[0, \tau]} - S^{k'+1}(\mathbb{X}^{\pi_{n}})_{[0, \tau]} \Big\|_{L^{mk/(k'+1)}}\\
    &\leq \sum_{i=0}^{k'} \frac{1}{(1+i)!}\Bigg\{ \sum_{\substack{\mathcal{I}\in\{0,1\}^{1+i} \\ \mathcal{I}\neq (0, \ldots, 0), (1, \ldots, 1)}} \Bigg\|\sum_{\substack{[s,t]\in \pi_{n, [0, \tau]} \\ u\in(s,t)}} Z^{1, \mathcal{I}}_{[s,t]} \Bigg\|_{L^{mk/(k'+1)}} \\
    &\quad\quad\quad\quad\quad\quad\quad\quad\quad\quad\quad\quad\quad\quad\quad + \sum_{j=0}^{k'-i-1} \frac{1}{(1+j)!} \Bigg\| \sum_{\substack{[s,t]\in \pi_{n, [0, \tau]} \\ u\in(s,t)}} Z^{2, i,j}_{[s,t]}\Bigg\|_{L^{mk/(k'+1)}} \\
    &\quad\quad\quad\quad\quad\quad\quad\quad\quad\quad\quad\quad\quad\quad\quad\quad\quad\quad\quad\quad\quad\quad\quad\quad  + \Bigg\| \sum_{[s,t]\in \pi_{n, [0, \tau]}} Z^{3, i}_{[s,t]} \Bigg\|_{L^{mk/(k'+1)}} \Bigg\} \\
    &\lesssim |\pi_n|^{\epsilon} (\tau + \sqrt{\tau}) \lesssim |\pi_n|^{\epsilon} (T + \sqrt{T}),
\end{align*}
which proves \eqref{eqn:conv_rate_inductive} with $i=k'+1$, completing the inductive step.

Setting $i=k$, $N=1$ and $\tau=T$ in \eqref{eqn:conv_rate_inductive} yields
\begin{equation*}
    \| S^k(\mathbb{X}^{\pi_{n+1}})_{[0,T]} - S^k(\mathbb{X}^{\pi_{n}})_{[0,T]} \|_{L^{m}} \lesssim |\pi_n|^{\epsilon}, \quad n\geq 1,
\end{equation*}
and hence, assuming $\sum_{n\geq 1} |\pi_n|^{\epsilon}<\infty$, the sequence $\{S^k(\mathbb{X}^{\pi_{n}})_{[0,T]},\ n\geq \}$ is Cauchy in $L^m$. Since $L^m$ is a Banach space, the sequence converges in $L^m$ to $S^k(\mathbb{X})_{[0,T]} \in L^m$ (by uniqueness of limits) with rate
\[
\| S^k(\mathbb{X}^{\pi_{n}})_{[0,T]} - S^k(\mathbb{X})_{[0,T]} \|_{L^{m}} \leq \sum_{n'\geq n} \| S^k(\mathbb{X}^{\pi_{n'+1}})_{[0,T]} - S^k(\mathbb{X}^{\pi_{n'}})_{[0,T]} \|_{L^{m}} \lesssim \sum_{n'\geq n} |\pi_n|^{\epsilon}.
\]
\hfill \qed

\begin{remark} \label{rem:uniform_Lm}
    When $\{\pi_n, \ n\geq 1\}$ is a sequence of refining partitions with $\sum_{n\geq 1} |\pi_n|^{\epsilon} < \infty$, the proof actually yields the following (stronger) result
    \[
    \sup_{\tau\in\pi_n} \|S^k(\mathbb{X}^{\pi_n})_{[0,\tau]} - S^k(\mathbb{X})_{[0,\tau]} \|_{L^m} \lesssim \sum_{n' \geq n} |\pi_{n'}|^{\epsilon} \rightarrow 0, \quad n\rightarrow\infty.
    \]
\end{remark}

\subsection{Proof of Theorem~\ref{thm:clt_dep}} \label{app:proofs_clt_dep}

\begin{quote}
	\onehalfspacing\small\itshape
	\textbf{Sketch of proof.} The proof of this result relies on decomposition \eqref{eqn:conv_decomposition_2}. We can combine Theorem~\ref{thm:sig_conv_lm} with assumption \eqref{eqn:ass_Pi_cons} to show the first term in the decomposition vanishes in $L^m$, for $m>2$. To show the full consistency result it thus suffices to show the second term in \eqref{eqn:conv_decomposition_2} also vanishes in $L^2$, which follows by Birkhoff's ergodic theorem under the stated assumptions. Similarly, for the asymptotic normality result, we combine Theorem~\ref{thm:sig_conv_lm} with assumption \eqref{eqn:ass_Pi_asym_norm} to show the first term in the decomposition vanishes in $L^m$ when inflated by $\sqrt{N}$. The asymptotic normality of the second term can then be obtained by a simple application of a central limit theorem (CLT) for dependent random variables.
\end{quote}

Under the assumption that $\{\mathbb{X}^n,\ n\geq 1\}$ is stationary and $\mathbb{X}^1$ satisfies the assumptions of Theorem~\ref{thm:sig_conv_lm} with $m>2$, we have that, for each $n\geq 1$,
\begin{equation*} 
    S^\mathbf{I}(\mathbb{X}^{n,\pi_{N,n}})_{[0,T]} \overset{L^m}{\rightarrow} S^\mathbf{I}(\mathbb{X})_{[0,T]}, \quad N \rightarrow \infty, 
\end{equation*}
with rate $\mathcal{O}(\sum_{N' \geq N} |\pi_{N', n}|^{\epsilon})$. And hence 
\begin{equation*} 
\left\|\frac{1}{N} \sum_{n=1}^N \left( S^\mathbf{I}(\mathbb{X}^{n,\pi_{N,n}})_{[0,T]} - S^\mathbf{I}(\mathbb{X}^{n})_{[0,T]} \right)\right\|_{L^m} \lesssim \max_{1\leq n\leq N} \sum_{N' \geq N} |\pi_{N', n}|^{\epsilon} \leq \sum_{N' \geq N} |\Pi(N')|^{\epsilon},
\end{equation*}
since $|\Pi(N')| = \max_{1\leq n\leq N'} |\pi_{N', n}|$. Under assumption \eqref{eqn:ass_Pi_cons}, we have thus established the first term in decomposition \eqref{eqn:conv_decomposition_2} vanishes in $L^m$ as $N\rightarrow\infty$ and therefore can focus on showing the second term also vanishes. Note that, under the stronger assumption \eqref{eqn:ass_Pi_asym_norm}, a similar reasoning can be applied when ``blowing up'' decomposition \eqref{eqn:conv_decomposition_2} by $\sqrt{N}$, and hence it suffices to show the second term, when rescaled by $\sqrt{N}$, converges to a Gaussian random variable to establish the asymptotic normality result.

We first prove the following somewhat technical result. In what follows we abuse notation slightly and write $S^\mathbf{I}(\cdot)_{[0,T]}$ to denote $\mathcal{S}^\mathbf{I}(\cdot)$.
\begin{proposition} \label{prop:measurability}
    Let $\mathbb{X} = \{\mathbf{X}_t,\ t\in[0,T]\}$ denote a canonical geometric stochastic process defined on the probability space $(\Omega_{[0,T]} = C([0,T]; \mathbb{R}^d), \mathcal{B}_{[0,T]}, \mathbb{P}_{[0,T]})$ by $\mathbf{X}_t = \omega_{[0,T]}(t)$ for $t\in[0,T]$ and $\omega_{[0,T]}\in\Omega_{[0,T]}$. Then for any collection of words $\mathbf{I}$ there exists a measurable map $\mathcal{S}^\mathbf{I}:C([0,T]; \mathbb{R}^d)\rightarrow \mathbb{R}^{|\mathbf{I}|}$ such that 
    \[
    \mathcal{S}^\mathbf{I}(\omega_{[0,T]}) = S^\mathbf{I}(\mathbb{X})_{[0,T]}, \quad \mathbb{P}_{[0,T]}-\mathrm{a.s.}
    \] 
\end{proposition}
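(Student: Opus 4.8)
The plan is to exhibit $\mathcal{S}^\mathbf{I}$ as an everywhere-defined, manifestly Borel limit of the signatures of piecewise-linear interpolations, and then invoke Remark~\ref{rem:prob_conv_sig_rho} to identify this limit with $S^\mathbf{I}(\mathbb{X})_{[0,T]}$ almost surely. The point is that, while the pathwise definition of the signature (Definition~\ref{def:signature}) is only given on a set of full measure, it is constructed as a limit of objects that are obviously measurable in $\omega_{[0,T]}$, and measurability passes to the limit.

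First I would fix a canonical sequence of refining partitions $\rho = \{\rho_n,\ n\geq 1\}$ along which the lift of $\mathbb{X}$ is defined, cf.\ Definition~\ref{def:canonical_geometric_stochastic_process} and Remark~\ref{rem:canonical_lifts}. For each $n$, define $F_n : C([0,T];\mathbb{R}^d) \rightarrow \mathbb{R}^{|\mathbf{I}|}$ to be the map sending $\omega$ to $S^\mathbf{I}$ of its linear interpolation over $\rho_n$. By Equation~\eqref{eqn:discr_sig}, $F_n(\omega)$ is a fixed polynomial expression (built from tensor exponentials and tensor products, then projected onto the words in $\mathbf{I}$) in the finitely many increments $\omega(t_{i+1}) - \omega(t_i)$ across the points of $\rho_n$. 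Each such increment is a continuous linear functional of $\omega$ on the separable Banach space $(C([0,T];\mathbb{R}^d), \|\cdot\|_\infty)$, whose Borel $\sigma$-algebra coincides with $\mathcal{B}_{[0,T]}$, the one generated by the coordinate evaluations; hence $F_n$ is continuous, in particular Borel measurable. Moreover $F_n(\omega_{[0,T]}) = S^\mathbf{I}(\mathbb{X}^{\rho_n})_{[0,T]}$ by construction.

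Next, by Remark~\ref{rem:prob_conv_sig_rho} (which follows from continuity of the extension map, \citep[Theorem~3.10]{saint-flour2007}), one has $F_n(\omega_{[0,T]}) = S^\mathbf{I}(\mathbb{X}^{\rho_n})_{[0,T]} \overset{\mathbb{P}}{\rightarrow} S^\mathbf{I}(\mathbb{X})_{[0,T]}$, cf.\ \eqref{eqn:prob_conv_sig_rho}, so there is a subsequence $\{n_k\}$ along which this convergence holds $\mathbb{P}_{[0,T]}$-almost surely. Let $\Omega_0 \subseteq C([0,T];\mathbb{R}^d)$ be the set on which $\lim_{k\to\infty} F_{n_k}(\omega)$ exists coordinate-wise; since $\mathbb{R}^{|\mathbf{I}|}$ is finite-dimensional, $\Omega_0$ is the (Borel) set where coordinate-wise $\limsup$ and $\liminf$ agree, and $\mathbb{P}_{[0,T]}(\Omega_0) = 1$. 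I would then set, coordinate-wise,
\[
\mathcal{S}^\mathbf{I}(\omega) := \mathbf{1}_{\Omega_0}(\omega)\, \limsup_{k\to\infty} F_{n_k}(\omega),
\]
which is Borel measurable, being a product and a $\limsup$ of Borel measurable functions, and which coincides with $\lim_{k} F_{n_k}(\omega)$ on $\Omega_0$. By uniqueness of limits, on $\Omega_0$ that limit equals the $\mathbb{P}$-limit $S^\mathbf{I}(\mathbb{X})_{[0,T]}$, so $\mathcal{S}^\mathbf{I}(\omega_{[0,T]}) = S^\mathbf{I}(\mathbb{X})_{[0,T]}$ $\mathbb{P}_{[0,T]}$-a.s., as required.

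The only genuinely delicate point is the first step: checking that the signature of a piecewise-linear interpolation is truly a (continuous, hence Borel) functional of the path in the uniform topology, and that this is the right Borel structure. This is routine once one observes that $S^\mathbf{I}(\mathbb{X}^{\rho_n})_{[0,T]}$ depends on $\omega$ only through finitely many of its increments — each a continuous linear functional — and that on the separable space $C([0,T];\mathbb{R}^d)$ the Borel and cylinder $\sigma$-algebras coincide. Everything else is the standard upgrade of an almost-everywhere pointwise limit of Borel maps to a genuinely Borel map via a $\limsup$, combined with the already-established in-probability convergence \eqref{eqn:prob_conv_sig_rho}.
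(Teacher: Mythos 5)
Your proposal is correct and follows essentially the same route as the paper's proof: pass to a subsequence of the lift-defining partitions along which the convergence in Remark~\ref{rem:prob_conv_sig_rho} holds almost surely, observe that each discretized signature is a Borel (indeed continuous) function of finitely many increments of the path, and obtain $\mathcal{S}^\mathbf{I}$ as an everywhere-defined Borel limit agreeing with $S^\mathbf{I}(\mathbb{X})_{[0,T]}$ on a set of full measure. The only cosmetic difference is that you realize the extension via $\mathbf{1}_{\Omega_0}\limsup_k F_{n_k}$, while the paper first defines the map on the full-measure set and then extends it measurably; both are standard and equivalent.
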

\begin{proof}
    By Remark~\ref{rem:prob_conv_sig_rho} every canonical geometric stochastic process has at least one signature-defining sequence of partitions over any $[s,t]\subseteq[0,T]$ given by the sequence $\rho$ in Definition~\ref{def:canonical_geometric_stochastic_process}. Moreover, by passing to a subsequence if necessary, this also guarantees the existence of a sequence of partitions $\rho_*$ along which \eqref{eqn:prob_conv_sig} holds almost surely. Hence, there exists a Borel set $\Omega'_{[0,T]} \in \mathcal{B}_{[0,T]}$ and a sequence of partitions $\rho^*$ with vanishing mesh such that for all $\omega_{[0,T]} \in \Omega'_{[0,T]} $,
    \[
    S^\mathbf{I}\Big(\omega^{\rho_*}_{[0,T]}\Big)_{[0,T]} \rightarrow S^\mathbf{I}\Big(\omega_{[0,T]}\Big)_{[0,T]}, \quad |\rho_*| \rightarrow 0,
    \]
    and $\mathbb{P}_{[0,T]}(\Omega'_{[0,T]}) = 1$. For every partition $\rho_*$ the map 
    \[
    \omega_{[0,T]} \in \Omega'_{[0,T]} \mapsto S^\mathbf{I}\Big(\omega^{\rho_*}_{[0,T]}\Big)_{[0,T]} \in \mathbb{R}^{|\mathbf{I}|},
    \] 
    is $\Omega'_{[0,T]} \cap \mathcal{B}_{[0,T]}$-measurable (by measurability of the sums and products of coordinate maps appearing in the discretized signature) and hence also 
    \[
    \omega_{[0,T]} \in \Omega'_{[0,T]} \mapsto S^\mathbf{I}\Big(\omega_{[0,T]}\Big)_{[0,T]} \in \mathbb{R}^{|\mathbf{I}|},
    \]
    is $\Omega'_{[0,T]} \cap \mathcal{B}_{[0,T]}$-measurable (by measurability of the pointwise limit of measurable functions). We can hence extend $S^\mathbf{I}:\Omega'_{[0,T]} \rightarrow\mathbb{R}^{|\mathbf{I}|}$ to a measurable map on the whole of $\Omega_{[0,T]} = C([0,T];\mathbb{R}^d)$.
\end{proof}

\subsubsection{Proof of Theorem~\ref{thm:clt_dep}, consistency} \label{app:proofs_clt_dep_cons}

The consistency result then follows by a simple application of Birkhoff's ergodic theorem \citep[Theorem~25.6]{kallenberg2021}. Let $(\Omega = C([0, \infty); \mathbb{R}^d), \mathcal{F} = \mathcal{B}_{[0,\infty)}, \mathbb{P})$ denote the canonical space on which $\{\mathbf{X}_t,\ t\geq 0\}$ is defined, i.e.\ $\mathbb{P}$ is the law of $\{\mathbf{X}_t,\ t\geq 0\}$. Consider $\{\mathbb{X}^n,\ n\geq 1\}$ as the sequence of $C([0,T]; \mathbb{R}^d)$-valued random variables on the space\footnote{
Recall that if $(E, \mathcal{B}(E))$ is a Borel measurable space and we equip $E^\infty$, i.e.\ the space of sequences with values in $E$, with the product topology then
\[
\mathcal{B}(E^\infty) = \otimes_{n\geq 1} \mathcal{B}(E) := \sigma\left(\cup_{n\geq 1} ( \mathcal{B}(E)^n \times E^\infty) \right),
\]
as long as $E$ is second-countable.
} $(C([0,T]; \mathbb{R}^d)^\infty, \mathcal{B}_{[0,T]}^\infty, \mathbb{P}_{[0,T]}^\infty)$, where the probability measure $\mathbb{P}_{[0,T]}^\infty$ is obtained by pushing forward $\mathbb{P}$ by the measurable mapping
\[
\omega \in\Omega \mapsto \{\omega^n_{[0, T]},\ n\geq 1 \} \in C([0,T]; \mathbb{R}^d)^\infty,\]
where
\[
\omega^n_{[0, T]} := \{\omega((n-1)T + t) - \omega((n-1)T),\ t\in [0, T]\}.
\]
If $\{\mathbb{X}^n,\ n\geq 1\}$ is stationary and $\mathbb{X}=\mathbb{X}^1$ is a canonical geometric stochastic process, then each $\mathbb{X}^n$ is also a canonical geometric stochastic process on $(C([0,T]; \mathbb{R}^d), \mathcal{B}_{[0,T]}, \mathbb{P}_*\mathbb{X}^n)$ with signature given by the same measurable mapping $\mathcal{S}^\mathbf{I} : C([0,T]; \mathbb{R}^d)\rightarrow \mathbb{R}^{|\mathbf{I}|}$ given in Proposition~\ref{prop:measurability}. We can then apply Birkhoff's ergodic theorem \citep[Theorem~25.6]{kallenberg2021} to conclude
\[
\frac{1}{N} \sum_{n=1}^N S^\mathbf{I}(\mathbb{X}^n)_{[0,T]} \overset{\mathbb{P}-\mathrm{a.s.}}{\rightarrow} \mathbb{E}[S^\mathbf{I}(\mathbb{X})_{[0,T]}], \quad N \rightarrow \infty.
\]
\hfill \qed

\subsubsection{Proof of Theorem~\ref{thm:clt_dep}, asymptotic normality} \label{app:proofs_clt_dep_asympnorm}
We apply the dependent central limit theorem (CLT) given in \citet[Theorem~1.7]{ibragimov_1962} and extended to the multivariate setting via the Cram\'{e}r-Wold theorem. To apply this result we require $\{S^\mathbf{I}(\mathbb{X}^n)_{[0,T]},\ n\geq 1\}$ to be stationary with $S^\mathbf{I}(\mathbb{X}^n)_{[0,T]}\in L^{2+\zeta}$ and strongly mixing with strong mixing coefficient $\alpha(n),\ n\in\mathbb{N}$ satisfying
\[
\sum_{n\geq 1} \alpha(n)^{\zeta/(2+\zeta)} < \infty,
\]
so that
\[
\Sigma_{\mathbf{I}} = \mathrm{Var}\left(S^\mathbf{I}(\mathbb{X}^1)_{[0,T]}\right) + 2 \sum_{n\geq 2} \mathrm{Cov}\left(S^\mathbf{I}(\mathbb{X}^1)_{[0,T]}, S^\mathbf{I}(\mathbb{X}^n)_{[0,T]}\right) < \infty,
\]
and, if $\Sigma_{\mathbf{I}}$ is strictly positive definite,
\[
\sqrt{N} \left(\frac{1}{N} \sum_{n=1}^N S^\mathbf{I}(\mathbb{X}^n)_{[0,T]} - \mathbb{E}[S^\mathbf{I}(\mathbb{X})_{[0,T]}]\right)  \overset{\mathcal{L}}{\rightarrow} \mathcal{N}(0, \Sigma_{\mathbf{I}}), \quad N \rightarrow \infty.
\]

Note that $S^\mathbf{I}(\mathbb{X}^n)_{[0,T]}\in L^{2+\zeta}$ for $\zeta>0$ is immediately obtained by applying Theorem~\ref{thm:sig_conv_lm} with $m>2$. By measurability of $S^\mathbf{I}:C([0,T]; \mathbb{R}) \rightarrow \mathbb{R}$ (Proposition~\ref{prop:measurability}), $\sigma(S^\mathbf{I}(\mathbb{X}^n)) \subseteq \sigma(\mathbb{X}^n)$, for all $n\geq 1$, which implies $\{S^\mathbf{I}(\mathbb{X}^n)_{[0,T]},\ n\geq 1\}$ is also strongly mixing with strong mixing coefficient at most $\alpha(n),\ n\in\mathbb{N}$. The assumptions of Theorem~\ref{thm:clt_dep} are thus sufficient to deduce asymptotic normality of the expected signature estimator.
\hfill \qed

\subsection{Proof of Corollary~\ref{cor:clt_feasible}} \label{app:proofs_clt_feasible}

\begin{quote}
\onehalfspacing\small\itshape
\textbf{Sketch of proof.} In order to show that the CLT result of Theorem~\ref{thm:clt_dep} can be made feasible we need to prove the kernel estimator $\hat{\Sigma}^{\Pi(N)}_{\mathbf{I}}$ is consistent for the long-run covariance matrix $\Sigma_{\mathbf{I}}$. To do so, we first show that the kernelized estimator is consistent for $\Sigma_{\mathbf{I}}$ if, for each $n\geq 0$, the cross-covariance estimator term $\hat{\Sigma}^{n, \Pi(N)}_{\mathbf{I}}$ is consistent for $\Sigma^{n}_{\mathbf{I}}$ (with a ``fast enough'' convergence rate). To show consistency of each cross-covariance term, we introduce an auxiliary process $\mathbb{Y}^n$ obtained by appropriately ``stitching'' the processes $\mathbb{X}^1, \ldots, \mathbb{X}^{1+n}$ together. This choice of $\mathbb{Y}^n$, along with the shuffle property of the expected signature, implies that each term in $\Sigma^{n}_{\mathbf{I}}$ can be expressed as a combination of terms from the expected signature of $\mathbb{Y}^n$. The rest of the proof is thus devoted to showing that, under the assumptions of this Corollary, the process $\mathbb{Y}^n$ satisfies the conditions of Theorem~\ref{thm:clt_dep}.1, ensuring we can consistently estimate its signature terms and, in turn, the entries of $\Sigma^{n}_{\mathbf{I}}$.
\end{quote}

To make the CLT feasible one requires a consistent estimator for the long-run covariance of the sequence of random variables $\{S^{\mathbf{I}}(\mathbb{X}^n)_{[0,T]},\ n\geq 1\}$,
\[
\Sigma_{\mathbf{I}} = \Sigma_{\mathbf{I}}^0 + 2\sum_{n\geq 1} \Sigma_{\mathbf{I}}^{n}, \quad \text{where} \quad \Sigma_{\mathbf{I}}^n = \mathrm{Cov}\left(S^\mathbf{I}(\mathbb{X}^1)_{[0,T]}, S^\mathbf{I}(\mathbb{X}^{1+n})_{[0,T]}\right),\ n\geq 0.
\]
We consider the \textit{non-overlapping} sample (cross-)covariances of the sequence $\{S^{\mathbf{I}}(\mathbb{X}^{n,\pi_{N,n}})_{[0,T]},\ n=1, \ldots, N\}$, i.e.\ for $|n| \leq N -1 $,
\begin{align*}
    \hat{\Sigma}^{n, \Pi(N)}_{\mathbf{I}} &= \frac{1}{\lfloor N/ (n+1) \rfloor} \sum_{m=1}^{\lfloor N/ (n+1) \rfloor } \left(S^{\mathbf{I}}(\mathbb{X}^{\pi_{N,(n+1)m - n}})_{[0,T]} - \hat{\phi}^{\Pi(N)}_{\mathbf{I}}(T)\right) \left(S^{\mathbf{I}}(\mathbb{X}^{\pi_{N,(n+1)m}})_{[0,T]} - \hat{\phi}^{\Pi(N)}_{\mathbf{I}}(T)\right)^\mathrm{T}\!\!\!,
\end{align*}
as estimators for $\Sigma_{\mathbf{I}}^n$. Note that, for a fixed observation partition $\Pi(N)$, we are only able to estimate $\Sigma_{\mathbf{I}}^n$ up to $n=N-1$ with the quality of the estimator decreasing as $n$ increases\footnote{In this context, one might exploit the available data more efficiently by considering the full sample cross-covariances of the sequence $\{S^{\mathbf{I}}(\mathbb{X}^{n,\pi_{N,n}})_{[0,T]},\ n=1, \ldots, N\}$, i.e.\ for $|n| \leq N -1 $,
    \begin{align*}
        \frac{1}{N-n} \sum_{m=1}^{N-n} \left(S^{\mathbf{I}}(\mathbb{X}^{\pi_{N, m}})_{[0,T]} - \hat{\phi}^{\Pi(N)}_{\mathbf{I}}(T)\right) \left(S^{\mathbf{I}}(\mathbb{X}^{\pi_{N, m+n}})_{[0,T]} - \hat{\phi}^{\Pi(N)}_{\mathbf{I}}(T)\right)^\mathrm{T}\!\!\!,
    \end{align*}
    as estimators for $\Sigma_{\mathbf{I}}^n$ with $n\geq 0$. The reason for using the non-overlapping sample covariance will become apparent once we introduce the process $\mathbb{Y}^n$, which will be used to show consistency of the estimator $\hat\Sigma_{\mathbf{I}}^{n,\Pi(N)}$ for $\Sigma_{\mathbf{I}}^n$ by re-applying Theorem~\ref{thm:clt_dep}$.1$. To show consistency of the full sample covariance estimator we would instead require the generalization of such result for time-overlapping expected signature estimators.}. 
A natural choice would thus be to put less weight on $\hat{\Sigma}^{n, \Pi(N)}_{\mathbf{I}}$ with $n$ large than on $\hat{\Sigma}^{n, \Pi(N)}_{\mathbf{I}}$ with small $n$. To do so, one can consider the kernel estimator 
\[
\hat\Sigma_{\mathbf{I}}^{\Pi(N)} = \sum_{n=-(N-1)}^{N-1} k\left(\frac{n}{h_N}\right) \hat\Sigma_{\mathbf{I}}^{n, \Pi(N)}, \quad \hat\Sigma_{\mathbf{I}}^{-n, \Pi(N)} := \Big(\hat\Sigma_{\mathbf{I}}^{n, \Pi(N)}\Big)^\mathrm{T},\ \text{for}\ n = 1, \ldots, N-1,
\]
where $k: \mathbb{R}\rightarrow[0, 1]$ is a decreasing kernel function continuous at zero with with $k(0)=1$ and $h_N$ is an appropriately chosen band-width parameter\footnote{In the presence of conditional heteroskedasticity such estimators for the long-run covariance are known as Heteroskedasticity and Autocorrelation Consistent (HAC) estimators \citep{newey_west_1987}. Note that decomposing the estimation of $2\Sigma^n_{\mathbf{I}}$ into the sum of $\hat\Sigma^{n, \Pi(N)}_{\mathbf{I}}$ and its transpose ensures the resulting long-run covariance estimator $\hat{\Sigma}^{\Pi(N)}_{\mathbf{I}}$ is symmetric.}. In what follows, and in the statement of Corollary~\ref{cor:clt_feasible}, we set $k$ to be the truncation kernel $k(x) = \mathds{1}_{[-1,1]}(x)$ for simplicity, but other choices, such as the Bartlett kernel, might lead to better finite sample properties. For each $I, J \in \mathbf{I}$, if $\hat\Sigma_{I,J}^{n, \Pi(N)}$ is consistent for $\Sigma_{I,J}^{n}$ in $L^2$ with monotonically decreasing rate $r(M)\rightarrow 0$ as the \textit{effective} sample size $M = \lfloor N/(n+1)\rfloor\rightarrow\infty$, then
\begin{align*}
    \| \hat\Sigma_{I, J}^{\Pi(N)} - \Sigma_{I,J} \|_{L^2}
    &\leq \sum_{|n|\leq h_N } \left\| \hat\Sigma_{I, J}^{n, \Pi(N)} - \Sigma^n_{I,J} \right\|_{L^2} + \left|\sum_{|n|> h_N} \Sigma^n_{I,J} \right|\\
    &\leq \sum_{|n|\leq h_N } r(\lfloor N/(n+1)\rfloor) + \left|\sum_{|n|> h_N} \Sigma^n_{I,J} \right|.
\end{align*}
If the band-width $h_N\rightarrow \infty$ as $N\rightarrow\infty$, then $\Sigma_\mathbf{I} <\infty$ ensures the second term vanishes. If, moreover, we set the rate at which $h_N\rightarrow\infty$ to be slow enough to ensure also the first term converges to zero, then consistency of the estimators $\hat\Sigma_{I,J}^{n, \Pi(N)}$, for $n\geq 0$, is inherited by $\hat\Sigma_{I,J}^{\Pi(N)}$. Under the assumption $r(M) \sim M^{-\upsilon}$ for $\upsilon \in(0, 1)$, one can set $h_N = N^{\upsilon/2}$. We can hence focus on determining under which conditions, other than those of Theorem~\ref{thm:clt_dep}$.2$, the estimator $\hat\Sigma_{I,J}^{n, \Pi(N)}$ is consistent for $\Sigma_{I,J}^{n}$, for any $n\geq 0$ and $I,J\in\mathbf{I}$.

Note that, we can apply the shuffle identity to write, for $I,J\in\mathbf{I}$,
\[
\hat{\Sigma}^{0, \Pi(N)}_{I,J} =  \sum_{K \in I \shuffle J} \hat{\phi}^{\Pi(N)}_{K}(T)  - \hat{\phi}^{\Pi(N)}_{I}(T) \hat{\phi}^{\Pi(N)}_{J}(T),
\]
and show consistency of this estimator for $\Sigma^0_{I,J}$ by applying the consistency result for the estimator of the expected signature terms to $K\in I\shuffle J$ for $I, J\in\mathbf{I}$. We now attempt to apply a similar approach to the cross-covariance terms. To do so, we introduce an $\mathbb{R}^{nd}$-valued auxiliary process\footnote{We would like to thank Nicola Muca Cirone for suggesting this clever trick.} defined as $\mathbb{Y}^n = \{\mathbf{Y}^n_t,\ t\in[0,(n+1)T]\}$, where 
\[
\mathbf{Y}^n_t = \begin{pmatrix}
    \mathbf{X}^1_{t\wedge T} \\
    \mathbf{X}^2_{(t-T)^+\wedge T} \\
    \dots \\
    \mathbf{X}^{n+1}_{(t-nT)^+\wedge T}
\end{pmatrix}\!, \ t\in[0,(n+1)T] \ \iff \ 
\mathbf{Y}^n_t =
\begin{pmatrix}
    \mathbf{X}^1_{T} \\
    \cdots \\
    \mathbf{X}^{i}_{T} \\
    \mathbf{X}^{i+1}_{s} \\
    0 \\
    \cdots \\
    0
\end{pmatrix}\!,\ t = iT +s, s\in[0,T], 0\leq i\leq n.
\]
By construction, we have that, for any two words $I,J\in\mathbf{I}$ over the letters $\{1, \ldots, d\}$,
\[
S^I(\mathbb{X}^1)_{[0,T]} = S^I(\mathbb{Y}^n)_{[0,(n+1)T]} \quad \text{and} \quad S^J(\mathbb{X}^{n+1})_{[0,T]} = S^{nd+J}(\mathbb{Y}^n)_{[0,(n+1)T]}.
\]
I.e.\ we have re-written the two signature components of $\mathbb{X}^1$ and $\mathbb{X}^n$ over $[0,T]$ as time-overlapping signature components of $\mathbb{Y}^n$ over $[0, (n+1)T]$. We can thus apply the shuffle product to deduce
\[
S^I(\mathbb{X}^1)_{[0,T]} S^J(\mathbb{X}^n)_{[0,T]} = S^I(\mathbb{Y}^n)_{[0,(n+1)T]} S^{nd+J}(\mathbb{Y}^n)_{[0,(n+1)T]} = \sum_{K \in I \shuffle (nd+J)} S^{K}(\mathbb{Y}^n)_{[0,(n+1)T]}.
\]
We can thus re-write the $(I, J)$-th entry of $\Sigma^n_{\mathbf{I}}$ as
\begin{align*}
    \Sigma_{I,J}^{n} 
    &= \mathbb{E}\left[S^I(\mathbb{X}^1)_{[0,T]} S^J(\mathbb{X}^{1+n})_{[0,T]}\right] - \mathbb{E}\left[S^I(\mathbb{X}^1)_{[0,T]}\right] \mathbb{E}\left[S^J(\mathbb{X}^{1+n})_{[0,T]}\right] \\
    &= \mathbb{E}\left[S^I(\mathbb{Y}^n)_{[0,(n+1)T]} S^{nd+J}(\mathbb{Y}^n)_{[0,(n+1)T]}\right] - \mathbb{E}\left[S^I(\mathbb{X})_{[0,T]}\right] \mathbb{E}\left[S^J(\mathbb{X})_{[0,T]}\right] \\
    &= \sum_{K \in I \shuffle (nd+J)} \mathbb{E}\left[S^{K}(\mathbb{Y}^n)_{[0,(n+1)T]} \right] - \mathbb{E}\left[S^I(\mathbb{X})_{[0,T]}\right] \mathbb{E}\left[S^J(\mathbb{X})_{[0,T]}\right]\\
    &= \sum_{K \in I \shuffle (nd+J)} \psi_{K}((n+1)T) - \phi_I(T) \phi_J(T) , 
\end{align*}
and estimate it by
\[
\hat\Sigma_{I,J}^{n, \Pi(N)} =  \sum_{K \in I \shuffle (n d+J)} \hat\psi^{\Pi(N;n)}_K((n+1)T) - \hat\phi^{\Pi(N)}_I(T) \hat\phi^{\Pi(N)}_J(T),
\]
where, setting $M = \lfloor N/(n+1) \rfloor$,
\begin{equation*}
    \hat{\psi}_K^{\Pi(N;n)}((n+1)T) := \frac{1}{M} \sum_{m=1}^M S^K(\mathbb{Y}^{\pi_{N,m;n}})_{[0,(n+1)T]}, 
\end{equation*}
and
\[
\Pi(N;n) = \pi_{N, 1; n} \cup \ldots \cup ((M-1)(n+1)T+\pi_{N, M; n}),
\]
where for each $m=1,\ldots, M$,
\[
\pi_{N, m; n} = \pi_{N, (m-1)(n+1)+1} \cup \ldots \cup (nT +\pi_{N, m(n+1)}),
\]
partitions $[0, (n+1)T]$ and 
\[|\Pi(N; n)| := \max_{1\leq m\leq M} |\pi_{N, m; n}| = \max_{1\leq m\leq M} \max_{1\leq i \leq (n+1)}  |\pi_{N, (m-1)(n+1) + i}| \leq |\Pi(N)|.
\]

In order to apply the consistency result of Theorem~\ref{thm:clt_dep} to $\hat{\psi}_K^{\Pi(N;n)}(nT)$, we need to understand under which additional conditions on $\{\mathbf{X}_t,\ t\geq 0\}$, other than those of Theorem~\ref{thm:clt_dep}, the process $\{\mathbf{Y}^n_t,\ t\geq 0\}$, defined by
\[
\mathbf{Y}^n_t = \sum_{m\geq 1} \begin{pmatrix}
    \mathbf{X}^{m}_{(t-(m-1)T)^+\wedge T} \\
    \mathbf{X}^{m+1}_{(t-mT)^+\wedge T} \\
    \dots \\
    \mathbf{X}^{m+n}_{(t-(m+n-1)T)^+\wedge T}
\end{pmatrix}\!, \ t\geq 0,
\] 
satisfies itself the assumptions of Theorem~\ref{thm:clt_dep}. Note that canonical geometricity of the stochastic process $\mathbb{X}$ (with lift-defining sequence of partitions $\rho$, $|\rho|\rightarrow 0$) and stationarity of $\{\mathbb{X}^m,\ m\geq 1\}$ imply that each $\mathbb{X}^m$ is a canonical geometric stochastic process and, hence, so is $\mathbb{Y}^n$ (with lift-defining sequence of partitions $\rho^n = \rho \cup (T+\rho) \cup \dots \cup (nT + \rho)$, $|\rho|\rightarrow 0$). Moreover, stationarity and ergodicity of $\{\mathbb{X}^m,\ m\geq 1\}$ imply stationarity and ergodicity of $\{(\mathbb{X}^{m}, \ldots, \mathbb{X}^{m+n}),\ m\geq 1\}$ and hence of the measurable transformation $\{\mathbb{Y}^{n, m} = f(\mathbb{X}^{m}, \ldots, \mathbb{X}^{m+n}),\ m\geq 1\}$.

For fixed $n\geq 0$ and $m\geq 1$, each $\pi_{\cdot, m; n} = \{\pi_{N, m; n},\ \lfloor N / (n+1) \rfloor \geq m \}$ is a signature-defining sequence for $\mathbb{Y}^n  = \mathbb{Y}^{n, 1}$ over $[0,(n+1)T]$ since each $\pi_{\cdot, n} = \{\pi_{N,n},\ N\geq n\}$ is a signature-defining sequence for $\mathbb{X}$ over $[0,T]$. Moreover, by construction of $\Pi(N; n)$, $|\Pi(N; n)| \leq |\Pi(N)| \rightarrow 0$ whenever $|\Pi(N)| \rightarrow 0$ as $N\rightarrow \infty$.

It thus remains to check that $\mathbb{Y}^{n} = \mathbb{Y}^{n, 1}$ satisfies \eqref{ass:alpha} and \eqref{ass:delta} over $[0,(n+1)T]$ with $\alpha\geq 1/2$, $\delta\geq 1$ and $p > 4 k$ with $k=\max_{I\in\mathbf{I}}|I|$. Clearly, as we are now considering the process over an arbitrarily long time span $[0,(n+1)T]$, we will require $\{\mathbf{X}_t, \ t\geq 0\}$ to satisfy \eqref{ass:alpha} and \eqref{ass:delta} with $\alpha\geq 1/2$, $\delta\geq 1$ and $p > 4 k$ not just over $[0,T]$ but for any $0\leq s\leq t$ with $|t-s|\leq T$. This condition is automatically fulfilled when $\{\mathbf{X}_t, \ t\geq 0\}$ is stationary (or, more generally, has jointly stationary increments) and satisfies \eqref{ass:alpha} and \eqref{ass:delta} over $[0, T]$ with $\alpha\geq 1/2$, $\delta\geq 1$ and $p > 4 k$ with $k=\max_{I\in\mathbf{I}}|I|$. It turns out that these conditions, only slightly stronger than those already required in Theorem~\ref{thm:clt_dep} for the asymptotic normality of $\hat{\phi}_\mathbf{I}^{\Pi(N)}(T)$, are sufficient to show that $\hat{\psi}_K^{\Pi(N;n)}((n+1)T)$ is consistent for $\psi_K((n+1)T)$, and thus $\hat\Sigma_{I,J}^{n, \Pi(N)}$ is consistent for $\Sigma_{I,J}^{n}$ for any $n\geq 0$ and $I,J\in\mathbf{I}$.

To show \eqref{ass:alpha} holds for $\mathbb{Y}^n$ with the same $\alpha\geq 1/2$ as $\{\mathbf{X}_t,\ t\geq 0\}$ note that for $0\leq s\leq t\leq (n+1)T$,
\begin{align*}
    \|\mathbf{Y}^n_{s,t}\|_{L^{p}} &\leq \sum_{i=0}^{n} \|\mathbf{X}^{i+1}_{(s-iT)^+ \wedge T, (t-iT)^+ \wedge T }\|_{L^{p}} \\
    &\leq \sum_{i=0}^{n} \|\mathbf{X}_{iT+(s-iT)^+ \wedge T, iT+(t-iT)^+ \wedge T }\|_{L^{p}} \\ 
    &\lesssim \sum_{i=0}^{n} |(t-iT)^+ \wedge T - (s-iT)^+ \wedge T |^\alpha \lesssim |t-s|^\alpha.
\end{align*}
Next, to show that \eqref{ass:delta} holds for $\mathbb{Y}^n$ with the same $\delta\geq1$ as $\{\mathbf{X}_t,\ t\geq 0\}$.  Note that for $0\leq s \leq t\leq (n+1)T$,
\begin{align*}
    \|\mathbb{E}_s[\mathbf{Y}^n_{s,t}] \|_{L^{p}} 
    &\leq \sum_{i=0}^{n}  \|\mathbb{E}_s[
    \mathbf{X}^{i+1}_{(s-iT)^+\wedge T, (t-iT)^+\wedge T}] \|_{L^{p}} \\
    &\leq \sum_{i=0}^{n}  \|\mathbb{E}_s[
    \mathbf{X}_{iT+(s-iT)^+\wedge T, iT+(t-iT)^+\wedge T}] \|_{L^{p}} \\
    &\leq \sum_{i=0}^{n} \mathds{1}_{[0, (i+1)T]}(s) \|\mathbb{E}_{iT+(s-iT)^+\wedge T}[
    \mathbf{X}_{iT+(s-iT)^+\wedge T, iT+(t-iT)^+\wedge T}] \|_{L^{p}} \\
    &\leq \sum_{i=0}^{n}|(t-iT)^+ \wedge T - (s-iT)^+ \wedge T |^\delta \lesssim |t-s|^\delta.
\end{align*}

\subsection{Proof of Proposition~\ref{prop:stationarity_strong_mixing_implications}} \label{app:proofs_stationarity_strong_mixing_implications}
\subsubsection{Proof of Proposition~\ref{prop:stationarity_strong_mixing_implications}, stationary implications} \label{app:proofs_stationarity_implications}
The first implication follows directly from the definitions of stationarity and joint stationarity of the increments. It remains to show the latter implies stationarity of $\{\mathbb{X}^n,\ n\geq 1\}$, i.e.\ Equation \eqref{eqn:joint_stationarity_incr} implies Equation \eqref{eqn:stationary_segments}. Under joint stationarity of the increments, for all $k\in\mathbb{N}$, $n_1,\ldots, n_k\in\mathbb{N}$ and $n\geq 0$,
\begin{align*}
    \mathbb{P}(\mathbb{X}^{n_1} \in A_1, &\ldots, \mathbb{X}^{n_k} \in A_k) \\
    &= \mathbb{P}(\mathbf{X}_{(n_1-1)T, (n_1 -1)T + t^1_1} \in B^1_1, \ldots, 
    \mathbf{X}_{(n_k-1)T, (n_k -1)T + t^k_{m_k}} \in B^k_{m_k}), \\
    &= \mathbb{P}(\mathbf{X}_{(n_1+n-1)T, (n_1 +n -1)T + t^1_1} \in B^1_1, \ldots, \mathbf{X}_{(n_k+n-1)T, (n_k+n -1)T + t^k_{m_k}} \in B^k_{m_k}), \\
    & = \mathbb{P}(\mathbb{X}^{n_1+n} \in A_1, \ldots, \mathbb{X}^{n_k+n} \in A_k),
\end{align*}
for all $A_1,\ldots, A_k$ cylinder sets of the form 
\[
A_j = \{\omega_{[0,T]} \in C([0,T];\mathbb{R}^d) : \omega(t^j_1) \in B^j_1, \ldots, \omega(t^j_{m_j}) \in B^j_{m_j}\},
\]
for $B^j_1, \ldots, B^j_{m_j} \in \mathcal{B}(\mathbb{R}^d)$, $t_1, \ldots, t^j_{m_j} \in [0,T]$, $m_j \geq 1$. Noting that the collection of the sets $A_1 \times \ldots \times A_k$ is a semi-ring that generates\footnote{Recall that for a set $I\subseteq\mathbb{R}_+$, the Borel $\sigma$-algebra $\mathcal{B}_{I} := \mathcal{B}(C(I; \mathbb{R}^d))$ (w.r.t.\ the topology induced by $\|\cdot\|_\infty$) can be equivalently defined by 
    \[
    \mathcal{B}_{I} = \sigma\left(\omega\in C(I; \mathbb{R}^d): \omega(t_1)\in A_1, \ldots, \omega(t_n)\in A_n,\ t_1, \ldots, t_n\in I,\ A_1,\ldots, A_n \in \mathcal{B}(\mathbb{R}^d),\ n\geq 1\right).
    \]
    Moreover, if $(E, \mathcal{B}(E))$ is a Borel measurable space and we equip $E^k$ with the product topology, then
    \[
    \mathcal{B}(E^k) = \mathcal{B}(E)^k := \sigma(A_1 \times \cdots \times A_k,\ A_1, \ldots, A_k \in \mathcal{B}(E)) = \sigma(A_1 \times \cdots \times A_k,\ A_1, \ldots, A_k \in \mathcal{G}),
    \]
    where $\mathcal{G}$ is a generating collection for $\mathcal{B}(E)$, i.e.\ $\mathcal{B}(E) = \sigma(\mathcal{G})$.
} 
the $\sigma$-algebra $\mathcal{B}_{[0,T]}^k$, we can apply Caratheodory's extension theorem to conclude that \eqref{eqn:stationary_segments} holds.
\hfill \qed

\begin{remark} \label{rem:ergodic_implications}
One might expect a similar statement as Proposition~\ref{prop:stationarity_strong_mixing_implications} for ergodicity, but the following counterexample shows that 
\begin{align*}
    \{\mathbf{X}_t,\ &t\geq 0\}\  \text{is ergodic} \centernot\implies \{\mathbb{X}^n,\ n\geq 1\}\  \text{is ergodic}.
\end{align*}
Let $\{X_t,\ t\geq 0\}$ be an $\mathbb{R}$-valued process such that 
\[
    X_t = \sin \left( \frac{\pi t}{T} + \phi\right), \quad t\geq 0, 
\]
and $\phi\sim U([0,2\pi])$, inducing a probability measure $\mathbb{P}^\infty_{[0,T]}$ on $(\Omega^\infty_{[0,T]}, \mathcal{B}_{[0,T]}^\infty)$ where $\Omega_{[0,T]}=(C([0,T], \mathbb{R})$. The process is stationary and ergodic. Stationarity of $\{X_t,\ t\geq 0\}$ implies stationarity of $\{\mathbb{X}^n,\ n\geq 1\}$ by Proposition~\ref{prop:stationarity_strong_mixing_implications}. But the shift-invariant set
\[
    I = \prod_{n\geq 1} \{\omega^n_{[0,T]} \in I_{\geq 0} \cup I_{\leq 0} \} \in \mathcal{B}_{[0,T]}^\infty,
\]
where $I_{\geq 0}, I_{\leq 0} \in \mathcal{B}_{[0,T]}$ are given by 
\begin{align*}
I_{\geq 0} &:= \{\omega_{[0,T]} : \omega_{[0,T]}(t) \geq 0,\  \forall t\in[0,T]\},\\ I_{\leq 0} &:= \{\omega_{[0,T]} : \omega_{[0,T]}(t) \leq 0,\ \forall t\in[0,T]\}, 
\end{align*}
has measure $\mathbb{P}_{[0,T]}^\infty(I) = \mathbb{P}(\phi \in [\pi/2, \pi] \cup [3 \pi /2 , 2\pi]) = 1/2 \notin \{0, 1\}$ and hence $\{\mathbb{X}^n,\ n\geq 1\}$ is not ergodic.
\end{remark}

\subsubsection{Proof of Proposition~\ref{prop:stationarity_strong_mixing_implications}, strong mixing implications} \label{app:proofs_strong_mixing_implications}
We start by showing that strong mixing of $\{\mathbf{X}_t,\ t\geq 0\}$ with strong mixing coefficient $\alpha(s),\ s\in\mathbb{R}_+$ implies strong mixing of the progressive increment process $\{\mathbf{X}^T_{t},\ t\geq 0\}$ where 
\[
\mathbf{X}^T_t := \mathbf{X}_{\lfloor t/T \rfloor T, t},\quad t\geq 0, 
\] 
with strong mixing coefficient $\alpha'(s) \leq \alpha(s - 2T) ,\ s\geq 2T$. This follows immediately from the definition of strong mixing and the fact that for $t\geq 0$,
\[
\sigma(\mathbf{X}^T_u,\ u\leq t) \subseteq \sigma(\mathbf{X}_u,\ u\leq t ),
\]
and for $s\geq 2T$,
\[
\sigma(\mathbf{X}^T_u,\ u\geq t+s) \subseteq \sigma(\mathbf{X}_u,\ u\geq \lfloor (t+s)/T \rfloor T ) \subseteq \sigma(\mathbf{X}_u,\ u\geq t + (s-2T) ).    
\]

Next, we show that strong mixing of $\{\mathbf{X}^T_{t},\ t\geq 0\}$ with strong mixing coefficient $\alpha'(s),\ s\in\mathbb{R}_+$ implies strong mixing of $\{\mathbb{X}^n,\ n\geq 1\}$ with strong mixing coefficient $\alpha''(n) \leq \alpha'((n-1)T),\ n\in\mathbb{N}$, and thus $\alpha''(n) \leq \alpha((n-3)T),\ n \geq 3$.
Let 
\[
\mathcal{X}_a^b := \sigma(\mathbf{X}^T_u,\ u\in [a,b]),
\] 
for $a,b\in\overline{\mathbb{R}}_+$ with $a\leq b$ and let 
\[
\mathcal{S}_m^n := \sigma(\mathbb{X}^l,\ m\leq l \leq n),
\] 
for $m,n\in\overline{\mathbb{N}}$ with $m\leq n$. Then, by definition, for each $n\in\mathbb{N}$,
\[\mathcal{X}_{(n-1)T}^{nT} = \sigma(\mathbf{X}_{(n-1)T, (n-1)T+t},\ t\in [0, T]) =  \mathcal{S}_n^n. \]
Thus for any $m,n\in\overline{\mathbb{N}}$ with $m\leq n$
\[
\mathcal{S}_m^n = \mathcal{X}_{(m-1)T}^{nT}.
\]
Letting $k\in\mathbb{N}$ and $A\in\mathcal{S}_{-\infty}^k,\, B\in\mathcal{S}_{k+n}^\infty$ we thus have 
\[
|\mathbb{P}(A \cap B) - \mathbb{P}(A)\mathbb{P}(B)| \leq \alpha'((n-1)T) \rightarrow 0, \quad k\rightarrow\infty.
\]
\hfill \qed

\subsection{Proof of Theorem~\ref{thm:cons_dep_GP}} \label{app:proofs_cons_dep_GP}

\begin{quote}
	\onehalfspacing\small\itshape
	\textbf{Sketch of proof.} As in the proof of Theorem~\ref{thm:clt_dep}, c.f.\ Appendix~\ref{app:proofs_clt_dep}, we can apply the in-fill result given in Theorem~\ref{thm:sig_conv_lm} to show the first term in decomposition \eqref{eqn:conv_decomposition_2} vanishes. For the second term, we show the sequence of random variables $\{S^\mathbf{I}(\mathbb{X}^n)_{[0,T]},\ n\geq 1\}$ satisfies a (weak) law of large numbers by verifying the auto-covariance decay condition \eqref{eqn:cov_vanishing}. For each fixed lag $n\geq 1$, we start by bounding the auto-covariance of lagged discretized signatures $\{S^\mathbf{I}(\mathbb{X}^{\rho, n})_{[0,T]},\ n\geq 1\}$. This step crucially relies on the Gaussian assumption when using Isserlis' theorem to compute the expectation of the product of (arbitrarily many) path increments. The required auto-covariance decay condition is then obtained in the in-fill limit along a sequence of signature-defining partitions $\rho$ by an application of Theorem~\ref{thm:sig_conv_lm}. 
\end{quote}

Note that for any $n\in\mathbb{N}$, $0\leq s_i\leq t_i$ for $i=1,\ldots, n$ and $t\geq 0$ 
\begin{equation*}
    (\mathbf{X}_{t+s_1,t+t_1}, \ldots, \mathbf{X}_{t+s_n,t+t_n}) \overset{\mathcal{L}}{=} (\mathbf{X}_{s_1,t_1}, \ldots, \mathbf{X}_{s_n,t_n}),
\end{equation*}
since both vectors are normally distributed with means 
\[\mathbb{E}[\mathbf{X}_{t+s_i,t+t_i}] = \mathbb{E}[\mathbf{X}_{s_i,t_i}] = 0,\]
for $i=1, \ldots, n$ and covariances 
\[\mathrm{Cov}(\mathbf{X}_{t+s_i,t+t_i},\mathbf{X}_{t+s_j,t+t_j}) = \mathrm{Cov}(\mathbf{X}_{s_i,t_i},\mathbf{X}_{s_j,t_j}) = C(|t_i - s_i|, |s_j - t_i|, |t_j - s_j|),\]
for $i, j=1, \ldots, n$. This implies $\{\mathbf{X}_t,\ t \geq 0\}$ has jointly stationary increments and hence, by Proposition~\ref{prop:stationarity_strong_mixing_implications}, $\{\mathbb{X}^n,\ n\geq 1\}$ is stationary. By Proposition~\ref{prop:measurability} the sequence $\{S^\mathbf{I}(\mathbb{X}^n),\ n\geq 1\}$ is defined $\mathbb{P}$-a.s.\ and is stationary.

Note that by Appendix~\ref{app:GP_continuity}, $\mathbb{X}$ satisfies \eqref{ass:alpha} for any $p\geq 2$. When $\alpha=1/2$, it also satisfies \eqref{ass:delta} with $\delta\geq 1$ for all $p\geq 2$ by assumption. We can thus apply Theorem~\ref{thm:sig_conv_lm} to obtain an $L^2$ in-fill asymptotic result along a sequence of dyadic refinements. By the discussion at the start of Appendix~\ref{app:proofs_clt_dep}, we can thus focus on showing the weak law of large numbers holds for $\{S^\mathbf{I}(\mathbb{X}^n),\ n\geq 1\}$.

To apply the weak law of large number for dependent random variables, note that:
\begin{enumerate}
    \item $S^\mathbf{I}(\mathbb{X}^n)_{[0,T]}\in L^{2}$, by the in-fill asymptotic result.
    \item To show $\mathrm{Cov}\left(S^\mathbf{I}(\mathbb{X}^n)_{[0,T]}, S^\mathbf{I}(\mathbb{X}^m)_{[0,T]}\right) \rightarrow 0$ as $|m-n|\rightarrow\infty$, by stationarity of $\{\mathbb{X}_n,\ n\geq 1\}$, it is sufficient to show that
    \begin{equation} \label{eqn:cov_vanishing}
        \mathrm{Cov}\left(S^{k'}(\mathbb{X}^1)_{[0,T]}, S^{k'}(\mathbb{X}^n)_{[0,T]}\right) \rightarrow 0, \quad n \rightarrow\infty,
    \end{equation}
    for any $1\leq k' \leq k = \max_{I\in\mathbf{I}} |I|$. 
\end{enumerate} 

To show \eqref{eqn:cov_vanishing}, we wish to find a bound on $\mathrm{Cov}(S^{k'}(\mathbb{X}^1)_{[0,T]}, S^{k'}(\mathbb{X}^n)_{[0,T]})$ vanishing to zero as $n\rightarrow\infty$. Note that, by the in-fill asymptotic result, along the signature-defining sequence of dyadic refinements, for all $n\geq 1$ and $1\leq k' \leq k$,
\begin{equation} \label{eqn:cov_sig_approx}
    \mathrm{Cov}\left(S^{k'}(\mathbb{X}^{\rho, 1})_{[0,T]}, S^{k'}(\mathbb{X}^{\rho, n})_{[0,T]}\right) \rightarrow \mathrm{Cov}\left(S^{k'}(\mathbb{X}^{1})_{[0,T]}, S^{k'}(\mathbb{X}^{n})_{[0,T]}\right),\quad |\rho|\rightarrow 0.
\end{equation}
To bound $\mathrm{Cov}(S^{k'}(\mathbb{X}^1)_{[0,T]}, S^{k'}(\mathbb{X}^n)_{[0,T]})$, we can hence start by bounding the covariance between the discretized signatures $\mathrm{Cov}(S^{k'}(\mathbb{X}^{\rho, 1})_{[0,T]}, S^{k'}(\mathbb{X}^{\rho, n})_{[0,T]})$ and then let $|\rho|\rightarrow 0$.

Let $\rho$ be a dyadic partition of $[0,T]$ and note the form of the discretized signature
\begin{equation*}
    S(\mathbb{X}^\rho)_{[0,T]} = \bigotimes_{[u,v]\in\rho} \exp_{\otimes} \mathbf{X}_{u, v},
\end{equation*}
implies
\begin{equation*}
    S^{k'}(\mathbb{X}^\rho)_{[0,T]} = \sum_{i_\rho \in \mathcal{M}_\rho^{k'}} \bigotimes_{[u,v] \in \rho} \frac{\mathbf{X}_{u,v}^{\otimes i_{[u,v]}}}{i_{[u,v]}!},
\end{equation*}
where 
\[
\mathcal{M}_\rho^{k'}:=\{i_\rho := \{i_{[u,v]}\}_{[u,v]\in\rho} :\ 0\leq i_{[u,v]}\leq k',\ [u,v]\in\rho,\ \sum_{[u,v]\in\rho} i_{[u,v]} = k'\},
\]
is the set of multindices over $\rho$ with sum of components $k'$. Note that we can rewrite this as 
\begin{align*}
    S^{k'}(\mathbb{X}^\rho)_{[0,T]} &= \sum_{j=1}^{k'} \sum_{\substack{i_1,\ldots, i_j\geq 1\\ i_1+\cdots +i_j = k'}} \underset{[u_1, v_1] < \dots < [u_j, v_j] \in \rho}{\sum\ \cdots\ \sum} \bigotimes_{l=1}^j  \frac{\mathbf{X}_{u_l,v_l}^{\otimes i_{l}}}{i_{l}!} \\
    &= \sum_{j=1}^{k'} \sum_{\substack{i_1,\ldots, i_j\geq 1\\ i_1+\cdots +i_j = k'}}  \underset{[u_1, v_1] < \dots < [u_j, v_j] \in \rho}{\sum\ \cdots\ \sum} \left[\prod_{l=1}^j \frac{1}{i_{l}!} \right] \bigotimes_{x=1}^{k'}  \mathbf{X}_{u_{l_x},v_{l_x}}, 
\end{align*}
where we first split the sum over the number of non-zero $i_{[u,v]}$ for each $i_\rho\in\mathcal{M}^{k'}_{\rho}$ and then rewrite the tensor product by introducing $(l_1, \cdots, l_{k'}) = (1, \ldots, 1, 2, \ldots,2, \ldots, j, \ldots, j)$, where the index $1$ is repeated $i_1$ times, the index $2$ is repeated $i_2$ times, and so on. We introduce the notation $[u_1, v_1] <_{m'} [u_2, v_2]$ denoting at least $m'$ intervals between $[u_1, v_1]$ and $[u_2, v_2]$ in $\rho$. When $m'=0$, we equivalently write $<$ or $<_0$ to denote the interval $[u_2, v_2]$ being after $[u_1, v_1]$ in $\rho$.

We then group the above summations over the intervals $[u_1, v_1], \ldots, [u_j, v_j]$ over all possible combinations of time intervals with at least one pair less than $m$ steps away in the partition $\rho$. To do so, introduce the set\footnote{Here $\mathcal{I} \oplus_0 \mathcal{J}$ denotes pairing elements of $\mathcal{J}$ to the elements of $\mathcal{I}$ equal to 0.} variable $\mathcal{I} \oplus_0 \mathcal{J} \in \mathcal{S}_{j', j, m}$ where $\mathcal{I} \in \{1\} \times \{0, 1\}^{j-1}$ is such that $|\mathcal{I}| = \mathcal{I}_1 + \ldots + \mathcal{I}_j = j'$ and $\mathcal{J} \in \{1, \ldots, m\}^{j - j'}$ where $m$ is such that \eqref{ass:theta} holds. We can then recursively define, for $l=1, \ldots, j$,
\begin{align*}
    [u_{l}, v_{l}](\mathcal{I} \oplus_0 \mathcal{J}) = 
    \begin{cases}
        \text{interval $\mathcal{J}_l$ steps after $[u_{l-1}, v_{l-1}](\mathcal{I} \oplus_0 \mathcal{J})$}\quad &\text{if}\ \mathcal{I}_l = 0,\\
        [u_{|\mathcal{I}_{1:l}|}, v_{|\mathcal{I}_{1:l}|}]\quad &\text{if}\ \mathcal{I}_l = 1,\\
    \end{cases}
\end{align*}
where $|\mathcal{I}_{1:l}|=\mathcal{I}_1 + \ldots + \mathcal{I}_l$, and write 
\begin{align*}
    S^{k'}(\mathbb{X}^\rho)_{[0,T]} &= \sum_{j=1}^{k'} \sum_{\substack{i_1,\ldots, i_j\geq 1\\ i_1+\cdots +i_j = k'}} \sum_{j'=1}^j \sum_{\mathcal{I} \oplus_0 \mathcal{J} \in \mathcal{S}_{j', j, m}} \underset{[u_1, v_1] <_{m_1} \dots <_{m_{j'-1}} [u_{j'},v_{j'}] \in \rho}{\sum\ \cdots \ \sum} \left[\prod_{l=1}^j \frac{1}{i_{l}!} \right] \bigotimes_{x=1}^{k'}  \mathbf{X}_{[u_{l_x}, v_{l_x}](\mathcal{I} \oplus_0 \mathcal{J})},
\end{align*}
where, for each $l'=1,\ldots, j'-1$, we have $[u_{l'},v_{l'}] <_{m_{l'}} [u_{l'+1},v_{l'+1}]$, i.e.\ there are at least $m_{l'}$ intervals between $[u_{l'},v_{l'}]$ and $[u_{l'+1}, v_{l'+1}]$ in $\rho$, where 
\[
m_{l'}=m_{l'}(\mathcal{I} \oplus_0 \mathcal{J}) = \sum_{l=1}^j \mathcal{J}_l \, \mathds{1}_{\{|\mathcal{I}_{1:l}| = l',\, \mathcal{I}_l = 0\}}.
\]
The structure of the tensor products in the summation is thus
\[
\mathbf{X}^{\otimes i_1}_{u_1, v_1} \otimes \mathbf{X}^{\otimes i_2}_{u^2_1, v^2_1} \otimes \cdots \otimes \mathbf{X}^{\otimes i_{n_1}}_{u^{n_1}_1, v^{n_1}_1} \otimes \mathbf{X}^{\otimes i_{n_1+1}}_{u_2, v_2} \otimes \mathbf{X}^{\otimes i_{n_1+2}}_{u^2_2, v^2_2} \otimes \cdots \otimes \mathbf{X}^{\otimes i_{n_1+n_2}}_{u^{n_2}_2, v^{n_2}_2} \otimes \cdots,
\]
where $[u_1, v_1], [u^2_1, v^2_1], \cdots, [u^{n_1}_1, v^{n_1}_1]$ are all less than $m$ intervals apart and $[u_2, v_2]$ is at least $m$ intervals after $[u^{n_1}_1, v^{n_1}_1]$ (and so on)\footnote{To help intuitive understanding consider the case where $k'=2$ and $m=1$, then the above expression reduces to 
    \begin{align*}
        S^2(\mathbb{X}^\rho)_{[0,T]} &= \sum_{[u_1, v_1] \in \rho}  \frac{\mathbf{X}_{u_1,v_1}^{\otimes 2}}{2!} + \sum_{[u_1, v_1] \in \rho} \mathbf{X}_{u_1,v_1} \otimes \mathbf{X}_{v_1, w_1} + \sum_{[u_1, v_1] \in \rho} \sum_{\substack{[u_2,v_2]\in\rho \\ [u_1, v_1] <_1 [u_2, v_2]}} \mathbf{X}_{u_1,v_1} \otimes \mathbf{X}_{u_2,v_2},
    \end{align*}
    where $[v_1, w_1]$ is the interval right-contiguous to $[u_1, v_1]$ in $\rho$. If $k'=2$ and $m=2$, instead 
    \begin{align*}
        S^2(\mathbb{X}^\rho)_{[0,T]} &= \sum_{[u_1, v_1] \in \rho}  \frac{\mathbf{X}_{u_1,v_1}^{\otimes 2}}{2!} + \sum_{[u_1, v_1] \in \rho} \mathbf{X}_{u_1,v_1} \otimes \mathbf{X}_{v_1, w_1} + \sum_{[u_1, v_1] \in \rho} \mathbf{X}_{u_1,v_1} \otimes \mathbf{X}_{w_1, z_1} \\
        &\quad\quad\quad\quad\quad\quad + \sum_{[u_1, v_1] \in \rho} \sum_{\substack{[u_2,v_2]\in\rho \\ [u_1, v_1] <_2 [u_2, v_2]}} \mathbf{X}_{u_1,v_1} \otimes \mathbf{X}_{u_2,v_2},
    \end{align*}
    where $[u_1, v_1], [v_1, w_1], [w_1, z_1]$ are consecutive intervals in $\rho$.}.
    
Let $n\geq 3$ and define $\rho_n = (n-1)T + \rho$. Then, for each word $I=(w_1, \ldots, w_{k'})$ of length $k'$, we can write
\small
\begin{align*}
    &\mathrm{Cov}\left(S^I(\mathbb{X}^{\rho, 1})_{[0,T]}, S^I(\mathbb{X}^{\rho, n})_{[0,T]}\right)  \\	
    &= \sum_{j_1,j_2=1}^{k'} \sum_{\substack{i_1,\ldots, i_{j_1}\geq 1\\ i_1+\cdots +i_{j_1} = k'}} \sum_{\substack{e_1,\ldots, e_{j_2}\geq 1\\ e_1+\cdots +e_{j_2} = k'}} \sum_{j'_1 = 1}^{j_1} \sum_{j'_2 = 1}^{j_2} \sum_{\substack{\mathcal{I}_1 \oplus_0\mathcal{J}_1 \in S_{j'_1, j_1, m} \\ \mathcal{I}_2 \oplus_0\mathcal{J}_2 \in S_{j'_2, j_2, m}}} \\
    &\quad\quad\quad \underset{\substack{[u_1, v_1] <_{m^1_1} \dots <_{m^1_{j'_1 - 1} } [u_{j'_1},v_{j'_1}] \in \rho \\ [s_1, t_1] <_{m^2_1} \dots <_{m^2_{j'_2 - 1}}  [s_{j'_2},t_{j'_2}] \in \rho_n}}{\sum\ \cdots \ \sum} \bigg[ \prod_{l=1}^{j_1} \frac{1}{i_l!} \bigg] \bigg[ \prod_{l=1}^{j_2} \frac{1}{e_l!} \bigg]   \mathrm{Cov}\Bigg(\prod_{x=1}^{k'}  X^{(w_x)}_{[u_{l^1_x}, v_{l^1_x}](\mathcal{I}_1\oplus_0\mathcal{J}_1)}, \prod_{x=1}^{k'} X^{(w_x)}_{[s_{l^2_x}, t_{l^2_x}](\mathcal{I}_2\oplus_0\mathcal{J}_2)}  \Bigg)  \\
    &= \sum_{j_1,j_2=1}^{k'} \sum_{\substack{i_1,\ldots, i_{j_1}\geq 1\\ i_1+\cdots +i_{j_1} = k'}} \sum_{\substack{e_1,\ldots, e_{j_2}\geq 1\\ e_1+\cdots +e_{j_2} = k'}} \sum_{j'_1 = 1}^{j_1} \sum_{j'_2 = 1}^{j_2} \sum_{\substack{\mathcal{I}_1 \oplus_0\mathcal{J}_1 \in S_{j'_1, j_1, m} \\ \mathcal{I}_2 \oplus_0\mathcal{J}_2 \in S_{j'_2, j_2, m}}} \underset{\substack{[u_1, v_1] <_{m^1_1} \dots <_{m^1_{j'_1 - 1} } [u_{j'_1},v_{j'_1}] \in \rho \\ [s_1, t_1] <_{m^2_1} \dots <_{m^2_{j'_2 - 1}}  [s_{j'_2},t_{j'_2}] \in \rho_n}}{\sum\ \cdots \ \sum} \bigg[ \prod_{l=1}^{j_1} \frac{1}{i_l!} \bigg] \bigg[ \prod_{l=1}^{j_2} \frac{1}{e_l!} \bigg] \\
    &\quad \sum_{p\in M\!P^2_{(2,k')}} \prod_{\substack{\{(\delta_1, x_1),\ \\   (\delta_2, x_2)\}\in p}} \mathrm{Cov}\left(X^{(w_{x_1}) \delta_1}_{[u_{l^1_{x_1}}, v_{l^1_{x_1}}](\mathcal{I}_1 \oplus_0 \mathcal{J}_1)} X^{(w_{x_1}) (1-\delta_1)}_{[s_{l^2_{x_1}}, t_{l^2_{x_1}}] (\mathcal{I}_2 \oplus_0 \mathcal{J}_2)}, X^{(w_{x_2}) \delta_2}_{[u_{l^1_{x_2}}, v_{l^1_{x_2}}](\mathcal{I}_1 \oplus_0 \mathcal{J}_1)} X^{(w_{x_2}) (1-\delta_2)}_{[s_{l^2_{x_2}}, t_{l^2_{x_2}}] (\mathcal{I}_2 \oplus_0 \mathcal{J}_2)} \right),
\end{align*}
\normalsize
by using the fact that, for two collections of mean-zero normal random variables $(Z_{0,1}, \ldots, Z_{0,k'})$ and $(Z_{1,1}, \ldots, Z_{1,k'})$, we can apply Isserlis' theorem \citep{isserlis1918} to show
\begin{align*}
    \mathrm{Cov}&\left(Z_{0,1} \cdots Z_{0,k'}, Z_{1,1}\cdots Z_{1,k'} \right) \\
    &= \mathbb{E}\left[Z_{0,1} \cdots Z_{0,k'} Z_{1,1}\cdots Z_{1,k'} \right] - \mathbb{E}\left[Z_{0,1} \cdots Z_{0,k'}\right] \mathbb{E}\left[Z_{1,1}\cdots Z_{1,k'} \right] \\
    &= \sum_{p\in P^2_{(2,k')}} \prod_{\substack{\{(i_1, i_2),\ \ \ \ \\(j_1, j_2)\}\in p}} \mathbb{E}\left[Z_{i_1, i_2} Z_{j_1, j_2}\right] - \left( \sum_{q\in P^2_{k'}} \prod_{\{i, j\}\in q} \mathbb{E}\left[Z_{0, i} Z_{0, j}\right] \right) \left(\sum_{r\in P^2_{k'}} \prod_{\{i,j\}\in r} \mathbb{E}\left[Z_{1, i} Z_{1, j}\right]\right) \\
    &= \sum_{p\in M\!P^2_{(2,k')}} \prod_{\{(i_1, i_2), (j_1, j_2)\}\in p} \mathbb{E}\left[Z_{i_1, i_2} Z_{j_1, j_2}\right]\\
    &= \sum_{p\in M\!P^2_{(2,k')}} \prod_{\{(i_1, i_2), (j_1, j_2)\}\in p} \mathrm{Cov}\left(Z_{i_1, i_2}, Z_{j_1, j_2}\right), 
\end{align*}
where $P^2_{(2,k')}$ denotes the set of all the pairings of $\{0, 1\} \times \{1, \ldots, k'\}$, $P^2_{k'}$ denotes the set of all the pairings of $\{1, \ldots, k'\}$ and $M\!P^2_{(2,k')}$ denotes the set of all the pairings of $\{0, 1\} \times \{1, \ldots, k'\}$ that contain at least one ``mixed'' pair, i.e.\ for all $p\in M\!P^2_{(2,k')}$ there exist $\{(i_1, i_2), (j_1, j_2)\}\in p$ such that $i_1\neq j_1$.

Note that each $[u_{l'}, v_{l'}]$ for $l'=1, \ldots, j'_1$ appears in at least one covariance term. We proceed by cases:
\begin{itemize}
    \item If $[u_{l'}, v_{l'}]$ appears in a pair with an interval $[u_*, v_*]$ such that $[u_{l'-1}, v_{l'-1}] < [u_*, v_*] < [u_{l'+1}, v_{l'+1}]$, then
    \[
    |\text{Cov}(X^{(w)}_{[u_{l'}, v_{l'}]}, X^{(q)}_{[u_{*}, v_{*}]})| \lesssim |v_{l'} - u_{l'}|,
    \]
    by applying Assumption \eqref{ass:alpha} with $\alpha\geq 1/2$ and the fact that $\rho$ is dyadic (and hence uniform).
    \item If $[u_{l'}, v_{l'}]$ appears in a pair with an interval $[u_*, v_*]$ such that $[u_*, v_*] \leq [u_{l'-1}, v_{l'-1}]$, then we have $|u_{l'} - v_{*}| \geq m/2 (|v_* - u_*| + |v_{l'} - u_{l'}|)$ and
    \[
    |\text{Cov}(X^{(w)}_{[u_{l'}, v_{l'}]}, X^{(q)}_{[u_{*}, v_{*}]})| \lesssim \theta(|u_{l'} - v_{*}|) |v_* - u_*| |v_{l'} - u_{l'}| \lesssim \theta(|u_{l'} - v_{l' - 1}|) |v_* - u_*| |v_{l'} - u_{l'}|,
    \]
    by applying \eqref{ass:theta}.
    \item Similarly, if $[u_{l'}, v_{l'}]$ appears in a pair with an interval $[u_*, v_*]$ such that $ [u_{l'+1}, v_{l'+1}] \leq [u_*, v_*]$, then we have $|u_* - v_{l'}| \geq m/2 (|v_{l'} - u_{l'}| + |v_* - u_*|)$ and
    \[
    |\text{Cov}(X^{(w)}_{[u_{l'}, v_{l'}]}, X^{(q)}_{[u_{*}, v_{*}]})| \lesssim \theta(|u_* - v_{l'}|) |v_{l'} - u_{l'}||v_* - u_*| \lesssim \theta(|u_{l'+1} - v_{l'}|) |v_{l'} - u_{l'}| |v_* - u_*|.
    \]
    \item If $[u_{l'}, v_{l'}]$ appears in a pair with an interval $[s_*, t_*]\in\rho_n$, then
    \[
    |\text{Cov}(X^{(w)}_{[u_{l'}, v_{l'}]}, X^{(q)}_{[s_{*}, t_{*}]})| \lesssim \theta((n-2)T) |v_{l'} - u_{l'}||t_* - s_*|,
    \]
    by applying \eqref{ass:theta}.
\end{itemize}

A similar reasoning applies to each $[s_{l'}, t_{l'}]$, for $l'=1, \ldots, j'_2$. Noting that at least one pairing is mixed across $\rho$ and $\rho_n$ we can hence bound 
\small
\begin{align*}
    &|\mathrm{Cov}\left(S^I(\mathbb{X}^{\rho, 1})_{[0,T]}, S^I(\mathbb{X}^{\rho, n})_{[0,T]}\right) | \\
    &\lesssim  \sum_{j_1,j_2=1}^{k'} \sum_{\substack{i_1,\ldots, i_{j_1}\geq 1\\ i_1+\cdots +i_{j_1} = k'}} \sum_{\substack{e_1,\ldots, e_{j_2}\geq 1\\ e_1+\cdots +e_{j_2} = k'}} \sum_{j'_1 = 1}^{j_1} \sum_{j'_2 = 1}^{j_2} \sum_{\substack{\mathcal{I}_1 \oplus_0\mathcal{J}_1 \in S_{j'_1, j_1, m} \\ \mathcal{I}_2 \oplus_0\mathcal{J}_2 \in S_{j'_2, j_2, m}}} \underset{\substack{[u_1, v_1] <_{m^1_1} \dots <_{m^1_{j'_1 - 1} } [u_{j'_1},v_{j'_1}] \in \rho \\ [s_1, t_1] <_{m^2_1} \dots <_{m^2_{j'_2 - 1}}  [s_{j'_2},t_{j'_2}] \in \rho_n}}{\sum\ \cdots \ \sum} \bigg[ \prod_{l=1}^{j_1} \frac{1}{i_l!} \bigg] \bigg[ \prod_{l=1}^{j_2} \frac{1}{e_l!} \bigg] \\
    &\quad\quad\quad \sum_{p\in M\!P^2_{(2,k')}} \theta((n-2)T) |v_1 - u_1| \theta(|u_2 - v_1|) |v_2 - u_2| \cdots \theta(|u_{j'_1} - v_{j'_1-1}|) |v_{j'_1} - u_{j'_1}| \\
    &\quad\quad\quad\quad\quad\quad\quad\quad\quad\quad\quad\quad\quad\quad\quad\quad\quad\quad\quad \times |t_1 - s_1|\theta(|s_2 - t_1|) |t_2 - s_2| \cdots \theta(|s_{j'_2} - t_{j'_2-1}|) |t_{j'_2} - s_{j'_2}| \\
    &\lesssim  \sum_{j_1,j_2=1}^{k'} \sum_{\substack{i_1,\ldots, i_{j_1}\geq 1\\ i_1+\cdots +i_{j_1} = k'}} \sum_{\substack{e_1,\ldots, e_{j_2}\geq 1\\ e_1+\cdots +e_{j_2} = k'}} \sum_{j'_1 = 1}^{j_1} \sum_{j'_2 = 1}^{j_2} \theta((n-2)T) \\
    &\quad\quad\quad \times \bigg(\sum_{[u_1, v_1]\in\rho} |v_1 - u_1|\bigg) \bigg(\sum_{[u_2, v_2]\in\rho} \theta(|u_2|)|v_2 - u_2|\bigg) \cdots \bigg(\sum_{[u_{j'_1},v_{j'_1}] \in \rho} \theta(|u_{j'_1}|) |v_{j'_1} - u_{j'_1}| \bigg) \\
    &\quad\quad\quad\quad\quad\quad\quad\quad \times \bigg(\sum_{[s_1, t_1]\in\rho_n} |t_1 - s_1|\bigg) \bigg( \sum_{[s_2, t_2]\in\rho_n} \theta(|s_2|) t_2 - s_2|\bigg) \cdots \bigg(\sum_{[s_{j'_2},t_{j'_2}] \in \rho_n} \theta(|s_{j'_2}|) |t_{j'_2} - s_{j'_2}|\bigg) \\
    &\rightarrow \theta((n-2)T) \left(\sum_{j_1,j_2=1}^{k'} \sum_{\substack{i_1,\ldots, i_{j_1}\geq 1\\ i_1+\cdots +i_{j_1} = k'}} \sum_{\substack{e_1,\ldots, e_{j_2}\geq 1\\ e_1+\cdots +e_{j_2} = k'}} \sum_{j'_1 = 1}^{j_1} \sum_{j'_2 = 1}^{j_2} T^2 \bigg( \int_0^T \theta(t) \text{d}t\bigg)^{j'_1 + j'_2 - 2} \right) ,\quad |\rho|\rightarrow 0.
\end{align*}
\normalsize

Combining this bound with \eqref{eqn:cov_sig_approx}, we can conclude that, for all $n\geq 3$,		
\begin{align*}
    &|\mathrm{Cov}\left(S^I(\mathbb{X}^{1})_{[0,T]}, S^I(\mathbb{X}^{n})_{[0,T]}\right) | \lesssim \theta((n-2)T) \rightarrow 0,\quad n\rightarrow \infty,
\end{align*}
i.e.\ we have shown \eqref{eqn:cov_vanishing}. Hence, we can apply the weak law of large numbers of dependent random variables and the in-fill asymptotics to obtain the desired consistency result.
\hfill \qed

\section{Variance Reduction via Martingale Correction} \label{app:martingale_correction}
In this Section~\ref{sec:martingale_correction} we considered a single control obtained by substituting the outermost Stratonovich integral with an It\^{o} integral. In principle, for a word of length $|I|=k\geq 2$, one could consider $2^{k-2}$ distinct controls: for any subset $\mathcal{I} \subseteq \{2, \ldots, k-1\}$, one can obtain a control by changing each of the integrals with index in $\mathcal{I} \cup \{k\}$ to It\^{o} integrals. One can then apply the controlled linear regression estimator (with only the intercept term as regressor) described in Appendix~\ref{app:controlled_linear_regression}. This family of controls will likely be highly correlated and hence:
\begin{itemize}
    \item the improvements provided by each additional control might be quite marginal compared to the considerable increase in computational cost; 
    \item the estimator of the (inverse) variance matrix  of the controls, needed to make the estimator feasible, might be quite unstable.  
\end{itemize} 
Hence, for clarity of exposition and computational ease, throughout the rest of this work we only consider the control variate estimator introduced in Section~\ref{sec:martingale_correction}, i.e.\ for a fixed word $I$
\begin{equation} \label{eqn:control_variate_discretized_est}
   \hat{\phi}_I^{N, \pi, c}(T) := \frac{1}{N} \sum_{n=1}^N \Big(S^I(\mathbb{X}^{n, \pi})_{[0,T]} - c S_c^I(\mathbb{X}^{n, \pi})_{[0,T]}\Big),
\end{equation}
where $I_{-1} := (i_1, \ldots, i_{k-1})$ and
\[
S_c^I(\mathbb{X}^{\pi})_{[0,T]} := \sum_{[u, v] \in \pi} S^{I_{-1}}(\mathbb{X}^\pi)_{[0, u]} X^{(i_k)}_{u,v}.
\]
    
\subsection{Martingale Continuity Criterion} \label{app:martingale_continuity}
If $\mathbb{X}$ is a martingale, then, by the Burkholder-Davis-Gundy (BDG) inequality \citep{bdginequality}, we can write a stronger version of assumptions \eqref{ass:alpha} in terms of the quadratic variation of $\mathbb{X}$, for all $0\leq s \leq t \leq T$
\begin{enumerate}[style=multiline, labelwidth=3em, leftmargin=4em]
    \item[{\normalfont(\itemlabel{ass:alpha_M}{A$\alpha$.M})}] $\displaystyle \|\langle\mathbf{X}\rangle_{s,t} \|_{L^{p/2}} \lesssim |t-s|^{2\alpha}$.
\end{enumerate}	  
Note that for many martingales assumption \eqref{ass:alpha_M} holds with $\alpha=1/2$ and hence, since \eqref{ass:delta} holds trivially, we can usually apply Theorem~\ref{thm:sig_conv_lm} under \textit{(ii)}. Some non-trivial degenerate cases exist: For example, consider a one-dimensional mean-zero Gaussian martingale over $[0,1]$ with covariance function $C(s \wedge t)$ where $C$ is the Cantor function. This process has quadratic variation $\langle X\rangle_t = C(t)$ and hence -- since the Cantor function is H\"{o}lder continuous with H\"{o}lder exponent $\log_3 (2)$ -- assumption \eqref{ass:alpha_M} is satisfied with $\alpha = \frac{1}{2} \log_3 2\in(1/4, 1/3]$. In this case, one can easily check that \eqref{ass:beta} and \eqref{ass:gamma} hold by combining the independent increments and martingale property of $\mathbb{X}$.

\subsection{Estimating $c_\pi^*$} \label{app:martingale_correction_c*}
In Section~\ref{sec:martingale_correction}, we considered the following estimator for $c^*_\pi$:
\[
    \hat{c}^*_{\pi,1} = \frac{\sum_{n=1}^N S^I(\mathbb{X}^{n, \pi})_{[0,T]} S_c^I(\mathbb{X}^{n, \pi})_{[0,T]}}{\sum_{n=1}^N S_c^I(\mathbb{X}^{n, \pi})^2_{[0,T]}}.
\]
Alternatively, we can exploit the explicit form of the covariance and variance of the infeasible estimator and approximate 
\begin{align*}
    \text{Cov}(S^I(\mathbb{X}^\pi)_{[0,T]}, S^{I}_c(\mathbb{X}^\pi)_{[0,T]}) &\approx \text{Cov}(S^I(\mathbb{X})_{[0,T]}, S^{I}_c(\mathbb{X})_{[0,T]}) \\
    &= \sum_{J\in I \shuffle I} \psi_J(T) - \frac{1}{2} \sum_{J\in I \shuffle I_{-2} * ((i_{k-1}, i_k))} \psi_J(T) \\
    &\approx  \sum_{J\in I \shuffle I} \hat\psi^{N}_J(T) - \frac{1}{2} \sum_{J\in I \shuffle I_{-2} * ((i_{k-1}, i_k))} \hat\psi^{N}_J(T) \\
    &\approx  \sum_{J\in I \shuffle I} \hat\psi^{N, \pi}_J(T) - \frac{1}{2} \sum_{J\in I \shuffle I_{-2} * ((i_{k-1}, i_k))} \hat\psi^{N, \pi}_J(T) \\
    &\approx  \sum_{J\in I \shuffle I} \hat\psi^{N, \pi, \prime}_J(T) - \frac{1}{2} \sum_{J\in I \shuffle I_{-2} * ((i_{k-1}, i_k))} \hat\psi^{N, \pi, \prime}_J(T),
\end{align*}
where 
\begin{equation*}
    \hat{\psi}_J^{N}(T) := \frac{1}{N} \sum_{n=1}^N S^J((\mathbb{X}, \langle\mathbb{X}\rangle)^{n})_{[0,T]}, \quad
    \hat{\psi}_J^{N, \pi}(T) := \frac{1}{N} \sum_{n=1}^N S^J((\mathbb{X}, \langle\mathbb{X}\rangle)^{n, \pi})_{[0,T]}, 
\end{equation*}
and
\begin{equation*}
    \hat{\psi}_J^{N, \pi, \prime}(T) := \frac{1}{N} \sum_{n=1}^N S^J((\mathbb{X}^{n, \pi}, \langle\hat{\mathbb{X}}\rangle^{n, \pi}))_{[0,T]},
\end{equation*}
with $\langle\hat{\mathbb{X}}\rangle^{\pi}=\{\langle\hat{\mathbf{X}}\rangle^{\pi}_t,\ t\in[0,T]\}$ defined as
\begin{equation*}
    \langle\hat{\mathbf{X}}\rangle^{\pi}_t = \sum_{[u',v']\in\pi_{[0,u]}} \mathbf{X}_{u',v'}^2 + \frac{t-u}{v-u} \mathbf{X}^2_{u,v}, \quad t\in[u,v].
\end{equation*}

Similarly, we can approximate
\begin{align*}
    \text{Var}(S^{I}_c(\mathbb{X}^\pi)_{[0,T]}) \approx \text{Var}(S_c^I(\mathbb{X})_{[0,T]}) 
    \approx \sum_{J\in I_{-1} \shuffle I_{-1}} \hat\psi^{N, \pi}_{J * ((i_k, i_k))}(T) 
    \approx \sum_{J\in I_{-1} \shuffle I_{-1}} \hat\psi^{N, \pi, \prime}_{J * ((i_k, i_k))}(T), 
\end{align*}
and hence we define the second estimator for $c^*_\pi$ as
\begin{align*}
    \hat{c}^*_{\pi,2} &= \frac{\sum_{J\in I \shuffle I} \hat\psi^{N, \pi, \prime}_J(T) - \frac{1}{2} \sum_{J\in I \shuffle I_{-2} * ((i_{k-1}, i_k))} \hat\psi^{N, \pi, \prime}_J(T)}{\sum_{J\in I_{-1} \shuffle I_{-1}} \hat\psi^{N, \pi, \prime}_{J * ((i_k, i_k))}(T)} \\
    &= \frac{\sum_{n=1}^N\! \Big(\!\sum_{J\in I \shuffle I}\! S^J((\mathbb{X}^{n, \pi}, \langle\hat{\mathbb{X}}\rangle^{n, \pi}))_{[0,T]} - \frac{1}{2} \sum_{J\in I \shuffle I_{-2} * ((i_{k-1}, i_k))} \! S^J((\mathbb{X}^{n, \pi}, \langle\hat{\mathbb{X}}\rangle^{n, \pi}))_{[0,T]}\Big)}{\sum_{n=1}^N \sum_{J\in I_{-1} \shuffle I_{-1}} S^{J*((i_k, i_k))}((\mathbb{X}^{n, \pi}, \langle\hat{\mathbb{X}}\rangle^{n, \pi}))_{[0,T]} }.
\end{align*}

Whether estimator $\hat{c}^*_{\pi, 1}$ or estimator $\hat{c}^*_{\pi, 2}$ is more precise, in terms of MSE,  depends on the properties of the process $\mathbb{X}$ and the expected signature word $I$ being estimated by \eqref{eqn:control_variate_discretized_est}.
\begin{lemma} \label{lemma:c_hats}
    Let $\mathbb{X}=\{\mathbf{X}_t,\ t\in[0,T]\}$ be a square-integrable martingale satisfying Assumption \eqref{ass:alpha_M}, for some $\alpha \geq 1/2$ and $p = 4k$ where $k=|I|$. Assume that $\pi$ is part of a sequence of refining partitions with mesh vanishing fast enough, i.e.\ \[\sum_{n\geq 1} |\pi_n|^{2\alpha-1/2} < \infty.\] 
    Then the difference between the mean-square errors of the two estimators $\hat{c}^*_{\pi, 1}$ and $\hat{c}^*_{\pi, 2}$ for $c_\pi^*$ is approximately
    \begin{align*}
    \mathbb{E}[(\hat{c}^*_{\pi, 2} - c^*_{\pi})^2] - \mathbb{E}[(\hat{c}^*_{\pi, 1} - c^*_{\pi})^2] 
    &\approx \frac{1}{N} \frac{\mu_Y}{\mu_Z^3} \left(\frac{\mu_Y}{\mu_Z} (\mathbb{E}[Z_2^2] - \mathbb{E}[Z_1^2]) - 2 (\mathbb{E}[Y Z_2] - \mathbb{E}[Y Z_1])\right),
\end{align*}
where
\begin{align*}
    Y = S^I(\mathbb{X})_{[0,T]} S_c^I(\mathbb{X})_{[0,T]},\quad 
    Z_1 = S_c^I(\mathbb{X})^2_{[0,T]}, \quad
    Z_2 = \sum_{J\in I_{-1} \shuffle I_{-1}} S^{J*((i_k, i_k))}((\mathbb{X}, \langle\mathbb{X}\rangle))_{[0,T]},
\end{align*}
and $\mu_Y = \mathbb{E}[Y]$, $\mu_Z = \mathbb{E}[Z_1]=\mathbb{E}[Z_2]$.
\end{lemma}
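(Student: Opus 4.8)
The plan is to recognize both $\hat{c}^*_{\pi,1}$ and $\hat{c}^*_{\pi,2}$ as ratios of i.i.d.\ sample means sharing a \emph{common} numerator, to pass to their continuous-time surrogates via the in-fill asymptotics of Theorem~\ref{thm:sig_conv_lm}, and then to apply the delta method, using the equality of denominator means to trigger the cancellations that produce the stated expression. As a first step I would note that $\hat{c}^*_{\pi,1}$ is the ratio of the sample means of $S^I(\mathbb{X}^{n,\pi})_{[0,T]}S_c^I(\mathbb{X}^{n,\pi})_{[0,T]}$ and $S_c^I(\mathbb{X}^{n,\pi})^2_{[0,T]}$, while for $\hat{c}^*_{\pi,2}$ the shuffle identity, applied to the piecewise-linear bounded-variation path $(\mathbb{X}^\pi,\langle\hat{\mathbb{X}}\rangle^\pi)$, gives $\sum_{J\in I\shuffle I}S^J=(S^I)^2$, $\sum_{J\in I\shuffle(I_{-2}*((i_{k-1},i_k)))}S^J=S^I\,S^{I_{-2}*((i_{k-1},i_k))}$, and $\sum_{J\in I_{-1}\shuffle I_{-1}}S^{J*((i_k,i_k))}$ equals $(S^{I_{-1}})^2$ integrated against the $(i_k,i_k)$-bracket coordinate; hence $\hat{c}^*_{\pi,2}$ is again a ratio of i.i.d.\ sample means.

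Since $\mathbb{X}$ is a square-integrable martingale, Assumption~\eqref{ass:alpha} follows from \eqref{ass:alpha_M} by the BDG inequality and \eqref{ass:delta} holds trivially, so Theorem~\ref{thm:sig_conv_lm} applies with $m=4$ and $p=4k$; together with its analogue for the augmented path $(\mathbb{X},\langle\mathbb{X}\rangle)$, the convergence of the realized quadratic variation $\langle\hat{\mathbb{X}}\rangle^\pi\to\langle\mathbb{X}\rangle$, and the stated mesh condition, this yields $L^4$-convergence of all level-$\le k$ discretized signature terms to their continuous-time limits at rate $O\big(\sum_{n'\ge n}|\pi_{n'}|^{2\alpha-1/2}\big)$. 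Every product of two such terms therefore converges in $L^2$, so, up to this in-fill error, $\hat{c}^*_{\pi,1}\approx\bar Y/\bar Z_1$ and $\hat{c}^*_{\pi,2}\approx\bar Y/\bar Z_2$ with the \emph{same} numerator $\bar Y=\tfrac1N\sum_{n}Y^{(n)}$; here the Stratonovich--It\^{o} correction $S^I(\mathbb{X})_{[0,T]}-S_c^I(\mathbb{X})_{[0,T]}=\tfrac12 S^{I_{-2}*((i_{k-1},i_k))}((\mathbb{X},\langle\mathbb{X}\rangle))_{[0,T]}$ identifies the limit of the second numerator with $S^I(\mathbb{X})_{[0,T]}S_c^I(\mathbb{X})_{[0,T]}$, the limit of the second denominator with $Z_2=\langle S_c^I(\mathbb{X})\rangle_T$, and $c^*_\pi\approx c^*:=\mu_Y/\mu_Z$. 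Finally, because $S_c^I(\mathbb{X})$ is a square-integrable martingale (via \citet[Theorem~I.4.40]{Jacod_Shiryaev_1987}, as in the main text), It\^{o} isometry gives $\mathbb{E}[Z_1]=\mathbb{E}[S_c^I(\mathbb{X})^2_{[0,T]}]=\mathbb{E}[\langle S_c^I(\mathbb{X})\rangle_T]=\mathbb{E}[Z_2]=:\mu_Z$, finite and strictly positive by the standing assumption $\mathrm{Var}(S_c^I(\mathbb{X}^\pi)_{[0,T]})\in(0,\infty)$ carried to the limit.

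With these reductions in hand, I would Taylor-expand $g(y,z)=y/z$ to second order about $(\mu_Y,\mu_Z)$: $\bar Y/\bar Z_j-c^*=\mu_Z^{-1}(\bar Y-\mu_Y)-\mu_Y\mu_Z^{-2}(\bar Z_j-\mu_Z)+R_{N,j}$, where finiteness of the relevant fourth moments (again $p=4k$) and $\mu_Z>0$ give $\mathbb{E}[R_{N,j}^2]=o(1/N)$. Squaring, taking expectations and using independence across the $N$ samples,
\[
\mathbb{E}\big[(\bar Y/\bar Z_j-c^*)^2\big]=\tfrac1N\big(\mu_Z^{-2}\mathrm{Var}(Y)-2\mu_Y\mu_Z^{-3}\mathrm{Cov}(Y,Z_j)+\mu_Y^2\mu_Z^{-4}\mathrm{Var}(Z_j)\big)+o(1/N).
\]
Subtracting the $j=1$ case from the $j=2$ case cancels the $\mathrm{Var}(Y)$ term; using $\mathbb{E}[Z_1]=\mathbb{E}[Z_2]=\mu_Z$ to write $\mathrm{Cov}(Y,Z_2)-\mathrm{Cov}(Y,Z_1)=\mathbb{E}[YZ_2]-\mathbb{E}[YZ_1]$ and $\mathrm{Var}(Z_2)-\mathrm{Var}(Z_1)=\mathbb{E}[Z_2^2]-\mathbb{E}[Z_1^2]$ and collecting terms then yields exactly $\tfrac1N\tfrac{\mu_Y}{\mu_Z^3}\big(\tfrac{\mu_Y}{\mu_Z}(\mathbb{E}[Z_2^2]-\mathbb{E}[Z_1^2])-2(\mathbb{E}[YZ_2]-\mathbb{E}[YZ_1])\big)$; replacing $\bar Y/\bar Z_j$ and $c^*$ by $\hat{c}^*_{\pi,j}$ and $c^*_\pi$ up to the in-fill error of the previous step accounts for the symbol $\approx$.

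I expect the main obstacle to be making the chain of $\approx$'s quantitative: controlling the discretization gap between each $\pi$-indexed estimator (and between $c^*_\pi$ and $c^*$) and its continuous-time surrogate through Theorem~\ref{thm:sig_conv_lm} and the convergence of the realized quadratic variation, which is exactly why the $p=4k$ and fast-vanishing-mesh hypotheses are imposed, together with the shuffle- and bracket-coordinate bookkeeping that, via the Stratonovich--It\^{o} correction, identifies the limits of the second numerator and second denominator with $S^I(\mathbb{X})_{[0,T]}S_c^I(\mathbb{X})_{[0,T]}$ and $\langle S_c^I(\mathbb{X})\rangle_T$ respectively. The delta-method remainder estimate and the closing cancellation algebra are then routine.
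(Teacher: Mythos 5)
Your proposal is correct and follows essentially the same route as the paper's proof: both treat $\hat{c}^*_{\pi,1}$ and $\hat{c}^*_{\pi,2}$ as ratio estimators, pass to the continuous-time surrogates $Y, Z_1, Z_2$ via the in-fill asymptotics of Theorem~\ref{thm:sig_conv_lm} (with the martingale/BDG reduction and the quadratic-variation approximations), apply the first-order expansion of the ratio, and exploit $Y_1=Y_2$ and $\mathbb{E}[Z_1]=\mathbb{E}[Z_2]$ to obtain the stated cancellation. The only cosmetic difference is that you invoke the It\^{o} isometry explicitly to equate $\mathbb{E}[Z_1]$ and $\mathbb{E}[Z_2]$, whereas the paper routes this through the shuffle-identity bookkeeping of Appendix~\ref{app:martingale_correction_c*}; both arguments are equivalent.
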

\begin{proof}
    See Appendix~\ref{app:proofs_martingale_correction}.
\end{proof}

In practical applications, the above expression cannot be evaluated exactly, but we can approximate it by its sample estimate
\begin{align*}
    \mathbb{E}[(\hat{c}^*_{\pi, 2} - c^*_{\pi})^2] - \mathbb{E}[(\hat{c}^*_{\pi, 1} - c^*_{\pi})^2] 
    &\propto \frac{1}{N^2} \sum_{n=1}^N \left(\frac{\bar{\mu}_Y}{\bar{\mu}_Z} ((Z_{2, n}^\pi)^2 - (Z_{1, n}^\pi)^2) - \sum_{j=1}^2( Y^\pi_{j, n} Z^\pi_{2, n} - Y^\pi_{j, n} Z^\pi_{1, n})\right).
\end{align*}
where
\begin{align*}
    \bar{\mu}_Y = \frac{1}{2N} \left(\sum_{n=1}^N Y^\pi_{1, n} + \sum_{n=1}^N Y^\pi_{2, n}\right), \quad 
    \bar{\mu}_Z = \frac{1}{2N} \left(\sum_{n=1}^N Z^\pi_{1, n} + \sum_{n=1}^N Z^\pi_{2, n}\right),
\end{align*}
and $Y_{1, n}^\pi, Z_{1, n}^\pi, Y_{2,n}^\pi, Z_{2, n}^\pi$ are given in Appendix~\ref{app:proofs_martingale_correction}.

Another important discriminant when choosing between estimators $\hat{c}^*_{\pi, 1}$ and $\hat{c}^*_{\pi, 2}$ is usually computational cost. To compute $\hat{c}^*_{\pi, 1}$ it suffices to regress $\{S^I(\mathbb{X}^{n, \pi})_{[0,T]},\ n=1,\ldots, N\}$ against $\{S_c^I(\mathbb{X}^{n, \pi})_{[0,T]},\ n=1,\ldots, N\}$. Both samples need to be computed to evaluate the control-variate estimator \eqref{eqn:control_variate_discretized_est} and thus the extra computational cost of $\hat{c}^*_{\pi, 1}$ is just the cost of a simple linear regression with sample size $N$, namely $\mathcal{O}(N)$. On the other hand, to compute $\hat{c}^*_{\pi, 2}$, one needs to compute all the higher-order expected signature estimates $\hat{\psi}_J^{N, \pi, \prime}(T)$ with $J\in I\shuffle I$, $J\in I\shuffle I_{-2}*((i_{k-1}, i_k))$ and $J=J'*((i_k, i_k))$ for $J'\in I_{-1}\shuffle I_{-1}$, which has (naive) extra computational cost $\mathcal{O}(|\pi|^{-1} T k (d^{2k} + (d+1)^{2k-1}))$ when parallelizing across the $N$ samples. In the in-fill limit, $|\pi|^{-1}T \gg N$ and, hence, computing $\hat{c}^*_{\pi, 2}$ is significantly more expensive than computing $\hat{c}^*_{\pi, 1}$.

\subsection{Proof of Lemma~\ref{lemma:c_hats}} \label{app:proofs_martingale_correction}
\begin{quote}
	\onehalfspacing\small\itshape
	\textbf{Sketch of proof.} To compare the two estimators $\hat{c}^*_{\pi, 1}$ and $\hat{c}^*_{\pi, 2}$ we exploit their structure as ratio estimators based on i.i.d.\ observations of numerator random variables $Y^\pi_1, Y_2^\pi$ and denominator random variables $Z^\pi_1, Z^\pi_2$ respectively. We first show that the two estimators are both (biased) estimators for $c^*_{\pi} = c^*_{\pi, 1} = c^*_{\pi, 2}$ where
	\vspace{-1em}
	\[
	c^*_{\pi, 1} = \frac{\mathbb{E}[Y^\pi_1]}{\mathbb{E}[Z^\pi_1]}\quad \text{and}\quad c^*_{\pi, 2} = \frac{\mathbb{E}[Y^\pi_2]}{\mathbb{E}[Z^\pi_2]}.
	\vspace{-1em}
	\]
	The first equality is trivial while the second requires several applications of Theorem~\ref{thm:sig_conv_lm} which are detailed in Lemma~\ref{lemma:approx_mart}. We then derive a simple formula for the mean squared error of ratio estimators in terms of the means and variances of the numerator and denominator random variables. Taking the limit $|\pi| \downarrow 0$ and applying Theorem~\ref{thm:sig_conv_lm} to show the second order statistics of $Y^\pi_1, Y_2^\pi, Z^\pi_1, Z^\pi_2$ converge to those of $Y_1, Y_2, Z_1, Z_2$ yields the desired result.
\end{quote}

Note that both $\hat{c}^*_{\pi, 1}$ and $\hat{c}^*_{\pi, 2}$ are ratio estimators of the form
\[
\hat{c}^*_{\pi, j} = \frac{\bar{Y}_j^\pi}{\bar{Z}_j^\pi} = \frac{\sum_{n=1}^N Y_{j,n}^\pi}{\sum_{n=1}^N Z_{j,n}^\pi},
\]
for random variables $Y_{j, 1}^\pi, \ldots, Y_{j, N}^\pi$ and $Z_{j,1}^\pi, \ldots, Z_{j, N}^\pi$ with $j=1, 2$ given by i.i.d.\ copies of
\[
Y_1^\pi = S^I(\mathbb{X}^{\pi})_{[0,T]} S_c^I(\mathbb{X}^{\pi})_{[0,T]}, \quad Z_1^\pi = S_c^I(\mathbb{X}^{\pi})^2_{[0,T]},
\]
and
\begin{align*}
    Y_2^\pi = \sum_{J\in I \shuffle I} S^J&((\mathbb{X}^{\pi}, \langle\hat{\mathbb{X}}\rangle^{\pi}))_{[0,T]} - \frac{1}{2} \sum_{J\in I \shuffle I_{-2} * ((i_{k-1}, i_k))} S^J((\mathbb{X}^{\pi}, 
    \langle\hat{\mathbb{X}}\rangle^{\pi}))_{[0,T]},\\
    Z_2^\pi &= \sum_{J\in I_{-1} \shuffle I_{-1}} S^{J*((i_k, i_k))}((\mathbb{X}^{\pi}, \langle\hat{\mathbb{X}}\rangle^{\pi}))_{[0,T]}.
\end{align*}
By the standard theory of ratio estimators, these are biased estimators for 
\[
c^*_{\pi, j} = \frac{\mathbb{E}[Y_j^\pi]}{\mathbb{E}[Z_j^\pi]}, 
\]
and applying a first-order Taylor expansion\footnote{
    On the set $|\bar{Z}_j^\pi - \mathbb{E}[Z_j^\pi]| < |\mathbb{E}[Z_j^\pi]|$, 
    \begin{align*}
        \hat{c}_{\pi, j} &= c^*_{\pi, j}\left( 1 + \frac{\bar{Y}_j^\pi - \mathbb{E}[Y_j^\pi]}{\mathbb{E}[Y_j^\pi]} \right)\left(1+\frac{\bar{Z}_j^\pi - \mathbb{E}[Z_j^\pi]}{\mathbb{E}[Z_j^\pi]}\right)^{-1} \\
        &= c^*_{\pi, j}\left( 1 + \frac{\bar{Y}_j^\pi - \mathbb{E}[Y_j^\pi]}{\mathbb{E}[Y_j^\pi]} \right)\left(1- \frac{\bar{Z}_j^\pi - \mathbb{E}[Z_j^\pi]}{\mathbb{E}[Z_j^\pi]} + \mathcal{O}\left(\left(\frac{\bar{Z}_j^\pi - \mathbb{E}[Z_j^\pi]}{\mathbb{E}[Z_j^\pi]}\right)^2\right) \right) \\
        &= c^*_{\pi, j} + \frac{1}{\mathbb{E}[Z_j^\pi]} (\bar{Y}_j^\pi - \mathbb{E}[Y_j^\pi]) - \frac{\mathbb{E}[Y_j^\pi]}{\mathbb{E}[Z_j^\pi]^2} (\bar{Z}_j^\pi - \mathbb{E}[Z_j^\pi]) + \mathcal{O}\left( \left(\frac{\bar{Y}_j^\pi - \mathbb{E}[Y_j^\pi]}{\mathbb{E}[Y_j^\pi]}\right) \left(\frac{\bar{Z}_j^\pi - \mathbb{E}[Z_j^\pi]}{\mathbb{E}[Z_j^\pi]}\right) \right).
    \end{align*}
    For the approximation to be rigorous one needs to assume the probability of the set $\{ \omega: |\bar{Z}_j^\pi(\omega) - \mathbb{E}[Z_j^\pi]| \geq  |\mathbb{E}[Z_j^\pi]|\}$ approaches 0 faster than the speed at which the MSE conditional on this set explodes as $N\rightarrow\infty$. 
} 
we can approximate the mean squared error as
\begin{align*}
    \mathbb{E}[(\hat{c}^*_{\pi, j} - c^*_{\pi, j})^2] &\approx \frac{1}{\mathbb{E}[Z_j^\pi]^2} \text{Var}(\bar{Y}_j^\pi) + \frac{\mathbb{E}[Y_j^\pi]^2}{\mathbb{E}[Z_j^\pi]^4} \text{Var}(\bar{Z}_j^\pi) - 2 \frac{\mathbb{E}[Y_j^\pi]}{\mathbb{E}[Z_j^\pi]^3} \text{Cov}(\bar{Y}_j^\pi,\bar{Z}_j^\pi) \\
    &\approx \frac{1}{N} \left(\frac{1}{\mathbb{E}[Z_j^\pi]^2} \text{Var}(Y_j^\pi) + \frac{\mathbb{E}[Y_j^\pi]^2}{\mathbb{E}[Z_j^\pi]^4} \text{Var}(Z_j^\pi) - 2 \frac{\mathbb{E}[Y_j^\pi]}{\mathbb{E}[Z_j^\pi]^3} \text{Cov}(Y_j^\pi,Z_j^\pi)\right).
\end{align*}

Note that 
\[
c^*_{\pi, 1} = \frac{\mathbb{E}[S^I(\mathbb{X}^{\pi})_{[0,T]} S_c^I(\mathbb{X}^{\pi})_{[0,T]}]}{\mathbb{E}[S_c^I(\mathbb{X}^{\pi})^2_{[0,T]}]} = \frac{\text{Cov}(S^I(\mathbb{X}^{\pi})_{[0,T]}, S_c^I(\mathbb{X}^{\pi})_{[0,T]})}{\text{Var}(S_c^I(\mathbb{X}^{\pi})^2_{[0,T]})} = c^*_\pi,
\]
and, as $|\pi|\rightarrow 0$, 
\begin{align*}
    c^*_{\pi, 2} &= \frac{\sum_{J\in I \shuffle I} \mathbb{E}[S^J((\mathbb{X}^{\pi}, \langle\hat{\mathbb{X}}\rangle^{\pi}))_{[0,T]}] - \frac{1}{2} \sum_{J\in I \shuffle I_{-2} * ((i_{k-1}, i_k))} \mathbb{E}[S^J((\mathbb{X}^{\pi}, \langle\hat{\mathbb{X}}\rangle^{\pi}))_{[0,T]}]}{\sum_{J\in I_{-1} \shuffle I_{-1}} \mathbb{E}[S^{J*((i_k, i_k))}((\mathbb{X}^{\pi}, \langle\hat{\mathbb{X}}\rangle^{\pi}))_{[0,T]}]} \\
    &\overset{\eqref{eqn:sig_QV_approx_2}}{\approx} \frac{\sum_{J\in I \shuffle I} \mathbb{E}[S^J((\mathbb{X}, \langle\mathbb{X}\rangle)^{\pi})_{[0,T]}] - \frac{1}{2} \sum_{J\in I \shuffle I_{-2} * ((i_{k-1}, i_k))} \mathbb{E}[S^J((\mathbb{X}, \langle\mathbb{X}\rangle)^{\pi})_{[0,T]}]}{\sum_{J\in I_{-1} \shuffle I_{-1}} \mathbb{E}[S^{J*((i_k, i_k))}((\mathbb{X}, \langle\mathbb{X}\rangle)^{\pi})_{[0,T]}]} \\
    &\overset{\eqref{eqn:sig_QV_approx_1}}{\approx} \frac{\sum_{J\in I \shuffle I} \mathbb{E}[S^J((\mathbb{X}, \langle\mathbb{X}\rangle))_{[0,T]}] - \frac{1}{2} \sum_{J\in I \shuffle I_{-2} * ((i_{k-1}, i_k))} \mathbb{E}[S^J((\mathbb{X}, \langle\mathbb{X}\rangle))_{[0,T]}]}{\sum_{J\in I_{-1} \shuffle I_{-1}} \mathbb{E}[S^{J*((i_k, i_k))}((\mathbb{X}, \langle\mathbb{X}\rangle))_{[0,T]}]} \\
    &= \frac{\text{Cov}(S^I(\mathbb{X})_{[0,T]}, S_c^I(\mathbb{X})_{[0,T]})}{\text{Var}(S_c^I(\mathbb{X})^2_{[0,T]})} \\
    &\overset{\eqref{eqn:cov_pi_approx}, \eqref{eqn:var_pi_approx}}{\approx} \frac{\text{Cov}(S^I(\mathbb{X}^{\pi})_{[0,T]}, S_c^I(\mathbb{X}^{\pi})_{[0,T]})}{\text{Var}(S_c^I(\mathbb{X}^{\pi})^2_{[0,T]})} = c^*_\pi.
\end{align*}
We refer to Lemma~\ref{lemma:approx_mart} for the rigorous justification of the approximations \eqref{eqn:cov_pi_approx}, \eqref{eqn:var_pi_approx}, \eqref{eqn:sig_QV_approx_1}, \eqref{eqn:sig_QV_approx_2}. Moreover, as $|\pi|\rightarrow 0$, by \eqref{eqn:cov_pi_approx} and \eqref{eqn:var_pi_approx},
\begin{align*}
    Y_1^\pi \overset{L^2}{\rightarrow} S^I(\mathbb{X})_{[0,T]} S_c^I(\mathbb{X})_{[0,T]} =: Y_1,\quad
    Z_1^\pi \overset{L^2}{\rightarrow} S_c^I(\mathbb{X})^2_{[0,T]} =: Z_1,
\end{align*}
and by combining \eqref{eqn:sig_QV_approx_1} and \eqref{eqn:sig_QV_approx_2},
\begin{align*}
    Y_2^\pi \overset{L^2}{\rightarrow} \sum_{J\in I \shuffle I} &S^J((\mathbb{X}, \langle\mathbb{X}\rangle))_{[0,T]} - \frac{1}{2} \sum_{J\in I \shuffle I_{-2} * ((i_{k-1}, i_k))} S^J((\mathbb{X}, \langle\mathbb{X}\rangle))_{[0,T]} = : Y_2,\\
    Z_2^\pi &\overset{L^2}{\rightarrow} \sum_{J\in I_{-1} \shuffle I_{-1}} S^{J*((i_k, i_k))}((\mathbb{X}, \langle\mathbb{X}\rangle))_{[0,T]} =: Z_2.
\end{align*}
Note that $Y_1 = Y_2=:Y$ but $Z_1\neq Z_2$ even though $\mathbb{E}[Z_1]=\mathbb{E}[Z_2]=:\mu_Z$. When $|\pi|$ is small we can thus approximate the MSEs of $\hat{c}^*_{\pi, 1}$ and $\hat{c}^*_{\pi, 2}$ with respect to $c^*_\pi$ as
\begin{align*}
    \mathbb{E}[(\hat{c}^*_{\pi, 1} - c^*_{\pi})^2] 
    &\approx \frac{1}{N} \left(\frac{1}{\mu_Z^2} \sigma^2_Y + \frac{\mu_Y^2}{\mu_Z^4} \text{Var}(Z_1) - 2 \frac{\mu_Y}{\mu_Z^3} \text{Cov}(Y, Z_1)\right),
\end{align*}
and
\begin{align*}
    \mathbb{E}[(\hat{c}^*_{\pi, 2} - c^*_{\pi})^2] 
    &\approx \frac{1}{N} \left(\frac{1}{\mu_Z^2} \sigma^2_Y + \frac{\mu_Y^2}{\mu_Z^4} \text{Var}(Z_2) - 2 \frac{\mu_Y}{\mu_Z^3} \text{Cov}(Y, Z_2)\right),
\end{align*}
which differ by
\begin{align*}
    \mathbb{E}[(\hat{c}^*_{\pi, 2} - c^*_{\pi})^2] - \mathbb{E}[(\hat{c}^*_{\pi, 1} - c^*_{\pi})^2] 
    &\approx \frac{1}{N} \frac{\mu_Y}{\mu_Z^3} \left(\frac{\mu_Y}{\mu_Z} (\mathbb{E}[Z_2^2] - \mathbb{E}[Z_1^2]) - 2 (\mathbb{E}[Y Z_2] - \mathbb{E}[Y Z_1])\right).
\end{align*}

\begin{remark}
    Note that we can ``mix'' the two estimators and form
    \[
    \hat{c}^*_{\pi, 2, 1} = \frac{\bar{Y}^\pi_2}{\bar{Z}^\pi_1} \quad \text{and} \quad \hat{c}^*_{\pi, 1, 2} = \frac{\bar{Y}^\pi_1}{\bar{Z}^\pi_2}.
    \]
    The discussion above ensures that, as $|\pi|\rightarrow 0$, $\hat{c}^*_{\pi, 2, 1}$ and $\hat{c}^*_{\pi, 1, 2}$ have the same MSEs as $\hat{c}^*_{\pi, 1, 1} = \hat{c}_{\pi,1}$ and $\hat{c}^*_{\pi, 2, 2} = \hat{c}^*_{\pi, 2}$, respectively.
\end{remark}

\begin{lemma} \label{lemma:approx_mart}
Assume that $\mathbb{X}$ satisfies \eqref{ass:alpha_M} for some $\alpha \geq 1/2$ and $p = 4k$ where $k=|I|$. Assume that $\pi$ is part of a sequence of refining partitions with mesh vanishing fast enough, i.e.\ \[\sum_{n\geq 1} |\pi_n|^{2\alpha-1/2} < \infty.\] 
Then the approximations \eqref{eqn:cov_pi_approx}, \eqref{eqn:var_pi_approx}, \eqref{eqn:sig_QV_approx_1}, \eqref{eqn:sig_QV_approx_2} hold as $|\pi|\rightarrow 0$.
\end{lemma}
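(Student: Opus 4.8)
The plan is to reduce all four approximations to the in-fill convergence of Theorem~\ref{thm:sig_conv_lm}. The martingale continuity criterion of Appendix~\ref{app:martingale_continuity} turns \eqref{ass:alpha_M} (with $\alpha\ge 1/2$, $p=4k$) into \eqref{ass:alpha} at integrability $p=4k$, while \eqref{ass:delta} holds vacuously for any $\delta$ since $\mathbb{E}_s[\mathbf{X}_{s,t}]=0$; thus case \textit{(ii)} of Theorem~\ref{thm:sig_conv_lm} applies with $\epsilon=2\alpha-1/2$, exactly matching the hypothesis $\sum_n|\pi_n|^{2\alpha-1/2}<\infty$ (cf.\ \eqref{eqn:epsilon}). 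For a word $J$ of length $k'\le 2k$ the relevant convergence level is $m=p/k'=4k/k'\ge 2$, and the choice $p=4k$ is precisely what makes the moment bookkeeping close for every such word: the quadratic-variation coordinates are ``twice as regular'' and are built from $\mathbf{X}^{\otimes 2}$, hence need only half the integrability of the $\mathbb{X}$-coordinates.

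For \eqref{eqn:cov_pi_approx} and \eqref{eqn:var_pi_approx}, Theorem~\ref{thm:sig_conv_lm} gives $S^I(\mathbb{X}^\pi)_{[0,T]}\to S^I(\mathbb{X})_{[0,T]}$ in $L^4$ with uniformly bounded $L^4$-norm, and (Remark~\ref{rem:uniform_Lm}) $\sup_{u\in\pi}\|S^{I_{-1}}(\mathbb{X}^\pi)_{[0,u]}-S^{I_{-1}}(\mathbb{X})_{[0,u]}\|_{L^{4k/(k-1)}}\lesssim\sum_{n'\ge n}|\pi_{n'}|^{\epsilon}\to 0$. Viewing $S_c^I(\mathbb{X}^\pi)_{[0,T]}$ as a discrete martingale transform, discrete BDG and Lemma~\ref{lemma:holder_tensors} yield a uniform $L^4$ bound and, splitting the error into (a) the replacement of $S^{I_{-1}}(\mathbb{X}^\pi)_{[0,u]}$ by $S^{I_{-1}}(\mathbb{X})_{[0,u]}$ (controlled by the uniform in-fill rate above) and (b) the classical $L^4$ convergence of left-point Riemann sums of the continuous adapted integrand $s\mapsto S^{I_{-1}}(\mathbb{X})_{[0,s]}$ against $X^{(i_k)}$ (controlled by It\^o isometry and dominated convergence using \eqref{ass:alpha_M}), give $S_c^I(\mathbb{X}^\pi)_{[0,T]}\to S_c^I(\mathbb{X})_{[0,T]}$ in $L^4$. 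Since $S_c^I(\mathbb{X}^\pi)_{[0,T]}$ and $S_c^I(\mathbb{X})_{[0,T]}$ are mean-zero (a discrete martingale transform, resp.\ a square-integrable It\^o integral), the covariances and variances are just expectations of products, and convergence follows from Cauchy--Schwarz and the uniform $L^2$ bounds.

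For \eqref{eqn:sig_QV_approx_1}, the joint process $(\mathbb{X},\langle\mathbb{X}\rangle)$ is a continuous semimartingale, hence (Remark~\ref{rem:canonical_lifts}) a canonical geometric stochastic process for which every vanishing-mesh sequence of partitions is signature-defining; the It\^o-process in-fill conditions of Appendix~\ref{app:ito_processes_infill} — applied with \eqref{ass:alpha_M} and $\alpha\ge 1/2$ controlling the martingale block, and with $\langle\mathbb{X}\rangle$ handled as a continuous increasing (finite-variation) part — show it satisfies \eqref{ass:alpha} with $\alpha\ge 1/2$ and \eqref{ass:delta} with $\delta\ge 1$ at the integrability dictated by $p=4k$. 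Theorem~\ref{thm:sig_conv_lm}\textit{(ii)} then gives $S^J((\mathbb{X},\langle\mathbb{X}\rangle)^\pi)_{[0,T]}\to S^J((\mathbb{X},\langle\mathbb{X}\rangle))_{[0,T]}$ in $L^2$, hence in expectation, for every word $J$ of length at most $2k$.

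The main obstacle is \eqref{eqn:sig_QV_approx_2}, which swaps the true quadratic-variation increments $\langle\mathbb{X}\rangle_{u,v}$ for the realized-variance increments $\mathbf{X}_{u,v}^{\otimes 2}$ inside the iterated integrals. Since $(\mathbb{X},\langle\mathbb{X}\rangle)^\pi=(\mathbb{X}^\pi,\langle\mathbb{X}\rangle^\pi)$, the two piecewise-linear paths in \eqref{eqn:sig_QV_approx_2} differ only in the quadratic-variation block, by the piecewise-linear interpolation of the discrete martingale $M^\pi_{t_j}=\sum_{i<j}\Delta_{t_i,t_{i+1}}$ with $\Delta_{u,v}:=\mathbf{X}_{u,v}^{\otimes 2}-\langle\mathbb{X}\rangle_{u,v}$; these increments are only conditionally mean-zero, $\mathbb{E}_u[\Delta_{u,v}]=0$ (using that $\mathbf{X}_t^{\otimes 2}-\langle\mathbb{X}\rangle_t$ is a martingale), with $\|\Delta_{u,v}\|_{L^{p/2}}\lesssim|v-u|^{2\alpha}\le|v-u|$, so expectations cannot be split naively. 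My plan is to mimic Appendix~\ref{app:proofs_in_fill_ii}: using the causal representation of Lemma~\ref{lemma:discr_signature_manipulation}, express $S^J((\mathbb{X}^\pi,\langle\hat{\mathbb{X}}\rangle^\pi))_{[0,T]}-S^J((\mathbb{X}^\pi,\langle\mathbb{X}\rangle^\pi))_{[0,T]}$ as a time-ordered sum over $[u,v]\in\pi$ of terms $S^{\,\cdot}((\cdot)^\pi)_{[0,u]}\otimes(\text{local factor carrying }\Delta_{u,v})$, then apply Lemma~\ref{lemma:cond_exp_bound_sum} with the filtration $\{\mathcal{F}_u\}$: the conditional-mean part vanishes on the pure $\Delta_{u,v}$-term and is $O(|v-u|^{3\alpha})$ on the remaining same-interval moment terms, while the fluctuation part is $O\!\big((\sum|v-u|^{4\alpha})^{1/2}\big)=O(|\pi|^{2\alpha-1/2})$; together with Lemma~\ref{lemma:holder_tensors} and the uniform $L^m$ bounds above, the difference is $O(|\pi|^{(2\alpha-1/2)\wedge(3\alpha-1)})\to 0$ in $L^2$, hence in expectation. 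An alternative, less self-contained route is to recognize $(\mathbb{X}^\pi,\langle\hat{\mathbb{X}}\rangle^\pi)$ as a Wong--Zakai approximation of the Stratonovich lift of $(\mathbb{X},\langle\mathbb{X}\rangle)$ \citep[Chapter~14]{frizvictoir2010}, yielding signature convergence in probability, and to upgrade it to $L^1$ using the uniform integrability afforded by \eqref{ass:alpha_M} and BDG. I expect the precise moment bookkeeping of the first route — carrying the quadratic-variation coordinate along while tracking which tensor slots hold $\Delta_{u,v}$ — to be the delicate part.
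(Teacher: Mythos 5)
Your proposal follows essentially the same route as the paper: the same triangle-inequality split (discretization of the integrand vs.\ left-point Riemann-sum error) combined with Lemma~\ref{lemma:cond_exp_bound_sum} and Remark~\ref{rem:uniform_Lm} for \eqref{eqn:cov_pi_approx}--\eqref{eqn:var_pi_approx}, the same heterogeneous-integrability application of Theorem~\ref{thm:sig_conv_lm}\textit{(ii)} to $(\mathbb{X},\langle\mathbb{X}\rangle)$ for \eqref{eqn:sig_QV_approx_1}, and for \eqref{eqn:sig_QV_approx_2} the same strategy of a causal decomposition via Lemma~\ref{lemma:discr_signature_manipulation}, Lemma~\ref{lemma:cond_exp_bound_sum} with $\{\mathcal{F}_u\}$, and the conditional cancellation $\mathbb{E}_u[\mathbf{X}_{u,v}^{\otimes 2}-\langle\mathbf{X}\rangle_{u,v}]=0$, with the paper executing the induction over the position of the quadratic-variation letter that you only sketch. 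One small correction: do not route the $\langle\mathbb{X}\rangle$-component through the It\^{o}-process conditions of Appendix~\ref{app:ito_processes_infill}, since those require an absolutely continuous finite-variation part with $L^p$-bounded density, which is not assumed (cf.\ the Cantor-function example in Appendix~\ref{app:martingale_continuity}); instead \eqref{ass:alpha} and \eqref{ass:delta} for $\langle\mathbb{X}\rangle$ with exponent $2\alpha\geq 1$ follow directly from \eqref{ass:alpha_M} and the contraction property of conditional expectation.
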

\begin{proof}
By Appendix~\ref{app:martingale_continuity} $\mathbb{X}$ satisfies \eqref{ass:alpha} with $\alpha \geq 1/2$ and $p = 4k$. Note that since $\mathbb{X}$ is a martingale \eqref{ass:delta} holds trivially and hence we can set $\epsilon=1/2$.

We can apply Theorem~\ref{thm:sig_conv_lm} to deduce that 
\[
S^I(\mathbb{X}^\pi)_{[0,T]} \overset{L^4}{\rightarrow} S^I(\mathbb{X})_{[0,T]}, \quad |\pi| \rightarrow 0.
\]
Moreover, 
\[
S_c^I(\mathbb{X}^\pi)_{[0,T]} \overset{L^4}{\rightarrow} S_c^I(\mathbb{X})_{[0,T]}, \quad |\pi| \rightarrow 0.
\]
since
\begin{align*}
    \|&S_c^I(\mathbb{X})_{[0,T]} - S_c^I(\mathbb{X}^\pi)_{[0,T]}\|_{L^4} \\
    &\overset{(i)}{\leq} \left\|S_c^I(\mathbb{X})_{[0,T]} - \sum_{[u, v] \in \pi} S^{I_{-1}}(\mathbb{X})_{[0, u]} X^{(i_k)}_{u,v}\right\|_{L^4} + \left\|\sum_{[u, v] \in \pi} (S^{I_{-1}}(\mathbb{X})_{[0, u]} - S^{I_{-1}}(\mathbb{X}^\pi)_{[0, u]}) X^{(i_k)}_{u,v}\right\|_{L^4} \\
    &\overset{(ii)}{\leq} \left\|\int_0^T S^{I_{-1}}(\mathbb{X})_{[0,s]} \text{d}X^{(i_k)}_s - \sum_{[u, v] \in \pi} S^{I_{-1}}(\mathbb{X})_{[0, u]} X^{(i_k)}_{u,v}\right\|_{L^4} \\
    &\quad\quad\quad\quad\quad\quad\quad\quad\quad + 2 C_2 \left(\sum_{[u, v] \in \pi} \left\|S^{I_{-1}}(\mathbb{X})_{[0, u]} - S^{I_{-1}}(\mathbb{X}^\pi)_{[0, u]}\right\|^2_{L^{2k/(k-1)}} \|X^{(i_k)}_{u,v}\|^2_{L^{2k}} \right)^{1/2} \\
    &\overset{(iii)}{\lesssim} \left\|\int_0^T S^{I_{-1}}(\mathbb{X})_{[0,s]} \text{d}X^{(i_k)}_s - \sum_{[u, v] \in \pi} S^{I_{-1}}(\mathbb{X})_{[0, u]} X^{(i_k)}_{u,v}\right\|_{L^4} \\
    &\quad\quad\quad\quad\quad\quad\quad\quad\quad + 2 C_2 \sup_{u\in\pi} \left\|S^{I_{-1}}(\mathbb{X})_{[0, u]} - S^{I_{-1}}(\mathbb{X}^\pi)_{[0, u]}\right\|_{L^{2k/(k-1)}} \left(\sum_{[u, v] \in \pi} |v-u|^{2\alpha} \right)^{1/2}\\
    &\overset{(iv)}{\rightarrow} 0, \quad |\pi| \rightarrow 0,
\end{align*}
by applying in $(i)$ the triangle inequality, in $(ii)$ Lemma~\ref{lemma:cond_exp_bound_sum} with the natural filtration of $\mathbb{X}$, in $(iii)$ Theorem~\ref{thm:sig_conv_lm} to the word $I_{-1}$ with $m=2k/(k-1)$ and in $(iv)$ the definition of the It\^{o} integral and Remark~\ref{rem:uniform_Lm}. 

Under these assumptions, we thus have
\begin{equation} \label{eqn:cov_pi_approx}
    S^I(\mathbb{X}^\pi)_{[0,T]} S^{I}_c(\mathbb{X}^\pi)_{[0,T]} \overset{L^2}{\rightarrow} S^I(\mathbb{X})_{[0,T]} S^{I}_c(\mathbb{X})_{[0,T]},\quad |\pi|\rightarrow 0,
\end{equation}
and 
\begin{equation} \label{eqn:var_pi_approx}
    S^{I}_c(\mathbb{X}^\pi)^2_{[0,T]} \overset{L^2}{\rightarrow} S^{I}_c(\mathbb{X})^2_{[0,T]},\quad |\pi|\rightarrow 0.
\end{equation}

Next, we consider the approximation \eqref{eqn:sig_QV_approx_1}. Note that $\mathbb{X}$ satisfies \eqref{ass:alpha} (and trivially \eqref{ass:delta}) with $\alpha\geq 1/2$ and $p=4k$ while $\langle\mathbb{X}\rangle$ satisfies \eqref{ass:alpha} and \eqref{ass:delta} both with exponent $2\alpha\geq 1$ and $p=2k$. By using a slightly more general version\footnote{To estimate the expected signature term corresponding to the word $I=(i_1, \ldots, i_{k})$, it is sufficient to require \textit{(i)}, \textit{(ii)}, \textit{(iii)}, or \textit{(iv)} to be satisfied by $X^{(i_1)}, \ldots, X^{(i_k)}$ with $p_1, \ldots, p_k$ such that $p_1^{-1}+\dots+p_k^{-1} \leq m^{-1}$. In Theorem~\ref{thm:sig_conv_lm} we considered the case where $p_1=\dots=p_k = p$ for clarity.} of Theorem~\ref{thm:sig_conv_lm} applied to the process $(\mathbb{X}, \langle\mathbb{X}\rangle)$, for any word $J \in I\shuffle I$ or $J\in I \shuffle I_{-2} * ((i_{k-1}, i_k))$ or $J=J'*((i_k, i_k))$ with $J'\in I_{-1}\shuffle I_{-1}$ we have
\begin{equation} \label{eqn:sig_QV_approx_1}
    S^J((\mathbb{X}, \langle\mathbb{X}\rangle)^{\pi})_{[0,T]} \overset{L^2}{\rightarrow} S^J((\mathbb{X}, \langle\mathbb{X}\rangle))_{[0,T]}, \quad |\pi|\rightarrow 0,
\end{equation}
and hence for fixed $N\geq 1$,
\[
\hat\psi^{N, \pi}_J(T) \overset{L^2}{\rightarrow} \hat\psi^N_J(T), \quad |\pi|\rightarrow 0.
\]

Finally, making approximation \eqref{eqn:sig_QV_approx_2} rigorous is a bit more challenging. Let us start by considering the case where $\langle \mathbb{X} \rangle$ only appears in the outermost integral, i.e.\ $J=(j_1, \ldots, j_{k'})$, where $j_1, \ldots, j_{k'-1}\in\{1, \ldots, d\}$ and $j_{k'}\in\{(1, 1), \ldots, (1, d), \ldots, (d,d)\}$ for some $k'\in\{1, \ldots, 2k-1\}$. Then, for any $\tau\in\pi$,
\small
\begin{align*}
    \Big\| S^J(&(\mathbb{X}, \langle\mathbb{X}\rangle)^{\pi})_{[0,\tau]} - S^J((\mathbb{X}^{\pi}, \langle\hat{\mathbb{X}}\rangle^{\pi}))_{[0,\tau]} \Big\|_{L^{4k/(k'+1)}} \\
    &\overset{(i)}{=} \left\| \sum_{i=1}^{k'} \frac{1}{i!} \sum_{[u,v]\in\pi_{[0,\tau]}} S^{J_{-i}}(\mathbb{X}^\pi)_{[0,u]} X_{u,v}^{(j_{k'-i+1})} \cdots X_{u,v}^{(j_{k'-1})} \left(\langle X \rangle_{u,v}^{(j_{k'})} - \langle \hat{X}\rangle_{u,v}^{\pi, (j_{k'})}\right) \right\|_{L^{4k/(k'+1)}} \\
    &\overset{(ii)}{\leq} \sum_{i=1}^{k'} \frac{1}{i!} \left\| \sum_{[u,v]\in\pi_{[0,\tau]}} S^{J_{-i}}(\mathbb{X}^\pi)_{[0,u]} X_{u,v}^{(j_{k'-i+1})} \cdots X_{u,v}^{(j_{k'-1})} \left(\langle X \rangle_{u,v}^{(j_{k'})} - \left(X_{u,v}^{(j_{k'})}\right)^2\right) \right\|_{L^{4k/(k'+1)}} \\
    &\overset{(iii)}{\leq}  \left( \sum_{[u,v]\in\pi_{[0,\tau]}} \left\| S^{J_{-1}}(\mathbb{X}^\pi)_{[0,u]} \left(\langle X \rangle_{u,v}^{(j_{k'})} - \left(X_{u,v}^{(j_{k'})}\right)^2 \right) \right\|^2_{L^{4k/(k'+1)}} \right)^{1/2} \\
    & \quad\quad\quad\quad\quad\quad + \sum_{i=2}^{k'} \frac{1}{i!} \sum_{[u,v]\in\pi_{[0,\tau]}} \left\| S^{J_{-i}}(\mathbb{X}^\pi)_{[0,u]} X_{u,v}^{(j_{k'-i+1})} \cdots X_{u,v}^{(j_{k'-1})} \left( \langle X \rangle_{u,v}^{(j_{k'})} - \left(X_{u,v}^{(j_{k'})}\right)^2\right) \right\|_{L^{4k/(k'+1)}} \\
    &\overset{(iii)}{\leq}  \left( \sum_{[u,v]\in\pi_{[0,\tau]}} \left\| S^{J_{-1}}(\mathbb{X}^\pi)_{[0,u]} \left(\langle X \rangle_{u,v}^{(j_{k'})} - \left(X_{u,v}^{(j_{k'})}\right)^2\right) \right\|^2_{L^{4k/(k'+1)}} \right)^{1/2} \\
    & \quad\quad\quad\quad\quad\quad + \sum_{i=2}^{k'} \frac{1}{i!} \sum_{[u,v]\in\pi_{[0,\tau]}} \left\| S^{J_{-i}}(\mathbb{X}^\pi)_{[0,u]} X_{u,v}^{(j_{k'-i+1})} \cdots X_{u,v}^{(j_{k'-1})} \left( \langle X \rangle_{u,v}^{(j_{k'})} - \left(X_{u,v}^{(j_{k'})}\right)^2\right) \right\|_{L^{4k/(k'+1)}} \\
    &\overset{(iv)}{\leq}  \sup_{u\in\pi_{[0,\tau]}} \left\| S^{J_{-1}}(\mathbb{X}^\pi)_{[0,u]}\right\|_{L^{4k/(k'-1)}} \left( \sum_{[u,v]\in\pi_{[0,\tau]}} \left\| \langle \mathbf{X} \rangle_{u,v} - \mathbf{X}_{u,v}^{\otimes 2} \right\|^2_{L^{2k}} \right)^{1/2} \\
    & \quad\quad\quad\quad\quad\quad + \sum_{i=2}^{k'} \frac{1}{i!}\ \sup_{u\in\pi_{[0,\tau]}} \left\| S^{J_{-i}}(\mathbb{X}^\pi)_{[0,u]} \right\|_{L^{4k/(k'-i)}} \sum_{[u,v]\in\pi_{[0,\tau]}} \| \mathbf{X}_{u,v}\|^{i-1}_{L^{4k}} \left\| \langle \mathbf{X} \rangle_{u,v} - \mathbf{X}_{u,v}^{\otimes 2} \right\|_{L^{2k}} \\
    &\overset{(v)}{\lesssim}  |\pi|^{4\alpha - 1} \sqrt{\tau} + |\pi| \tau \rightarrow 0,\quad |\pi|\rightarrow 0, 
\end{align*}
\normalsize
by using in $(i)$ Lemma~\ref{lemma:discr_signature_manipulation}, in $(ii)$ the triangle inequality and the definition of $\langle \hat{\mathbb{X}}\rangle^\pi$, in $(iii)$ Lemma~\ref{lemma:cond_exp_bound_sum} with the natural filtration of $\mathbb{X}$ (with respect to which $\mathbb{X} - \langle\mathbb{X}\rangle$ is a martingale) for the $i=1$ term and the triangle inequality for the $i=2, \ldots, j$ terms, in $(iv)$ H\"{o}lder inequality, in $(v)$ Remark~\ref{rem:uniform_Lm} applied to $\mathbb{X}$ and $J_{-i}$ with $p=4k$, $m=4k/(k'-i)$ and $|J_{-i}| = k'-i$ for $i=1, \ldots, k'$ and
\begin{equation} \label{eqn:QV_step}
    \left\| \langle \mathbf{X} \rangle_{u,v} - \mathbf{X}_{u,v}^{\otimes 2} \right\|_{L^{2k}}  \leq \left\| \langle \mathbf{X} \rangle_{u,v}\right\|_{L^{2k}} + \left\| \mathbf{X}_{u,v}\right\|^2_{L^{4k}} \lesssim |v-u|^{2\alpha},
\end{equation}
by assumption \eqref{ass:alpha_M} with $\alpha\geq 1/2$ and $p=4k$. We have thus shown 
\begin{equation} \label{eqn:sup_QV_approx}
    \sup_{\tau\in\pi} \Big\| S^J((\mathbb{X}, \langle\mathbb{X}\rangle)^{\pi})_{[0,\tau]} - S^J((\mathbb{X}^{\pi}, \langle\hat{\mathbb{X}}\rangle^{\pi}))_{[0,\tau]} \Big\|_{L^{4k/(k'+1)}} \lesssim |\pi|.
\end{equation}

This extends to any $J= (j_1, \ldots, j_{k'}, \ldots, j_{k'+m'})$ with $m'\geq 0$ and $1 \leq k', k'+m'\leq 2k-1$, where, as above, $j_{k'}\in\{(1, 1), \ldots, (1, d), \ldots, (d,d)\}$ and $j_i \in\{1, \ldots, d\}$ for all other $i\neq k'$. We proceed inductively on $m'=0, \ldots, 2k-k'-1$ to show Equation \eqref{eqn:sup_QV_approx} holds for any $J$ of this form in the $L^{4k/(k'+m'+1)}$ norm. The case $m'=0$ has been covered above. Assume \eqref{eqn:sup_QV_approx} holds for $J_{k'+m}$ with $m\in\{0, \ldots, m'\}$ and $0\leq m'\leq 2k-k'-1$, then
\small
\begin{align*}
    &\Big\| S^{J_{k'+m'+1}}((\mathbb{X}, \langle\mathbb{X}\rangle)^{\pi})_{[0,\tau]} - S^{J_{k'+m'+1}}((\mathbb{X}^{\pi}, \langle\hat{\mathbb{X}}\rangle^{\pi}))_{[0,\tau]} \Big\|_{L^{4k/(k'+m'+2)}} \\
    &\overset{(i)}{=} \bigg\| \sum_{i=1}^{m'+1} \frac{1}{i!} \sum_{[u,v]\in\pi_{[0,\tau]}} \!\!\! \Big[S^{J_{k'+m'+1-i}}((\mathbb{X}, \langle\mathbb{X}\rangle)^{\pi})_{[0,u]} - S^{J_{k'+m'+1-i}}((\mathbb{X}^\pi, \langle\hat{\mathbb{X}}\rangle^\pi))_{[0,u]} \Big]  X_{u,v}^{(j_{k'+m'+2-i})} \cdots X_{u,v}^{(j_{k'+m'+1})} \\
    &\quad\quad + \sum_{i=m'+2}^{k'+m'+1} \frac{1}{i!} \sum_{[u,v]\in\pi_{[0,\tau]}} S^{J_{k'+m'+1-i}}(\mathbb{X}^\pi)_{[0,u]} \\
    &\quad\quad\quad\quad\quad\quad\quad\quad\quad\quad\quad\quad\quad\quad \times X_{u,v}^{(j_{k'+m'+2-i})} \cdots 
    \left(\langle X \rangle_{u,v}^{(j_{k'})} - \langle \hat{X}\rangle_{u,v}^{\pi, (j_{k'})}\right) 
    \cdots X_{u,v}^{(j_{k'+m'+1})} \bigg\|_{L^{4k/(k'+m'+2)}} \\
    &\overset{(ii)}{\leq} \bigg\| \sum_{[u,v]\in\pi_{[0,\tau]}} \Big[S^{J_{k'+m'}}((\mathbb{X}, \langle\mathbb{X}\rangle)^{\pi})_{[0,u]} - S^{J_{k'+m'}}((\mathbb{X}^\pi, \langle\hat{\mathbb{X}}\rangle^\pi))_{[0,u]} \Big] X_{u,v}^{(j_{k'+m'+1})} \bigg\|_{L^{4k/(k'+m'+2)}} \\
    &\quad + \sum_{i=2}^{m'+1} \frac{1}{i!} \bigg\| \sum_{[u,v]\in\pi_{[0,\tau]}} \Big[S^{J_{k'+m'+1-i}}((\mathbb{X}, \langle\mathbb{X}\rangle)^{\pi})_{[0,u]} - S^{J_{k'+m'+1-i}}((\mathbb{X}^\pi, \langle\hat{\mathbb{X}}\rangle^\pi))_{[0,u]} \Big] \\
    &\quad\quad\quad\quad\quad\quad\quad\quad\quad\quad\quad\quad\quad\quad\quad\quad\quad\quad\quad\quad\quad\quad\quad\quad\quad\quad \times X_{u,v}^{(j_{k'+m'+2-i})} \cdots X_{u,v}^{(j_{k'+m'+1})}\bigg\|_{L^{4k/(k'+m'+2)}} \\
    &\quad + \sum_{i=m'+2}^{k'+m'+1} \frac{1}{i!} \bigg\| \sum_{[u,v]\in\pi_{[0,\tau]}} S^{J_{k'+m'+1-i}}(\mathbb{X}^\pi)_{[0,u]} \\
    &\quad\quad\quad\quad\quad\quad\quad\quad\quad\quad\quad\quad\quad\quad \times X_{u,v}^{(j_{k'+m'+2-i})} \cdots 
    \left(\langle X \rangle_{u,v}^{(j_{k'})} - \left( X_{u,v}^{(j_{k'})}\right)^2 \right)
    \cdots X_{u,v}^{(j_{k'+m'+1})} \bigg\|_{L^{4k/(k'+m'+2)}} \\
    &\overset{(iii)}{\leq} \sup_{u\in\pi_{[0,\tau]}} \Big\| S^{J_{k'+m'}}((\mathbb{X}, \langle\mathbb{X}\rangle)^{\pi})_{[0,u]} - S^{J_{k'+m'}}((\mathbb{X}^\pi, \langle\hat{\mathbb{X}}\rangle^\pi))_{[0,u]} \Big\|_{L^{4k/(k'+m'+1)}} \Bigg( \sum_{[u,v]\in\pi_{[0,\tau]}} \| \mathbf{X}_{u,v} \|^2_{L^{4k}} \Bigg)^{1/2} \\
    &\quad + \sum_{i=2}^{m'+1} \frac{1}{i!} \sup_{u\in\pi_{[0,\tau]}} \Big\| S^{J_{k'+m'+1-i}}((\mathbb{X}, \langle\mathbb{X}\rangle)^{\pi})_{[0,u]} - S^{J_{k'+m'+1-i}}((\mathbb{X}^\pi, \langle\hat{\mathbb{X}}\rangle^\pi))_{[0,u]} \Big\|_{L^{4k/(k'+m'-i+2)}} \\        &\quad\quad\quad\quad\quad\quad\quad\quad\quad\quad\quad\quad\quad\quad\quad\quad\quad\quad\quad\quad\quad\quad\quad\quad\quad\quad\quad\quad\quad\quad\quad\quad   \times\Bigg(\sum_{[u,v]\in\pi_{[0,\tau]}} \| \mathbf{X}_{u,v} \|^{i}_{L^{4k}}\Bigg) \\
    &\quad\quad + \sum_{i=m'+2}^{k'+m'+1} \frac{1}{i!} \sup_{u\in\pi_{[0,\tau]}} \Big\| S^{J_{k'+m'+1-i}} (\mathbb{X}^\pi)_{[0,u]} \Big\|_{L^{4k/(k'+m'+1-i)}} \Bigg(\sum_{[u,v]\in\pi_{[0,\tau]}} \| \mathbf{X}_{u,v} \|^{i-1}_{L^{4k}} \| \langle \mathbf{X} \rangle_{u,v} - \mathbf{X}_{u,v}^{\otimes 2}\|_{L^{2k}}\Bigg) \\
    & \overset{(v)}{\lesssim} |\pi| \sqrt{\tau} + |\pi| \tau \rightarrow 0, |\pi|\rightarrow 0, 
\end{align*}
\normalsize
where in $(i)$ we have used Lemma~\ref{lemma:discr_signature_manipulation}, in $(ii)$ triangle inequality, in $(iii)$ Lemma~\ref{lemma:cond_exp_bound_sum} with the natural filtration of $\mathbb{X}$ for the $i=1$ term, traingle inequality for the $i=2, \ldots, k'+m'+1$ terms and H\"{o}lder inequality across all terms, in $(iv)$ the inductive hypothesis for $J_{k'+m}$ with $m=0, \ldots, m'$ for the first two sups, Remark~\ref{rem:uniform_Lm} applied to $\mathbb{X}$ and $J_{i}$ with $p=4k$, $m=4k/i$ and $|J_{i}| = i$ for $i=1, \ldots, k'-1$ for the third sup and bounds \eqref{eqn:QV_step} and \eqref{ass:alpha} with $p=4k$ and $\alpha\geq 1/2$ for the summations over $\pi_{[0,\tau]}$.

We have thus shown that for any word $J \in I\shuffle I$ or $J\in I \shuffle I_{-2} * ((i_{k-1}, i_k))$ or $J=J'*((i_k, i_k))$ for $J'\in I_{-1}\shuffle I_{-1}$ we have
\begin{equation} \label{eqn:sig_QV_approx_2}
    S^J((\mathbb{X}, \langle\mathbb{X}\rangle)^{\pi})_{[0,T]} - S^J((\mathbb{X}^\pi, \langle\hat{\mathbb{X}}\rangle^\pi))_{[0,T]} \overset{L^2}{\rightarrow} 0, \quad |\pi|\rightarrow 0,
\end{equation}
and hence for fixed $N\geq 1$,
\[
\hat\psi^{N, \pi}_J(T) - \hat\psi^{N, \pi, \prime}_J(T) \overset{L^2}{\rightarrow} 0, \quad |\pi|\rightarrow 0.
\]	

\end{proof}

\section{It\^{o} processes and diffusions} \label{app:ito_processes}

In this section we consider It\^{o} processes and It\^{o} diffusions: two common classes of models for continuous-time stochastic processes. We start by providing sufficient conditions ensuring Assumption \eqref{ass:continuity}, needed for the in-fill asymptotics, holds. We then focus on time-homogeneous It\^{o} diffusions and discuss general conditions under which these processes are stationary and strongly mixing, ensuring stationarity and strong mixing of $\{\mathbb{X}^n,\ n\geq 1\}$ under (chop) observations (cf.\ Proposition~\ref{prop:stationarity_strong_mixing_implications}).

\subsection{In-fill conditions} \label{app:in_fill_conditions}
\subsubsection{It\^{o} processes} \label{app:ito_processes_infill}
We consider the case where $\mathbb{X}$ is an It\^{o} process, i.e.\ satisfies
\[
\mathbf{X}_t = \mathbf{X}_0 + \int_0^t \mathbf{b}_s\, \mathrm{d}s + \int_0^t V_s\, \mathrm{d}\mathbf{W}_s,\quad t\in[0,T],
\]
where $\mathbf{b} = \{\mathbf{b}_t,\ t\in[0,T]\}$ and $V = \{V_t,\ t\in[0,T]\}$ are progressively measurable $d$- and $d\times q$-dimensional processes such that 
\begin{equation} \label{eqn:Ito_processes_conditions}
    \sup_{s\in[0,T]} \left\|\mathbf{b}_s\right\|_{L^{p}}, \sup_{s\in[0,T]} \left\|V_s\right\|_{L^{p}} <\infty,
\end{equation} 
$\mathbb{W} = \{\mathbf{W}_t,\ t\geq 0\}$ is a $q$-dimensional Brownian motion and $\mathbf{X}_0\in L^{p}$. The assumptions on $\mathbf{b}$ and $V$ imply that for all $0\leq s\leq t\leq T$,
\begin{align}
    \left\|\int_s^t \mathbf{b}_u \mathrm{d}u  \right\|_{L^{p}} 
    &\leq \int_s^t \|\mathbf{b}_u\|_{L^{p}}\, \mathrm{d}u \label{eqn:b_bound} \\
    &\leq \bigg(\sup_{u\in[0,T]} \left\|\mathbf{b}_u\right\|_{L^{p}}\bigg) |t-s|, \label{eqn:b_bound_2}
\end{align}
by Minkowski's integral inequality, and
\begin{align}
    \bigg\|\int_s^t V_u \mathrm{d}\mathbf{W}_u \bigg\|_{L^{p}} 
    \lesssim \mathbb{E}\left[\left(\mathrm{tr} \int_s^t V_u V_u^\mathrm{T} \mathrm{d}u \right)^{\!\!p/2}\right]^{\!1/p} 
    &= \left\|  \int_s^t \|V_u\|^2 \mathrm{d}u \right\|^{1/2}_{L^{p/2}} \notag \\
    &\leq \bigg(\int_s^t \|V_u\|^2_{L^{p}}\, \mathrm{d}u\bigg)^{1/2} \label{eqn:V_bound} \\
    &\leq \bigg(\sup_{u\in[0,T]} \left\|V_u\right\|_{L^{p}}\bigg) |t-s|^{1/2}, \label{eqn:V_bound_2}
\end{align}
by Burkholder-Davis-Gundy (BDG) inequality \citep{bdginequality}, the formula for the quadratic variation of the It\^{o} integral, and Minkowski integral inequality.
We can show \eqref{ass:alpha} holds with $\alpha=1/2$ by noting that for all $0\leq s\leq t\leq T$,	
\begin{align*}
    \|\mathbf{X}_{s,t}\|_{L^{p}} \leq  \left\|\int_s^t \mathbf{b}_u \mathrm{d}u  \right\|_{L^{p}} + \left\|\int_s^t V_u \mathrm{d}\mathbf{W}_u \right\|_{L^{p}} \lesssim |t-s| + |t-s|^{1/2} \lesssim |t-s|^{1/2},
\end{align*}
by combining bounds \eqref{eqn:b_bound_2} and \eqref{eqn:V_bound_2}.

Next, we can show \eqref{ass:delta} holds with $\delta = 1$ by noting that for all $0\leq s\leq t\leq T$,
\begin{align*}
    \|\mathbb{E}_s[\mathbf{X}_{s,t}]\|_{L^{p}} = \left\|\mathbb{E}_s\left[ \int_s^t \mathbf{b}_u \mathrm{d}u \right]\right\|_{L^{p}} \leq \left\| \int_s^t \mathbf{b}_u \mathrm{d}u \right\|_{L^{p}} 
    \lesssim  |t-s|,
\end{align*}
where we use the martingale property of It\^{o} integrals, contractive property of conditional expectation and the bound \eqref{eqn:b_bound_2}.

\subsubsection{It\^{o} diffusions} \label{app:ito_diffusions_infill}
Next, assume $\mathbb{X}$ is a (possibly time-inhomogeneous) It\^{o} diffusion, i.e.\ satisfies
\[
d\mathbf{X}_t = f(t, \mathbf{X}_t)\mathrm{d}t + \sigma(t, \mathbf{X}_t)\mathrm{d}\mathbf{W}_t,\quad t\in[0,T],
\]
where $\mathbb{W}=\{\mathbf{W}_t,\ t\in[0,T]\}$ is a $q$-dimensional Brownian motion, $f:[0,T]\times \mathbb{R}^d\rightarrow \mathbb{R}^d, \sigma:[0,T]\times \mathbb{R}^d \rightarrow \mathbb{R}^{d\times q}$ and $\mathbf{X}_0\in L^{p}$. It\^{o} diffusions form a subclass of It\^{o} processes, which we already covered in Appendix~\ref{app:ito_processes_infill}. Here, we give conditions specific to It\^{o} diffusions -- i.e.\ in terms of $f$ and $\sigma$ -- which imply condition \eqref{eqn:Ito_processes_conditions}.
\begin{itemize}           
    \item If $f$ and $\sigma$ are uniformly bounded on $[0,T] \times \mathbb{R}^d$, then condition \eqref{eqn:Ito_processes_conditions} immediately holds.
    \item Assume $f$ and $\sigma$ are Lipschitz continuous, i.e.\, for all $s, t\in[0,T]$ and $\mathbf{x},\mathbf{y}\in\mathbb{R}^d$,
    \begin{align*} 
        \|f(t, \mathbf{x}) - f(s, \mathbf{y})\| \leq K_f \| (t, \mathbf{x}) - (s, \mathbf{y})\| \leq K_f (|t-s| + \|\mathbf{x} - \mathbf{y}\|), \\
        \|\sigma(t, \mathbf{x}) - \sigma(s, \mathbf{y})\| \leq K_\sigma \| (t, \mathbf{x}) - (s, \mathbf{y})\| \leq K_\sigma (|t-s| + \|\mathbf{x} - \mathbf{y}\|).
    \end{align*} 
    Then, for all $0\leq s\leq t\leq T$, we can bound
    \begin{align} 
        \|&\mathbf{X}_{s,t}\|_{L^{p}} \notag\\
        &= \left\|\int_s^t f(u, \mathbf{X}_u)\mathrm{d}u + \int_s^t \sigma(u, \mathbf{X}_u) \mathrm{d}\mathbf{W}_u \right\|_{L^{p}} \notag \\ 
        &\overset{(i)}{\lesssim} \int_s^t \left\|f(u, \mathbf{X}_u) \right\|_{L^{p}}\mathrm{d}u + \left( \int_s^t \left\|\sigma(u, \mathbf{X}_u) \right\|^2_{L^{p}} \mathrm{d}u \right)^{1/2} \notag \\ 
        &\overset{(ii)}{\lesssim} \left\|f(s, \mathbf{X}_s) \right\|_{L^{p}} (t-s) +  \int_s^t \left\|f(u, \mathbf{X}_u) - f(s, \mathbf{X}_s) \right\|_{L^{p}}\mathrm{d}u \notag \\
        &\quad\quad\quad\quad\quad\quad\quad\quad + \left\|\sigma(s, \mathbf{X}_s) \right\|_{L^{p}} (t-s)^{1/2} + \left( \int_s^t \left\|\sigma(u, \mathbf{X}_u) - \sigma(s, \mathbf{X}_s)\right\|^2_{L^{p}}\mathrm{d}u \right)^{1/2} \notag \\  
        &\overset{(iii)}{\lesssim} \Big[\|f(0, \boldsymbol{0})\| + K_f (s + \left\|\mathbf{X}_s\right\|_{L^{p}})\Big](t-s) + K_f \frac{(t-s)^2}{2} + \int_s^t K_f \left\|\mathbf{X}_{s,u} \right\|_{L^{p}}\mathrm{d}u \notag \\
        &\quad\ + \Big[\|\sigma(0, \boldsymbol{0})\| +K_\sigma (s+ \left\|\mathbf{X}_s\right\|_{L^{p}}) \Big] (t-s)^{1/2} + K_\sigma \left( \int_s^t \left\|(u, \mathbf{X}_{u}) - (s, \mathbf{X}_s)\right\|^2_{L^{p}}\mathrm{d}u \right)^{1/2} \notag \\
        &\overset{(iv)}{\lesssim} \Big[\|f(0, \boldsymbol{0})\| + K_f (s + \left\|\mathbf{X}_s\right\|_{L^{p}})\Big](t-s) + K_f \frac{(t-s)^2}{2} + \int_s^t K_f \left\|\mathbf{X}_{s,u} \right\|_{L^{p}}\mathrm{d}u \notag \\
        &\quad\quad\quad\quad\quad\quad + \Big[\|\sigma(0, \boldsymbol{0})\| +K_\sigma (s+ \left\|\mathbf{X}_s\right\|_{L^{p}}) \Big] (t-s)^{1/2} + K_\sigma \left( \int_s^t |u-s|^2 \mathrm{d}u \right)^{1/2} \notag \\                &\quad\quad\quad\quad\quad\quad\quad\quad\quad\quad\quad\quad\quad\quad\quad\quad\quad\quad\quad\quad\quad\quad\quad\quad\quad + K_\sigma \left( \int_s^t \left\|\mathbf{X}_{s,u}\right\|^2_{L^{p}}\mathrm{d}u \right)^{1/2} \notag \\
        &\overset{(v)}{\lesssim} (1 \vee \left\|\mathbf{X}_s\right\|_{L^{p}}) (t-s)^{1/2} + \int_s^t \left\|\mathbf{X}_{s,u} \right\|_{L^{p}}\mathrm{d}u + \left( \int_s^t \left\|\mathbf{X}_{s,u}\right\|^2_{L^{p}}\mathrm{d}u \right)^{1/2} \notag \\
        &\overset{(vi)}{\lesssim} (1 \vee \left\|\mathbf{X}_s\right\|_{L^{p}}) (t-s)^{1/2} + \left( \int_s^t \left\|\mathbf{X}_{s,u}\right\|^2_{L^{p}}\mathrm{d}u \right)^{1/2}\!\!\!\!\!\!, \notag
    \end{align}
    by using in $(i)$ triangle inequality and Equations \eqref{eqn:b_bound} and \eqref{eqn:V_bound}, in $(ii)$ triangle inequality, in $(iii)$ Lipschitzianity of $f$ and $\sigma$, in $(iv)$ triangle inequality, in $(v)$ for all $0\leq s \leq t\leq T$, $s\leq T\lesssim 1$ and $(t-s)^{1/2+\epsilon} \leq T^{\epsilon} (t-s)^{1/2} \lesssim (t-s)^{1/2}$ for $\epsilon\geq 0$, and in $(vi)$ Jensen's inequality. Setting $s=0$, since we assume $\mathbf{X}_0\in L^{p}$, this is
    \begin{align*} 
        \|\mathbf{X}_{0,t}\|_{L^{p}} \lesssim t^{1/2} + \left( \int_0^t \left\|\mathbf{X}_{0,u}\right\|^2_{L^{p}}\mathrm{d}u \right)^{1/2}\!\!\!\!\!\!,
    \end{align*}
    and we can apply \citet[Lemma~2.2]{Willett_1964}, a nonlinear generalization of the Gronwall inequality, to deduce for all $0\leq t\leq T$,
    \begin{align*}
        \|\mathbf{X}_{0,t}\|_{L^{p}} &\lesssim t^{1/2} + \frac{\left( \int_0^t \exp\{ - C s\} s\, \mathrm{d}s \right)^{1/2}}{1 - \sqrt{1 - \exp\{ - C t\}}} \\
        &\lesssim T^{1/2} + \frac{T}{1 - \sqrt{1 - \exp\{ - C T\}}} < \infty.
    \end{align*}
    We can hence show the condition for It\^{o} processes \eqref{eqn:Ito_processes_conditions} holds by noting that $\mathbf{X}_0\in L^{p}$ and Lipschitzianity of $f$ and $\sigma$ imply 
    \begin{align*}
        \|f(t, \mathbf{X}_t)\|_{L^{p}} \leq \|f(0, \boldsymbol{0})\| +  K_f(|t| + \|\mathbf{X}_0\|_{L^{p}} + \|\mathbf{X}_{0,t}\|_{L^{p}}) < \infty, \\
        \|\sigma(t, \mathbf{X}_t)\|_{L^{p}} \leq \|\sigma(0, \boldsymbol{0})\| + K_\sigma( |t| +  \|\mathbf{X}_0\|_{L^{p}} + \|\mathbf{X}_{0,t}\|_{L^{p}}) < \infty,
    \end{align*}
    uniformly in $t\in[0,T]$, and hence \eqref{ass:alpha} and \eqref{ass:delta} hold with $\alpha=1/2$ and $\delta=1$, respectively.
    \item Assume $f$ and $\sigma$ are time-homogeneous such that $f$ is Lipschitz continuous and $\sigma$ is $1/2$-H\"{o}lder continuous, i.e.\, for all $\mathbf{x},\mathbf{y}\in\mathbb{R}^d$, 
    \begin{align*} 
        \|f(\mathbf{x}) - f(\mathbf{y})\| &\leq K_f \|\mathbf{x} - \mathbf{y}\|, \\
        \|\sigma(\mathbf{x}) - \sigma(\mathbf{y})\| &\leq K_\sigma \|\mathbf{x} - \mathbf{y}\|^{1/2}.
    \end{align*} 
    Then, for all $0\leq s\leq t\leq T$, we can bound
    \begin{align} 
        \|\mathbf{X}_{s,t}\|_{L^{p}} 
        &= \left\|\int_s^t f(\mathbf{X}_u)\mathrm{d}u + \int_s^t \sigma(\mathbf{X}_u) \mathrm{d}\mathbf{W}_u \right\|_{L^{p}} \notag \\ 
        &\overset{(i)}{\lesssim} \int_s^t \left\|f(\mathbf{X}_u) \right\|_{L^{p}}\mathrm{d}u + \left( \int_s^t \left\|\sigma(\mathbf{X}_u) \right\|^2_{L^{p}} \mathrm{d}u \right)^{1/2} \notag \\ 
        &\overset{(ii)}{\lesssim} \left\|f(\mathbf{X}_s) \right\|_{L^{p}} (t-s) +  \int_s^t \left\|f(\mathbf{X}_u) - f(\mathbf{X}_s) \right\|_{L^{p}}\mathrm{d}u \notag \\
        &\quad\quad\quad\quad\quad\quad\quad\quad + \left\|\sigma(\mathbf{X}_s) \right\|_{L^{p}} (t-s)^{1/2} + \left( \int_s^t \left\|\sigma(\mathbf{X}_u) - \sigma(\mathbf{X}_s)\right\|^2_{L^{p}}\mathrm{d}u \right)^{1/2} \notag \\  
        &\overset{(iii)}{\lesssim} \Big[\|f( \boldsymbol{0})\| + K_f \left\|\mathbf{X}_s\right\|_{L^{p}}](t-s) + \int_s^t K_f \left\|\mathbf{X}_{s,u} \right\|_{L^{p}}\mathrm{d}u \notag \\
        &\quad\quad\quad\quad + \Big[\|\sigma(\boldsymbol{0})\| +K_\sigma \left\|\mathbf{X}_s\right\|^{1/2}_{L^{p}} \Big] (t-s)^{1/2} + K^{1/2}_\sigma \left( \int_s^t \|\mathbf{X}_{s,u} \|_{L^{p}} \mathrm{d}u \right)^{1/2} \notag \\
        &\overset{(iv)}{\lesssim} \Big(1 \vee \left\|\mathbf{X}_s\right\|_{L^{p}} \vee \left\|\mathbf{X}_s\right\|^{1/2}_{L^{p}}\Big) (t-s)^{1/2} + \int_s^t \left\|\mathbf{X}_{s,u}\right\|^2_{L^{p}}\mathrm{d}u + \left( \int_s^t \|\mathbf{X}_{s,u} \|_{L^{p}} \mathrm{d}u \right)^{1/2}\!\!\!\!\!\!, \notag
    \end{align}
    by proceeding as in the previous case. Setting $s=0$, since we assume $\mathbf{X}_0\in L^{p}$, this is
    \begin{align*} 
        \|\mathbf{X}_{0,t}\|_{L^{p}} \lesssim t^{1/2} + \int_0^t \left\|\mathbf{X}_{0,u}\right\|_{L^{p}}\mathrm{d}u + \left( \int_0^t \left\|\mathbf{X}_{0,u}\right\|_{L^{p}}\mathrm{d}u \right)^{1/2}\!\!\!\!\!\!,
    \end{align*}
    and we can apply \citet[Theorem~41]{Dragomir_2002}, another nonlinear generalization of the Gronwall inequality, to deduce for all $0\leq t\leq T$,
    \begin{align*}
        \|\mathbf{X}_{0,t}\|_{L^{p}} &\lesssim f(t) < \infty.
    \end{align*}
    We can hence show the condition for It\^{o} processes \eqref{eqn:Ito_processes_conditions} holds by noting that $\mathbf{X}_0\in L^{p}$ and the conditions on $f$ and $\sigma$ imply 
    \begin{align*}
        \|f(\mathbf{X}_t)\|_{L^{p}} &\leq \|f(\boldsymbol{0})\| +  K_f(\|\mathbf{X}_0\|_{L^{p}} + \|\mathbf{X}_{0,t}\|_{L^{p}}) < \infty, \\
        \|\sigma(\mathbf{X}_t)\|_{L^{p}} &\leq \|\sigma(\boldsymbol{0})\| + K_\sigma(  \|\mathbf{X}_0\|_{L^{p}} + \|\mathbf{X}_{0,t}\|_{L^{p}})^{1/2} < \infty,
    \end{align*}
    uniformly in $t\in[0,T]$, and hence \eqref{ass:alpha} and \eqref{ass:delta} hold with $\alpha=1/2$ and $\delta=1$ respectively.
\end{itemize} 

\subsection{Long span conditions} \label{app:long_span_conditions}
\subsubsection{It\^{o} diffusions}\label{app:ito_diffusions_longspan}
When developing conditions ensuring stationarity and ergodicity of an It\^{o} diffusion it is natural to restrict $\{\mathbf{X}_t,\ t\geq 0\}$ to the case where it is time-homogeneous, i.e.\ satisfies
\[
d\mathbf{X}_t = f(\mathbf{X}_t)\mathrm{d}t + \sigma(\mathbf{X}_t)\mathrm{d}\mathbf{W}_t,\quad t\geq 0,
\]
where $\mathbb{W}=\{\mathbf{W}_t,\ t\geq 0\}$ is a $q$-dimensional Brownian motion, $f:\mathbb{R}^d\rightarrow \mathbb{R}^d$ and $\sigma:\mathbb{R}^d \rightarrow \mathbb{R}^{d\times q}$. Assume:
\begin{itemize}
    \item The diffusion coefficient $\sigma:\mathbb{R}^d \rightarrow \mathbb{R}^{d\times q}$ is Lipschitz continuous and $\Sigma := \sigma \sigma^\mathrm{T}:\mathbb{R}^d \rightarrow \mathbb{R}^{d\times d}$ is bounded and uniformly elliptic, i.e.\
    \[
    \inf_{x\in\mathbb{R}^d, \xi \in\mathbb{R}^d \setminus \{0\}} \frac{\langle \xi, \Sigma(x) \xi\rangle }{\|\xi\|^2} > 0.
    \]
    This is a classic PDE condition which ensures the transition densities are ``nice'', i.e.\ continuous and bounded away from zero \citep{friedman1964pdes}.
    \item The drift $f:\mathbb{R}^d\rightarrow \mathbb{R}^d$ is Lipschitz continuous and has negative radial part at $\infty$, i.e.\ 
    \[\limsup_{\|x\|\rightarrow \infty} \left\langle f(x), \frac{x}{\|x\|^{\kappa+1}} \right\rangle =: -C_{\kappa} \in [-\infty, 0),\]
    pushing the process towards the origin with strength\footnote{Note that, for large $\|x\|$, one has $\left\langle f(x), \frac{x}{\|x\|} \right\rangle \approx -C_{\kappa} \|x\|^{\kappa}$, and hence the strength of the pull grows as $\|x\|$ increases when $\kappa> 0$ and decays as $\|x\|$ increases when $\kappa < 0$.} controlled by $\kappa\in[-1,\infty)$. When $\kappa=-1$ assume further that $2 C_{-1} > \sup_{x\in\mathbb{R}^d} \mathrm{Tr}\, \Sigma(x)$ and define 
    \[
    \eta^*_{f, \Sigma} := 
    \begin{cases}
        \infty, \quad &\text{if}\ \kappa > 0, \\
        2C_0 / \vertiii{\Sigma}, \quad &\text{if}\ \kappa = 0, \\
        2C_{\kappa} / ((1+\kappa)\vertiii{\Sigma}), \quad &\text{if}\ \kappa \in (-1, 0), \\
        \left(2 C_{-1} - \sup_{x\in\mathbb{R}^d} \mathrm{Tr}\, \Sigma(x)\right) / \vertiii{\Sigma}, \quad &\text{if}\ \kappa = -1,
    \end{cases}	
    \]
    where $\vertiii{\Sigma} := \sup_{x\in\mathbb{R}^d} \|\Sigma(x)\|$. 
\end{itemize}
These conditions are enough to ensure there exists a unique invariant probability measure $\mu$ on $\mathbb{R}^d$ with 
\[
\begin{cases}
    \int_{\mathbb{R}^d} e^{\eta \|x\|} \mu(\mathrm{d}x) < \infty, \quad &\text{if}\ \kappa \geq 0, \\
    \int_{\mathbb{R}^d} e^{\eta \|x\|^{1+\kappa}} \mu(\mathrm{d}x) < \infty, \quad &\text{if}\ \kappa \in (-1, 0), \\
    \int_{\mathbb{R}^d} \|x\|^{\eta} \mu(\mathrm{d}x) < \infty, \quad &\text{if}\ \kappa = -1,
\end{cases}	
\] 
for all $\eta\in(0, \eta^*_{f, \Sigma})$, such that for any $x\in\mathbb{R}^d$ the transition probabilities\footnote{For the time-homogeneous It\^{o} diffusion $\{\mathbf{X}_t,\ t\geq 0\}$ these are defined by \[P_t(x, A) := \mathbb{P}(\mathbf{X}_t\in A | \mathbf{X}_0 = x) = \mathbb{P}(\mathbf{X}_{s+t}\in A | \mathbf{X}_s = x), \] for all $ t,s\geq 0,\ x\in\mathbb{R}^d,\  A\in\mathcal{B}(\mathbb{R}^d).$} $\{P_t(x, \cdot),\ t\geq 0\}$ converge to $\mu$ in total variation distance with rates 
\[
\| P_t(x, \cdot) - \mu \|_{\mathrm{TV}} \leq  
\begin{cases}
    c_1 e^{-c_2 t}(e^{\eta \|x\|} + c_3), \quad &\text{if}\ \kappa \geq 0, \\
    c_1 e^{-c_2 t^{(1+\kappa)/(1-\kappa)}}(e^{\eta \|x\|^{1+\kappa}} + c_3), \quad &\text{if}\ \kappa \in (-1, 0), \\
    c_1 (1 + c_2 t)^{-\eta/2}(\|x\|^\eta + c_3), \quad &\text{if}\ \kappa = -1,
\end{cases}	
\]	
with $c_1,c_2,c_3 > 0$ \citep[Theorem 3.3.4,~3.3.5~and~3.3.6]{Kulik_2018}. 

Assuming $\mathbf{X}_0\sim \mu$, the It\^{o} diffusion $\{\mathbf{X}_t,\ t\geq 0\}$ defines a stationary Markov process with $\mathbf{X}_0 \in L^{p}$ for all $p\geq 2$ when $\kappa > -1$ and for all $2\leq p < \eta^*_{f, \Sigma}$ when $\kappa=-1$. Recall that, by the discussion in Appendix~\ref{app:ito_diffusions_infill}, when $f$ and $\sigma$ are Lipschitz continuous, it is enough to have $\mathbf{X}_0 \in L^{p}$ to ensure the process $\mathbb{X} = \{\mathbf{X}_t,\ t\in[0,T]\}$ satisfies Assumptions \eqref{ass:alpha} and \eqref{ass:delta} with $\alpha=1/2$ and $\delta=1$, implying $\epsilon=1/2$. By Proposition~\ref{prop:stationarity_strong_mixing_implications} the chain $\{\mathbb{X}^n,\ n\geq 1\}$ is stationary and, hence, it remains to establish strong mixing of $\{\mathbb{X}^n,\ n\geq 1\}$ to apply Theorem~\ref{thm:clt_dep}. The strong mixing coefficient of the stationary Markov process $\{\mathbf{X}_t,\ t\geq 0\}$ can be easily\footnote{If $\{\mathbf{X}_t,\ t\geq 0\}\}$ is a stationary Markov process with stationary distribution $\mu$ and $A\in\mathcal{F}_{-\infty}^t, B\in\mathcal{F}_{t+s}^\infty$ for $t,s\geq 0$,
    \begin{align*}
        \left| \mathbb{P}(A \cap B) - \mathbb{P}(A) \mathbb{P}(B) \right|
        &= \left| \mathbb{E}[\mathds{1}_A \mathds{1}_B] - \mathbb{E}[\mathds{1}_A]  \mathbb{E}[\mathds{1}_B] \right| \\
        &= \left| \mathbb{E}[\mathds{1}_A \mathbb{E}[\mathbb{E}[ \mathds{1}_B|\mathcal{F}_{-\infty}^{t+s}]|\mathcal{F}_{-\infty}^t]] - \mathbb{E}[\mathds{1}_A\mathbb{E}[\mathds{1}_B]] \right| \\
        &= \left| \mathbb{E}\left[\mathds{1}_A \Big(\mathbb{E}[\mathbb{E}[\mathds{1}_B|\mathbf{X}_{t+s}] |\mathbf{X}_t] - \mathbb{E}[\mathbb{E}[\mathds{1}_B|\mathbf{X}_{t+s}]]\Big)\right] \right| \\
        &= \left| \mathbb{E}\left[\mathds{1}_A \left( \int_{\mathbb{R}^d} h_{B}(x) P_s(\mathbf{X}_t, \text{d}x) - \int_{\mathbb{R}^d} h_B(x) \mu(\text{d}x) \right)\right] \right| \\
        &\leq \mathbb{E}\left[\mathds{1}_A \left|\int_{\mathbb{R}^d} h_{B}(x) (P_s(\mathbf{X}_t, \text{d}x) - \mu(\text{d}x)) \right|\right] \\
        &\leq \mathbb{E}\left[\|P_s(\mathbf{X}_t, \cdot) - \mu \|_{\mathrm{TV}} \right] \\
        &= \int_{\mathbb{R}^d} \| P_s(x, \cdot) - \mu \|_{\mathrm{TV}} \, \mu(\mathrm{d}x),
    \end{align*}
    where $h_B:\mathbb{R}^d \mapsto [0, 1]$ is the measurable function defining the conditional expectation $\mathbb{E}[\mathds{1}_{B} | \mathbf{X}_{t+s}] = h(\mathbf{X}_{t+s})$.} bounded by
\[
\alpha(t) \leq \int_{\mathbb{R}^d} \| P_t(x, \cdot) - \mu\|_{\mathrm{TV}} \, \mu(\mathrm{d}x) \lesssim 
\begin{cases}
    e^{-c_2 t}, \quad &\text{if}\ \kappa \geq 0, \\
    e^{-c_2 t^{(1+\kappa)/(1-\kappa)}}, \quad &\text{if}\ \kappa \in (-1, 0), \\
    (1 + c_2 t)^{-\eta/2}, \quad &\text{if}\ \kappa = -1,
\end{cases}	 
\]
and hence the process is strongly mixing. By Proposition~\ref{prop:stationarity_strong_mixing_implications} the chain $\{\mathbb{X}^n,\ n\geq 1\}$ is also strongly mixing with coefficient $\alpha''(n) \leq \alpha((n-3)T)$, $n\geq 3$. It follows immediately that $\{\mathbb{X}^n,\ n\geq 1\}$ is ergodic and hence, we can apply Theorem~\ref{thm:clt_dep}$.1$ to deduce that, letting $|\Pi(N)|\rightarrow 0$ as $N\rightarrow\infty$ the expected signature estimator \eqref{eqn:estimator_exp_sig_X_Pi} is consistent for any expected signature term when $\kappa > -1 $ and for all expected signature terms with $ |I| < \frac{1}{2} \eta^*_{f, \Sigma}$ when $\kappa=-1$.

Finally, we note that for any $\zeta>0$
\[
\sum_{n\geq 0} \alpha''(n)^{\zeta/(2+\zeta)} \lesssim 
\begin{cases}
    \sum_{n\geq 1} e^{-c_2 n T \zeta / (2+\zeta)} < \infty, \quad &\text{if}\ \kappa \geq 0, \\
    \sum_{n\geq 1} e^{-c_2 (nT)^{(1+\kappa)/(1-\kappa)} \zeta / (2+\zeta)}  < \infty, \quad &\text{if}\ \kappa \in (-1, 0), \\
    \sum_{n\geq 1} (1 + c_2 nT)^{-\eta \zeta / (4+2\zeta)} = \infty, \quad &\text{if}\ \kappa = -1,
\end{cases}	 
\]
and hence, if $\kappa > -1$ and $\Pi(N)$ is a sequence of expanding dyadic refinements, we can apply Theorem~\ref{thm:clt_dep}$.2$ to show that the expected signature estimator \eqref{eqn:estimator_exp_sig_X_Pi} is also asymptotically normal.

\section{Gaussian Processes} \label{app:gaussian_processes}

We first exploit the properties of Gaussian random variables to show that, if a Gaussian process satisfies \eqref{ass:alpha} for $p=2$ and some $\alpha>1/4$, then it satisfies \eqref{ass:alpha} for any $p\geq 2$ and the same $\alpha$. Next, we show that two common examples of Gaussian processes, Ornstein-Uhlenbeck processes and fractional Brownian motion, satisfy the assumptions of Theorem~\ref{thm:cons_dep_GP}, i.e.\ \eqref{ass:alpha} with $p=2$ and $\alpha\geq 1/2$, \eqref{ass:delta} when $\alpha=1/2$ and \eqref{ass:theta} for some decreasing $\theta:\mathbb{R}_+\rightarrow\mathbb{R}_+$ with $\theta(t)\rightarrow 0, \ t\rightarrow\infty$ and $\int_0^T\theta(t) \text{d} t < \infty$ and $m\in\mathbb{N}$. 

\subsection{Gaussian Processes Continuity Criterion} \label{app:GP_continuity}
Let $\mathbb{X}$ be a Gaussian process with mean function $\mu:[0,T]\rightarrow\mathbb{R}^d$. If $\mathbb{X}$ satisfies \eqref{ass:alpha} with $p=2$ and $\alpha>1/4$ and $\mu$ is $\alpha$-H\"{o}lder continuous, then it satisfies \eqref{ass:alpha} for any $p \geq 2$ and exponent $\alpha$. Note that, by the inclusion of norms in $L^p$ spaces, it suffices to show this holds for arbitrarily large $p$. Choosing $p=2q$ even, we can write
\begin{align*}
    \|\mathbf{X}_{s,t}\|_{L^{2q}} &\leq \|\mathbf{X}_{s,t} - \mu_{s,t}\|_{L^{2q}} + \|\mu_{s,t}\|\\
    &\lesssim \left(\mathbb{E}\left[ \left(\sum_{i=1}^d \left|X^{(i)}_{s,t} - \mu^{(i)}_{s,t} \right|^2 \right)^{\!\!q}\right]\right)^{\!\!1/2q} + |t-s|^\alpha \\
    &= \left(\sum_{\substack{q_1+\ldots + q_d = q \\ q_1,\ldots, q_d \geq 0}} \binom{q}{q_1, \ldots, q_d} \mathbb{E}\left[ \left|X^{(1)}_{s,t} - \mu^{(1)}_{s,t} \right|^{2q_1} \cdots \left|X^{(d)}_{s,t} - \mu^{(d)}_{s,t} \right|^{2q_d}\right]\right)^{\!\!1/2q} + |t-s|^\alpha \\
    &\overset{(i)}{=} \left(\sum_{\substack{q_1+\ldots + q_d = q \\ q_1,\ldots, q_d \geq 0}} \binom{q}{q_1, \ldots, q_d} \sum_{p \in P^2_{2q_1, \ldots, 2q_d}} \prod_{\{i,j\}\in p} \mathrm{Cov}\left(X^{(i)}_{s,t}, X^{(j)}_{s,t}\right) \right)^{\!\!1/2q} + |t-s|^\alpha \\
    &\overset{(ii)}{\lesssim} \left(\sum_{\substack{q_1+\ldots + q_d = q \\ q_1,\ldots, q_d \geq 0}} \binom{q}{q_1, \ldots, q_d} \sum_{p \in P^2_{2q_1, \ldots, 2m_k}} \prod_{\{i,j\}\in p} |t-s|^{2\alpha} \right)^{\!\!1/2q} 
    \lesssim |t-s|^{\alpha},
\end{align*}
where in $(i)$ we apply Isserlis' theorem \citep{isserlis1918} denoting by $P^2_{2 q_1, \ldots, 2 q_d}$ the set of all the pairings of $S = \{1\}^{2q_1} \cup \{2\}^{2q_2} \cup \dots \cup \{d\}^{2q_d}$, i.e.\ all distinct ways of partitioning $S$ into $q_1+\ldots+q_d = q$ pairs, and in $(ii)$ the fact that Assumption \eqref{ass:alpha} with $p=2$ and $\alpha>1/4$ implies for all $i,j=1,\ldots, d$,
\begin{align*}
    \mathrm{Cov}(X^{(i)}_{s,t}, X^{(j)}_{s,t}) &\leq |\mathrm{Cov}(X^{(i)}_{s,t}, X^{(j)}_{s,t})|\\
    &\leq |\mathbb{E}[X^{(i)}_{s,t} X^{(j)}_{s,t}]| + |\mu^{(i)}_{s,t} \mu^{(j)}_{s,t}| \\
    &\leq \mathbb{E}[|X^{(i)}_{s,t} X^{(j)}_{s,t}|] + |\mu^{(i)}_{s,t}||\mu^{(j)}_{s,t}| \\
    &\leq \mathbb{E}[|X^{(i)}_{s,t}|^2]^{1/2}  \mathbb{E}[|X^{(j)}_{s,t}|^2]^{1/2} + |\mu^{(i)}_{s,t}||\mu^{(j)}_{s,t}| \\
    &\leq \|\mathbf{X}_{s,t}\|^2 + \|\mu_{s,t}\|^2 \lesssim |t-s|^{2\alpha}.
\end{align*}

\subsection{Gaussian Processes Covariance Decay Condition} \label{app:GP_conv_decay}
\subsubsection{Ornstein-Uhlenbeck Process} \label{app:OU}
If $\{\mathbf{X}_t,\ t\geq 0\}$ is a stationary mean-zero $d$-dimensional Ornstein-Uhlenbeck process, i.e.\ a mean-zero Gaussian process with covariance \[C(s,t) := \mathrm{Cov}(\mathbf{X}_s, \mathbf{X}_t) = e^{-A|t-s|} \Sigma,\] where $\Sigma = \mathrm{Var}(\mathbf{X}_t)$ and the drift matrix parameter $A\in\mathbb{R}^{d\times d}$ has positive real parts of all eigenvalues, then we can show that:
\begin{itemize}
    \item \eqref{ass:alpha} holds with $\alpha=1/2$ and $p=2$. Note that for all $0\leq s\leq t\leq T$,
    \begin{align*}
        \|\mathbf{X}_{s,t}\|^2_{L^2} &= \mathbb{E}[\mathrm{tr}(\mathbf{X}_{s,t} \otimes \mathbf{X}_{s,t})] \\
        &= \mathrm{tr}(\mathrm{Var}(\mathbf{X}_t)+ \mathrm{Var}(\mathbf{X}_s)- 2\mathrm{Cov}(\mathbf{X}_{s}, \mathbf{X}_{t})) \\
        & = 2\, \mathrm{tr}((I_d - e^{-A|t-s|}) \Sigma)\\
        & \leq 2\, \|I_d - e^{-A|t-s|}\|\, \|\Sigma\|\\
        & \leq 2\, (\|A\| |t-s| e^{\|A\| |t-s|})\, \|\Sigma\|\\
        & \lesssim |t-s|.
    \end{align*}
    \item \eqref{ass:delta} holds for any $p\geq 2$ with $\delta=1$. Using the integral representation of the OU process, one can easily verify that for all $0\leq s\leq t\leq T$, $\mathbb{E}_s[\mathbf{X}_t] := \mathbb{E}[\mathbf{X}_t |\mathcal{F}_s] = e^{-A|t-s|} \mathbf{X}_s$, where $\{\mathcal{F}_t,\ t\in[0,T]\}$ is the natural filtration of $\mathbb{X}$. Then, for all $0\leq s\leq t\leq T$,
    \begin{align*}
        \|\mathbb{E}_s[\mathbf{X}_{s,t}]\|_{L^p}
        = \|(e^{-A|t-s|} - I_d) \mathbf{X}_{s} ]\|_{L^p} 
        \leq \| I_d - e^{-A|t-u|} \|\, \|\mathbf{X}_{s} \|_{L^{p}} 
        \lesssim |t - s|.
    \end{align*}
    \item The covariance of the increments is homogeneous since, for all $u,v,s,t\geq 0$,
    \begin{align*}
        \mathrm{Cov}\left(\mathbf{X}_{u,v}, \mathbf{X}_{s,t}\right) &= \mathrm{Cov}\left(\mathbf{X}_{v}, \mathbf{X}_{t}\right) - \mathrm{Cov}\left(\mathbf{X}_{v}, \mathbf{X}_{s}\right) -\mathrm{Cov}\left(\mathbf{X}_{u}, \mathbf{X}_{t}\right) + \mathrm{Cov}\left(\mathbf{X}_{u}, \mathbf{X}_{s}\right) \\
        &= (e^{-A|t-v|} - e^{-A|s-v|} - e^{-A|t-u|} + e^{-A|s-u|}) \Sigma, 
    \end{align*}  
    depends only on the relative distances $|t-v|, |s-v|, |t-u|$ and $|s-u|$.
    \item The covariance of the increments satisfies Assumption \eqref{ass:theta} with $m=0$ and $\theta(t) = e^{-\lambda_A t}$ where $\lambda_A$ is a constant depending on the drift matrix $A\in\mathbb{R}^{d\times d}$. For all $0\leq u\leq v< s\leq t$,
    \begin{align*}
        \|\mathrm{Cov}\left(\mathbf{X}_{u,v}, \mathbf{X}_{s,t}\right)\|
        &= \| e^{-A|s-v|} (e^{-A|t-s|} - I_d - e^{-A|t-s| - A|v-u|} + e^{-A|v-u|}) \Sigma \|\\
        &= \| e^{-A|s-v|} (e^{-A|t-s|} - I_d) (I_d - e^{-A|v-u|}) \Sigma \|\\
        &= \| e^{-A|s-v|} \| \, \|I_d - e^{-A|t-s|}\| \, \|I_d - e^{-A|v-u|}\| \, \|\Sigma \|\\
        &\lesssim e^{-\lambda_A|s-v|} |t-s| |v-u|,
    \end{align*}
    where, in the last step, we use the fact that $A\in\mathbb{R}^{d\times d}$ has positive real parts of all eigenvalues to find $\lambda_A\in (0, \min_{\lambda\in\sigma(A)} \mathrm{Re}(\lambda))$ such that $\| e^{-At} \| \lesssim e^{-\lambda_A t}$ for all $t\geq 0$.
\end{itemize}
Note that \eqref{ass:alpha} and \eqref{ass:delta} could have been alternatively established by noting that the OU process is an It\^{o} diffusion with Lipschitz continuous coefficients and applying the results of Appendix~\ref{app:ito_diffusions_infill}.

\subsubsection{Fractional Brownian Motion} \label{app:fBm}
If $\{X^H_t,\ t\geq 0\}$ is a (one-dimensional) fractional Brownian motion with Hurst parameter $H>1/2$, i.e.\ a mean-zero Gaussian process with covariance \[C^H(s,t) := \mathrm{Cov}(X^H_s, X^H_t) = \frac{1}{2} (|t|^{2H} + |s|^{2H} - |t-s|^{2H}),\] then we can show that:
\begin{itemize}
    \item \eqref{ass:alpha} holds with $\alpha=H>1/2$ (and $p=2$) since $\|X^H_{s,t}\|_{L^{2}} = |t-s|^H$. 
    \item The covariance of the increments is homogeneous since for all $u,v,s,t\geq 0$,
    \begin{align*}
        \mathrm{Cov}\left(X^H_{u,v}, X^H_{s,t}\right) &= \mathrm{Cov}\left(X^H_{v}, X^H_{t}\right) - \mathrm{Cov}\left(X^H_{v}, X^H_{s}\right) -\mathrm{Cov}\left(X^H_{u}, X^H_{t}\right) + \mathrm{Cov}\left(X^H_{u}, X^H_{s}\right) \\
        &= \frac{1}{2} (|s-v|^{2H} + |t-u|^{2H} - |t-v|^{2H} - |s-u|^{2H}),
    \end{align*}  
    depends only on the relative distances $|t-v|, |s-v|, |t-u|$ and $|s-u|$.
    \item The covariance of the increments satisfies Assumption \eqref{ass:theta} with $m=3$ and $\theta(t) = t^{2H-2}$. For all $0\leq u\leq v < s\leq t$ with $|s-v| \geq 3/2(|t-s|+|v-u|)$,
    \begin{align*}
        |&\mathrm{Cov}\left(X^H_{u,v}, X^H_{s,t}\right)| \\
        &= \frac{1}{2}| |s-v|^{2H} + |t-u|^{2H} - |t-v|^{2H} - |s-u|^{2H} |\\
        &= \frac{1}{2} \Big| |s-v|^{2H}  \\
        &\quad\quad\quad + \sum_{n=0}^\infty \frac{(2H) \cdots (2H-n+1)}{n!} |s-v|^{2H-n} (|t-u| - |s-v|)^n \\
        &\quad\quad\quad\quad\quad\quad - \sum_{n=0}^\infty \frac{(2H) \cdots (2H-n+1)}{n!} |s-v|^{2H-n} (|t-v| - |s-v|)^n \\
        &\quad\quad\quad\quad\quad\quad\quad\quad\quad - \sum_{n=0}^\infty \frac{(2H) \cdots (2H-n+1)}{n!} |s-v|^{2H-n} (|s-u| - |s-v|)^n \Big|\\
        &= \frac{1}{2}\Big| \sum_{n=2}^\infty \frac{(2H) \cdots (2H-n+1)}{n!} |s-v|^{2H-n} \Big[(|t-s| + |v-u|)^n - |t-s|^n - |v-u|^n\Big] \Big|\\
        &= \frac{1}{2} |s-v|^{2H-2} \Big| \sum_{n=2}^\infty \frac{(2H) \cdots (2H-n+1)}{n!} |s-v|^{-(n-2)} \sum_{j=1}^{n-1} \binom{n}{j} |t-s|^j |v-u|^{n-j} \Big|\\
        &\leq \frac{1}{2} |s-v|^{2H-2} |t-s| |v-u| \\
        &\quad\quad\quad\quad \times \sum_{n=2}^\infty \frac{|(2H) \cdots (2H-n+1)|}{(n-2)!} |s-v|^{-(n-2)} \sum_{j=0}^{n-2} \binom{n-2}{j} |t-s|^{j} |v-u|^{(n-2)-j} \\
        &\leq \frac{1}{2} |s-v|^{2H-2} |t-s| |v-u| |2H||2H-1| \sum_{n=0}^\infty |s-v|^{-n} (|t-s|+|v-u|)^{n} \\
        &\leq \frac{1}{2} |s-v|^{2H-2} |t-s| |v-u| |2H||2H-1| \left(1 - \frac{|t-s|+|v-u|}{|s-v|}\right)^{-1}\\
        &\leq |2H||2H-1| |s-v|^{2H-2} |t-s| |v-u|,
    \end{align*}
    by using the fact that that $x \mapsto f(x) = x^{2H}$ is analytic for any $x > 0$ and $|s-v| \geq 3/2(|t-s|+|v-u|)$. 
\end{itemize}

\section{Machine Learning Algorithms with Expected Signatures} \label{app:ML_and_expected_sigs}
Signatures of paths have found widespread use in the machine learning community, with applications ranging from character recognition \citep{Graham2013, Lianwen2018} to medical diagnosis \citep{arribas2018}. Taking the signature of a stream of data is essentially a feature extraction method mapping raw stream-like data to a lower-dimensional but highly-informative latent space. The theoretical foundations for their efficacy range from the characterization result of \citet{hambly2005} to the universal approximation theorem \citep[Theorem~3.1]{levin2016}. As discussed in the introduction, when dealing with collections of paths, the characterization results of \citet{fawcett2003a}, \citet{Chevyrev2016} and \citet{OberhauserChevyrev} give strong theoretical justification for the use of the expected signature as a feature extraction method.

While dealing with a collection of paths is arguably a less common setting than a single stream of data, the literature still provides a wide range of machine learning algorithms leveraging expected signatures\footnote{The GPES algorithms, discussed in Section~\ref{sec:GPES}, actually takes as input a single stream of data and applies a data augmentation technique to form a collection of paths.}. These cover many different tasks (from distributional regression to generative modeling) and applications (from ECG classification to option pricing). In this section, we review five of these machine learning algorithms discussing how the martingale correction introduced in Section~\ref{sec:martingale_correction} can be applied to the expected signature computation step to improve performance. Before diving into each specific application in detail, we discuss some general considerations on the use of the martingale correction term in practice.

\subsection{Martingale Correction in Applications} \label{app:martingale_correction_practicalities}

In Section~\ref{sec:martingale_correction}, we considered the same framework as in the rest of this work, namely the setting where $\mathbb{X}$ is a continuous-time stochastic process and $\mathbb{X}^\pi$ is a piecewise-linear interpolation of the discrete-time observation of such process along the partition $\pi$. It is important to note that, while some applications have a natural underlying latent continuous-time model \citep{arribas2021} others do not \citep{lemercier2021a}. In any case, we do not necessarily require the ``background'' continuous-time model $\mathbb{X}$ to be defined to apply the control-variate estimator \eqref{eqn:control_variate_discretized_est} with $c = \hat{c}^*_{1, \pi}$. Letting $\pi=\{0=t_0<t_1 < \dots < t_{M}=T\}$, one can easily see that it is sufficient for the discrete-time process $\{\mathbf{X}_{m} = \mathbf{X}_{t_m},\ m=0, \ldots, M\}$ to be a discrete-time martingale with respect to $\mathcal{F}_m=\sigma(\mathbf{X}_1, \ldots, \mathbf{X}_m)$ for the control variate estimator to have the same bias but lower variance than the naive expected signature estimator. In what follows all path observations are inevitably sampled at discrete points in time and hence, abusing notation slightly, in some places we drop the dependence on the partition $\pi$. Whether $\mathbb{X}$ is a continuous-time process or a sequence of observations in discrete time should be clear from the context.

Machine learning methods based on signature methods often apply augmentations to the raw streams of data before computing the signature, cf.\ \citet[Section~2.5]{lyons2024}. For example, a path augmentation which is often found to improve model performance is the lead-lag transform. Combining the previous observation on discrete-time martingales with Remark~\ref{rem:var_reduction_mart_components}, we can easily see that the control variate expected signature estimator can also be employed when the lead-lag augmentation is applied to the raw data, i.e.\ when the $d$-dimensional discrete-time martingale $\{\mathbf{X}_{m},\ m=1, \ldots, M\}$ is embedded into the $2d$-dimensional process 
\[\mathbb{X}' = \{(\mathbf{X}_1, \mathbf{X}_1), (\mathbf{X}_2, \mathbf{X}_1), (\mathbf{X}_2, \mathbf{X}_2), \ldots, (\mathbf{X}_M, \mathbf{X}_{M-1}), (\mathbf{X}_M, \mathbf{X}_M)\} =\{\mathbf{X}'_{m'},\ m'=2, \ldots, 2M+2\}.\] 
Note that for each $m'=2,\ldots, 2M+2$, if $m'=2m$ then $\mathbf{X}'_{m'}=(\mathbf{X}_{m}, \mathbf{X}_{m}),\ \mathbf{X}'_{m'+1}=(\mathbf{X}_{m+1}, \mathbf{X}_{m})$ and
\[
\mathbb{E}[\mathbf{X}'_{m'+1} | \mathbf{X}'_{1}, \ldots, \mathbf{X}'_{m'}] = (\mathbb{E}[\mathbf{X}_{m+1}| \mathbf{X}_1, \ldots, \mathbf{X}_m], \mathbf{X}_m) = (\mathbf{X}_m, \mathbf{X}_m),
\]
and if $m'=2m+1$ then $\mathbf{X}'_{m'}=(\mathbf{X}_{m+1}, \mathbf{X}_{m}),\ \mathbf{X}'_{m'+1} = (\mathbf{X}_{m+1}, \mathbf{X}_{m+1})$ and
\[
\mathbb{E}[\mathbf{X}'_{m'+1} | \mathbf{X}'_{1}, \ldots, \mathbf{X}'_{m'}] = (\mathbf{X}_{m+1}, \mathbf{X}_{m+1}).
\]
and hence the leading components, i.e.\ the first $d$ entries of $\mathbb{X}'$, form a discrete-time martingale with respect to the natural filtration of $\mathbb{X}'$. By Remark~\ref{rem:var_reduction_mart_components}, we can hence apply the control variate expected signature estimator \eqref{eqn:control_variate_discretized_est} for any word $I=(i_1, \ldots, i_k)\in\{1, \ldots, 2d\}^k$ with $i_k\in\{1, \ldots, d\}$. 

Finally, in some applications, we may not have a strong prior on whether the process being modeled is a martingale or not. In this case, we may consider the martingale correction as a model configuration hyperparameter to be tuned, just like the lead-lag path augmentation discussed above. In the model training phase we can then apply a cross-validation procedure to learn whether applying the martingale correction to (some of) the expected signature terms improves the performance of the model.

\begin{remark} Both the signature transform and the expected signature transform are general methods applicable to any machine learning task dealing with (collections of) streams of data. These can thus always be used as out-of-the-box feature extraction methods when little domain knowledge is available. On the other hand, when task-specific information is known, incorporating such knowledge in the machine learning model will most likely improve performance. 
\end{remark}

\begin{remark} 
It is important to note there is also a wide range of machine learning methods based on signature kernels \citep{kiraly_2019, OberhauserChevyrev, lemercier2021a, sigkernel_goursat}. This bypasses the need to explicitly estimate the expected signature and hence we cannot directly apply the martingale correction developed in Section~\ref{sec:martingale_correction}.
\end{remark}

\subsection{Algorithms} \label{app:ML_algos}

\subsubsection{Time Series Classification \texorpdfstring{\citep{triggiano2024}}{Triggiano and Romito, 2024}} \label{app:GPES}

In the Gaussian Process augmented Expected Signature (GPES) classifier, the input stream $\mathbf{x}\in \mathbb{R}^{d\times M_1}$ is interpreted as a discrete-time realization of a Gaussian process $\mathbb{X} \sim \text{GP}(\mu(t), \Sigma(t))$ at points $\pi_1=\{0=t_1<\ldots <t_{M_1}=T\}$, i.e.\ a realization of $\mathbb{X}^\pi$. Values of the process over a fixed set of in-fill points $\pi_2=\{s_1<\ldots <s_{M_2}\}$ can thus be sampled\footnote{Super-sampling the input data to a collection of realizations from a Gaussian process, can be effectively understood as a regularization by noise technique.} from the conditional distribution of $\mathbb{X}^{\pi_2}$ given the input $\mathbb{X}^{\pi_1}$, i.e.\ the conditional distribution $\mathbb{X}^{\pi_2}| \mathbb{X}^{\pi_1}=\mathbf{x}\sim \mathcal{N}(\mu_{\mathbf{x}, \pi_1, \pi_2}, \Sigma_{\mathbf{x}, \pi_1, \pi_2})$. The expected signature of the process $\mathbb{X}$ is then estimated from a collection of samples $\mathbb{X}^{1, \pi_1\cup\pi_2}, \ldots, \mathbb{X}^{N, \pi_1\cup\pi_2}$ such that $\mathbb{X}^{n, \pi_1} = \mathbf{x}$ and $\mathbb{X}^{n, \pi_2}\sim \mathcal{N}(\mu_{\mathbf{x}, \pi_1, \pi_2}, \Sigma_{\mathbf{x}, \pi_1, \pi_2})$ for $n=1, \ldots, N$. In \citet{triggiano2024}, the authors emphasize the theoretical and empirical importance of the tensor normalization introduced in \citet{OberhauserChevyrev}, ensuring the resulting expected robust signature is characteristic for a larger class of processes. An important component of the GPES model is thus the (truncated at level $k$) tensor normalization $\lambda_C: T^K(\mathbb{R}^d) \rightarrow  T^K(\mathbb{R}^d)$, controlled by the hyperparameter $C$. For more details on the effect of the tensor normalization procedure and a sensitivity analysis\footnote{Choosing a very large value of $C$ is in practice equivalent to not applying a tensor normalization.} with respect to the hyper-parameter $C$ we refer to \citet{triggiano2024}. When applying the martingale correction to the GPES model\footnote{Note that the GPES algorithm estimates the expected signature conditional on $\mathbb{X}^{n, \pi_1}=\mathbf{x}$, so technically the martingale correction is biasing the estimator.} we subtract the correction term $\hat{c}^*_1 S^{\mathbf{I}}_c(\mathbb{X}^{k, \pi_1\cup\pi_2})_{[0,T]}$ to each $S^{\mathbf{I}}(\mathbb{X}^{k, \pi_1\cup\pi_2})_{[0,T]}$, i.e.\ \textit{before} taking the empirical expectation over the paths. This modification of the original algorithm highlighted in green in Algorithm~\ref{algo:GPES}. The final layer of the GPES model then maps the expected signature to a class by a combination of a linear transformation and a softmax output activation. The forward pass through the GPES model is summarized in Algorithm~\ref{algo:GPES}. 

\begin{algorithm}
    \caption{Gaussian Process augmented Expected Signature (GPES) classifier, forward pass}\label{algo:GPES}
    \begin{algorithmic}[1]
        \HYPERPARAMETERS Signature truncation level $k\in\mathbb{N}$, tensor normalization parameter $C\in\mathbb{R}_+$, data augmentation size $N\in\mathbb{N}$, in-fill partition $\pi_2\in \Delta_{[0,T]}^{M_2}$ s.t.\ $M_2\in\mathbb{N}$.
        \PARAMETERS Biases $\mathbf{b}_\mu \in\mathbb{R}^{d_\mu}$, $\mathbf{b}_\Sigma \in \mathbb{R}^{d_\Sigma}$, $\mathbf{b}_{\text{out}}\in\mathbb{R}^{d_\text{out}}$ and weights $W_\mu\in\mathbb{R}^{d_\mu \times d_{\text{in}}}$, $W_\Sigma \in \mathbb{R}^{d_\Sigma \times d_\text{in}}$, $W_\text{out} \in \mathbb{R}^{d_\text{out} \times d_\text{sig}}$ where $d_\text{in}\gets (d\, M_1+ M_1 + M_2)$, $d_\mu \gets d\, M_2$, $d_\Sigma\gets d\, M_2(d\, M_2+1)/2$, $d_\text{sig}\gets (d+\ldots + d^k)$ and $d_\text{out}\gets |\mathcal{C}|$.
        \INPUT $\mathbf{x}\in\mathbb{R}^{d\times M_1}$ and $\pi_1 \in\Delta_{[0,T]}^{M_1}$.
        \STATE  $\mu_{\mathbf{x}, \pi_1, \pi_2} \gets \mathbf{b}_{\mu} + W_\mu (\mathbf{x}, \pi_1, \pi_2)$.
        \STATE $L_{\mathbf{x}, \pi_1, \pi_2} \gets \mathbf{b}_{\Sigma} + W_\Sigma (\mathbf{x}, \pi_1, \pi_2)$ and $\Sigma_{\mathbf{x}, \pi_1, \pi_2} \gets L_{\mathbf{x}, \pi_1, \pi_2} L^\text{T}_{\mathbf{x}, \pi_1, \pi_2}$.
        \FOR{$n\in \{1, \ldots, N\}$}
        \STATE $\mathbb{X}^{n, \pi_1} \gets \mathbf{x}$.
        \STATE Sample $\mathbb{X}^{n, \pi_2} \sim\mathcal{N}(\mu_{\mathbf{x}, \pi_1, \pi_2}, \Sigma_{\mathbf{x}, \pi_1, \pi_2})$. 
        \STATE Signature of $\mathbb{X}^{n, \pi_1\cup\pi_2}$: $\mathbf{S}^n = S^{\leq k}(\mathbb{X}^{n, \pi_1\cup\pi_2})_{[0, T]} \textcolor{ForestGreen}{- \hat{c}_1^* S_c^{\leq k}(\mathbb{X}^{n, \pi_1\cup\pi_2})_{[0, T]}} \in \mathbb{R}^{d_{\text{sig}}}$.
        \STATE Tensor normalization: $\mathbf{S}^n \gets \lambda_C(\mathbf{S}^n)$.
        \ENDFOR
        \STATE Expected signature $\mathbf{ES} \gets \dfrac{1}{N} \sum_{n=1}^N \mathbf{S}^n$.
        \OUTPUT $\hat{c} \gets \text{softmax}(\mathbf{b}_\text{out} + W_\text{out} \mathbf{ES})$.
    \end{algorithmic}
\end{algorithm}

Note that, unlike classic Gaussian process regression where the prior mean is assumed to be constant $\mu(t)\equiv \mu$ and the prior covariance function
\[
\Sigma:[0,T] \rightarrow \mathbb{R}^{d\times d},
\]
is parameterized by a kernel and posteriors are computed by combining standard properties of the multivariate normal distribution and the kernel trick\footnote{The Gaussian process regression model is then fitted by tuning the kernel hyperparameters (either via maximum likelihood or cross-validation).}, in the GPES model the conditional mean and covariance functions 
\begin{align*}
    (\mathbf{x}, \pi_1, \pi_2) \in\mathbb{R}^{d \times M_1} \times \Delta_{[0,T]}^{M_1} \times \Delta_{[0,T]}^{M_2} \cong \mathbb{R}^{d\, M_1 + M_1 +M_2} \quad \mapsto \quad \begin{cases} \mu_{\mathbf{x}, \pi_1, \pi_2} \in\mathbb{R}^{d \times M_2}\cong \mathbb{R}^{d\, M_2}, \\ \Sigma_{\mathbf{x}, \pi_1, \pi_2} \in \mathcal{M}(\mathbb{R}^{d \times M_2})\cong \mathbb{R}^{d\, M_2 \times d\, M_2}, \end{cases}
\end{align*}
are parametrized by linear transformations
\begin{align*}
    \mu_{\mathbf{x}, \pi_1, \pi_2} &= \mathbf{b}_{\mu} + W_\mu (\mathbf{x}, \pi_1, \pi_2), \\ 
    \Sigma_{\mathbf{x}, \pi_1, \pi_2} &= L_{\mathbf{x}, \pi_1, \pi_2} L^\text{T}_{\mathbf{x}, \pi_1, \pi_2}, \quad L_{\mathbf{x}, \pi_1, \pi_2} = \mathbf{b}_{\Sigma} + W_\Sigma (\mathbf{x}, \pi_1, \pi_2),
\end{align*}
where $L_{\mathbf{x}, \pi_1, \pi_2}$ is a lower triangular matrix. The parameters $\mathbf{b}_\mu, \mathbf{b}_\Sigma, W_\mu, W_\Sigma$ are learned along with the output layer parameters $\mathbf{b}_\text{out}, W_\text{out}$ in the training phase via numerical optimization, in \citet{triggiano2024} the authors use simple stochastic gradient descent (SGD). If the timestamps $\pi_1$ of the observations $\mathbf{x}$ are not provided, they need to be fixed and can be regarded as hyperparameters of the model. The (way the) in-fill partition (is chosen) is instead always chosen a-priori, with a natural choice for $\pi_2$ being the set of mid-points of $\pi_1$. Other model hyperparameters include the signature truncation level $k$, the size of the augmentation $N$ and the constant $C$ controlling the strength of the tensor normalization, as well as the training procedure's hyperparameters (learning rate, batch size etc.).

In Section~\ref{sec:GPES} we replicate the synthetic data experiments of \citet{triggiano2024}. These consist of three datasets:
\begin{itemize}[align=left]
    \item[(FBM)] Two equally balanced classes with samples generated according to a standard Brownian motion and a fractional Brownian motion with Hurst parameter $H=0.26$ (both in dimension $d=1)$.
    \item[(OU)] Two equally balanced classes with samples generated according to two different Ornstein-Uhlenbeck (OU) processes (both in dimension $d=1)$.
    \item[(Bidim)] Six equally balanced classes with samples generated according to six different bi-dimensional stochastic processes ($d=2$).
\end{itemize}

When fitting the models we take the optimal hyperparameters cross-validated by \citet{triggiano2024}\footnote{The only hyperparameter we modify is the truncation level $k$ which we set to 4 for computational reasons (in the original paper the optimal value was found to be 5 or 6, depending on the dataset). The hyperparameters an the training and testing routines used to produce the results in Table~\ref{tab:gpes_experiments} can be found at \url{https://github.com/lorenzolucchese/gp-esig-classifier}.
} and apply cross-validated SGD to the training dataset. That is, we use 80\% of the training dataset to iterate through SGD parameter updates, while keeping the remaining 20\% of the training dataset (the validation set) to determine when the procedure has converged without overfitting. As described in \cite{triggiano2024} the presence of the tensor normalization step often leads to exploding gradients in the training procedure. We thus repeat the SGD routine over 5 different parameter initializations and pick the model with best validation performance.

\subsubsection{Pricing Path-Dependent Derivatives \texorpdfstring{\citep{arribas2021}}{Lyons et al., 2021}} \label{app:option_pricing}

While the authors of \citet{arribas2021} consider a more general setting, for brevity, we focus on the case where the (discounted) price process $\mathbb{X}$ is assumed to be a semimartingale. Let $\mathbb{X}=\{\mathbf{X}_t,\ t\in[0,T]\}$ be a semimartingale on the probability space $(\Omega=C([0,T], \mathbb{R}), \mathcal{F}, \mathbb{P})$ and denote by $\hat{\Omega}_T^{\text{LL}}$ the set of realized (time and lead-lag augmented) price signatures, i.e.\
\[
\hat{\Omega}_T^{\text{LL}} = \{S(\hat{\mathbb{X}}^\text{LL})_{[0,T]} \in T((\mathbb{R}^4)) : \hat{\mathbb{X}} = \{(t, \mathbf{X}_t),\ t\geq0\}\},
\]
where we refer to \citet[Definition~2.14~and~Example~2.15]{arribas2021} for the definition of the lead-lag augmentation but, for the purposes of our discussion, it is sufficient to note that it is uniquely defined through Stratonovich integration. The authors then consider the market $(\hat{\Omega}_T^{\text{LL}}, \mathcal{B}(\hat{\Omega}_T^{\text{LL}}), \{\mathcal{F}_t,\, t\in[0,T]\}, \hat{\mathbb{P}}^{\text{LL}})$ where $\hat{\mathbb{P}}^{\text{LL}}$ is the push-forward of $\mathbb{P}$ onto $\hat{\Omega}_T^{\text{LL}}$. By defining the set of derivative payoffs as all measurable $F:\hat{\Omega}_T^{\text{LL}}\rightarrow \mathbb{R}$, i.e.\ for a given price realization $\mathbb{X}$ the holder of the derivative receives $F(S(\hat{\mathbb{X}}^{\text{LL}})_{[0,T]})$, in \citet[Proposition~4.5]{arribas2021} the authors use the universality of the signature to show that any \textit{continuous} payoff $F$ can be arbitrarily well approximated by a linear payoff\footnote{Note that the approximation of $F$ by $f$ does not depend on the choice of probability measure $\mathbb{P}$, i.e.\ it is a pathwise density result. In Algorithm~\ref{algo:pricing_ES} and Algorithm~\ref{algo:hedging_ES} we thus assume that $f$ has been estimated offline (i.e.\ for any model) by linearly regressing $F(\omega)$ against $\omega$ for a large set of $\omega \in \hat{\Omega}^\text{LL}_T$.}. In particular, this implies the price of any such $F$ can be decomposed as
\[
\mathbb{E}^\mathbb{Q}[Z_T F(S(\hat{\mathbb{X}}^{\text{LL}})_{[0,T]})] \approx \langle f, Z_T \mathbb{E}^\mathbb{Q}[S(\hat{\mathbb{X}}^{\text{LL}})_{[0,T]}]\rangle,
\]
for a set of linear coefficients $f\in T((\mathbb{R}^4)^*)$ where $\mathbb{Q}$ is a pricing measure for $\mathbb{X}$ and $Z_T$ is a deterministic discount factor over $[0,T]$. The set of signature payoffs $\{S^I(\hat{\mathbb{X}}^{\text{LL}})_{[0,T]},\, I\in \mathcal{W}(\{1, 2,3,4\})\}$ can thus be understood as a set of Arrow-Debreu securities spanning the set of continuous path-dependent derivatives $F$. Similarly, in \citet[Proposition~4.6]{arribas2021} the authors show that linear trading strategies are dense in the space of admissible trading strategies $\mathcal{A}$, here defined as the set of all continuous functions $\theta: S(\hat{\mathbb{X}})_{[0,t]} \mapsto \theta(S(\hat{\mathbb{X}})_{[0,t]})$ over the stopped at $t\in[0, T]$ time-augmented price path signatures, i.e.\
\[
\theta(S(\hat{\mathbb{X}})_{[0,t]}) \approx \langle \ell, S(\hat{\mathbb{X}})_{[0,t]}\rangle,\quad \forall t\in[0,T],
\]
for a set of linear coefficients $\ell \in T((\mathbb{R}^2)^*)$. These two results are then combined in \citet[Theorem~4.7]{arribas2021} to show that the solution of the quadratic $\mathbb{P}$-hedging problem\footnote{For conciseness here we only discuss the quadratic hedging problem, in \citet{arribas2021} the authors obtain results for general polynomials which allows them to also approximate the optimal hedge under exponential utility.} 
\begin{equation}\label{eqn:quadratic_hedging}
\theta^* = \underset{\theta\in\mathcal{A}}{\text{argmin}} \ \mathbb{E}^\mathbb{P}\left[\left(F(S(\hat{\mathbb{X}}^\text{LL})_{[0,T]}) - p_0 - \int_0^T \theta(S(\hat{\mathbb{X}})_{[0,t]}) \text{d}\mathbf{X}_t \right)^2\right],
\end{equation}
can be arbitrarily well approximated by the solution of a linear signature quadratic hedging problem, i.e.\
\[
\theta^*(S(\hat{\mathbb{X}})_{[0,t]}) \approx \langle \ell^*, S(\hat{\mathbb{X}})_{[0,t]}\rangle, \quad t\in[0,T],
\]
where $\ell^*\in T((\mathbb{R}^2)^*)$ can be computed by 
\[
\ell^* = \underset{\ell \in T((\mathbb{R}^2)^*)}{\text{argmin}} \langle (f - p_0\varnothing + \ell \mathbf{4})^{\shuffle 2}, \mathbb{E}^\mathbb{P}[S(\hat{\mathbb{X}}^\text{LL})_{[0,T]}]\rangle.
\]

\begin{algorithm}
    \caption{Pricing Path-Dependent Derivatives with Expected Signatures}\label{algo:pricing_ES}
    \begin{algorithmic}[1]
        \HYPERPARAMETERS Signature truncation levels $k$, number of Monte Carlo samples $N\in\mathbb{N}$.
        \PARAMETERS Risk-neutral measure $\mathbb{Q}$, deterministic discount factor $Z_T\in\mathbb{R}^+$, linear approximator $f\in T^k((\mathbb{R}^4)^*)$.
        \INPUT Derivative payoff $F:\hat{\Omega}^\text{LL}_T\rightarrow\mathbb{R}$.
        \STATE Sample $N$ trajectories $\mathbb{X}^1, \ldots, \mathbb{X}^N\sim \mathbb{Q}$.
        \STATE Compute time-augmented lead-lag transforms $\hat{\mathbb{X}}^{n,\text{LL}}$ for $n\in\{1, \ldots, N\}$.
        \STATE Compute $\mathbf{S}^n\in T^k((\mathbb{R}^4))$ s.t.\ $\mathbf{S}_I^n\gets S^I(\hat{\mathbb{X}}^{n,\text{LL}})_{[0,T]}$, $|I|\leq k$ for $n\in\{1, \ldots, N\}$.
        \STATE Estimate $\Phi\in T^k((\mathbb{R}^4))$ s.t.\ $\Phi_I\gets \hat{\phi}^{N, \textcolor{ForestGreen}{\hat{c}_1}}_I(T)$, $|I|\leq k$ from $\{\hat{\mathbb{X}}^{n, \text{LL}}\}_{n=1}^N$.
        \OUTPUT Price $p = \langle f,\, Z_T \Phi\rangle$.
    \end{algorithmic}
\end{algorithm}

\begin{algorithm}
    \caption{Hedging Path-Dependent Derivatives with Expected Signatures}\label{algo:hedging_ES}
    \begin{algorithmic}[1]
        \HYPERPARAMETERS Signature truncation levels $k$, number of Monte Carlo samples $N\in\mathbb{N}$.
        \PARAMETERS Real-world measure $\mathbb{P}$, initial capital $p_0\in\mathbb{R}$, linear approximator $f\in T^k((\mathbb{R}^4)^*)$.
        \INPUT Derivative payoff $F:\hat{\Omega}^\text{LL}_T\rightarrow\mathbb{R}$.
        \STATE Sample $N$ trajectories $\mathbb{X}^1, \ldots, \mathbb{X}^N\sim \mathbb{P}$.
        \STATE Compute time-augmented lead-lag transforms $\hat{\mathbb{X}}^{n,\text{LL}}$ for $n\in\{1, \ldots, N\}$.
        \STATE Compute $\mathbf{S}^n\in T^k((\mathbb{R}^4))$ s.t.\ $\mathbf{S}_I^n\gets S^I(\hat{\mathbb{X}}^{n,\text{LL}})_{[0,T]}$, $|I|\leq k$ for $n\in\{1, \ldots, N\}$.
        \STATE Estimate $\Phi\in T^k((\mathbb{R}^4))$ s.t.\ $\Phi_I\gets \hat{\phi}^{N, \textcolor{ForestGreen}{\hat{c}_1}}_I(T)$, $|I|\leq k$ from $\{\hat{\mathbb{X}}^{n, \text{LL}}\}_{n=1}^N$.
        \STATE $\hat\ell^* \gets \inf_{\ell \in T^{\lfloor k/2\rfloor}((\mathbb{R}^2)^*)} \langle (f - p_0\varnothing + \ell \mathbf{4})^{\shuffle 2}, \Phi \rangle$.
        \OUTPUT Hedging strategy $t\mapsto \langle \hat\ell^*, S(\hat{\mathbb{X}})_{[0,t]}\rangle, \ t\in[0,T]$.
    \end{algorithmic}
\end{algorithm}

These theoretical results suggest both a pricing and a hedging strategy for path-dependent derivatives based on expected signatures\footnote{As discussed in \citet[Section~6.1]{arribas2018} the expected signature under the measure $\mathbb{Q}$ can alternatively be estimated in a model-free way from the market prices of a large enough set of exotic derivatives, yielding the implied expected signature. Here we assume the measures $\mathbb{P}$ and $\mathbb{Q}$ have been appropriately calibrated to market data.}, summarized in Algorithm~\ref{algo:pricing_ES} and Algorithm~\ref{algo:hedging_ES}. Both algorithms make use of expected signature estimation via Monte Carlo simulations, an approach that provides a classic setting for applying the martingale correction described in Section~\ref{sec:martingale_correction} (recall that by Remark~\ref{rem:var_reduction_mart_components} and the Lead-Lag discussion in Appendix~\ref{app:martingale_correction_practicalities} we apply the correction only to signature terms with the process $\mathbb{X}$ appearing in the outer integral). Note that price processes under $\mathbb{P}$ (as considered in the hedging setting) are usually not martingales and hence it is not clear whether the variance reduction for the Monte Carlo estimator of the expected signature obtained via the martingale correction offsets the introduced bias. On the other hand, under $\mathbb{Q}$, the fundamental theorem of option pricing ensures the discounted asset price process $\mathbb{X}$ is a (local) martingale and hence the martingale correction is exact.

\subsubsection{Distributional Regression for Streams \texorpdfstring{\citep{lemercier2021a}}{Lemercier et al. 2021}} \label{app:distribution-regression-streams}

The machine learning model we consider in Section~\ref{sec:distribution-regression-streams} is perhaps the most natural one when working with expected signatures. Introduced in \citet{lemercier2021a}, the Signature of the pathwise Expected Signature (SES) model aims to learn a map from a collection of paths $\mathbb{X}^1, \ldots, \mathbb{X}^N \in\mathcal{X} \subseteq \text{Lip}([0,T], \mathbb{R}^d)$, understood as an empirical measure on path space $\mu = \frac{1}{N}\sum_{n=1}^N \delta_{\mathbb{X}^n} \in\mathcal{P}(\mathcal{X})$, to a corresponding scalar value $f(\mu)\in\mathbb{R}$. The task of learning such function $f:\mathcal{P}(\mathcal{X})\rightarrow\mathbb{R}$ from a finite number of (possibly noisy) observations $\{(\mu_i, y_i)\}_{i\in\mathcal{I}_{\text{train}}}$ is known as distributional regression. Under appropriate conditions, by combining the characterizing property of the expected signature and the universality of the signature, the authors show that linear functionals on the signature of the pathwise expected signature are universal for weakly continuous functions $f:\mathcal{P}(\mathcal{X})\rightarrow \mathbb{R}$ \citep[Theorem~3.2]{lemercier2021a}. The training and testing of the SES model is summarized in Algorithm~\ref{algo:SES} with the step at which can apply the martingale correction highlighted in green.

\begin{algorithm}
    \caption{Signature of pathwise Expected Signature (SES), training and testing}\label{algo:SES}
    \begin{algorithmic}[1]
        \HYPERPARAMETERS Signature truncation levels: $k_1, k_2\in\mathbb{N}$. Linear regression regularizers.
        \INPUT $\{ (\{\mathbb{X}^n\}_{n\in\mathcal{N}_i}, y_i)\}_{i\in\mathcal{I}_{\text{train}}}$, $\{ (\{\mathbb{X}^n\}_{n\in\mathcal{N}_i}, y_i)\}_{i\in\mathcal{I}_{\text{test}}}$ where $\mathbb{X}^n\in\mathbb{R}^{d\times M}$ and $y_i\in\mathbb{R}$.
        \STATE $d_1 \gets d+\dots+d^{k_1}$ and $d_2 \gets d_1+\dots+d_1^{k_2}$.
        \FOR{$i\in \mathcal{I}_{\text{train}} \cup \mathcal{I}_{\text{test}}$}
        \STATE Pathwise expected signature of $\{\mathbb{X}^n\}_{n\in\mathcal{N}_i}$: $\!\Phi^i \in \mathbb{R}^{ d_1 \times M}$, $\!\Phi^i_{I, m}\! \gets\! \hat\phi^{\mathcal{N}_i, \textcolor{ForestGreen}{\hat{c}^*_1}}_I(t_m),\,|I|\!\leq\! k_1,\, 1\!\leq\! m\!\leq\! M$. 
        \STATE Signature of $\Phi^i$: $\mathbf{S}^i \gets S^{\leq k_2}(\Phi^i)_{[0, T]} \in \mathbb{R}^{d_2}$.
        \ENDFOR
        \STATE Fit linear regression: $\boldsymbol{\hat\beta} =(\hat\beta_0, \ldots, \hat{\beta}_{d_2}) \gets \text{LinearRegressionFit}(\{(\mathbf{S}^i, y_i)\}_{i\in\mathcal{I}_{\text{train}}})$.
        \STATE Predict using fitted linear regression: $\{\hat{y}_i\}_{i\in\mathcal{I}_{\text{test}}} \gets \text{LinearRegressionPredict}(\boldsymbol{\hat\beta}, \{\mathbf{S}^i\}_{i\in\mathcal{I}_{\text{test}}})$.
        \OUTPUT Performance metric: $L(\{\hat{y}_i\}_{i\in\mathcal{I}_{\text{test}}}, \{y_i\}_{i\in\mathcal{I}_{\text{test}}})$.
    \end{algorithmic}
\end{algorithm}

In Section~\ref{sec:distribution-regression-streams} we repeat two of the synthetic data experiments conducted in \citet{lemercier2021a}, analyzing the performance of the SES model without and with martingale correction (MC). In the first experiment \citep[Section~5.2]{lemercier2021a}, the task is to infer the temperature of an ideal gas from the paths of $N=20$ particles moving in a box. The dynamics of the system are inevitably linked to the radius of the particles, with larger particle radii resulting in more frequent collisions. Two settings\footnote{$V=3\text{cm}^3$ denotes the volume of the box in which the particles are moving.} are therefore considered, one with smaller particle radii $r_1 = 0.35 \times \sqrt[3]{V/N}$ and one with larger particle radii $r_2 = 0.65 \times \sqrt[3]{V/N}$. The second experiment \citep[Section~5.3]{lemercier2021a} concerns the estimation of the mean-reversion parameter in a rough volatility model. More precisely, the task is to infer the value of $a\in[10^{-6}, 1]$ from a sample $\{\sigma^n_\pi\}_{n=1}^N$ of (discretely observed) paths $\sigma^n = \{\sigma^n_t,\ t\in\pi\}$ over the partition $\pi=\{0, 0.01, \ldots, 2\}$ with continuous-time dynamics
\[
\text{d}Z_t = -a (Z_t - \mu)\text{d} t +\nu\text{d}B^H_t, \quad  \sigma_t = \exp Z_t,\quad t\in[0,2],
\]
where $\{B^H_t,\ t\in[0,2]\}$ is a fractional Brownian motion with Hurst parameter $H=0.2$, $\mu= 0.5$, $\nu=0.3$ and $Z_0 = 0.5$. The performance of the model is evaluated with increasingly large collections of paths as inputs, i.e.\ $N=20, 50, 100$. As expected, the model becomes more accurate as the number of paths increases. We refer to \citet{lemercier2021a} for more details on the two experimental setups.

In both experiments, we keep the same training-evaluation pipeline as the one considered in the original paper, namely nested $k$-fold cross-validation with 5 outer folds for evaluation and 3 inner folds for hyperparameter selection (including the signature truncation $k_1$ and a Lasso regularization parameter). The code used to produce the results of Table~\ref{tab:ideal_gas_experiment} and Table~\ref{tab:rough_vol_experiment} is available at \url{https://github.com/lorenzolucchese/distribution-regression-streams}.

\subsubsection{Systematic Trading \texorpdfstring{\citep{futter2023}}{Futter et al. 2023}} \label{app:sig_trader}
The last application we consider is also motivated by a financial application and can be understood as a natural extension of the quadratic hedging problem \eqref{eqn:quadratic_hedging}. In \citet{futter2023} the authors consider the same setup as in \citet{arribas2021} but allow the trading strategy $\theta\in\mathcal{A}$ to depend on the signature of the augmented process 
\[
\hat{\mathbb{Z}} = \{(t, \mathbf{X}_t, \mathbf{f}_t),\ t\in[0,T]\} \in C([0,T], \mathbb{R}^{1+d+q}),
\]
where $\mathbb{X} = \{\mathbf{X}_t,\ t\in[0,T] \}\in C([0,T], \mathbb{R}^d)$ is a $d$-dimensional price process and $\mathbb{F} = \{\mathbf{f}_t,\ t\in[0,T] \}\in C([0,T], \mathbb{R}^q)$ is a set of $q$ observable but not tradable factors (or signals) influencing $\mathbb{X}$, and to trade in $d$ assets. Again motivated by a universality result of the signature the authors approximate any such trading strategy by a linear one, i.e.\ 
\[
\theta_i(S(\hat{\mathbb{Z}})_{[0,t]}) \approx \langle \ell_i, S(\hat{\mathbb{Z}})_{[0,t]}\rangle, \quad i\in\{1, \ldots, d\},\ t\in[0,T],
\]
for some set of linear functionals $\ell_1, \ldots, \ell_d \in T((\mathbb{R}^{1+d+q})*)$. The authors then show that an explicit solution to the path-dependent mean-variance problem
\[
\ell^*_1, \ldots, \ell^*_d = \underset{\substack{\ell_1, \ldots, \ell_d \in T^k((\mathbb{R}^{1+d+q})^*) \\ \text{Var}(\text{PnL}_T) \leq \Delta}}{\text{argmin}} \mathbb{E}^\mathbb{P}[\text{PnL}_T  ], \quad \text{where}\quad \text{PnL}_T = \sum_{i=1}^d \int_0^T \langle \ell_i, S(\hat{\mathbb{Z}})_{[0,t]}\rangle \text{d}X^i_t,
\] 
for arbitrary truncation level $k\in\mathbb{N}$ can be read from the entries of 
\[
\frac{1}{2\lambda_\Delta} \Sigma_\text{sig}^{-1} \mu_\text{sig} \in \mathbb{R}^{d_{\text{sig}}}, \quad d_{\text{sig}} = d + \ldots + d(1+d+q)^k,
\]
where $\mu_\text{sig} \in \mathbb{R}^{d_{\text{sig}}}$ and $\Sigma_\text{sig}\in \mathbb{R}^{d_{\text{sig}} \times d_{\text{sig}}}$ and $\lambda_\Delta \in \mathbb{R}_+$ only depend\footnote{As in the mean-variance optimal portfolio, the variance scaling parameter $\lambda_\Delta\in\mathbb{R}_+$ also depends on the target variance $\Delta$.} on the expected signature $\mathbb{E}^\mathbb{P}[S(\hat{\mathbb{Z}}^{\text{LL}})_{[0,T]}]$. 

A standard way of applying the sig-trading strategy in practice is given in Algorithm~\ref{algo:sig_trading}. Note that in real financial markets price and signal paths cannot be resampled and hence the collection $\{(\mathbb{X}^n, \mathbb{F}^n)\}_{n=1}^N$ can only be obtained by chopping-and-shifting a single long observation $\{(\mathbf{X}_t, \mathbf{f}_t ),\ t\in[0,NT]\}$. In this respect, the sig-trading algorithm provides a striking example of a setting where the sampling scheme cannot be considered i.i.d.\ and hence one needs to resort to the results of Theorem~\ref{thm:clt_dep} to obtain theoretical guarantees for the expected signature estimator. 

A silent but fundamental assumption\footnote{Clearly, this assumption is not specific to the sig-trading strategy but applies to any trading strategy that tries to learn patterns from the past to profit in the future.} is that the market dynamics of the collection of \textit{past} price-factor paths $\{(\mathbb{X}^n, \mathbb{F}^n)\}_{n=1}^N$ used to estimate the expected signature will be the same as those of the \textit{future} price-factor process $(\mathbb{X}^*, \mathbb{F}^*)$ to which the trading strategy will be applied. Using the martingale correction for estimating (some of the entries of) the expected signature, induces a bias to ``ignore'' the drift component of the signature term. For example, the first level of the martingale-corrected expected signature is always zero. This might be a desirable feature to avoid over-fitting the trading strategy to spurious drifts in the data (for example in the price processes $\mathbb{X}$ \citep{murray2022}) while capturing higher order effects. As with other applications discussed in this section, the usefulness of the martingale correction in Algorithm~\ref{algo:sig_trading} can be empirically cross-validated.

\begin{algorithm}
    \caption{Signature Trading}\label{algo:sig_trading}
    \begin{algorithmic}[1]
        \HYPERPARAMETERS Signature truncation level $k$, maximum variance $\Delta\in\mathbb{R}_+$.
        \INPUT Collection of price-factor paths $\{(\mathbb{X}^n, \mathbb{F}^n)\}_{n=1}^N$ where $\mathbb{X}^n =\{\mathbf{X}^n_t,\ t\in[0,T]\}\in C([0,T], \mathbb{R}^d)$ and $\mathbb{F}^n =\{\mathbf{f}^n_t,\ t\in[0,T]\} \in C([0,T], \mathbb{R}^q)$ for $n\in\{1, \ldots, N\}$.
        \STATE Set $\hat{\mathbb{Z}} \gets \{(t, \mathbf{X}_t, \mathbf{f}_t),\, t\in[0,T]\}$.
        \STATE Compute time-augmented lead-lag transforms $\hat{\mathbb{Z}}^{n,\text{LL}}$ for $n\in\{1, \ldots, N\}$.
        \STATE Estimate $\Phi\in T^k((\mathbb{R}^{1+d+q}))$ s.t.\ $\Phi_I \gets \hat{\phi}^{N, \textcolor{ForestGreen}
        {\hat{c}_1}}_I(T)$, $|I|\leq k$ from $\{\hat{\mathbb{Z}}^{n,\text{LL}}\}_{n=1}^N$.
        \STATE Compute $\lambda_\Delta, \mu_\text{sig}, \Sigma_\text{sig}$ from corresponding entries of $\Phi$.
        \STATE Extract $\hat\ell_i^*$ from corresponding entries of $(2\lambda_\Delta\Sigma_\text{sig})^{-1} \mu_\text{sig}$ for $i\in\{1, \ldots, d\}$. 
        \OUTPUT Trading strategy $t\mapsto \langle \hat\ell_i^*, S(\hat{\mathbb{Z}}^*)_{[0,t]}\rangle, \ i\in\{1,\ldots, d\},\ t\in[0,T]$ for new $(\mathbb{X}^*, \mathbb{F}^*)$.
    \end{algorithmic}
\end{algorithm}

\section{Controlled Linear Regression} \label{app:controlled_linear_regression}
	In this section, we introduce the notion of controlled linear regression. The main rationale is to exploit as much information as possible in the training phase to make the coefficient estimators as precise as possible. We start by considering the following linear model
	\[
	y = \beta_1 x_1 + \ldots + \beta_p x_p + \epsilon = \mathbf{x}^\text{T} \boldsymbol{\beta} +\epsilon,
	\]
	and assume we observe the training data $\{(y_n, x_{n,1}, \ldots, x_{n, p}),\ n = 1,\ldots, N\}$, i.e.\ 
	\[
	y_n = \beta_1 x_{n, 1} + \ldots + \beta_p x_{n, p} + \epsilon_n = \mathbf{x}_n^\text{T} \boldsymbol{\beta} +\epsilon_n, \quad n=1, \ldots, N,
	\]
	where, given the design matrix $\mathbf{X} \in \mathbb{R}^{N \times p}$, the errors $\{\epsilon_n, \ n=1,\ldots, N\}$ are
	\begin{itemize}
		\item mean zero, i.e.\ $\mathbb{E}[\epsilon_n | \mathbf{X}] = 0$ for $n=1, \ldots, N$,
		\item homoskedastic, i.e.\ $\mathbb{E}[\epsilon^2_n | \mathbf{X}] = \sigma^2 \in [0, \infty) $ for $n=1, \ldots, N$, and
		\item uncorrelated, i.e.\ $\mathbb{E}[\epsilon_n \epsilon_m|\mathbf{X}]=0$ for $n,m=1, \ldots, N$ with $n\neq m$.
	\end{itemize}
	For any test observation $\mathbf{x}_* = (x_{*,1}, \ldots, x_{*,p}) \in \mathbb{R}^p$ we thus have the best possible prediction (in terms of MSE) for $y_*$ is
	\[
	\mathbb{E}[ y_* | \mathbf{x}_*] = \mathbf{x}_*^\text{T} \boldsymbol{\beta} =: \hat{y}_*(\boldsymbol{\beta}),
	\]
	and plugging in an estimator $\boldsymbol{\hat\beta}$ for $\boldsymbol{\beta}$ yields the predictor
	\[
	\hat{y}_*(\boldsymbol{\hat\beta}) = \mathbf{x}_*^\text{T} \boldsymbol{\hat\beta}.
	\]
	Note that, under the assumptions introduced above, the mean squared error of such a predictor can be decomposed as
	\begin{align*}
		\mathbb{E}\left[(\hat{y}_*(\boldsymbol{\hat\beta}) - y_*)^2 \big| \mathbf{X}, \mathbf{x}_*\right] &= \mathbb{E}\left[\epsilon_*^2 \big| \mathbf{X}, \mathbf{x}_*\right] + \mathbb{E}\left[(\hat{y}_*(\boldsymbol{\hat\beta}) - \hat{y}_*(\boldsymbol{\beta}))^2 \big| \mathbf{X}, \mathbf{x}_*\right] \\
        &= \sigma^2 + \mathbb{E}\left[(\hat{y}_*(\boldsymbol{\hat\beta}) - \hat{y}_*(\boldsymbol{\beta}))^2 \big| \mathbf{X}, \mathbf{x}_*\right].
	\end{align*}
	Under the assumptions discussed above minimizing the mean squared error of the predictor relative to the target $y_*$ is thus equivalent to minimizing the mean squared error of the predictor relative to the infeasible best prediction $\hat{y}_*(\boldsymbol{\beta})$.
	
	\subsection{Controlled Ordinary Least Squares (OLS) estimation} \label{app:COLS_estimations}
	
	\paragraph{Classic OLS estimation}
	The usual OLS estimator for $\boldsymbol{\beta} = (\beta_1, \ldots, \beta_p) \in \mathbb{R}^p$ is given by\footnote{Here, and in all other estimators discussed in this section, the dependence on $\mathbf{y}$ is dropped from the notation.}
	\[
	\boldsymbol{\hat\beta}_{\mathbf{X}} = (\mathbf{X}^\text{T} \mathbf{X})^{-1} \mathbf{X}^{\text{T}} \mathbf{y},
	\]
	which, by the Gauss-Markov theorem, is known to be the best linear unbiased estimator (BLUE): for any $\boldsymbol{\lambda} = (\lambda_1, \ldots, \lambda_p) \in \mathbb{R}^p$,
	\[
	\mathbb{E}\left[(\boldsymbol{\lambda}^\text{T} \boldsymbol{\hat\beta}_{\mathbf{X}} - \boldsymbol{\lambda}^\text{T}\boldsymbol{\beta})^2 \big| \mathbf{X} \right] = \min_{\boldsymbol{\tilde{\beta}}_{\mathbf{X}} \in \text{LUE}(\mathbf{X}, \mathbf{y})} \mathbb{E}\left[ (\boldsymbol{\lambda}^\text{T} \boldsymbol{\tilde{\beta}}_{\mathbf{X}} - \boldsymbol{\lambda}^\text{T}\boldsymbol{\beta})\big| \mathbf{X} \right],
	\]
	where $\text{LUE}(\mathbf{X}, \mathbf{y})$ is the set of all linear and unbiased estimator for $\boldsymbol{\beta}$, i.e.\ $\boldsymbol{\tilde{\beta}}_{\mathbf{X}} = C(\mathbf{X}) \mathbf{y}$ for some $\mathbf{X}$-measurable matrix $C(\mathbf{X})\in\mathbb{R}^{p\times N}$ and $\mathbb{E}[\boldsymbol{\tilde{\beta}}_{\mathbf{X}}|\mathbf{X}] = \boldsymbol{\beta}$. By applying the BLUE property, we can show that $\hat{y}_*(\boldsymbol{\hat\beta}_{\mathbf{X}})$ is the best\footnote{In terms of mean squared error (MSE). Recall that for unbiased estimators, the MSE is equal to the estimator's variance.} predictor across all predictors formed from linear and unbiased estimators, i.e.\
	\begin{align*}
		\mathbb{E}\left[(\hat{y}_*(\boldsymbol{\hat\beta}_{\mathbf{X}}) - \hat{y}_*(\boldsymbol{\beta}))^2 \big| \mathbf{X}, \mathbf{x}_*\right]  \leq \mathbb{E}\left[(\hat{y}_*(\boldsymbol{\tilde{\beta}}_{\mathbf{X}}) - \hat{y}_*(\boldsymbol{\beta}))^2 \big| \mathbf{X}, \mathbf{x}_*\right],
	\end{align*}
	for all $\boldsymbol{\tilde{\beta}}_{\mathbf{X}}\in\text{LUE}(\mathbf{X}, \mathbf{y})$. Note that here we applied the mean-zero uncorrelated errors assumption.
	
	\paragraph{Controlled OLS estimation}
	Let us now assume we can additionally observe the ``control'' random variables $\{\mathbf{z}_n = (z_{n,1}, \ldots, z_{n,k}) \in \mathbb{R}^k, \ n=1, \ldots, N\}$. We shall assume the controls are available for training, i.e.\ when estimating $\boldsymbol{\beta}$, but for predicting, i.e.\ when forecasting $y_*$ we will have access to $\mathbf{x}_*$ but not to $\mathbf{z}_*$. Given the original design matrix $\mathbf{X}\in\mathbb{R}^{N\times p}$, we now assume the errors and the controls are jointly
	\begin{itemize}
		\item mean zero, i.e.\ $\mathbb{E}[(\epsilon_n, \mathbf{z}_n) | \mathbf{X}] = \mathbf{0} \in\mathbb{R}^{k+1}$ for $n=1, \ldots, N$,
		\item homoskedastic, i.e.\ $\mathbb{E}[(\epsilon_n, \mathbf{z}_n)^{\otimes 2} | \mathbf{X}] = \Sigma \in \mathbb{R}^{(k+1) \times (k+1)}$ for $n=1, \ldots, N$ for some $\Sigma$ symmetric positive definite,
		\item uncorrelated, i.e.\ $\mathbb{E}[(\epsilon_n, \mathbf{z}_n) {\otimes} (\epsilon_m, \mathbf{z}_m) | \mathbf{X}] = \mathbf{0} \in\mathbb{R}^{(k+1)\times(k+1)}$ for $n,m=1, \ldots, N$ with $n\neq m$.
	\end{itemize}
	
	In what follows we partition
	\[
	\Sigma = 
	\begin{pmatrix} 
		\sigma^2 & \Sigma_{y, \mathbf{z}} \\
		\Sigma_{\mathbf{z}, y} & \Sigma_{\mathbf{z}} \\
	\end{pmatrix},
	\]
	
	\begin{remark}
		Throughout the whole section, unless stated otherwise, we consider the original design matrix $\mathbf{X}$ and the new observation $\mathbf{x}_*\in\mathbb{R}^p$ to be fixed with random controls $\mathbf{Z}$ and errors $\boldsymbol{\epsilon}$. 
	\end{remark}

	As discussed above, fixing the design matrix $\mathbf{X}\in\mathbb{R}^{N\times p}$ and a test observation $\mathbf{x}_* = (x_{*,1}, \ldots, x_{*,p}) \in \mathbb{R}^p$, the predictor 
	\[ 
	\hat{y}_*(\boldsymbol{\hat\beta}_{\mathbf{X}}) = \mathbf{x}_*^\text{T} \boldsymbol{\hat\beta}_{\mathbf{X}},
	\]
	is unbiased for the statistic $\hat{y}_*(\boldsymbol{\beta}) = \mathbf{x}_*^\text{T} \boldsymbol{\beta} \in \mathbb{R}$. We hence introduce the control variate predictor
	\[
	\hat{y}_*(\boldsymbol{\hat\beta}_{\mathbf{X}}, \mathbf{Z}, \boldsymbol{\lambda}) = \hat{y}_*(\boldsymbol{\hat\beta}_{\mathbf{X}}) + \boldsymbol{\lambda}_1^\text{T} \mathbf{Z} \boldsymbol{\lambda}_2,
	\]
	where $\mathbf{Z}\in\mathbb{R}^{N\times k}$ is the control design matrix while $\boldsymbol{\lambda}_1 \in\mathbb{R}^{N}$ and $\boldsymbol{\lambda}_2 \in\mathbb{R}^{k}$ are measurable in $\mathbf{X}$ and $\mathbf{x}_*$. Under the assumptions discussed above the controlled predictor can be shown to be unbiased and attains a minimum\footnote{Note that since $\mathbf{Z}$ and $\boldsymbol{\epsilon}$ are assumed to be jointly spherical given $\mathbf{X}$ (and $\mathbf{x}_*$) the variance of the controlled predictor is given by 
		\begin{align*}
			\text{Var}(&\hat{y}_*(\boldsymbol{\hat\beta}_{\mathbf{X}}, \mathbf{Z}, \boldsymbol{\lambda}) | \mathbf{X}, \mathbf{x}_*) \\
			&= \text{Var}(  \mathbf{x}_*^\text{T} (\mathbf{X}^\text{T} \mathbf{X})^{-1} \mathbf{X}^{\text{T}} (\mathbf{X} \boldsymbol{\beta} +\boldsymbol{\epsilon}) + \boldsymbol{\lambda}_1^\text{T} \mathbf{Z} \boldsymbol{\lambda}_2 | \mathbf{X}, \mathbf{x}_*)  \\
			&= \mathbf{x}_*^\text{T} (\mathbf{X}^\text{T} \mathbf{X})^{-1} \mathbf{X}^{\text{T}} \mathbb{E}[\boldsymbol{\epsilon} \boldsymbol{\epsilon}^\text{T} | \mathbf{X}, \mathbf{x}_* ] \mathbf{X} (\mathbf{X}^\text{T} \mathbf{X})^{-1} \mathbf{x}_* + 2 \mathbf{x}_*^\text{T} (\mathbf{X}^\text{T} \mathbf{X})^{-1} \mathbf{X}^{\text{T}} \mathbb{E}[\boldsymbol{\epsilon} \boldsymbol{\lambda}_2^\text{T} \mathbf{Z}^\text{T}| \mathbf{X}, \mathbf{x}_*] \boldsymbol{\lambda}_1 + \boldsymbol{\lambda}_1^\text{T} \mathbb{E}[\mathbf{Z} \boldsymbol{\lambda}_2 \boldsymbol{\lambda}_2^\text{T} \mathbf{Z}^\text{T}| \mathbf{X}, \mathbf{x}_*]\boldsymbol{\lambda}_1 \\
			&= \sigma^2 \mathbf{x}_*^\text{T} (\mathbf{X}^\text{T} \mathbf{X})^{-1} \mathbf{x}_* + 2 \boldsymbol{\lambda}_2^\text{T} \Sigma_{\mathbf{z},y} \mathbf{x}_*^\text{T} (\mathbf{X}^\text{T} \mathbf{X})^{-1} \mathbf{X}^{\text{T}} \boldsymbol{\lambda}_1 + \boldsymbol{\lambda}_2^\text{T} \Sigma_{\mathbf{z}} \boldsymbol{\lambda}_2 \boldsymbol{\lambda}_1^\text{T} \boldsymbol{\lambda}_1. 
		\end{align*}
		Setting partial derivatives in $\boldsymbol{\lambda}_1$ and $\boldsymbol{\lambda}_2$ equal to zero 
		\begin{align*}
			\partial_{{\boldsymbol{\lambda}}_1}\text{Var}(\hat{y}_*(\boldsymbol{\hat\beta}_{\mathbf{X}}, \mathbf{Z}, \boldsymbol{\lambda}) | \mathbf{X}, \mathbf{x}_*) &= 2 \boldsymbol{\lambda}_2^\text{T} \Sigma_{\mathbf{z},y} \mathbf{x}_*^\text{T} (\mathbf{X}^\text{T} \mathbf{X})^{-1} \mathbf{X}^{\text{T}} + 2 \boldsymbol{\lambda}_2^\text{T} \Sigma_{\mathbf{z}} \boldsymbol{\lambda}_2 \boldsymbol{\lambda}_1 = 2 \boldsymbol{\lambda}_2^\text{T} (\Sigma_{\mathbf{z},y} \mathbf{x}_*^\text{T} (\mathbf{X}^\text{T} \mathbf{X})^{-1} \mathbf{X}^{\text{T}} +  \Sigma_{\mathbf{z}} \boldsymbol{\lambda}_2 \boldsymbol{\lambda}_1) = 0,  \\
			\partial_{{\boldsymbol{\lambda}}_2}\text{Var}(\hat{y}_*(\boldsymbol{\hat\beta}_{\mathbf{X}}, \mathbf{Z}, \boldsymbol{\lambda}) | \mathbf{X}, \mathbf{x}_*) &= 2 \boldsymbol{\lambda}_1^\text{T} \mathbf{X} (\mathbf{X}^\text{T} \mathbf{X})^{-1}  \mathbf{x}_* \Sigma_{\mathbf{z},y} + 2 \boldsymbol{\lambda}_1^\text{T} \boldsymbol{\lambda}_1 \Sigma_{\mathbf{z}} \boldsymbol{\lambda}_2 = 2 \boldsymbol{\lambda}_1^\text{T} (\mathbf{X} (\mathbf{X}^\text{T} \mathbf{X})^{-1}  \mathbf{x}_* \Sigma_{\mathbf{z},y} + \boldsymbol{\lambda}_1 \Sigma_{\mathbf{z}} \boldsymbol{\lambda}_2) = 0,
		\end{align*}
		yields as non-trivial (i.e.\ such that $\boldsymbol{\lambda}_1, \boldsymbol{\lambda}_2 \neq 0$) stationary point $\boldsymbol{\lambda}_1^* = \mathbf{X} (\mathbf{X}^\text{T} \mathbf{X})^{-1}\mathbf{x}_*$ and $\boldsymbol{\lambda}_2^* = - \Sigma_{\mathbf{z}}^{-1} \Sigma_{\mathbf{z}, y}$. By taking second derivatives we can compute the Hessian at the stationary point to be
		\[
		\partial_{{\boldsymbol{\lambda}}} \partial_{{\boldsymbol{\lambda}^\text{T}}}\text{Var}(\hat{y}_*(\boldsymbol{\hat\beta}_{\mathbf{X}}, \mathbf{Z}, \boldsymbol{\lambda}) | \mathbf{X}, \mathbf{x}_*) \Big|_{\boldsymbol{\lambda} = \boldsymbol{\lambda}_*}= 2 \begin{pmatrix}
			\Sigma_{y, \mathbf{z}} \Sigma_\mathbf{z}^{-1} \Sigma_{\mathbf{z}, y} I_{N\times N} &  - \mathbf{X} (\mathbf{X}^\text{T} \mathbf{X})^{-1} \mathbf{x}_* \Sigma_{y, \mathbf{z}} \\
			- \Sigma_{\mathbf{z}, y} \mathbf{x}_* (\mathbf{X}^\text{T} \mathbf{X})^{-1} \mathbf{X}^\text{T} & \mathbf{x}_* (\mathbf{X}^\text{T} \mathbf{X})^{-1} \mathbf{x}_* \Sigma_{\mathbf{z}}
		\end{pmatrix},
		\]
		which is positive (semi)definite since $\Sigma$ is positive definite and hence $\boldsymbol{\lambda}^*$ is a minimum. To show the assumption that $\Sigma$ is positive definite implies the Hessian is positive (semi)definite we use the following result from linear algebra theory: a symmetric block matrix
		\[
		\mathbf{M} = \begin{pmatrix}
			\mathbf{A} & \mathbf{B} \\ \mathbf{B}^\text{T} & \mathbf{D}
		\end{pmatrix},
		\]
		where $\mathbf{A}$ is square is positive (semi)definite if and only $\mathbf{A}$ and $\mathbf{D} - \mathbf{B}^\text{T} \mathbf{A}^{-1} \mathbf{B}$ are positive (semi)definite. Applying this result to the Hessian at $\boldsymbol{\lambda}_*$ note that $\mathbf{A} = \Sigma_{y, \mathbf{z}} \Sigma_\mathbf{z}^{-1} \Sigma_{\mathbf{z}, y} I_{N\times N} $ is trivially positive (semi)definite since $\Sigma_{y, \mathbf{z}} \Sigma_\mathbf{z}^{-1} \Sigma_{\mathbf{z}, y} \geq 0$ by positive definitness of $\Sigma_{\mathbf{z}}$, and 
		\[
		\mathbf{D} - \mathbf{B}^\text{T} \mathbf{A}^{-1} \mathbf{B} = \mathbf{x}_* (\mathbf{X}^\text{T} \mathbf{X})^{-1} \mathbf{x}_* \left(\Sigma_{\mathbf{z}} - \frac{\Sigma_{\mathbf{z}, y} \Sigma_{y, \mathbf{z}}}{\Sigma_{y, \mathbf{z}} \Sigma_\mathbf{z}^{-1} \Sigma_{\mathbf{z}, y}} \right),
		\]
		is positive (semi)definite since $ \mathbf{x}_* (\mathbf{X}^\text{T} \mathbf{X})^{-1} \mathbf{x}_* \geq 0$ by positive definitness of $\mathbf{X}^\text{T}\mathbf{X}$ and $\Sigma_{\mathbf{z}} - (\Sigma_{y, \mathbf{z}} \Sigma_\mathbf{z}^{-1} \Sigma_{\mathbf{z}, y})^{-1} \Sigma_{\mathbf{z}, y} \Sigma_{y, \mathbf{z}}$ is positive (semi)definite by applying the converse of the previous statement to the positive (semi)definite matrix   
		\[
		\mathbf{M}' = \begin{pmatrix}
			\Sigma_{y, \mathbf{z}} \Sigma_\mathbf{z}^{-1} \Sigma_{\mathbf{z}, y} & \Sigma_{y, \mathbf{z}} \\ \Sigma_{\mathbf{z}, y} & \Sigma_{\mathbf{z}}
		\end{pmatrix}.
		\]
	} in variance when
	\[
	\boldsymbol{\lambda}_1^* = \mathbf{X} (\mathbf{X}^\text{T} \mathbf{X})^{-1}\mathbf{x}_*  \quad \text{and} \quad \boldsymbol{\lambda}_2^* = - \Sigma_{\mathbf{z}}^{-1} \Sigma_{\mathbf{z}, y}.
	\]
	
	For any $\mathbf{x}_* = (x_{*,1}, \ldots, x_{*,p}) \in \mathbb{R}^p$ we thus have
	\[
	\hat{y}_*(\boldsymbol{\hat\beta}_{\mathbf{X}}, \mathbf{Z}, \boldsymbol{\lambda}^*) = \mathbf{x}_*^\text{T} (\mathbf{X}^\text{T} \mathbf{X})^{-1} \mathbf{X}^{\text{T}} \mathbf{y} - \mathbf{x}_*^\text{T} (\mathbf{X}^\text{T} \mathbf{X})^{-1} \mathbf{X}^{\text{T}} \mathbf{Z} \Sigma_{\mathbf{z}}^{-1}  \Sigma_{\mathbf{z}, y} = \mathbf{x}_*^\text{T} \boldsymbol{\hat\beta}_{\mathbf{X}, \mathbf{Z}, \Sigma} = \hat{y}_*(\boldsymbol{\hat\beta}_{\mathbf{X}, \mathbf{Z}, \Sigma}),
	\]
	yielding the infeasible (in the sense that it depends on the unknown quantity $\Sigma$) estimator
	\[
	\boldsymbol{\hat\beta}_{\mathbf{X}, \mathbf{Z}, \Sigma} =  (\mathbf{X}^\text{T} \mathbf{X})^{-1} \mathbf{X}^{\text{T}} (\mathbf{y} - \mathbf{Z} \Sigma_{\mathbf{z}}^{-1}  \Sigma_{\mathbf{z}, y}) = \boldsymbol{\hat\beta}_{\mathbf{X}} - (\mathbf{X}^\text{T} \mathbf{X})^{-1} \mathbf{X}^{\text{T}} \mathbf{Z} \Sigma_{\mathbf{z}}^{-1}  \Sigma_{\mathbf{z}, y}.
	\]
	
	Note that, given $\mathbf{X}$, $\boldsymbol{\hat\beta}_{\mathbf{X}, \mathbf{Z}, \Sigma}$ is unbiased (by mean zero property of the controls). Moreover, for any test observation $\mathbf{x}_* = (x_{*,1}, \ldots, x_{*,p}) \in \mathbb{R}^p$,
	\begin{align*}
		\text{Var}&(\mathbf{x}_*^\text{T} \boldsymbol{\hat\beta}_{\mathbf{X}, \mathbf{Z}, \Sigma} | \mathbf{X}, \mathbf{x}_*) \\
		&= \text{Var}(\mathbf{x}_*^\text{T} \boldsymbol{\hat\beta}_{\mathbf{X}} | \mathbf{X}, \mathbf{x}_*) + \text{Var}(\mathbf{x}_*^\text{T} (\mathbf{X}^\text{T} \mathbf{X})^{-1} \mathbf{X}^{\text{T}} \mathbf{Z} \Sigma_{\mathbf{z}}^{-1}  \Sigma_{\mathbf{z}, y} | \mathbf{X}, \mathbf{x}_*)  \\
		&\quad\quad\quad\quad\quad\quad\quad\quad\quad\quad\quad\quad\quad\quad\quad\quad\quad\quad\quad - 2\, \text{Cov}(\mathbf{x}_*^\text{T} \boldsymbol{\hat\beta}_{\mathbf{X}}, \mathbf{x}_*^\text{T} (\mathbf{X}^\text{T} \mathbf{X})^{-1} \mathbf{X}^{\text{T}} \mathbf{Z} \Sigma_{\mathbf{z}}^{-1}  \Sigma_{\mathbf{z}, y} | \mathbf{X}, \mathbf{x}_*) \\
		&= \sigma^2 \mathbf{x}_*^\text{T} (\mathbf{X}^\text{T} \mathbf{X})^{-1} \mathbf{x}_* + \Sigma_{y, \mathbf{z}} \Sigma_{\mathbf{z}}^{-1}  \Sigma_{\mathbf{z}, y} \mathbf{x}_*^\text{T} (\mathbf{X}^\text{T} \mathbf{X})^{-1}\mathbf{x}_* - 2\, \Sigma_{y, \mathbf{z}} \Sigma_{\mathbf{z}}^{-1}  \Sigma_{\mathbf{z}, y} \mathbf{x}_*^\text{T} (\mathbf{X}^\text{T} \mathbf{X})^{-1} \mathbf{x}_*  \\
		&= (\sigma^2 - \Sigma_{y, \mathbf{z}} \Sigma_{\mathbf{z}}^{-1}  \Sigma_{\mathbf{z}, y})\mathbf{x}_*^\text{T} (\mathbf{X}^\text{T} \mathbf{X})^{-1} \mathbf{x}_* \\
		& \leq \sigma^2 \mathbf{x}_*^\text{T} (\mathbf{X}^\text{T} \mathbf{X})^{-1} \mathbf{x}_* = \text{Var}(\mathbf{x}_*^\text{T} \boldsymbol{\hat\beta}_{\mathbf{X}} | \mathbf{X}, \mathbf{x}_*),
	\end{align*}
	with equality iff $\Sigma_{\mathbf{z}, y} = 0$. In other words, as long as the controls are correlated with the target, we obtain a better prediction by using $\boldsymbol{\hat\beta}_{\mathbf{X}, \mathbf{Z}, \Sigma}$ instead of the OLS estimator $\boldsymbol{\hat\beta}_{\mathbf{X}}$ and the quality of the prediction increases as the correlation between $y$ and $\mathbf{z}$ increases. The variance reduction factor is constant across test observations and is given by 
	\[
	\frac{\text{Var}(\mathbf{x}_*^\text{T} \boldsymbol{\hat\beta}_{\mathbf{X}, \mathbf{Z}, \Sigma} | \mathbf{X}, \mathbf{x}_*)}{\text{Var}(\mathbf{x}_*^\text{T} \boldsymbol{\hat\beta}_{\mathbf{X}} | \mathbf{X}, \mathbf{x}_*)} = \left(1 - \frac{\Sigma_{y, \mathbf{z}} \Sigma_{\mathbf{z}}^{-1}  \Sigma_{\mathbf{z}, y}}{\sigma^2}\right).
	\]
	
	\begin{remark} \label{rem:classic_controlled_variates_OLS}
		Note that when $\mathbf{X} = \mathds{1} \in\mathbb{R}^{N}$, $\mathbf{x}_* = 1$ and $\mathbf{Z}\in\mathbb{R}^{N}$, we estimate $\mu_* = \mathbb{E}[y]$ with the OLS estimator $\hat{\mu}_* = \hat{\beta} = \bar{\mathbf{y}}$ and the simplest control variate estimator
		\[
		\hat{\mu}_{*}^c = \bar{\mathbf{y}} - \frac{\mathrm{Cov}(y, z)}{\mathrm{Var}(z)} \bar{\mathbf{z}},
		\]
		since $\boldsymbol{\lambda}^*_1 = \dfrac{1}{N} \mathds{1}\in\mathbb{R}^{N}$ and $\lambda^*_2 = - \dfrac{\mathrm{Cov}(y, z)}{\mathrm{Var}(z)}$, which has reduced variance by a factor of $(1 - \mathrm{Corr}(y, z))$.
	\end{remark}
	
	In practice, the correlation matrix $\Sigma$ is usually unknown; hence, to make the estimator $\boldsymbol{\hat\beta}_{\mathbf{X}, \mathbf{Z}, \Sigma}$ feasible, we need to estimate it. Under the assumptions discussed above, the most natural candidate is given by the sample estimates
	\[
	\hat{\Sigma}_{\mathbf{z}, y} = \frac{1}{N} \mathbf{Z}^\text{T} \mathbf{y} \quad \text{and} \quad \hat{\Sigma}_{\mathbf{z}} = \frac{1}{N} 
	\mathbf{Z}^\text{T} \mathbf{Z},
	\]
	yielding the feasible estimator
	\[
	\boldsymbol{\hat\beta}_{\mathbf{X}, \mathbf{Z}, \hat\Sigma} =  (\mathbf{X}^\text{T} \mathbf{X})^{-1} \mathbf{X}^{\text{T}} (I - \mathbf{Z} (\mathbf{Z^\text{T}\mathbf{Z}})^{-1} \mathbf{Z}^\text{T}) \mathbf{y}.
	\]
	This can be equivalently understood as regressing $\mathbf{y}$ on the control $\mathbf{Z}$ (i.e.\ projecting $\mathbf{y}$ onto the space spanned by $\mathbf{Z}$) and then regressing the residual onto $\mathbf{X}$. The resulting estimator will likely be biased (given $\mathbf{X}$), with its exact finite sample properties depending on the distribution of $(\mathbf{X}, \mathbf{Z}) | \boldsymbol{\epsilon}$.
	
	When $\boldsymbol{\epsilon}$ depends linearly on $\mathbf{Z}$, i.e.\ 
	\[
	\boldsymbol{\epsilon} = \mathbf{Z} \boldsymbol{\alpha} + \boldsymbol{\eta},
	\]
	we can write 
	\[
	\mathbf{y} = \mathbf{X} \boldsymbol{\beta} + \mathbf{Z} \boldsymbol{\alpha} + \boldsymbol{\eta},
	\]
	and hence we know that the joint OLS estimator obtained from the design matrix $(\mathbf{X}\ \mathbf{Z}) \in \mathbb{R}^{N \times (p+k)}$, i.e.
	\[
	\begin{pmatrix}
		\hat{\boldsymbol{\beta}}_{\mathbf{X}, \mathbf{Z}} \\
		\hat{\boldsymbol{\alpha}}_{\mathbf{X}, \mathbf{Z}} \\
	\end{pmatrix}
	= 
	\left[
	\begin{pmatrix}
		\mathbf{X}^\text{T} \\
		\mathbf{Z}^\text{T}
	\end{pmatrix}
	\begin{pmatrix}
		\mathbf{X} &
		\mathbf{Z}
	\end{pmatrix}
	\right]^{-1}
	\begin{pmatrix}
		\mathbf{X}^\text{T} \\
		\mathbf{Z}^\text{T}
	\end{pmatrix}
	\mathbf{y}
	=
	\begin{pmatrix}
		\mathbf{X}^\text{T}\mathbf{X} & \mathbf{X}^\text{T}\mathbf{Z} \\
		\mathbf{Z}^\text{T}\mathbf{X} & \mathbf{Z}^\text{T}\mathbf{Z}
	\end{pmatrix}^{-1}
	\begin{pmatrix}
		\mathbf{X}^\text{T}\mathbf{y} \\
		\mathbf{Z}^\text{T}\mathbf{y}
	\end{pmatrix}.
	\]
	By using some simple algebraic manipulations for block matrices, we can extract the first $p$ entries of the joint OLS estimator, i.e.\ the estimator for $\boldsymbol{\beta}$, as
	\[
	\boldsymbol{\hat\beta}_{\mathbf{X}, \mathbf{Z}} = \boldsymbol{\hat{\beta}}_{\mathbf{X}, \mathbf{Z}, \hat{\Sigma}'} = \boldsymbol{\hat{\beta}} - (\mathbf{X}^\text{T} \mathbf{X})^{-1} \mathbf{X}^{\text{T}} \mathbf{Z} (\mathbf{Z}^\text{T}\mathbf{Z} - \mathbf{Z}^\text{T} \mathbf{X} (\mathbf{X}^\text{T}\mathbf{X})^{-1} \mathbf{X}^\text{T}\mathbf{Z})^{-1} (\mathbf{Z}^\text{T} \mathbf{y} - \mathbf{Z}^\text{T} \mathbf{X} (\mathbf{X}^\text{T}\mathbf{X})^{-1} \mathbf{X}^\text{T} \mathbf{y}),
	\]
	i.e.\ $\boldsymbol{\hat\beta}_{\mathbf{X}, \mathbf{Z}, \Sigma}$ with $\Sigma$ estimated by
	\[
	\hat{\Sigma}'_{\mathbf{z}, y} = \frac{1}{N}\,  \mathbf{Z}^\text{T} (I - \mathbf{X} (\mathbf{X}^\text{T}\mathbf{X})^{-1} \mathbf{X}^\text{T}) \mathbf{y} \quad \text{and} \quad \hat{\Sigma}'_{\mathbf{z}} = \frac{1}{N}\,  \mathbf{Z}^\text{T}(I - \mathbf{X} (\mathbf{X}^\text{T}\mathbf{X})^{-1} \mathbf{X}^\text{T})\mathbf{Z}.
	\]
	Note these are the covariance and variance estimators obtained when projecting $\mathbf{Z}$ onto the orthogonal complement of the space spanned by $\mathbf{X}$. Under the assumptions introduced above, these are also unbiased for $\Sigma_{\mathbf{z}, y}$ and $\Sigma_{\mathbf{z}}$. This provides a second feasible controlled estimator for $\boldsymbol{\beta}$.
	
	If we fix both $\mathbf{X}$ and $\mathbf{Z}$ and assume $\boldsymbol{\eta}$ satisfies the Gauss-Markov conditions, then the joint OLS estimator is the BLUE for $(\boldsymbol{\beta}, \boldsymbol{\alpha})$. Extending the classic OLS estimator $\boldsymbol{\hat{\beta}}_{\mathbf{X}}$ to $(\boldsymbol{\hat{\beta}}_{\mathbf{X}}, \boldsymbol{\hat{\alpha}}_{\mathbf{X}, \mathbf{Z}})$ we obtain another linear and unbiased estimator for $(\boldsymbol{\beta}, \boldsymbol{\alpha})$. It follows from the Gauss-Markov theorem that for any $\mathbf{x}_*\in\mathbb{R}^p$, setting $\boldsymbol{\lambda} = (\mathbf{x}_*, \mathbb{E}[\mathbf{z}_* | \mathbf{X}, \mathbf{Z}, \mathbf{x}_* ] = \boldsymbol{0})$, one has 
	\begin{align*}
		\mathbb{E}\left[(\hat{y}_*(\boldsymbol{\hat\beta}_{\mathbf{X}, \mathbf{Z}}) - \hat{y}_*(\boldsymbol{\beta}))^2 \big| \mathbf{X}, \mathbf{Z}, \mathbf{x}_*\right] \leq \mathbb{E}\left[(\hat{y}_*(\boldsymbol{\hat\beta}_{\mathbf{X}}) - \hat{y}_*(\boldsymbol{\beta}))^2 \big| \mathbf{X}, \mathbf{Z}, \mathbf{x}_*\right],
	\end{align*}
	and hence, by the tower property of conditional expectation,
	\begin{align*}
		\mathbb{E}\left[(\hat{y}_*(\boldsymbol{\hat\beta}_{\mathbf{X}, \mathbf{Z}}) - \hat{y}_*(\boldsymbol{\beta}))^2 \big| \mathbf{X}, \mathbf{x}_*\right] \leq \mathbb{E}\left[(\hat{y}_*(\boldsymbol{\hat\beta}_{\mathbf{X}}) - \hat{y}_*(\boldsymbol{\beta}))^2 \big| \mathbf{X}, \mathbf{x}_*\right].
	\end{align*}
	Using the forms of the variances of $\boldsymbol{\hat\beta_{\mathbf{X}}}$ and $\boldsymbol{\hat\beta_{\mathbf{X}, \mathbf{Z}}}$ we can compute
	\begin{align*}
		\mathbb{E}\left[(\hat{y}_*(\boldsymbol{\hat\beta}_{\mathbf{X}, \mathbf{Z}}) - \hat{y}_*(\boldsymbol{\beta}))^2 \big| \mathbf{X}, \mathbf{x}_*\right] &= \sigma^2 \mathbf{x}_*^\text{T}\mathbb{E}[(\mathbf{X}^\text{T} (I - \mathbf{Z}(\mathbf{Z}^\text{T}\mathbf{Z})^{-1}\mathbf{Z}^\text{T})\mathbf{X})^{-1}\big| \mathbf{X}] \mathbf{x}_*,\\
		\mathbb{E}\left[(\hat{y}_*(\boldsymbol{\hat\beta}_{\mathbf{X}}) - \hat{y}_*(\boldsymbol{\beta}))^2 \big| \mathbf{X}, \mathbf{x}_*\right] &= \sigma^2 \mathbf{x}_*^\text{T}(\mathbf{X}^\text{T} \mathbf{X})^{-1} \mathbf{x}_*,
	\end{align*}
	and thus quantify the MSE reduction factor as
	\[
	\frac{\mathbb{E}\left[(\hat{y}_*(\boldsymbol{\hat\beta}_{\mathbf{X}, \mathbf{Z}}) - \hat{y}_*(\boldsymbol{\beta}))^2 \big| \mathbf{X}, \mathbf{x}_*\right]}{\mathbb{E}\left[(\hat{y}_*(\boldsymbol{\hat\beta}_{\mathbf{X}}) - \hat{y}_*(\boldsymbol{\beta}))^2 \big| \mathbf{X}, \mathbf{x}_*\right]} = \frac{\mathbf{x}_*^\text{T}\mathbb{E}[(\mathbf{X}^\text{T} (I - \mathbf{Z}(\mathbf{Z}^\text{T}\mathbf{Z})^{-1}\mathbf{Z}^\text{T})\mathbf{X})^{-1}\big| \mathbf{X}] \mathbf{x}_*}{\mathbf{x}_*^\text{T}(\mathbf{X}^\text{T} \mathbf{X})^{-1} \mathbf{x}_*},
	\]
	which we note does not depend on $\sigma^2$. 
	
	Given $\mathbf{X}$ and $\mathbf{Z}$, augmenting the first feasible controlled estimator $\boldsymbol{\hat\beta}_{\mathbf{X}, \mathbf{Z}, \hat{\Sigma}}$ to an estimator for $(\boldsymbol{\beta}, \boldsymbol{\alpha})$, yields a linear but biased (at least in the $\boldsymbol{\beta}$ components) estimator. Whether $\boldsymbol{\hat\beta}_{\mathbf{X}, \mathbf{Z}, \hat{\Sigma}}$ or $\boldsymbol{\hat\beta}_{\mathbf{X}, \mathbf{Z}}$ yields a better predictor cannot thus be deduced from the Gauss-Markov theorem. As we will see in the numerical experiments discussed in the next section, which estimator performs better depends on the properties of the data generating process.
	
	\subsection{Simulation study} \label{app:COLS_simulations}
	
	In the previous section we introduced two feasible control estimators, $\boldsymbol{\hat\beta}_{\mathbf{X}, \mathbf{Z}, \hat{\Sigma}}$ and $\boldsymbol{\hat\beta}_{\mathbf{X}, \mathbf{Z}}$, for the parameters $\boldsymbol{\beta}\in\mathbb{R}^p$. We showed that when 
	\[
	\boldsymbol{\epsilon} = \mathbf{Z} \boldsymbol{\alpha} + \boldsymbol{\eta},
	\]
	and $\boldsymbol{\eta} | \mathbf{Z}, \mathbf{X}$ is mean-zero, uncorrelated and homoskedastic then $\hat{y}_*(\boldsymbol{\hat\beta}_{\mathbf{X}, \mathbf{Z}})$ is always a better predictor than the one formed by the classic OLS estimator $\hat{y}_*(\boldsymbol{\hat\beta}_{\mathbf{X}})$. This leaves unanswered the question of how $\boldsymbol{\hat\beta}_{\mathbf{X}, \mathbf{Z}, \hat{\Sigma}}$ performs relative to $\boldsymbol{\hat\beta}_{\mathbf{X}, \mathbf{Z}}$ (and $\boldsymbol{\hat\beta}_{\mathbf{X}}$). We address this question empirically in Table~\ref{tab:control_linear} with the following experimental setup.
	
	We consider $N=1,000$ i.i.d.\ samples from the model 
	\[
	y = \beta_0 + \beta_1 x_1 + \beta_2 x_2 + \epsilon,
	\]
	with $\beta_0 = -1, \beta_1 = 6, \beta_2 = 8$ and 
	\[
	\epsilon = \sigma(\rho z + \sqrt{1-\rho^2} \eta),
	\]
	with $\eta \sim \mathcal{N}(0, 1) \perp z\sim\mathcal{N}(0, 1)$ independently of $\mathbf{X}$. We thus have $\text{Corr}(y, z | \mathbf{x}) = \text{Corr}(\epsilon, z) =\rho$ and $\epsilon \sim\mathcal{N}(0, \sigma^2) \perp \mathbf{X}$. We fix $N = 1,000$ samples $x_1\sim\mathcal{N}(0, 1) \perp x_2 \sim\mathcal{N}(1, 1)$ to obtain $\mathbf{X} \in \mathbb{R}^{N\times 3}$ and evaluate the performance of the predictor on the new observation $x_1^*=0, x_2^*=1$. For each estimator, we report an estimate of the root mean square error 
	\[\text{RMSE}( \hat{y}_*(\boldsymbol{\hat\beta}), \hat{y}_*(\boldsymbol{\beta})) = \sqrt{\mathbb{E}\left[(\hat{y}_*(\boldsymbol{\hat\beta})- \hat{y}_*(\boldsymbol{\beta}) )^2 \big| \mathbf{X}, \mathbf{x}_*\right]},\]
	obtained over $10,000$ Monte Carlo samples (i.e.\ keeping $\mathbf{X}$ and $\mathbf{x}^*$ fixed and resampling $\mathbf{Z}\in\mathbb{R}^N$ and $\boldsymbol{\eta}\in\mathbb{R}^N$). We highlight with a single asterisk (*) RMSEs that are lower than the uncontrolled OLS estimator's RMSE with high statistical significance (t-test p-value across the Monte Carlo samples less than 0.001). When one of the two proposed estimators outperforms the other with high statistical significance we highlight the corresponding RMSE with double asterisks (**).
	
	\begin{table}[ht]
		\centering
		\renewcommand{\arraystretch}{1.2} 
		\setlength{\tabcolsep}{5pt} 
		\begin{tabular}{cc|clrlrlr}
			\hline
			& & \multicolumn{1}{c}{$\boldsymbol{\hat\beta}_{\mathbf{X}}$} & \multicolumn{2}{c}{$\boldsymbol{\hat\beta}_{\mathbf{X}, \mathbf{Z}, \hat{\Sigma}}$} & \multicolumn{2}{c}{$\boldsymbol{\hat\beta}_{\mathbf{X}, \mathbf{Z}}$} & \multicolumn{2}{c}{$\boldsymbol{\hat\beta}_{\mathbf{X}, \mathbf{Z}, \Sigma}$} \\
			$\sigma$ & $\rho$ & \textbf{RMSE} & \textbf{RMSE} & \textbf{(\% of $\boldsymbol{\hat\beta}_{\mathbf{X}}$)} & \textbf{RMSE} & \textbf{(\% of $\boldsymbol{\hat\beta}_{\mathbf{X}}$)} & \textbf{RMSE} & \textbf{(\% of $\boldsymbol{\hat\beta}_{\mathbf{X}}$)} \\
			\hline
			\multirow[c]{5}{*}{5} 
			& 0.00 & 0.1572 & 0.1579 & 100.46\% & 0.1573 & 100.03\% & 0.1572 & 100.00\% \\
			& 0.25 & 0.1588 & 0.1550* & 97.57\% & 0.1538** & 96.86\% & 0.1489 & 93.75\% \\
			& 0.50 & 0.1581 & 0.1368* & 86.48\% & 0.1358** & 85.86\% & 0.1186 & 75.00\% \\
			& 0.75 & 0.1592 & 0.1052* & 66.08\% & 0.1043** & 65.52\% & 0.0697 & 43.75\% \\
			& 1.00 & 0.1606 & 0.0164* & 10.21\% & 0.0000** & 0.01\% & 0.0000 & 0.00\% \\ \hline
			\multirow[c]{5}{*}{10} 
			& 0.00 & 0.3144 & 0.3146 & 100.06\% & 0.3145 & 100.03\% & 0.3144 & 100.00\% \\
			& 0.25 & 0.3177 & 0.3082* & 97.03\% & 0.3077** & 96.86\% & 0.2978 & 93.75\% \\
			& 0.50 & 0.3163 & 0.2719* & 85.97\% & 0.2715* & 85.86\% & 0.2372 & 75.00\% \\
			& 0.75 & 0.3185 & 0.2088* & 65.57\% & 0.2086* & 65.52\% & 0.1393 & 43.75\% \\
			& 1.00 & 0.3211 & 0.0164* & 5.11\% & 0.0000** & 0.00\% & 0.0000 & 0.00\% \\ \hline
			\multirow[c]{5}{*}{20} 
			& 0.00 & 0.6289 & 0.6286 & 99.96\% & 0.6291 & 100.03\% & 0.6289 & 100.00\% \\
			& 0.25 & 0.6353 & 0.6154* & 96.86\% & 0.6154* & 96.86\% & 0.5956 & 93.75\% \\
			& 0.50 & 0.6325 & 0.5429* & 85.83\% & 0.5431* & 85.86\% & 0.4744 & 75.00\% \\
			& 0.75 & 0.6369 & 0.4169* & 65.46\% & 0.4173* & 65.52\% & 0.2786 & 43.75\% \\
			& 1.00 & 0.6422 & 0.0164* & 2.55\% & 0.0000** & 0.00\% & 0.0000 & 0.00\% \\ \hline
			\multirow[c]{5}{*}{40} 
			& 0.00 & 1.2578 & 1.2569 & 99.93\% & 1.2582 & 100.03\% & 1.2578 & 100.00\% \\
			& 0.25 & 1.2707 & 1.2300** & 96.80\% & 1.2307* & 96.86\% & 1.1913 & 93.75\% \\
			& 0.50 & 1.2651 & 1.0853** & 85.79\% & 1.0862* & 85.86\% & 0.9488 & 75.00\% \\
			& 0.75 & 1.2738 & 0.8336** & 65.44\% & 0.8346* & 65.52\% & 0.5573 & 43.75\% \\
			& 1.00 & 1.2845 & 0.0164* & 1.28\% & 0.0000** & 0.00\% & 0.0000 & 0.00\% \\ \hline
		\end{tabular}
		\caption{RMSEs of the classic OLS estimator $\boldsymbol{\hat\beta}_{\mathbf{X}}$, the two feasible control estimators $\boldsymbol{\hat\beta}_{\mathbf{X}, \mathbf{Z}, \hat{\Sigma}}$ and $\boldsymbol{\hat\beta}_{\mathbf{X}, \mathbf{Z}}$, and the infeasible optimal control estimator $\boldsymbol{\hat\beta}_{\mathbf{X}, \mathbf{Z}, \Sigma}$. A single asterisk (*) indicates feasible RMSEs that are lower than the uncontrolled classic OLS estimator's RMSE with high statistical significance (t-test p-value across the 10,000 Monte Carlo samples less than 0.001). A double asterisk (**) indicates the feasible control estimator's RMSE is lower than the other feasible control estimator's RMSE with high statistical significance.}
		\label{tab:control_linear}
	\end{table}
	
	As expected, as the correlation between the control and the target $\rho$ increases, the performance gain obtained by using either of the feasible control estimators increases. The joint-OLS estimator outperforms the standard OLS estimator by the same amount across different signal-to-noise ratios while $\boldsymbol{\hat\beta}_{\mathbf{X}, \mathbf{Z}, \hat{\Sigma}}$'s relative performance changes. Comparing $\boldsymbol{\hat\beta}_{\mathbf{X}, \mathbf{Z}, \hat{\Sigma}}$ and $\boldsymbol{\hat\beta}_{\mathbf{X}, \mathbf{Z}}$ we note the results suggest that for high signal-to-noise ratios\footnote{Here we define the signal-to-noise ratio as $ \text{std}(\mathbf{x}^\text{T} \boldsymbol{\beta}) / \sigma$ where $\text{std}(\mathbf{x}^\text{T} \boldsymbol{\beta})$ measures the variablity of the explainable part of the model (signal) and $\sigma$ measures the variability in the error $\epsilon$ (noise). In Table~\ref{tab:control_linear} the signal component of the model is kept fixed and hence higher $\sigma$ denotes lower signal-to-noise ratio.} the joint-OLS estimator $\boldsymbol{\hat\beta}_{\mathbf{X}, \mathbf{Z}}$ slightly outperforms $\boldsymbol{\hat\beta}_{\mathbf{X}, \mathbf{Z}, \hat{\Sigma}}$ while for lower signal-to-noise ratios the latter estimator performs marginally better. Consider the two edge cases:
	\begin{itemize}
		\item when there is no error, i.e.\ $\sigma=0$,  $\boldsymbol{\hat\beta}_{\mathbf{X}, \mathbf{Z}, \hat{\Sigma}}$ performs worse than $\boldsymbol{\hat\beta}_{\mathbf{X}}$ ($ = \boldsymbol{\hat\beta}_{\mathbf{X}, \mathbf{Z}}$) as it is adding uninformative variability to the estimator;
		\item when there is no signal, i.e.\ $\mathbf{X} = \mathds{1} \in\mathbb{R}^{N}$, $\boldsymbol{\hat\beta}_{\mathbf{X}, \mathbf{Z}, \hat{\Sigma}}$ and $\boldsymbol{\hat\beta}_{\mathbf{X}, \mathbf{Z}}$ both reduce to the classic control variates estimators, cf.\ Remark~\ref{rem:classic_controlled_variates_OLS}, but the former is more precise as it estimates $\boldsymbol{\lambda}_2^* = - \Sigma_{\mathbf{z}}^{-1}  \Sigma_{\mathbf{z}, y}$ using the knowledge that the control is mean zero.
	\end{itemize}

	Next, we investigate the empirical performance of $\boldsymbol{\hat\beta}_{\mathbf{X}, \mathbf{Z}, \hat{\Sigma}}$ and $\boldsymbol{\hat\beta}_{\mathbf{X}, \mathbf{Z}}$ when the dependency between $\mathbf{Z}$ and $\boldsymbol{\epsilon}$ is non-linear but $(\mathbf{Z}, \mathbb{\epsilon}) | \mathbf{X}$ are still jointly mean-zero, uncorrelated (across samples) and homoskedastic. Keeping the same design 
	matrix $\mathbf{X}$ as in Table~\ref{tab:control_linear} and the same parameters $\boldsymbol{\beta} = (-1, 6, 8)$ we now let
	\[
	\epsilon = \sigma\kappa f(z) + \sigma\sqrt{1-\kappa^2} \eta,
	\]
	with $\eta \sim \mathcal{N}(0, 1) \perp z\sim\mathcal{N}(0, 1)$ independently of $\mathbf{X}$. By choosing $f(z)$ such that $\mathbb{E}[f(z)] =0$ and $\mathbb{E}[f(z)^2] = 1$ we have $\mathbb{E}[\epsilon] = 0$ and $\mathbb{E}[\epsilon^2] = \sigma^2$. Moreover, choosing $\kappa =  \text{Cov}(z, f(z))^{-1} \rho$ ensures that $\text{Corr}(y, z|\mathbf{x}) = \text{Corr}(\epsilon, z) = \rho$. We investigate the following three dependence functions:
	\begin{enumerate}[align=left]
		\item[(sq)] $f(z) = \dfrac{z^2 + z - 1}{\sqrt{3}}$;
		\item[(cube)] $f(z) = \dfrac{z^3}{\sqrt{15}}$;
		\item[(exp)] $f(z) = \dfrac{e^z - \sqrt{e}}{\sqrt{e^2 - e}}$.
	\end{enumerate}

    The code used to produce the results of Table~\ref{tab:control_linear} and Table~\ref{tab:control_nonlinear} can be found at \url{https://github.com/lorenzolucchese/controlled-linear-regression}.
	
	\begin{table}[ht]
		\centering
		\renewcommand{\arraystretch}{1.2} 
		\setlength{\tabcolsep}{5pt} 
		\begin{tabular}{cc|clrlrlr}
			\hline
			& & \multicolumn{1}{c}{$\boldsymbol{\hat\beta}_{\mathbf{X}}$} & \multicolumn{2}{c}{$\boldsymbol{\hat\beta}_{\mathbf{X}, \mathbf{Z}, \hat{\Sigma}}$} & \multicolumn{2}{c}{$\boldsymbol{\hat\beta}_{\mathbf{X}, \mathbf{Z}}$} & \multicolumn{2}{c}{$\boldsymbol{\hat\beta}_{\mathbf{X}, \mathbf{Z}, \Sigma}$} \\
			$f$ & $\rho$ & \textbf{RMSE} & \textbf{RMSE} & \textbf{(\% of $\boldsymbol{\hat\beta}_{\mathbf{X}}$)} & \textbf{RMSE} & \textbf{(\% of $\boldsymbol{\hat\beta}_{\mathbf{X}}$)} & \textbf{RMSE} & \textbf{(\% of $\boldsymbol{\hat\beta}_{\mathbf{X}}$)} \\
			\hline
			\multirow[c]{5}{*}{(sq)} 
			& 0.00 & 0.3144 & 0.3146 & 100.06\% & 0.3145 & 100.03\% & 0.3144 & 100.00\% \\
			& 0.25 & 0.3174 & 0.3084* & 97.14\% & 0.3075** & 96.88\% & 0.2976 & 93.75\% \\
			& 0.50 & 0.3202 & 0.2772* & 86.57\% & 0.2761** & 86.25\% & 0.2401 & 75.00\% \\
			& 0.75 &  &  &  &  &  &  &  \\
			& 1.00 &  &  &  &  &  &  &  \\ \hline
			\multirow[c]{5}{*}{(cube)} 
			& 0.00 & 0.3144 & 0.3146 & 100.06\% & 0.3145 & 100.03\% & 0.3144 & 100.00\% \\
			& 0.25 & 0.3184 & 0.3088* & 97.00\% & 0.3083* & 96.83\% & 0.2985 & 93.75\% \\
			& 0.50 & 0.3184 & 0.2736* & 85.95\% & 0.2734* & 85.86\% & 0.2388 & 75.00\% \\
			& 0.75 & 0.3222 & 0.2125* & 65.96\% & 0.2123* & 65.90\% & 0.1409 & 43.75\% \\
			& 1.00 &  &  &  &  &  &  &  \\ \hline
			\multirow[c]{5}{*}{(exp)} 
			& 0.00 & 0.3144 & 0.3146 & 100.06\% & 0.3145 & 100.03\% & 0.3144 & 100.00\% \\
			& 0.25 & 0.3178 & 0.3085* & 97.07\% & 0.3078** & 96.85\% & 0.2979 & 93.75\% \\
			& 0.50 & 0.3182 & 0.2741* & 86.13\% & 0.2734** & 85.92\% & 0.2387 & 75.00\% \\
			& 0.75 & 0.3191 & 0.2091* & 65.53\% & 0.2081** & 65.20\% & 0.1396 & 43.75\% \\
			& 1.00 &  &  &  &  &  &  &  \\ \hline
		\end{tabular}
		\caption{RMSEs of the classic OLS estimator $\boldsymbol{\hat\beta}_{\mathbf{X}}$, the two feasible control estimators $\boldsymbol{\hat\beta}_{\mathbf{X}, \mathbf{Z}, \hat{\Sigma}}$ and $\boldsymbol{\hat\beta}_{\mathbf{X}, \mathbf{Z}}$, and the infeasible optimal control estimator $\boldsymbol{\hat\beta}_{\mathbf{X}, \mathbf{Z}, \Sigma}$. A single asterisk (*) indicates feasible RMSEs that are lower than the uncontrolled classic OLS estimator's RMSE with high statistical significance (t-test p-value across the 10,000 Monte Carlo samples less than 0.001). A double asterisk (**) indicates the feasible control estimator's RMSE is lower than the other feasible control estimator's RMSE with high statistical significance. Empty values indicate setups that are not achievable for the given correlation $\rho$ and dependence function $f$.}
		\label{tab:control_nonlinear}
	\end{table}
	
	We also experimented with multiplicative noise ($\epsilon \propto f(z) \, \eta$), heavier tailed errors ($\eta \sim t_3$) and different dataset sizes ($N\in\{100, 1000, 10000\}$). The results are similar to the ones reported in Table~\ref{tab:control_linear} and Table~\ref{tab:control_nonlinear}: the two control estimators outperform the classic OLS estimator as long as $\rho>0$ while the differences in performance between the two control estimators are statistically significant but small.

\end{document}